\newcommand{\tm}[1]{\texttt{Term #1}}
\newcommand{\norm}[1]{\left\lVert#1\right\rVert}
\newcommand{\abs}[1]{\left\lvert#1\right\rvert}
\newcommand{\vecr}[1]{\text{vec}\left(#1\right)}
\newcommand{\tr}[1]{\text{Tr}\left(#1\right)}
\newcommand{\Pj}[1]{\bm{P}_{#1}}
\newcommand{\Pnormal}[2]{\mathcal{P}_{\mathcal{N}_{#1}}\left(#2\right)}
\newcommand{\Ptangent}[2]{\mathcal{P}_{\mathcal{T}_{#1}}\left(#2\right)}
\newcommand{\grof}[2]{\mathcal{GR}_{#1}\left(#2\right)}
\newcommand{\polarof}[2]{\mathcal{GR}^{\text{polar}}_{#1}\left(#2\right)}
\newcommand{\qrof}[2]{\mathcal{GR}^{\text{QR}}_{#1}\left(#2\right)}
\newlength\myindent
\newcommand\bindent{%
  \begingroup
  \setlength{\itemindent}{\myindent}
  \addtolength{\algorithmicindent}{\myindent}
}
\newcommand\eindent{\endgroup}
\newcommand{\innerp}[2]{\left\langle #1,#2 \right\rangle}
\newcommand{\yes}{\textcolor{green}{\ding{51}}}
\newcommand{\no}{\textcolor{red}{\ding{55}}}
\newcommand{\name}{\texttt{PerPCA}\xspace}
\newcommand{\dispca}{\texttt{distPCA}\xspace}
\newcommand{\revise}[1]{ #1}
\newcommand{\vecu}{\bm{u}}
\newcommand{\vecv}{\bm{v}}
\newcommand{\vecvperp}{\bm{v}^{\perp}{}}
\newcommand{\vecw}{\bm{w}}
\newcommand{\matA}{\bm{A}}
\newcommand{\matB}{\bm{B}}
\newcommand{\matF}{\bm{F}}
\newcommand{\matG}{\mathbf{G}}
\newcommand{\hmatF}{\hat{\bm{F}}}
\newcommand{\matI}{\bm{I}}
\newcommand{\matR}{\bm{R}}
\newcommand{\matU}{\bm{U}}
\newcommand{\matV}{\bm{V}}
\newcommand{\matW}{\bm{W}}
\newcommand{\matX}{\bm{X}}
\newcommand{\matY}{\bm{Y}}
\newcommand{\matUt}{\bm{U}_{\tau}}
\newcommand{\matVit}{\bm{V}_{(i),\tau}}
\newcommand{\dU}{\Delta\bm{U}}
\newcommand{\dV}{\Delta\bm{V}}
\newcommand{\dUt}{\Delta\bm{U}_{\tau}}
\newcommand{\dVit}{\Delta\bm{V}_{(i),\tau}}
\newcommand{\hmatU}{\hat{\bm{U}}}
\newcommand{\hmatV}{\hat{\bm{V}}}
\newcommand{\hmatUt}{\hat{\bm{U}}_{\tau}}
\newcommand{\hmatVit}{\hat{\bm{V}}_{(i),\tau}}
\newcommand{\matPi}{\bm{\Pi}}
\newcommand{\hmatPi}{\hat{\bm{\Pi}}}
\newcommand{\matS}{\bm{S}}
\newcommand{\matSigma}{\bm{\Sigma}}
\newcommand{\dS}{\delta\bm{S}}
\newcommand{\hr}{\hat{r}}
\newcommand{\hmu}{\hat{\mu}}
\newcommand{\hvecu}{\hat{\bm{u}}}
\newcommand{\hvecv}{\hat{\bm{v}}}
\newcommand{\wz}{\widetilde{\zeta}}
\newcommand{\wzit}{\widetilde{\zeta}_{(i),\tau}}
\newcommand{\zit}{\zeta_{(i),\tau}}
\newcommand{\zzt}{\zeta_{(0),\tau}}
\newcommand{\htheta}{\hat{\theta}}
\newcommand{\gmop}{G_{max,op}}
\newcommand{\polar}{\texttt{Polar}}
\newtheorem{assumption}{Assumption}[section]
\begin{document}

\title{Personalized PCA: Decoupling Shared and Unique Features}

\author{\name Naichen Shi \email naichens@umich.edu  \\
 \name Raed Al Kontar \email alkontar@umich.edu\\
\addr Department of Industrial \& Operations Engineering \\ University of Michigan\\
Ann Arbor, MI 48109-2117, USA\\}

\editor{Martin Jaggi}
\maketitle

\begin{abstract}
In this paper, we tackle a significant challenge in PCA: heterogeneity. When data are collected from different sources with heterogeneous trends while still sharing some congruency, it is critical to extract shared knowledge while retaining the unique features of each source. To this end, we propose personalized PCA  (\name), which uses mutually orthogonal global and local principal components to encode both unique and shared features. We show that, under mild conditions, both unique and shared features can be identified and recovered by a constrained optimization problem, even if the covariance matrices are immensely different. Also, we design a fully federated algorithm inspired by distributed Stiefel gradient descent to solve the problem. The algorithm introduces a new group of operations called generalized retractions to handle orthogonality constraints, and only requires global PCs to be shared across sources. We prove the linear convergence of the algorithm under suitable assumptions. Comprehensive numerical experiments highlight \name's superior performance in feature extraction and prediction from heterogeneous datasets. As a systematic approach to decouple shared and unique features from heterogeneous datasets, \name finds applications in several tasks, including video segmentation, topic extraction, and feature clustering. 
\end{abstract}

\begin{keywords}
Principal component analysis, personalization, heterogeneity.
\end{keywords}

\section{Introduction} 
Principal component analysis (PCA) \citep{pearsonpca,hotellingpca} unravels data features by finding a few principal components (PCs) from high dimensional data that explain the largest portion of the variance. Due to its effective feature learning and dimension reduction capability, PCA has seen immense success across various domains, including image processing \citep{pcaimage1,pcaimage2}, time series modeling \citep{pcatimeseries1,pcatimeseries2}, bio-information \citep{pcagenetics1,pcagenetics2}, condition monitoring \citep{pcacm1,pcacm2}, and many more. 

However, since all data are equally weighted in standard PCA, an underlying assumption is that these data come from homogeneous distributions. This assumption, however, is often challenged in various scenarios, including the Internet of Things (IoT), where data do not come from a single source but a large number of distinct edge devices (or clients). The edge devices, from smartphones to connected vehicles, usually operate in different environments and conditions \citep{kontar2017nonparametric,kontar2018nonparametric}. The data collected by edge devices are also subject to changes in external conditions \citep{ioft} or user preferences \citep{surveyonpersonalization}. Thus, it is common for the datasets to contain significant heterogeneity and even conflicting trends while sharing some congruity. 

Standard PCA often does not work well when data homogeneity is not guaranteed \citep{hca,hcaoptimalweight}. Few works have endeavored to extend the PCA philosophy to incorporate data heterogeneity. For example, Heterogeneous PCA \citep{hca} considers the case where data from different sources have different noise levels. They propose a reweighting technique to alleviate noise heteroscedasticity. Such an approach is shown to be useful in identifying PCs from heteroscedastic noises. However, simply treating the discrepancy among datasets as different levels of noise might be inadequate to understand the intrinsic features within the data and insufficient to encode both unique and shared features across devices and clients. As such, personalized solutions are needed. 

To transmute the heterogeneity from a bane into a blessing, in this work, we propose personalized PCA (\name) that fits personalized features on each client in addition to common features shared by all clients. In our model, data are driven by several mutually orthogonal global (shared) and local (personalized) PCs. The global PCs model the common patterns among different datasets, while the local PCs model the idiosyncratic features of one specific dataset. Global and local PCs work together to fit the observations. Figure \ref{fig:toyexampleintro} is an illustration of using homogeneous PCA and personalized PCA to fit two heterogeneous datasets. As shown in the figure, simply pooling together all data across datasets using homogeneous PCA will fail to encode the unique features within each dataset, and the horizontal PC is a misleading one that is not representative of any source. In contrast, personalized PCA aims at decoupling unique and shared features so that heterogeneity across data sources is accounted for.



\begin{figure}[htbp]
\centering
\begin{minipage}[t]{0.48\textwidth}
\centering
\includegraphics[width=8cm]{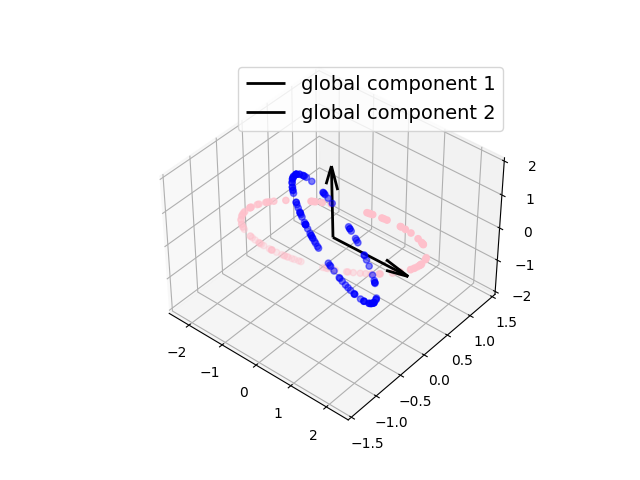}
\text{Homogeneous PCA}
\end{minipage}
\begin{minipage}[t]{0.48\textwidth}
\centering
\includegraphics[width=8cm]{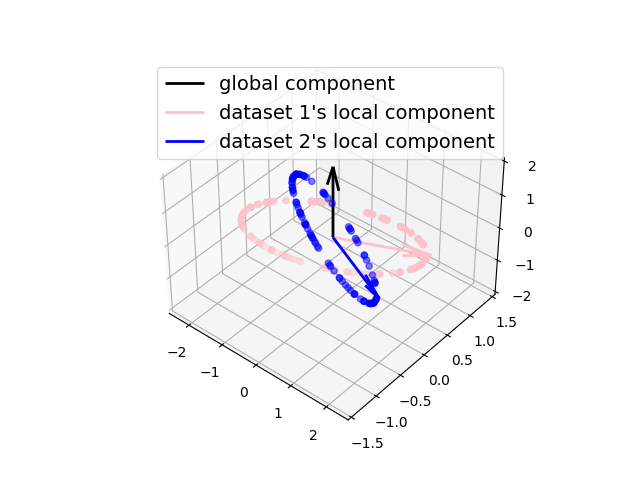}
\text{Personalized PCA}
\end{minipage}
\caption{Comparison between homogeneous PCA (standard PCA) and personalized PCA (\name). There are two datasets, one colored blue and the other pink. Dots represent the observations. Observations from one dataset are on a 2-dimensional plane. 
The black arrows represent the global PCs learned, and the colored arrows represent learned local PCs. Homogeneous PCA is a standard PCA on the pooled dataset. We will revisit the example in Section \ref{sec:experiment}.}
\label{fig:toyexampleintro}
\end{figure}

There are several benefits to personalization. Firstly, employing several local PCs to fit individual data patterns enables us to describe immensely heterogeneous trends in datasets accurately. Also, global PCs shared by all data can be estimated more precisely without being affected by disagreeing drifts from different sources. What's more, disentangling local features from global ones provides a systematic and interpretable approach to analyzing the heterogeneity structure of datasets and leveraging this knowledge for better analytics. These include: (i) \textit{Improving classification and clustering}: instead of using raw data, operating on unique features may yield better performance as differences become more explicit when removing shared features, (ii) \textit{Transforming personalized, predictive analytics}: Through selectively transferring common knowledge from one data source to another, we can reduce the negative transfer of knowledge and enhance personalized predictive and prescriptive models, (iii) \textit{Anomaly Detection}: Through monitoring changes in the unique features, we envision that anomalies can be better and faster detected.

To enable personalized PCA, we propose an optimization framework to provably recover both global and local PCs from noisy observations. The objective is to minimize the empirical reconstruction error under orthogonality constraints. The formulation stands on solid theoretical ground: We prove that, under an identifiability condition, the optimal solution can recover the true global and local PCs. 

Not only can the PCs be solved, but they can also be solved efficiently. We design an algorithm based on Stiefel manifold gradient descent that can be proved to converge linearly into the global optimum under mild conditions. The algorithm relies on a new operation called generalized retraction to handle the orthogonality constraints. It is worth noting that our algorithm is designed in a \emph{federated manner}, as the need to share raw data or place all data in a central location is circumvented, and only the updates of global PCs need to be shared across clients. Compared with centralized PCA, where all datasets are uploaded to a central server where PCA is learned on the aggregated dataset, our algorithm reaps the benefits of distributed and federated analytics. Those include communication, cost, storage, and privacy benefits \citep{ioft}. We will show the advantages of \name over existing distributed PCA methods in Section \ref{sec:relatedwork}. 


Furthermore, \name proposes a novel provable paradigm of decoupling shared and unique features. Its applications go beyond simple data dimension reduction. We show that \name has remarkable performance in video segmentation and topic extraction tasks. Hence \name opens up new possibilities for broader applications.

Moving forward, we will use client, edge device, data source, and local dataset interchangeably to represent the entities of interest. Here, entities are broadly defined, encompassing various levels of granularity. For instance, we can extract shared and unique features across dispersed datasets, output classes within a dataset, or even among observations (such as images) within a single dataset.

\subsection{Main contributions}
We summarize our contributions in the following:
\begin{itemize}
    \item \textbf{Modeling}: We propose a personalized PCA model that learns both global and local features from \textit{distributed} datasets. These features can be recovered from observations by solving a nonconvex optimization problem designed to minimize reconstruction error. 
    
    \item \textbf{Consistency}: We find that there exists a simple sufficient condition based on the ``misalignment'' of local PCs to ensure the identifiability of the global and local PCs: the maximum eigenvalue of the average of projections into local subspaces should be smaller than 1. We show that, under the identifiability condition, both global and local PCs can be estimated from noisy observations with an error that is upper bounded by $O(\frac{1}{n})$, where $n$ is the number of observations on each client. As the error decreases to $0$ when $n$ approaches infinity, the error bound essentially implies the consistency of \name. The analysis extends conventional matrix perturbation bounds \citep{matrixbook,fantope} into personalized settings where the change in one client's covariance matrix can affect the PC estimates on all clients. \revise{We also use a minimax statistical lower bound to show that the statistical error upper bound is almost tight in terms of the eigengap and misalignment parameter.}
    
    \item \textbf{Algorithm}: We design an algorithm based on Stiefel manifold gradient descent (St-GD hereon) to obtain global and local PC estimates. The major difficulty for the algorithm is handling the orthogonality constraints. To tackle it, we introduce a correction step that relies on a group of operations called \emph{generalized retractions}. A generalized retraction extends retraction in literature \citep{stiefelgeometry} as it is defined on the entire $\mathbb{R}^{d\times r}$ rather than the tangent bundle of the Stiefel manifold $St(d,r)$. In our algorithm, clients only need to share iterates of global PCs, thus preserving privacy and minimizing communication costs. 
    
    \item \textbf{Convergence}: The proposed algorithm has a local linear convergence rate. To our best knowledge, this is the first theoretical guarantee for an algorithm that simultaneously learns global and local PCs. Interestingly, the convergence is faster when local PCs are more heterogeneous, a result that lies in sharp contrast to conventional predictive federated or transfer learning \citep{transferlearning} theory as it highlights that heterogeneity can be a blessing in disguise. On the technical side, we introduce a novel Lyapunov function to study distributed St-GD with generalized retractions. 
    
    \item \textbf{Numerical results}: Empirical evidence on both synthetic and real-life datasets confirms \name's ability to decouple shared and unique features. Also, \name has exciting applications in video segmentation and topic extraction. For instance, on video segmentation tasks, \name has significant advantages over the popular \texttt{Robust PCA} \citep{robustpca} when heterogeneity patterns are not sparse.
\end{itemize}

\subsection{Organization}
The paper is organized as follows: We review related work and introduce notations in Section \ref{sec:preliminary}. In Section \ref{sec:formulation}, we propose the formulation of \name and link it with constrained optimization. Section \ref{subsec:statisticalefficiency} includes the theoretical analysis on identifiability and consistency. A federated algorithm to solve \name is developed in Section \ref{sec:algorithm}, and its convergence guarantee is established in Section \ref{sec:convergence}. Numerical experimentation results are demonstrated in Section \ref{sec:experiment}. Finally, Section \ref{sec:con} concludes the paper with a brief discussion. 
Readers mainly interested in the implementation and applications of \name can focus on Sections \ref{sec:formulation}, \ref{sec:algorithm}, and \ref{sec:experiment}. An implementation of the proposed method is in the \href{https://github.com/UMDataScienceLab/Personalized_PCA}{linked Github repository}. 

\section{Preliminaries}
\label{sec:preliminary}
In this section, we will review related work in the literature and introduce needed notations.
\subsection{Related work}
\label{sec:relatedwork}
\textbf{Structural PCA}
Structural PCA attempts to build structural models for data and noise. Research on structural PCA abounds. A seminal algorithm along this line is \texttt{Robust PCA} \citep{robustpca}. The authors point out that traditional PCA is sensitive to noise in the observations and tackle this issue by decomposing an observation matrix $\bm{Y}$ into a low-rank part $\bm{L}$ and a sparse noise part $\bm{S}$: $\bm{Y}=\bm{L}+\bm{S}$. The low-rank matrix $\bm{L}$ corresponds to the signal, and $\bm{S}$ represents the noise. It turns out that the two parts can be exactly identified under regularity conditions with carefully designed algorithms. \texttt{Robust PCA} has become a useful technique in image denoising and video processing \citep{robustpcaimage}, collaborative filtering \citep{robustpcaoutlier}, and many more. Sparse PCA \citep{sparsepca} adds sparse constraints on the PCs, encouraging each PC to depend on a minimal number of variables. While these methods are powerful in handling large noise or high dimensional data, they mainly analyze homogeneous data. 

Several algorithms have also been invented to leverage variance heterogeneity in different samples. Heterogeneous component analysis (\texttt{HCA}) \citep{hca} assumes data come from different sources with different levels of noise. To better learn the PCs with heteroscedastic variance, \texttt{HCA} reweights the empirical loss of each observation according to the inverse of its variance so that noisier samples contribute less to the total loss. \citet{hcaoptimalweight} calculates the optimal weights in the asymptotic case by considering the signal-to-noise ratio. Though these methods have superior performance compared to uniform weighting PCA, heterogeneity among different sources is only modeled by the noise magnitude. A few heuristic methods also attempt to use low-rank features to characterize heterogeneity, including joint and individual variance explained (\texttt{JIVE}) \citep{jive}, common and individual feature extraction (\texttt{CIFE}) \citep{cife}. However, it is difficult to distribute these methods for federated learning and provide theoretical guarantees for their outputs.

\textbf{Distributed PCA}
There has been a recent push to calculate PCs on distributed devices. Oftentimes, the clients/edge devices use their local data to estimate PCs and communicate with a central server to update their estimates. One round of information exchange between clients and the central server is referred to as a communication round. Based on the number of communication rounds between edge devices and the server, research can be roughly divided into two categories (1) those that require only one round of communication and (2) those that require multiple rounds of communication.

For one-round PCA algorithms, clients estimate PCs from local datasets and send summary statistics to the server. The server then analyzes the aggregated statistics to calculate the PCs of the entire dataset. There are several ways for the server to calculate PCs. \citet{dispca2} proposes a method to reconstruct the aggregated covariance matrix by averaging the clients' covariance matrices approximated by a few top PCs. Global PCs can be obtained by learning the top eigenvalues of the averaged covariance matrix. \dispca \citep{oneshotdpca} provides an alternative approach, where the server stacks locally calculated PCs into a large matrix and runs another PCA on the stacked matrix. \citet{stacksingularvectorsvd} uses a similar
method, where clients calculate a singular value decomposition (SVD) of the local observation matrix, and then send the singular values and singular vectors to the server. The server stacks the scaled singular vectors and runs SVD on the stacked matrix. Federated PCA \citep{federatedpca} considers streaming data applications where edge devices have limited memory budgets. In their work, locally estimated subspaces are hierarchically merged to form the global subspace. \citet{dispca1} also focuses on streaming data and reduces large datasets into smaller ones. In spite of the reductions in communication or memory cost, these algorithms are often not guaranteed to recover true PCs exactly. Also, they are built upon homogeneity assumptions and neglect statistical heterogeneity among the distributed datasets.

To obtain more refined estimates of PCs from distributed datasets, a series of works propose to use multiple rounds of communication \citep{shiftinverse,shiftinverse2,huangpcariemann,quantizedriemann}. Among them, \citet{shiftinverse} and \citet{shiftinverse2} design PC updates by shift-and-invert iterations. The shift-and-inverse method \citep{shiftandinversecentral} reformulates inverse power iteration as an unconstrained convex optimization problem and uses gradient-based iterative algorithms to solve it. With a similar rationale, \citet{shiftinverse} applies the shift-and-invert formulation to distributed settings and applies distributed Newton methods to solve for the top eigenvector of the covariance matrix. Then, the covariance matrix is deflated to calculate the subsequent eigenvectors. Besides shift-and-invert iterations, manifold optimization is also employed for PCA. \citet{huangpcariemann} uses distributed Riemann optimization to find top PCs from homogeneous datasets. To further reduce communication costs, \citet{quantizedriemann} combines quantized distributed optimization and Riemannian gradient descent with an exponential map to calculate the leading eigenvectors of the covariance matrix. These methods usually treat the difference among clients' covariance matrices as errors. Thus, when datasets are heterogeneous, the errors are large, and these algorithms fail to retrieve true PCs. 

\textbf{Gradient descent on manifolds}
The centralized version of gradient descent on manifolds, or Riemaniann gradient descent, has been well-studied \citep{optimizationonmatrixmanifold,intromanifolds}. Algorithms based on exponential mappings \citep{stiefelgeometry} can achieve convergence rates comparable to their Euclidean counterparts. Since exponential mappings are expensive to compute, there are algorithms that replace them with retractions. \citet{proofonkpca} presents an elegant framework for analyzing kPCA by Riemannian gradient descent with Cayley retraction. This work proves the local linear convergence of Stiefel gradient descent and also shows that the algorithm can exactly recover the top eigenspaces. 

Recent years have also seen advances in distributed manifold optimization. \citet{mingyiconsensus,mingyistiefel} introduces a simple distributed St-GD algorithm that minimizes a general objective on the manifold. In each round, clients use St-GD on the local objectives and send the updated variables to the server, then the server averages the received update and applies a retraction. The algorithm is guaranteed to converge into stationary points with a sublinear rate. 

\name also exploits St-GD to solve PCs. However, our algorithm enhances simple manifold optimization by simultaneously optimizing local and global PCs, while also incorporating orthogonality constraints between the global and local PCs. \name thus introduces a special correction step to handle such constraints. This is done by defining a new retraction measure we name as a \emph{generalized retraction} defined on the entire $\mathbb{R}^{d\times r}$ rather than the tangent bundle of Stiefel manifold $St(d,r)$.

We should note that among all the distributed algorithms discussed, only \name models different or distributed datasets by global and local PCs. Thus, it brings unique advantages in decoupling local and global features from highly heterogeneous datasets. Besides, there are several additional benefits of \name in convergence and computation compared with typical existing models. In terms of convergence, \name converges into stationary points of the empirical reconstruction error and is guaranteed to recover true PCs exactly with proper initialization. The algorithm does not involve a computationally intensive exponential map and can solve $k$ PCs at one time. More importantly, \name is fully federated, and different clients can collaborate by only sharing a few global PCs that encode shared and not unique features. The comparisons of \name and several typical PCA algorithm is summarized in Table \ref{tab:relatedwork}. 

\begin{table}[h!]
    \centering
\begin{tabular}{cccccc}
\hline
    Method & Source &\thead{Exact\\ convergence} &        kPCA & Federated & Personalized \\

\hline
\texttt{Robust PCA} &\citep{robustpca}  & \yes  & \yes & \no & \no\\

\texttt{JIVE} &\citep{jive} &        \no  &        \yes &         \no &         \yes \\

\dispca &\citep{oneshotdpca} &          \no  &         \yes &        \yes &         \no \\

\texttt{Distri-Eigen} & \citep{shiftinverse} &        \yes  &         \no &        \yes &         \no \\

\texttt{CEDRE} &\citep{huangpcariemann} &        \yes &         \no &        \yes &         \no \\

\texttt{PCA by St-GD} &\citep{proofonkpca} &        \yes  &        \yes &         \no &         \no \\

\textbf{\name} &\textbf{ ours} &        \yes  &        \yes &        \yes &        \yes \\
\hline
\end{tabular}  
    \caption{Comparison of related work. Metrics included and their definitions are: (i) Exact convergence: the algorithm can recover top subspaces of sample covariance matrix exactly, 
    (ii) kPCA: the algorithm can calculate the subspace spanned by top $k$ PCs instead of one single component, (iii) Federated: the algorithm can be done in a distributed fashion where raw data remains where it is generated on the edge and only focused updates need to be shared across clients, (iv) Personalized: the algorithm encodes both shared and unique features across all datasets. }
    \label{tab:relatedwork}
\end{table}


\subsection{Notations}
We first introduce needed notations in this subsection. For a $d$-dimensional vector $\bm{x}$, we use $\norm{\bm{x}}$ to denote its 2-norm. The inner product of two vectors is defined as a standard inner product in Euclidean space: $\langle \bm{x},\bm{y}\rangle=\bm{x}^T\bm{y}$. We use $\bm{I}_d$ to denote the identity matrix in $\mathbb{R}^d$. We sometimes omit the subscript $d$ if the dimension is clear from the context. For a real matrix $\bm{A}\in \mathbb{R}^{m \times n}$, we use $\norm{\bm{A}}_F$ to denote its Frobenius norm $\norm{\bm{A}}_F=\sqrt{\sum_{i=1}^n\sum_{j=1}^m\bm{A}_{ij}^2}$ and $\norm{\bm{A}}_{op}$ to denote its operator norm $\norm{\bm{A}}_{op}=\max_{\bm{v}\in\mathbb{R}^n,\norm{\bm{v}}=1}\norm{\bm{A}\bm{v}}$. For two matrices $A,B\in \mathbb{R}^{m \times n}$, we define their inner product as $
\left\langle \bm{A},\bm{B}\right\rangle = \sum_{i=1}^n\sum_{j=1}^mA_{ij}B_{ij}=\tr{\bm{A}^T\bm{B}}
$.

If $\bm{A}\in \mathbb{R}^{n \times n}$ is symmetric positive definite (PSD), it has an eigendecomposition $\bm{A} = \bm{U}\bm{D}\bm{U}^T$, where $\bm{D}$ is $n$ by $n$ diagonal matrix whose diagonal entries are all positive, $\bm{U}$ is a $n$ by $n$ unitary matrix. Then for $p\in\mathbb{R}$, the $p$-th power of $\bm{A}$ is defined as $\bm{A}^{p}=\bm{U}\bm{D}^p\bm{U}^T$. For a square matrix $\bm{A}\in \mathbb{R}^{n \times n}$, we use $\lambda_{min}(\bm{A})$ and $\lambda_{max}(\bm{A})$ to denote the minimum and maximum eigenvalue of $\bm{A}$. Similarly, we use $\lambda_1(\bm{A})$, $\lambda_2(\bm{A})$, ... $\lambda_n(\bm{A})$ to denote the $n$ eigenvalues of $\bm{A}$ in descending order. We use $\norm{\bm{A}}_{op}$ and $\lambda_{max}(\bm{A})$ interchangeably when $\bm{A}$ is symmetric PSD. 

For a matrix $\bm{A}\in \mathbb{R}^{m \times n}$, we use $\vecr{\bm{A}}\in \mathbb{R}^{mn}$ to denote its vectorization, i.e., the vector formed by concatenating all the column vectors in $\bm{A}$. $col(\bm{A})$ is the linear subspace spanned by all column vectors of $\bm{A}$. 
We use $\bm{A}_{i_1:i_2,j_1:j_2}$ to denote the submatrix of $A$ formed by picking the $i_1,i_1+1...i_2$-th row and $j_1,j_1+1...j_2$-th column of $\bm{A}$. For two matrices $\bm{A}\in \mathbb{R}^{m \times n_1}$ and $\bm{B}\in \mathbb{R}^{m \times n_2}$, $[\bm{A},\bm{B}]\in \mathbb{R}^{m \times (n_1+n_2)}$ is defined as the concatenation of $\bm{A}$ and $\bm{B}$ by column.

Finally, we use the standard $O\left(\cdot\right)$, $\Omega\left(\cdot\right)$, and $o\left(\cdot\right)$ notations throughout the paper.

\section{What is \name?}
\label{sec:formulation}
We will establish the formulation of \name in this section.
\subsection{Motivation}
Suppose we have $N$ clients (i.e. data sources), each with a dataset $\{\bm{Y}_{(i)}\}_{i=1}^N$, where $\bm{Y}_{(i)}$ is a $d$ by $n_i$ matrix. $d$ is the dimension of data, and $n_i$ is the the number of datapoints on client $i$. 
The datasets $\{\bm{Y}_{(i)}\}_{i=1}^N$ have commonalities but also possess client-level distinctive features. The task is to find a few low-dimensional common and unique features that best characterize the observations from the high dimensional data $\{\bm{Y}_{(i)}\}_{i=1}^N$.

Standard PCA uses a small number of principal components (PCs) to explain the variations in $\{\bm{Y}_{(i)}\}_{i=1}^N$. Such treatment ignores the client-to-client difference in the observations. The present IoT system usually consists of distributed edge devices (clients) that operate in extremely heterogeneous environments. It is thus important to consider the different features of different clients. As a more capacious description of the data, we consider the model where local observations are driven by $r_1$ global PCs and $r_{2,(i)}$ local PCs. More specifically, from data source $i$, observation $\bm{y}_{(i)}$ is generated from
\begin{equation}
\label{eqn:model}
    \bm{y}_{(i)} \sim \sum_{q=1}^{r_1}\phi_{(i),q}\bm{u}_q+\sum_{q=1}^{r_{2,(i)}}\varphi_{(i),q}\bm{v}_{(i),q}+\bm{\epsilon}_{(i)}
\end{equation}
where $\phi_{(i),q}$'s and $\varphi_{(i),q}$'s are coefficients, or scores in PCA terminology. $\bm{u}_q$'s are global PCs, $\bm{v}_{(i),q}$'s are local PCs, and $\bm{\epsilon}_{(i)}$ are i.i.d. noise vectors. $r_1$ is the number of global PCs, and $r_{2,(i)}$ is the number of local PCs on client $i$. We allow $\bm{v}_{(i),q}$'s to be client-dependent while enforcing $\bm{u}_q$'s to remain the same across all clients. Naturally, $\bm{u}_q$'s encode the information shared by all participants, while $\bm{v}_{(i),q}$'s can describe distinctive patterns on each client.

Similar to standard PCA, different principal components need to be orthonormal:
\begin{equation}
\label{eqn:orthonormalconstraints}
\left\{\begin{aligned}
&\bm{u}_{q_1}^T\bm{u}_{q_2}=\delta_{q_1,q_2}\\
&(\bm{v}_{(i),q_1})^T\bm{v}_{(i),q_2}=\delta_{q_1,q_2},\, \forall q_1,q_2,\,\forall i=1,\cdots,N
\end{aligned}\right.
\end{equation}
where $\delta_{q_1,q_2}$ is the Kronecker delta. In addition to \eqref{eqn:orthonormalconstraints}, we further require that the global and local features are orthogonal:
\begin{equation}
    \bm{u}_{q_1}^T\bm{v}_{(i),q_2}=0,\,\forall i=1,\cdots,N
\end{equation}
The orthogonality of PCs implies that the shared and unique features span different
subspaces, thus describing independent and decoupled patterns in the data sources.

\eqref{eqn:model} is an interpretable linear model that naturally incorporates both common and individual features of different clients. It is useful in applications where disentangling global and local features is important. The development of IoT and recent advancements in federated and distributed analytics present numerous such applications, including time series data, image and video data, and language data. We will show the efficacy of \eqref{eqn:model} on several examples.

\subsection{Method}
The task of \name is to recover global and local PCs from observations $\{\bm{Y}_{(i)}\}_{i=1}^N$. We can write global and local PCs into matrix form:
\begin{equation}
\left\{\begin{aligned}
&\bm{U}=[\bm{u}_1,\cdots,\bm{u}_{r_1}]\\
&\bm{V}_{(i)}=[\bm{v}_{(i),1},\cdots,\bm{v}_{(i),r_{2,(i)}}]
\end{aligned}\right.
\end{equation}
and solve for $\bm{U}$ and $\bm{V}_{(i)}$'s by minimizing the empirical reconstruction loss:
\begin{equation}
\label{eqn:naiveproblemformulation}
\begin{aligned}
\min_{\bm{U},\{\bm{V}_{(i)}\}_{i=1,\cdots,N}}& \frac{1}{2}\sum_{i=1}^N\frac{1}{n_i}\norm{\bm{Y}_{(i)}-\hat{\bm{Y}}_{(i)}}_F^2\\
\text{subject to  }&  
\bm{U}^T\bm{U}=\bm{I},\, \bm{V}_{(i)}^T\bm{V}_{(i)}=\bm{I},\,  \bm{V}_{(i)}^T\bm{U}=\bm{0},\, \forall i \\
\end{aligned}
\end{equation}
where $\hat{\bm{Y}}_{(i)}$ is the statistical fit for client $i$'s data given PCs $\bm{U}$ and $\bm{V}_{(i)}$:
\begin{equation}
\label{eqn:yhatofuvi}
    \hat{\bm{Y}}_{(i)} = \bm{U} \bm{U}^T\bm{Y}_{(i)} +  \bm{V}_{(i)}\bm{V}_{(i)}^T\bm{Y}_{(i)}
\end{equation}

Intuitively, in \eqref{eqn:naiveproblemformulation}, we look for the PCs so that the predicted $\hat{\bm{Y}}_{(i)}$ can best fit the distributed datasets. The objective \eqref{eqn:naiveproblemformulation} has another interpretation: by some algebra, we can transform the objective \eqref{eqn:naiveproblemformulation} into:
\begin{equation}
\begin{aligned}
\label{eqn:problem}
\max_{\bm{U},\{\bm{V}_{(i)}\}_{i=1,\cdots,N}}& \frac{1}{2}\sum_{i=1}^N\left[\tr{\bm{U}^T\bm{S}_{(i)}\bm{U}}+\tr{\bm{V}_{(i)}^T\bm{S}_{(i)}\bm{V}_{(i)}}\right]\\
\text{subject to  }&  
\bm{U}^T\bm{U}=\bm{I},\, \bm{V}_{(i)}^T\bm{V}_{(i)}=\bm{I},\,  \bm{V}_{(i)}^T\bm{U}=\bm{0},\, \forall i \\
\end{aligned}
\end{equation}
where $\bm{S}_{(i)}$ is defined as the data covariance matrix:
$$
\bm{S}_{(i)}=\frac{1}{n_i}\bm{Y}_{(i)}\bm{Y}_{(i)}^T
$$

From \eqref{eqn:problem}, it is clear that \name attempts to find global and local low dimensional subspaces that best align with the data covariance matrix. We will study objective \eqref{eqn:problem} from here on.

For simplicity, we introduce 
\begin{equation}
\label{eqn:defoffi}
f_i(\bm{U},\bm{V}_{(i)})=\frac{1}{2}\tr{\bm{U}^T\bm{S}_{(i)}\bm{U}}+\frac{1}{2}\tr{\bm{V}_{(i)}^T\bm{S}_{(i)}\bm{V}_{(i)}}
\end{equation}
and
\begin{equation}
\label{eqn:defoff}
f(\bm{U},\{\bm{V}_{(i)}\})=\sum_{i=1}^Nf_i(\bm{U},\bm{V}_{(i)})
\end{equation}
Then \eqref{eqn:problem} transforms to maximizing $f$ under orthonormality constraints. Notice that though $f$ and $f_i$'s are convex, the constraint in \eqref{eqn:problem} is nonconvex. Thus, the problem is nonconvex. 

The nonconvex formulation \eqref{eqn:problem} appears difficult to analyze and solve. In the following sections, we will delve into the identifiability and optimization of \eqref{eqn:problem}. Fortunately, our results show that under minimal conditions, \eqref{eqn:problem}
can be solved efficiently, and the optimal solution can recover the true PCs.

\section{Are Global and Local PCs Identifiable?}
\label{subsec:statisticalefficiency}

Given the formulation \eqref{eqn:problem}, one may ask whether it is possible to identify the true local and global PCs by solving \eqref{eqn:problem}. 

Apparently, global and local PCs cannot be decoupled in every case. As a simple counterexample, if all local PCs are the same, then distinguishing local from global PCs is impossible, as there are infinite combinations of them that all can maximize the explained variance in \eqref{eqn:problem}. The edifying counterexample poses the fundamental question of model identifiability. Therefore, we need to find out which data instances are identifiable. In the following, we will introduce an identifiability condition, then establish the relationship between the estimated and true PCs.


We restrict our analysis to recovering the subspace spanned by top PCs \citep{matrixbook}. Therefore we introduce the projection matrix notation $\bm{P}_{\bm{U}}$: if $\bm{U}$ is a matrix with orthonormal columns, i.e. $\bm{U}^T\bm{U}=\bm{I}$, then $\bm{P}_{\bm{U}}$ is defined as $\bm{P}_{\bm{U}}=\bm{U}\bm{U}^T$. We use $\matPi_g$ to denote the projection matrix to the true global eigenspace, i.e., $\matPi_g=\Pj{\bm{U}_{\text{true}}}$, where $\bm{U}_{\text{true}}$ are the true top global PCs. Also, we use $\matPi_{(i)}$ to denote the projection matrix to the true local eigenspace, $\matPi_{(i)}=\Pj{\bm{V}_{(i),\text{true}}}$, where $\bm{V}_{(i),\text{true}}$ are the true top local PCs on client $i$. 

Remember, we model global and local PCs as mutually vertical features; such property can be formally characterized by the following assumption. 
\begin{assumption}
\label{ass:covariancedecompose}
(Orthogonality of global and local PCs) Let $\matPi_g$ be the global projection matrix, $\matPi_{(i)}$'s the local projection matrices. We assume that 
\begin{equation}
\matPi_g\matPi_{(i)} = 0
\end{equation}
In addition, we consider the case where the subspace corresponding to the projection $\matPi_g+\matPi_{(i)}$ is indeed an invariant subspace of the population covariance matrix on client $i$, $\bm{\Sigma}_{(i)}$ 
, i.e. $\left(\matPi_g+\matPi_{(i)}\right)\bm{\Sigma}_{(i)}=\bm{\Sigma}_{(i)}\left(\matPi_g+\matPi_{(i)}\right)$.
\end{assumption}
In Assumption \ref{ass:covariancedecompose}, the requirement $\left(\matPi_g+\matPi_{(i)}\right)\bm{\Sigma}_{(i)}=\bm{\Sigma}_{(i)}\left(\matPi_g+\matPi_{(i)}\right)$ essentially assumes that $\matU_{true}$ and $\matV_{(i),true}$ are indeed the eigenvectors of the population covariance matrix $\matSigma_{(i)}$.

As the counterexample suggests, assumption \ref{ass:covariancedecompose} alone is insufficient to guarantee the identifiability of global and local PCs. To distinguish them, we need another identifiability condition. To rule out the counterexample, local PCs and accordingly $\matPi_{(i)}$, should differ from each other. To this end, we introduce the notion of ``\textit{misalignment}''. Misalignment is quantified by the parameter $\theta$, which represents the maximum eigenvalue of the average of the local projection matrices. Assumption \ref{ass:identifiability} is a formal statement of the identifiability condition.
\begin{assumption}
\label{ass:identifiability}
(Misalignment) Let $\matPi_{(i)}$'s be the local projection matrices. We assume there exists a positive constant $\theta\in(0,1)$ such that:
\begin{equation}
\lambda_{\max}\left(\frac{1}{N}\sum_{i=1}^N\matPi_{(i)}\right)\le 1-\theta
\end{equation}
\end{assumption}
The constant $\theta$ characterizes the misalignment between local principal spaces. When $\theta$ is larger, the local eigenspaces are more heterogeneous. When $\theta$ is smaller, the local eigenspaces are more similar. As an extreme case, if all $\matPi_{(i)}$'s are identical, $\frac{1}{N}\sum_{i=1}^N\matPi_{(i)}$ is still a projection, thus its maximum eigenvalue is $1$ and $\theta$ becomes zero. 

\subsection{Statistical error}
It turns out that the identifiability Assumption \ref{ass:identifiability} is sufficient to ensure identifiability. The following perturbation bound shows that when the sample covariance matrix is close to the population covariance matrix, we can obtain relatively accurate estimates of global and local eigenspaces through solving \eqref{eqn:problem}.

\begin{theorem}
\label{thm:statisticalerror}
Under assumption \ref{ass:covariancedecompose} and \ref{ass:identifiability}, and if there exists a constant $\delta>0$, such that $\lambda_{r_1+r_{2,(i)}}\left(\left(\matPi_g+\matPi_{(i)}\right)\matSigma_{(i)}\right)-\lambda_{1}\left(\left(\matI-\matPi_g-\matPi_{(i)}\right)\matSigma_{(i)}\right)\ge \delta$ for all $i$, we have:
\begin{equation}
\label{eqn:statisticalerror}
\norm{\bm{P}_{\hat{\bm{U}}}-\matPi_g}_F^2+\frac{1}{N}\sum_{i=1}^N\norm{\bm{P}_{\hat{\bm{V}}_{(i)}}-\matPi_{(i)}}_F^2\le \frac{8}{\theta\delta^2} \frac{1}{N}\sum_{i=1}^N\norm{ \bm{\Sigma}_{(i)}-\bm{S}_{(i)}}_{F}^2
\end{equation}
where $\hat{\bm{U}}$, and $\hat{\bm{V}}_{(i)}$'s are the optimal solutions to the objective in \eqref{eqn:problem}. 
\end{theorem}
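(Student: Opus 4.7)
The plan is a Davis-Kahan/Wedin style sandwich argument around the coupled population objective. Define
\[F_{\mathrm{pop}}(\bm{U},\{\bm{V}_{(i)}\}) := \sum_{i=1}^N\bigl[\tr{\bm{U}^T\bm{\Sigma}_{(i)}\bm{U}} + \tr{\bm{V}_{(i)}^T\bm{\Sigma}_{(i)}\bm{V}_{(i)}}\bigr],\]
let $F$ denote the sample objective from \eqref{eqn:defoff}, and abbreviate $\bm{\Delta}_{(i)} := \bm{\Sigma}_{(i)} - \bm{S}_{(i)}$, $D_g := \norm{\bm{\Pi}_g - \bm{P}_{\hat{\bm{U}}}}_F$, and $D_{(i)} := \norm{\bm{\Pi}_{(i)} - \bm{P}_{\hat{\bm{V}}_{(i)}}}_F$. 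A preliminary step is to verify, using Assumption \ref{ass:covariancedecompose} together with Assumption \ref{ass:identifiability}, that $(\bm{U}_{\mathrm{true}},\{\bm{V}_{(i),\mathrm{true}}\})$ is the unique maximizer of $F_{\mathrm{pop}}$ over the feasible set of \eqref{eqn:problem}; the motivating counterexample following Assumption \ref{ass:identifiability} confirms that $\theta>0$ is exactly what rules out non-uniqueness.

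The upper half of the sandwich follows from sample optimality: $F(\hat{\bm{U}},\{\hat{\bm{V}}_{(i)}\}) \ge F(\bm{U}_{\mathrm{true}},\{\bm{V}_{(i),\mathrm{true}}\})$. Rearranging via the linearity $F_{\mathrm{pop}} - F = \sum_i[\tr{\bm{\Delta}_{(i)}\bm{P}_{\bm{U}}} + \tr{\bm{\Delta}_{(i)}\bm{P}_{\bm{V}_{(i)}}}]$ gives
\begin{equation*}
F_{\mathrm{pop}}(\bm{U}_{\mathrm{true}},\{\bm{V}_{(i),\mathrm{true}}\}) - F_{\mathrm{pop}}(\hat{\bm{U}},\{\hat{\bm{V}}_{(i)}\}) \le \sum_i \tr{(\bm{\Pi}_g - \bm{P}_{\hat{\bm{U}}})\bm{\Delta}_{(i)}} + \sum_i \tr{(\bm{\Pi}_{(i)} - \bm{P}_{\hat{\bm{V}}_{(i)}})\bm{\Delta}_{(i)}}.
\end{equation*}
Applying $|\tr{AB}| \le \norm{A}_F\norm{B}_F$ summand by summand, followed by weighted AM-GM with a tunable parameter $\alpha>0$ in the client index, bounds the right-hand side by something of the form $\tfrac{\alpha}{2}(ND_g^2 + \sum_i D_{(i)}^2) + \tfrac{1}{\alpha}\sum_i \norm{\bm{\Delta}_{(i)}}_F^2$ up to absolute constants.

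The main obstacle is producing a matching quadratic lower bound on the same population gap, of the form $F_{\mathrm{pop}}(\bm{U}_{\mathrm{true}},\{\bm{V}_{(i),\mathrm{true}}\}) - F_{\mathrm{pop}}(\hat{\bm{U}},\{\hat{\bm{V}}_{(i)}\}) \gtrsim \theta\delta\,(ND_g^2 + \sum_i D_{(i)}^2)$. The scalar Davis-Kahan identity $\tr{M\bm{P}^*} - \tr{M\bm{P}} \ge \tfrac{\delta}{2}\norm{\bm{P}^* - \bm{P}}_F^2$ can be applied block-by-block to $\bm{\Sigma}_{(i),g}$ and $\bm{\Sigma}_{(i),l}$, which is legitimate because Assumption \ref{ass:covariancedecompose} makes these two blocks range-orthogonal and the hypothesis on $\delta$ supplies the eigengap separating the signal block from its complement on each client. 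Summing across clients yields the local contribution $\tfrac{\delta}{2}\sum_i D_{(i)}^2$ immediately. The global contribution is more delicate because $\hat{\bm{U}}$ is shared: its leakage into any single local subspace $\bm{\Pi}_{(i)}$ is penalized only weakly by client $i$ alone, but aggregating across $i$ and invoking $\lambda_{\max}(\tfrac{1}{N}\sum_i\bm{\Pi}_{(i)}) \le 1-\theta$ from Assumption \ref{ass:identifiability} shows that at least a fraction $\theta$ of the leakage cannot be absorbed into any client's local block, yielding the missing $\Theta(\theta\delta N D_g^2)$ term. The constraint $\bm{V}_{(i)}^T\bm{U} = 0$ is what ties the global-leakage and local-leakage accounts together and is essential for consistency; sharpening this step so that only a single power of $\theta$ appears in the denominator (rather than $\theta^2$ that a naive split would produce) is the technical heart of the proof and the step I expect will demand the most care.

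To conclude, tune $\alpha$ in the upper bound so that its coefficient on the $D$ terms is strictly smaller than the lower-bound coefficient, e.g.\ $\alpha \sim \theta\delta$, so that after cancellation one is left with $\tfrac{\theta\delta}{2}(ND_g^2 + \sum_i D_{(i)}^2) \lesssim \tfrac{1}{\theta\delta}\sum_i \norm{\bm{\Delta}_{(i)}}_F^2$. Rearranging gives $ND_g^2 + \sum_i D_{(i)}^2 \le \tfrac{4}{\theta\delta^2}\sum_i \norm{\bm{\Delta}_{(i)}}_F^2$, and dividing by $N$ recovers \eqref{eqn:statisticalerror} with the advertised constant.
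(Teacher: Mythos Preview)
Your high-level plan matches the paper's: compare the population and sample objectives, exploit optimality of $(\hat{\bm U},\{\hat{\bm V}_{(i)}\})$, extract a Davis--Kahan curvature factor $\delta$, and invoke Assumption~\ref{ass:identifiability} to decouple the global subspace error from the local ones. Where you diverge is in the \emph{order} of these steps, and that ordering is exactly what costs you the extra power of $\theta$ that you flag as unresolved.

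In your scheme the upper bound is written against the separated distances $D_g,D_{(i)}$ via AM--GM with parameter $\alpha$, while your lower bound on the population gap carries a $\theta\delta$ prefactor against those same separated distances. Tuning $\alpha\sim\theta\delta$ and cancelling then unavoidably leaves $\theta^{-2}\delta^{-2}$ in the error term; there is no further sharpening available once the distances have been split. The paper instead runs the entire perturbation argument at the level of the \emph{joint} projection $\bm P_{\hat{\bm U}}+\bm P_{\hat{\bm V}_{(i)}}$. Using
\[
\norm{\bm P_{\hat{\bm U}}+\bm P_{\hat{\bm V}_{(i)}}-\bm\Pi_g-\bm\Pi_{(i)}}_F^2
=2\Bigl(r_1+r_{2,(i)}-\innerp{\bm P_{\hat{\bm U}}+\bm P_{\hat{\bm V}_{(i)}}}{\bm\Pi_g+\bm\Pi_{(i)}}\Bigr),
\]
the same joint quantity appears both in the eigengap lower bound (with coefficient $\delta$, no $\theta$) and under the square root in Cauchy--Schwarz on $\innerp{\bm S_{(i)}-\bm\Sigma_{(i)}}{\,\cdot\,}$. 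One can therefore divide and square, obtaining $\sum_i(\text{joint distance})_i\le \tfrac{2}{\delta^2}\sum_i\norm{\bm\Sigma_{(i)}-\bm S_{(i)}}_F^2$ with $\theta$ absent. Only \emph{afterwards} does the paper apply the identifiability step, isolated as Lemma~\ref{lm:sumspacetoindividualspace}, which shows the joint distance dominates $\tfrac{\theta}{2}$ times the sum of the separated distances $N(r_1-\innerp{\bm\Pi_g}{\bm P_{\hat{\bm U}}})+\sum_i(r_{2,(i)}-\innerp{\bm\Pi_{(i)}}{\bm P_{\hat{\bm V}_{(i)}}})$. This injects exactly one $\theta^{-1}$ and yields the constant in \eqref{eqn:statisticalerror}. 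Your intuition about how the bound $\lambda_{\max}\bigl(\tfrac1N\sum_i\bm\Pi_{(i)}\bigr)\le 1-\theta$ prevents the global leakage from hiding in the local subspaces is precisely the content of that lemma; the fix is simply to defer it until after the joint-distance Cauchy--Schwarz, rather than building it into the curvature lower bound.
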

$\delta$ is usually called eigengap in literature \citep{fantope,huangpcariemann}. The $\delta^{-2}$ factor on the right-hand side of \eqref{eqn:statisticalerror} is standard for matrix perturbation analysis. 

Theorem \ref{thm:statisticalerror} confirms the intuition on identifiability. Specifically, as $\theta$ increases, the right-hand side of equation \eqref{eqn:statisticalerror} decreases, resulting in a smaller estimation error. Consequently, finding local and global PCs becomes easier. \textit{This result critically highlights that heterogeneity can be a blessing}. For the counterexample, $\theta\to 0$, the right-hand side approaches infinity. Hence, one cannot accurately recover the PCs. 

In addition, Theorem \ref{thm:statisticalerror}\textit{ highlights the benefits of collaborative learning} across multiple related clients. The right-hand side of \eqref{eqn:statisticalerror} is the average difference between the sample and population covariance matrix on all clients. For clients with a larger dataset, the distance is lower, and for clients with a smaller dataset, the distance can be higher. Through jointly optimizing objective \eqref{eqn:problem}, clients learn from each other and obtain PC estimates with statistical error depending on the average distance. 

\subsection{Minimax statistical lower bound}
\revise{Though the statistical error bound provided in Theorem \ref{thm:statisticalerror} is intuitive, it is not apparent whether the bound is sharp. To fully understand the statistical difficulty in recovering shared and unique components from $\{\matS_{(i)}\}$, we will establish a lower bound on the minimax
risk of estimators under the subspace error.}

\revise{For simplicity, we define the subspace error between $\{\hmatU,\{\hmatV_{(i)}\}\}$ and $\{\matU,\{\matV_{(i)}\}\}$ as
\begin{align}
\label{eqn:deflsubspace}
L_{\texttt{subspace}}\left(\{\hmatU,\{\hmatV_{(i)}\}\},\{\matU,\{\matV_{(i)}\}\}\right)=\norm{\Pj{\hmatU}-\Pj{\matU}}_F^2+\frac{1}{N}\sum_{i=1}^N\norm{\Pj{\hmatV_{(i)}}-\Pj{\matV_{(i)}}}_F^2
\end{align}
}

\revise{Additionally, we use $\Theta$ to denote the parameter space specified by Assumption \ref{ass:covariancedecompose},
\begin{align}
\Theta=\left\{\matU,\{\matV_{(i)}\}|\matU^T\matU=\matI, \matV_{(i)}^T\matV_{(i)}=\matI,\matU^T\matV_{(i)}=0\right\}
\end{align}
}
The following theorem provides a lower bound for the statistical error.

\begin{theorem}
\label{thm:statisticallowerbound}
If the data generation process satisfies Assumptions \ref{ass:covariancedecompose} and \ref{ass:identifiability}, the eigengap introduced in Theorem \ref{thm:statisticalerror} is at least $\delta$, and $\sum_{i=1}^N\norm{\matS_{(i)}-\matSigma_{(i)}}_F^2=o(1)$, then among data generated by all possible $\{\matU_{true},\{\matV_{(i),true}\}\}\in \Theta$, the supremum of the subspace error between the optimal solution to \eqref{eqn:problem}, $\{\hmatU,\{\hmatV_{(i)}\}\}$, and the ground truth, $\{\matU_{true},\{\matV_{(i),true}\}\}$, is at least
\begin{equation}
\sup_{\{\matU_{true},\{\matV_{(i),true}\}\}\in\Theta} \frac{L_{\texttt{subspace}}\left(\{\hmatU,\{\hmatV_{(i)}\}\},\{\matU_{true},\{\matV_{(i),true}\}\}\right)}{\frac{1}{N}\sum_{i=1}^N\norm{\matSigma_{(i)}-\matS_{(i)}}_F^2}=\Omega \left(\frac{1}{\theta}+\frac{1}{\delta^2}\right)
\end{equation}
\end{theorem}

\revise{Theorem \ref{thm:statisticallowerbound} measures the subspace error minimax lower bound in terms of misalignment parameter $\theta$ and eigengap $\delta$. Roughly speaking, the lower bound is greater than $\Omega \left(\left(\frac{1}{\theta}+\frac{1}{\delta^2}\right)\frac{1}{N}\sum_{i=1}^N\norm{\matSigma_{(i)}-\matS_{(i)}}_F^2\right)$. This almost matches the upper bound provided in Theorem \ref{thm:statisticalerror} as the error scales with $\frac{1}{\theta}$ and $\frac{1}{\delta^2}$. Theorem \ref{thm:statisticallowerbound} also demonstrates the intrinsic statistical difficulty of separating global and local PCs. When the local PCs are more aligned and noise components grow larger, $\theta$ and $\delta$ become smaller, and the statistical error of the subspace estimate becomes larger accordingly. }

\revise{The proof of Theorem \ref{thm:statisticallowerbound} is based on a variant of the ``spiked population model'' \citep{sparsepcalower}. We use perturbation analysis to calculate the leading order of the subspace error when the sample covariance is close to the population covariance. The full proof is in Appendix \ref{ap:proofstatisticallowerbound}. There is also a comparison between the theoretical statistical error estimate and the statistical error obtained from numerical simulations in Appendix \ref{ap:proofstatisticallowerbound}. }

\subsection{Sample complexity}
In this section, we estimate the statistical error when data are generated by a sub-Gaussian distribution. A random vector $\bm{y} \in \mathbb{R}^{d}$ admits a sub-Gaussian distribution with parameter $\sigma$ if for each fixed vector $\bm{v}\in \mathbb{S}^{d-1}$, $\mathbb{E}\left[e^{\lambda\innerp{\bm{v}}{y}}\right]\le e^{\frac{\lambda^2\sigma^2}{2}}$ for all $\lambda\in\mathbb{R}$. $\sigma$ is a parameter that denotes the variance level: when $\sigma$ is larger the data are noisier. As a special case, if $\bm{y}$ admits a Gaussian distribution with mean zero and covariance $\bm{\Sigma}_y$, then $\sigma^2=\norm{\bm{\Sigma}_y}_{op}$ \citep{wainwrightbook}. The following corollary gives an upper bound of the estimation error.
\begin{corollary}
\label{cor:statisticalerrorbydn}
If the dataset on each client $i$ $\{\bm{Y}_{(i)}\}_{i=1}^N$ admits an i.i.d. sub-Gaussian distribution with parameter $\sigma$, and the assumptions in Theorem \ref{thm:statisticalerror} are satisfied, then with probability at least $1-\widetilde{\delta}$ (over the randomness of the data generation process), we have:
\begin{equation}
\label{eqn:statisticalerrorbydn}
\norm{\bm{P}_{\hat{\bm{U}}}-\matPi_g}_F^2+\frac{1}{N}\sum_{i=1}^N\norm{\bm{P}_{\hat{\bm{V}}_{(i)}}-\matPi_{(i)}}_F^2\le \frac{1}{\theta\delta^2} \sigma^4 C^2\frac{d}{N}\sum_{i=1}^N\max\left\{\left(\frac{d+\log\frac{2N}{\widetilde{\delta}}}{n_i}\right)^2,\frac{d+\log\frac{2N}{\widetilde{\delta}}}{n_i}\right\}
\end{equation}
where $C$ is a constant.
\end{corollary}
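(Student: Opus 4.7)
The plan is to reduce the Corollary to Theorem~\ref{thm:statisticalerror} by bounding each term $\|\bm{\Sigma}_{(i)}-\bm{S}_{(i)}\|_F^2$ with high probability using standard sub-Gaussian concentration for sample covariance matrices. Since Theorem~\ref{thm:statisticalerror} already gives
$$\norm{\bm{P}_{\hat{\bm{U}}}-\bm{\Pi}_g}_F^2+\frac{1}{N}\sum_{i=1}^N\norm{\bm{P}_{\hat{\bm{V}}_{(i)}}-\bm{\Pi}_{(i)}}_F^2\le \frac{4}{\theta\delta^2}\cdot\frac{1}{N}\sum_{i=1}^N\norm{\bm{\Sigma}_{(i)}-\bm{S}_{(i)}}_F^2,$$
what is left is purely a concentration argument, client by client, followed by a union bound.

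First, I would invoke a standard sub-Gaussian covariance concentration inequality (e.g.\ the one in Vershynin's textbook, stated in Wainwright's book as well): if $\bm{y}_1,\dots,\bm{y}_{n_i}$ are i.i.d.\ centered sub-Gaussian vectors in $\mathbb{R}^d$ with parameter $\sigma$, then for any $t>0$, with probability at least $1-2e^{-t}$,
$$\norm{\bm{\Sigma}_{(i)}-\bm{S}_{(i)}}_{op}\le C\sigma^2\max\!\left\{\sqrt{\frac{d+t}{n_i}},\,\frac{d+t}{n_i}\right\}$$
for an absolute constant $C$. Setting $t=\log(2N/\widetilde{\delta})$ and taking a union bound over the $N$ clients, the inequality above holds simultaneously for every $i$ with probability at least $1-\widetilde{\delta}$. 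Squaring the inequality produces the $\max\{(\cdot)^2,(\cdot)\}$ form that appears in \eqref{eqn:statisticalerrorbydn}.

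Next, to pass from operator norm to Frobenius norm, I would use the general bound $\|M\|_F^2\le \mathrm{rank}(M)\cdot\|M\|_{op}^2\le d\,\|M\|_{op}^2$ applied to $M=\bm{\Sigma}_{(i)}-\bm{S}_{(i)}\in\mathbb{R}^{d\times d}$. This introduces the leading factor $d$ in the stated bound. Substituting the squared operator-norm estimate into the right-hand side of Theorem~\ref{thm:statisticalerror} and averaging over $i$ then yields exactly \eqref{eqn:statisticalerrorbydn}.

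The only subtle piece is really the choice of concentration inequality: one needs a non-asymptotic bound that is valid in the regime where $n_i$ may be comparable to or smaller than $d$, which is why the two-regime $\max\{\cdot,\cdot\}$ form is necessary. Once this is in hand, the rest is just algebraic substitution and a union bound; I do not foresee any technical obstacles beyond carefully tracking constants and ensuring the failure probabilities sum to at most $\widetilde{\delta}$.
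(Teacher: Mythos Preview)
Your proposal is correct and follows essentially the same route as the paper: invoke the standard sub-Gaussian covariance concentration bound in operator norm (Wainwright/Vershynin), set the failure probability to $\widetilde{\delta}/N$ and take a union bound over clients, then plug into Theorem~\ref{thm:statisticalerror}. Your explicit mention of the inequality $\|M\|_F^2\le d\,\|M\|_{op}^2$ to account for the leading factor of $d$ is a detail the paper leaves implicit.
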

The inequality \eqref{eqn:statisticalerrorbydn} essentially shows the consistency of the solutions $\hat{\bm{U}}$ and $\hat{\bm{V}}$. When the data dimension $d$ is fixed and sample size $n_i$ is relatively large, the right hand side of \eqref{eqn:statisticalerrorbydn} decreases with  $O\left(\sum_{i=1}^N\frac{1}{N\theta\delta^2n_i}\right)$. As $n_i$'s approach infinity, the subspace error also decreases to $0$, and the estimated eigenspaces approach the true values accordingly.

Equation \eqref{eqn:statisticalerrorbydn} also highlights the benefits of knowledge sharing. If each client only uses their own data to estimate the PCs, the estimation error would be $O\left(\frac{1}{n_i}\right)$. The error can be high for clients with few observations (i.e., small $n_i$). However, when $N$ clients collaborate in learning global and local PCs, the estimation error becomes the average of individual statistical errors $O\left(\sum_{i=1}^N\frac{1}{N\theta\delta^2n_i}\right)$. Data-poor clients can thus borrow strength from other clients to improve the estimates of their PCs. 

The statistical consistency and knowledge-sharing effect will also be examined by numerical experiments in Section \ref{sec:experiment}.

Here, we note that statistical consistency can not be achieved by existing estimates without personalized modeling. For example, the statistical error of \dispca \citep{oneshotdpca} depends on $O\left(\frac{1}{N}\sum_{i=1}^N\norm{\bm{\Sigma}_{(i),l}}_{op}\right)$, which does not decrease with number of observations $n_i$ as long as $\norm{\bm{\Sigma}_{(i),l}}_{op}>0$. The comparison highlights the advantages of personalization through our formulation in \eqref{eqn:problem}.

Now we present the proof of Corollary \ref{cor:statisticalerrorbydn}.
\begin{proof}
We will adopt the covariance concentration bound in  \citet{wainwrightbook} and \citet{covarianceconcentrationopnorm}. Since data on client $i$ admit independent sub-Gaussian distributions, theorem 13.3 in \citet{covarianceconcentrationopnorm} states that, with probability at least $1-\delta_1$, there exists a constant $C$ such that:
$$
\norm{\bm{\Sigma}_{(i)}-\bm{S}_{(i)}}_{op}\le \sigma^2C \max\left\{\sqrt{\frac{d+\log\frac{2}{\delta_1}}{n_i}},\frac{d+\log\frac{2}{\delta_1}}{n_i}\right\}
$$
We can choose $\delta_1=\frac{\widetilde{\delta}}{N}$. Then by a union bound, we know that with probability at least $1-\widetilde{\delta}$:
$$
\norm{\bm{\Sigma}_{(i)}-\bm{S}_{(i)}}_{op}\le \sigma^2C \max\left\{\sqrt{\frac{d+\log\frac{2N}{\widetilde{\delta}}}{n_i}},\frac{d+\log\frac{2N}{\widetilde{\delta}}}{n_i}\right\}
$$
holds for all $i$.

Combining this and Theorem \ref{thm:statisticalerror}, we can prove the bound in \eqref{eqn:statisticalerrorbydn}.
\end{proof}

Equation \eqref{eqn:statisticalerrorbydn} also gives a simple estimate of the sample complexity.
\begin{corollary}
Under the assumptions of Theorem \ref{thm:statisticalerror}, and assuming that data on client $i$ admits an i.i.d sub-Gaussian with parameter $\sigma$, if each client has at least  $O\left(\frac{1}{\epsilon}\frac{\sigma^4d^2}{\theta \delta^2}\right)$ observations, then with high probability, the estimation error is smaller than $\epsilon$.
\end{corollary}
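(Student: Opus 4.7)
The plan is to simply invert the high-probability bound in Corollary \ref{cor:statisticalerrorbydn} and solve for the per-client sample size $n$. Suppose every client has $n_i \ge n$ observations. I first want to be in the regime where the linear term $\frac{d+\log(2N/\widetilde{\delta})}{n_i}$ dominates the quadratic term $\left(\frac{d+\log(2N/\widetilde{\delta})}{n_i}\right)^2$ inside the max in \eqref{eqn:statisticalerrorbydn}; this holds as soon as $n \ge d + \log(2N/\widetilde{\delta})$. In that regime the right hand side of \eqref{eqn:statisticalerrorbydn} collapses to
$$
\frac{4 \sigma^4 C^2}{\theta \delta^2} \cdot \frac{d}{N} \sum_{i=1}^N \frac{d+\log(2N/\widetilde{\delta})}{n_i} \;\le\; \frac{4 \sigma^4 C^2}{\theta \delta^2} \cdot \frac{d\,(d+\log(2N/\widetilde{\delta}))}{n}.
$$

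Next, I set this upper bound to be at most $\epsilon$ and solve for $n$, obtaining
$$
n \;\ge\; \frac{4 C^2}{\epsilon}\cdot \frac{\sigma^4 \, d\,(d+\log(2N/\widetilde{\delta}))}{\theta\delta^2}.
$$
Treating $N$ and the failure probability $\widetilde{\delta}$ as fixed (or at most polylogarithmic relative to $d$, which is absorbed into the $\mathcal{O}(\cdot)$), the $\log(2N/\widetilde{\delta})$ term can be absorbed into $d$ so that $d+\log(2N/\widetilde{\delta}) = \mathcal{O}(d)$. Together with $d$ in front, this gives the advertised rate $n = \mathcal{O}\!\left(\frac{\sigma^4 d^2}{\epsilon\,\theta\delta^2}\right)$, and plugging back into Corollary \ref{cor:statisticalerrorbydn} yields the estimation-error guarantee with probability at least $1 - \widetilde{\delta}$.

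Finally, I would check self-consistency: for small $\epsilon$ the derived lower bound on $n$ is much larger than $d+\log(2N/\widetilde{\delta})$, so the linear-term regime used in the first step is indeed valid. There is no substantive obstacle here; the entire argument is a routine inversion of the concentration bound, and the only care needed is in tracking the $d$ factor in front (coming from converting the operator-norm concentration to the Frobenius norm on a $d$-dimensional symmetric matrix) and in verifying that the collapse of the max to its linear branch is consistent with the final choice of $n$.
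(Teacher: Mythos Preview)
Your proposal is correct and follows essentially the same approach as the paper: invoke Corollary~\ref{cor:statisticalerrorbydn}, work in the regime where the linear term $\frac{d+\log(2N/\widetilde{\delta})}{n_i}$ dominates, absorb the logarithmic factor, and invert the bound to solve for $n$. If anything, your write-up is slightly more careful in explicitly verifying the self-consistency of the linear-regime assumption, which the paper leaves implicit.
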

\begin{proof}
The proof is quite straightforward. Notice that when $n_i\ge d$, the right-hand side of \eqref{eqn:statisticalerrorbydn} is dominated by $\frac{d}{n_i}$. Thus if we neglect the logarithm factors on the right-hand side of \eqref{eqn:statisticalerrorbydn} and set $
\frac{4}{\theta\delta^2}\sigma^4C^2d\frac{1}{N}\sum_{i=1}^N\frac{d}{n_i}\le \epsilon$, the statistical error will also be upper bounded by $\epsilon$.

It is natural to see that the inequality holds when each client has observations no less than $O\left(\frac{1}{\epsilon}\frac{\sigma^4d^2}{\theta \delta^2}\right)$. 
\end{proof}

\section{Recovering Local and Global PCs}
\label{sec:algorithm}
The statistical consistency proved in Section \ref{subsec:statisticalefficiency} dwells on the premise that the objective in \eqref{eqn:problem} can be solved to optimality. An efficient algorithm to solve the problem is not apparent as the constraints in \eqref{eqn:problem} are nonconvex. In this section, we develop a class of algorithms to solve \eqref{eqn:problem}. 

The major difficulty in optimizing \eqref{eqn:problem} lies in the nonconvex constraints: in addition to the orthonormal constraints on $\bm{U}$ and $\bm{V}_{(i)}$'s, the constraints $\matU^T\matV_{(i)}=0$ require global and local PCs to be mutually orthogonal. The later constraints introduce interaction between local and global variables, which deems simple distributed Stiefel manifold descent \citep{mingyistiefel} incompetent.

To handle the orthogonality constraints, we propose a class of algorithms that we call Personalized PCA (\name). \name adopts Stiefel manifold gradient descent to ensure that all constraints are satisfied during the algorithm. It is worth noting that \name is naturally \textit{federated} as the computation is distributed over clients, and only updates of the global PCs need to be shared. 

In the following of this section, we will build the \name algorithm step by step. But before delving into the technical details of parallel gradients and retractions, to illustrate the essence of \name, we will first present a simple instance of \name that exploits polar projections to maintain the orthonormality of the updates.

\subsection{An instance of \name}
The polar projection of a general full-column-rank matrix $\matW\in\mathbb{R}^{n_1\times n_2}$ where $n_1\ge n_2$ returns an orthonormal matrix defined as
\begin{align}
\label{eqn:defofpolar}
\polar\left(\matW\right)=\matW\left(\matW^T\matW\right)^{-\frac{1}{2}}
\end{align}
Polar projection can be efficiently implemented via SVD \citep{mmalgorithm}. It is shown that among all the orthonormal matrices, $\polar\left(\matW\right)$ is closest to $\matW$ \citep{berkeleyproof}. Therefore, we can combine gradient descent with polar projection to solve problem \eqref{eqn:problem}. The pseudocode is summarized in Algorithm \ref{alg:polarprojection}.

\begin{algorithm}
   \caption{ An instance of \name using Polar Projection}
   \label{alg:polarprojection}
\begin{algorithmic}
   \STATE Input client covariance matrices $\{\bm{S}_{(i)}\}_{i=1}^N$, stepsize $\eta_\tau$
\STATE Initialize $\bm{U}_1$, and $\bm{V}_{(1),\frac{1}{2}},\cdots,\bm{V}_{(N),\frac{1}{2}}$.
\FOR{Communication rounds $\tau=1,...,R$}
\FOR{Client $i=1,\cdots,N$}
\STATE $\bm{V}_{(i),\tau}=\polar\left({\bm{V}_{(i),\tau-\frac{1}{2}}-\bm{U}_{\tau}\bm{U}_{\tau}^T\bm{V}_{(i),\tau-\frac{1}{2}}}\right)$
\STATE $[\bm{U}_{(i),\tau+1},\bm{V}_{(i),\tau+\frac{1}{2}} ]=\polar\left(\left[\bm{U}_{\tau},\bm{V}_{(i),\tau}\right]+\eta_{\tau}\bm{S}_{(i)}\left[\bm{U}_{\tau},\bm{V}_{(i),\tau}\right]\right)$
\STATE Uploads $\bm{U}_{(i),\tau+1}$ to server.
\ENDFOR
   \STATE Server calculates $\bm{U}_{\tau+1}=\polar\left(\frac{1}{N}\sum_{i=1}^N\bm{U}_{(i),\tau+1}\right)$
\STATE Server broadcasts $\bm{U}_{\tau+1}$   
   \ENDFOR
\end{algorithmic}
\end{algorithm}

In Algorithm \ref{alg:polarprojection}, at each iteration, client $i$ first deflates $\bm{V}_{(i),\tau-\frac{1}{2}}$ to make it orthogonal to $\matUt$. This ensures that the updates are feasible as $\matUt^T\matVit=0$, $\matUt^T\matUt=\matI$, and $\matVit^T\matVit=\matI$. Then client $i$ uses gradient ascent and polar projection to update $\matU_{(i),\tau+1}$ and $\matV_{(i),\tau+\frac{1}{2}}$. This step increases the objective while respecting the orthonormal constraints on $\matU$ and $\matV_{(i)}$. After the updates, client $i$ sends the updated $\matU_{(i),\tau+1}$ to the server. The server takes the average of all received $\matU_{(i),\tau+1}$, orthonormalize it, then broadcast the updated $\matU_{\tau+1}$.

It is intuitively understandable how the iterations in Algorithm \ref{alg:polarprojection} maximize the objective while keeping the updates feasible. In the rest of this section, we will study a broader class of algorithms through the lens of manifold optimization and show that Algorithm \ref{alg:polarprojection} is actually a special case of such algorithm class. We will begin by reviewing a few concepts from manifold optimization and then provide our definition for a class of operations called ``\emph{generalized retraction}''. Then, we will use the techniques from Stiefel gradient descent to design a class of algorithms that solves \eqref{eqn:problem}. 

\subsection{Generalized retractions}
We begin by introducing the Stiefel manifold commonly used in matrix analysis \citep{stiefelgeometry}.

The Stiefel manifold $St(d,r)$ is the set of all $d$ by $r$ orthonormal matrices:
\begin{equation}
\label{eqn:stiefelmanifolddef}
St(d,r)=\{\bm{U}\in\mathbb{R}^{d\times r}|\bm{U}^T\bm{U}=\bm{I}\}
\end{equation}
It is embedded in a $d \times r$ dimensional Euclidean space. One can verify that $St(d,r)$ is not convex in general \citep{stiefelgeometry}. 

For $\bm{U}\in St(d,r)$, the tangent space of $ St(d,r)$ at $\bm{U}$ is defined as:
$$
\mathcal{T}_{\bm{U}}=\{\bm{\xi}\in\mathbb{R}^{d\times r}|\bm{\xi}^T\bm{U}+\bm{U}^T\bm{\xi}=0\}
$$
It can be derived by differentiating $\bm{U}^T\bm{U}=\bm{I}$. The normal space $\mathcal{N}_{\bm{U}}$ is defined as the orthogonal space of the tangent space at $\bm{U}$. 

Both $\mathcal{T}_{\bm{U}}$ and $\mathcal{N}_{\bm{U}}$ are linear subspaces of $\mathbb{R}^{d\times r}$. Therefore, we can define the projection onto them. $\mathcal{P}_{\mathcal{N}_{\bm{U}}}$ denotes the projection onto the normal space:
$$
\mathcal{P}_{\mathcal{N}_{\bm{U}}}(\bm{V})= \frac{1}{2}\bm{U}\left(\bm{U}^T\bm{V}+\bm{V}^T\bm{U}\right)
$$
Similarly, $\mathcal{P}_{\mathcal{T}_{\bm{U}}}$ denotes the projection onto the tangent space:
$$
\mathcal{P}_{\mathcal{T}_{\bm{U}}}(\bm{V})= \bm{V}-\mathcal{P}_{\mathcal{N}_{\bm{U}}}(\bm{V})
$$
One can verify that for any matrix $\matV\in \mathbb{R}^{d\times r}$, $\Ptangent{\bm{U}}{\bm{V}}^T\bm{U}+\bm{U}^T\Ptangent{\bm{U}}{\bm{V}}=0$

Next, we introduce the notion of a generalized retraction. The motivation for a generalized retraction is rather straightforward. For an orthogonal matrix $\bm{U}$ and a general update matrix $\bm{\xi}$, the matrix $\bm{U}+\bm{\xi}$ can probably violate the orthonormal constraint: $\left(\bm{U}+\bm{\xi}\right)^T\left(\bm{U}+\bm{\xi}\right)\neq \bm{I}$. The generalized retraction finds an approximation $\bm{U}+\bm{\xi}$ that strictly satisfies the orthonormal constraint. Ideally, the best approximation can be found via projection. However, the projection onto a general nonlinear manifold is hard to analyze. Therefore, one can relax this projection to a generalized retraction. More formally, a generalized retraction can be defined as:
\begin{definition}
\label{def:generalizedretraction}
We call a mapping
$$
\grof{\bm{U}}{\cdot}:\mathbb{R}^{d\times r}\to St(d,r)
$$
a generalized retraction if
\begin{enumerate}
    \item (Property 1): \label{cons:preservecolumnspace} $col(\grof{\bm{U}}{\bm{\xi}})=col(\bm{U}+\bm{\xi}),\ \forall\bm{U}\in St(d,r),\ \forall \bm{\xi} \in \mathbb{R}^{d\times r}$ 
    \item (Property 2): \label{cons:approximate} There exist constants $M_1,M_2\ge 0$ and $M_3>0$ such that: 
    $$
    \begin{aligned}
    \norm{\grof{\bm{U}}{\bm{\xi}}-(\bm{U}+\mathcal{P}_{\mathcal{T}_{\bm{U}}}(\bm{\xi}))}_F &\le M_1\norm{\mathcal{P}_{\mathcal{T}_{\bm{U}}}(\bm{\xi})}^2_F+M_2\norm{\bm{\xi}-\mathcal{P}_{\mathcal{T}_{\bm{U}}}(\bm{\xi})}_F,\\ \forall\bm{U}\in St(d,r),&\ \forall \bm{\xi} \in \mathbb{R}^{d\times r}, \norm{\bm{\xi}}_F\le M_3\\
    \end{aligned}
    $$
\end{enumerate}
\end{definition}
Figure \ref{fig:grillus} is an illustration of the Stiefel manifold, tangent space, and generalized retraction.

\begin{figure}
\centering
  \includegraphics[width=0.8\linewidth]{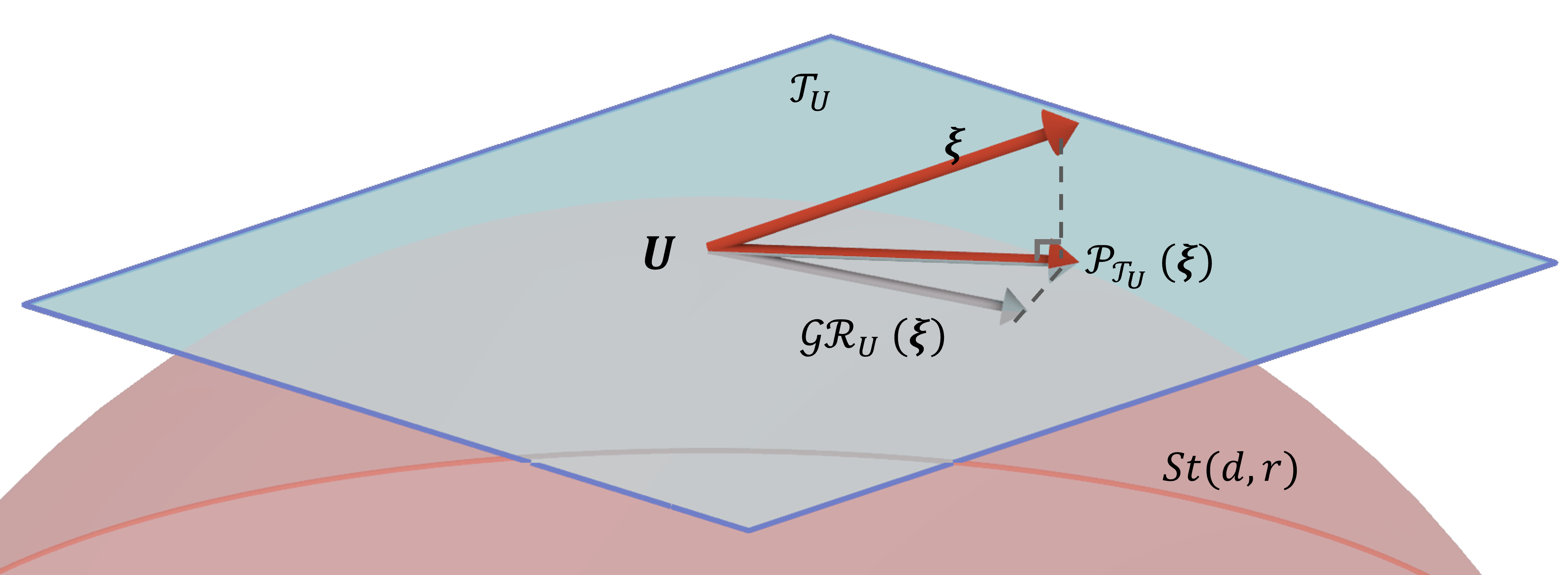}
  \caption{An illustration of the Stiefel manifold, tangent space, and generalized retraction. The red surface represents the Stiefel manifold. The blue plane represents the tangent space at $\bm{U}\in St(d,r)$. $\bm{\xi}$ is a general $d$ by $r$ matrix that represents the update direction. $\Ptangent{\bm{U}}{\bm{\xi}}$ projects $\bm{\xi}$ to the tangent space on $\bm{U}$. Generalized retraction $\grof{\bm{U}}{\bm{\xi}}$ maps $\bm{U}+\bm{\xi}$ back to the Stiefel manifold.}
  \label{fig:grillus}
\end{figure}

Notice that the definition of a generalized retraction extends the definition of retraction in literature \citep{optimizationonmatrixmanifold}. Retraction is usually defined as a mapping from the tangent bundle $\mathcal{T}_{\bm{U}}$ to the Stiefel manifold $St(d,r)$ \citep{stiefelgeometry}. However, a generalized retraction is a mapping from a general $\mathbb{R}^{d\times r}$ to the Stiefel manifold $St(d,r)$. This extension allows us to directly apply the generalized retraction to any matrix, eliminating the need to project it to the tangent space beforehand.

Property \ref{cons:preservecolumnspace} requires that a generalized retraction preserves column spaces. This property is indispensable in our algorithm development as we use it to ensure the orthogonality of global and local PCs. The second property requires that $\grof{\bm{U}}{\bm{\xi}}$ be close to the projection to the tangent space $\bm{U}+\mathcal{P}_{\mathcal{T}_{\bm{U}}}(\bm{\xi})$. In the special case of $\bm{\xi}\in \mathcal{T}_{\bm{U}}$, property \ref{cons:approximate} reduces to $\norm{\grof{\bm{U}}{\bm{\xi}}-(\bm{U}+\bm{\xi})}_F\le M_1\norm{\bm{\xi}}^2_F$, which coincides with the definition of retraction in literature \citep{mingyiconsensus}. When the norm of $\bm{\xi}$ is small, the requirement essentially implies that the difference between a generalized retraction and the projection to a tangent space is a higher-order term.

Though Definition \ref{def:generalizedretraction} looks demanding, we can show that there are several available choices for a generalized retraction. 
\begin{proposition}
Polar projection is defined as:
\begin{equation}
\label{eqn:polarprojectiondef}
\polarof{\bm{U}}{\bm{\xi}}=(\bm{U}+\bm{\xi})\left(\bm{I}+\bm{\xi}^T\bm{U}+\bm{U}^T\bm{\xi}+\bm{\xi}^T\bm{\xi}\right)^{-\frac{1}{2}}
\end{equation}
is a generalized retraction. The computation complexity is $O(dr^2+r^3)$.
\end{proposition}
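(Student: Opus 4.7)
My plan is to verify the two defining properties of a generalized retraction for the polar map, and then count operations for the complexity bound. The essential first observation is that $M := \bm{I} + \bm{\xi}^T\bm{U} + \bm{U}^T\bm{\xi} + \bm{\xi}^T\bm{\xi}$ equals $(\bm{U}+\bm{\xi})^T(\bm{U}+\bm{\xi})$ because $\bm{U}^T\bm{U} = \bm{I}$. Restricting to $\norm{\bm{\xi}}_F \le M_3$ with $M_3 < 1$ guarantees that $\bm{U}+\bm{\xi}$ has full column rank, so $M \succ 0$ and $M^{-1/2}$ is well-defined. Then $\polarof{\bm{U}}{\bm{\xi}}^T \polarof{\bm{U}}{\bm{\xi}} = M^{-1/2} M M^{-1/2} = \bm{I}$, confirming the map lands on $St(d,r)$. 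Property 1 is immediate: since $M^{-1/2}$ is invertible, $(\bm{U}+\bm{\xi}) M^{-1/2}$ has the same column space as $\bm{U}+\bm{\xi}$.

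For Property 2 I would decompose $\bm{\xi} = \bm{\xi}_T + \bm{\xi}_N$ with $\bm{\xi}_N = \Pnormal{\bm{U}}{\bm{\xi}} = \bm{U}\bm{S}$, where $\bm{S} = \tfrac{1}{2}(\bm{U}^T\bm{\xi}+\bm{\xi}^T\bm{U})$ is symmetric. Using $\bm{U}^T\bm{\xi}_T + \bm{\xi}_T^T\bm{U} = 0$ and $\bm{\xi}_N^T\bm{U} = \bm{U}^T\bm{\xi}_N = \bm{S}$, one finds $M = \bm{I} + 2\bm{S} + \bm{\xi}^T\bm{\xi}$. Shrinking $M_3$ so that $\norm{2\bm{S} + \bm{\xi}^T\bm{\xi}}_{op} \le 1/2$, a Taylor expansion gives $M^{-1/2} = \bm{I} - \bm{S} - \tfrac{1}{2}\bm{\xi}^T\bm{\xi} + \bm{R}$ with $\norm{\bm{R}}_{op} \le C(\norm{\bm{S}}_{op}^2 + \norm{\bm{\xi}}_{op}^4)$. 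Substituting into $(\bm{U}+\bm{\xi}) M^{-1/2} - (\bm{U}+\bm{\xi}_T)$, the linear-in-$\bm{S}$ contribution from $\bm{\xi}_N = \bm{U}\bm{S}$ exactly cancels the leading term $-\bm{U}\bm{S}$ from $M^{-1/2}$, leaving $-\bm{\xi}\bm{S} - \tfrac{1}{2}(\bm{U}+\bm{\xi})\bm{\xi}^T\bm{\xi} + (\bm{U}+\bm{\xi})\bm{R}$. I would bound each term via $\norm{\bm{\xi}\bm{S}}_F \le M_3 \norm{\bm{\xi}_N}_F$, $\norm{\bm{\xi}^T\bm{\xi}}_F \le \norm{\bm{\xi}_T}_F^2 + 3 M_3 \norm{\bm{\xi}_N}_F$ (after splitting $\bm{\xi}^T\bm{\xi}$ into four cross terms), and $\norm{\bm{R}}_F \le C'(\norm{\bm{\xi}_N}_F + M_3^2 \norm{\bm{\xi}_T}_F^2)$, yielding the desired inequality with constants $M_1, M_2$ depending only on $M_3$.

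For the complexity bound, the algorithm computes $\bm{U}^T \bm{\xi}$ and $\bm{\xi}^T \bm{\xi}$ in $O(dr^2)$, assembles the $r \times r$ matrix $M$ in $O(r^2)$, diagonalizes $M$ and raises it to the $-1/2$ power in $O(r^3)$, and finally multiplies the $d \times r$ matrix $\bm{U}+\bm{\xi}$ by the $r \times r$ matrix $M^{-1/2}$ in $O(dr^2)$, for a total of $O(dr^2 + r^3)$.

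The main obstacle I anticipate is the bookkeeping in Property 2: the coefficient $M_2$ multiplies $\norm{\bm{\xi}_N}_F$ without a square, so every cross term produced by $\bm{\xi}_N$ must either collapse into the genuinely quadratic quantity $\norm{\bm{\xi}_T}_F^2$ or be absorbed into $\norm{\bm{\xi}_N}_F$ by using $\norm{\bm{\xi}_N}_F^2 \le M_3 \norm{\bm{\xi}_N}_F$. Verifying both the algebraic cancellation of the leading $\bm{U}\bm{S}$ term and the precise order of the Taylor remainder for $M^{-1/2}$ under the smallness assumption forms the technical crux; the rest of the argument is operator/Frobenius norm inequalities.
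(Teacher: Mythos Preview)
Your proposal is correct and complete. Both your argument and the paper's establish Property~1 trivially from the invertibility of $M^{-1/2}$, and both handle the complexity count the same way (the paper phrases it via the SVD of $\bm{U}+\bm{\xi}$, you via eigendecomposition of the $r\times r$ Gram matrix; both cost $O(dr^2+r^3)$).

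For Property~2, however, the paper takes a shorter path than you do. Rather than decomposing $\bm{\xi}=\bm{\xi}_T+\bm{\xi}_N$ up front and tracking the cancellation of the linear $\bm{U}\bm{S}$ term, the paper expands $M^{-1/2}$ as a power series in $\bm{U}^T\bm{\xi}+\bm{\xi}^T\bm{U}+\bm{\xi}^T\bm{\xi}$, multiplies out, and bounds the entire difference $\polarof{\bm{U}}{\bm{\xi}}-(\bm{U}+\Ptangent{\bm{U}}{\bm{\xi}})$ by a single constant times $\norm{\bm{\xi}}_F^2$. Only \emph{after} obtaining this uniform quadratic bound does it invoke the one-line splitting $\norm{\bm{\xi}}_F^2\le 2\norm{\Ptangent{\bm{U}}{\bm{\xi}}}_F^2+2\norm{\Pnormal{\bm{U}}{\bm{\xi}}}_F^2\le 2\norm{\Ptangent{\bm{U}}{\bm{\xi}}}_F^2+\norm{\Pnormal{\bm{U}}{\bm{\xi}}}_F$ (using $\norm{\Pnormal{\bm{U}}{\bm{\xi}}}_F\le M_3<1$). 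This sidesteps the bookkeeping you flag as the main obstacle: there is no need to separately chase cross terms or verify that every $\norm{\bm{\xi}_N}_F^2$ collapses into $\norm{\bm{\xi}_N}_F$, because the tangent/normal split never enters until the very last step. Your structural approach, which exhibits the exact cancellation $\bm{\xi}_N-\bm{U}\bm{S}=0$, is more informative about \emph{why} the bound has the required form, but the paper's route is faster and yields explicit constants ($M_1=253/4$, $M_2=253/8$) with less effort.
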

\eqref{eqn:polarprojectiondef} is consistent with the definition \eqref{eqn:defofpolar}, though the notations are slightly different. Notice that polar projection can be equivalently calculated by the SVD of $\bm{U}+\bm{\xi}$ \citep{mmalgorithm}. We relegate the proof and the implementation details to Appendix \ref{ap:polarisgr}. As discussed, an interesting property of the polar projection is that it is equivalent to the projection of $\bm{U}+\bm{\xi}$ onto the Stiefel manifold:
\begin{equation}
\label{eqn:polarisprojection}
\polarof{\bm{U}}{\bm{\xi}}=\arg\min_{\bm{V}\in St(d,r)}\norm{\bm{U}+\bm{\xi}-\bm{V}}_F
\end{equation}
The proof of \eqref{eqn:polarisprojection} can be found in \citet{berkeleyproof}.

QR decomposition is another influential algorithm in numerical linear algebra. It also satisfies the requirements of a generalized retraction.
\begin{proposition}
For a matrix $\bm{U}+\bm{\xi}\in \mathbb{R}^{d\times r}$, QR decomposition finds an orthogonal matrix $\bm{Q}\in St(d,r) $ and a upper triangular matrix $\bm{R}\in\mathbb{R}^{r\times r}$, such that $\bm{Q}\bm{R}=\bm{U}+\bm{\xi}$. As such, a QR retraction is defined as: 
$$
\qrof{\bm{U}}{\bm{\xi}}=\bm{Q}
$$
is a generalized retraction. The computation complexity is $O(dr^2)$.
\end{proposition}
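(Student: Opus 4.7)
The strategy is to verify the two properties in Definition~\ref{def:generalizedretraction} directly from the QR factorization $\bm{U}+\bm{\xi} = \bm{Q}\bm{R}$. Property 1 is immediate: on a small enough neighborhood of $\bm{\xi}=0$ the matrix $\bm{U}+\bm{\xi}$ retains full column rank (since $\bm{U}$ does), so $\bm{R}$ is invertible, $\bm{Q} = (\bm{U}+\bm{\xi})\bm{R}^{-1}$, and $col(\bm{Q}) = col(\bm{U}+\bm{\xi})$. The $O(dr^2)$ complexity bound is the standard cost of Householder-based QR on a $d\times r$ matrix and requires no new argument.

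For Property 2, I would carry out a perturbation analysis of the Cholesky-type factor $\bm{R}$. First decompose $\bm{\xi} = \bm{\xi}_T + \bm{\xi}_N$ with $\bm{\xi}_T = \Ptangent{\bm{U}}{\bm{\xi}}$ and $\bm{\xi}_N = \Pnormal{\bm{U}}{\bm{\xi}} = \bm{U}\bm{S}$, where $\bm{S} = \tfrac{1}{2}(\bm{U}^T\bm{\xi}+\bm{\xi}^T\bm{U})$ is symmetric with $\norm{\bm{S}}_F = \norm{\bm{\xi}_N}_F$. The key algebraic identity is
\[
\bm{R}^T\bm{R} = (\bm{U}+\bm{\xi})^T(\bm{U}+\bm{\xi}) = \bm{I} + 2\bm{S} + \bm{\xi}^T\bm{\xi},
\]
because the skew-symmetric part of $\bm{U}^T\bm{\xi}$ (contributed by $\bm{\xi}_T$) cancels against its transpose. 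Writing $\bm{R} = \bm{I} + \bm{E}$ with $\bm{E}$ upper triangular and comparing entries in $\bm{E}+\bm{E}^T+\bm{E}^T\bm{E} = 2\bm{S}+\bm{\xi}^T\bm{\xi}$ yields, to leading order, $\bm{E}_{ii} = \bm{S}_{ii}$, $\bm{E}_{ij} = 2\bm{S}_{ij}$ for $i<j$, and $\bm{E}_{ij} = 0$ for $i>j$, with higher-order corrections of size $O(\norm{\bm{\xi}}^2)$.

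Expanding $\bm{R}^{-1} = \bm{I}-\bm{E}+O(\norm{\bm{E}}^2)$ and using $\bm{U}+\bm{\xi}_T = \bm{U}+\bm{\xi}-\bm{U}\bm{S}$, I obtain
\[
\bm{Q} - (\bm{U}+\bm{\xi}_T) = \bm{U}(\bm{S}-\bm{E}) - \bm{\xi}\bm{E} + O\!\left(\norm{\bm{\xi}}\norm{\bm{E}}^2\right).
\]
A direct inspection shows that $\bm{S}-\bm{E}$ is, up to higher-order corrections, the skew-symmetric flip of $\bm{S}$, with entries $-\bm{S}_{ij}$ above and $\bm{S}_{ji}$ below the diagonal; hence $\norm{\bm{S}-\bm{E}}_F \le \norm{\bm{S}}_F = \norm{\bm{\xi}_N}_F$. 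All remaining residuals are $O(\norm{\bm{\xi}}^2) = O(\norm{\bm{\xi}_T}^2_F+\norm{\bm{\xi}_N}^2_F)$, and on a bounded ball $\norm{\bm{\xi}}_F \le M_3$ the tail $\norm{\bm{\xi}_N}^2_F$ is absorbed via $\norm{\bm{\xi}_N}^2_F \le M_3\norm{\bm{\xi}_N}_F$, producing the required bound $M_1\norm{\bm{\xi}_T}_F^2 + M_2\norm{\bm{\xi}_N}_F$.

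The main obstacle is the Cholesky perturbation step: one must carefully track how the upper-triangular constraint on $\bm{R}$ forces an asymmetric assignment of the symmetric matrix $\bm{S}$ into $\bm{E}$, and this asymmetry is exactly what produces a linear-in-$\norm{\bm{\xi}_N}_F$ leading-order error rather than the quadratic bound one might naively hope for. Once that computation is in hand, the remaining work is routine: the skew-symmetric part $\bm{U}^T\bm{\xi}_T$ contributes nothing at linear order to $\bm{R}^T\bm{R}$, so $\bm{\xi}_T$ only enters through its quadratic self-product inside $\bm{\xi}^T\bm{\xi}$, which feeds directly into the $M_1\norm{\bm{\xi}_T}_F^2$ term of the bound.
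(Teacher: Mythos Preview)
Your proposal is correct but takes a genuinely different route from the paper. The paper does not expand the Cholesky factor statically; instead it runs a continuous-path argument: set $\bm{U}(t)=\bm{U}+t\bm{\xi}$ with QR factorization $\bm{Q}(t)\bm{R}(t)$, differentiate $\bm{R}^T(t)\bm{R}(t)$, extract $\bm{R}'(t)\bm{R}^{-1}(t)$ via the upper-triangular operator $\mathrm{up}[\cdot]$, bound $\norm{\bm{R}'(t)}_F$ pointwise, and integrate to get $\norm{\bm{Q}(1)-(\bm{U}+\bm{\xi})}_F\le\int_0^1\norm{\bm{R}'(t)}_F\,dt\le\frac{20}{7}(\norm{\bm{\xi}^T\bm{U}+\bm{U}^T\bm{\xi}}_F+\norm{\bm{\xi}^T\bm{\xi}}_F)$, from which the tangent/normal split follows by a triangle inequality.

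Your static expansion $\bm{R}=\bm{I}+\bm{E}$ with $\bm{E}+\bm{E}^T\approx 2\bm{S}$ is more elementary (no curve, no ODE, no $\mathrm{up}$ operator) and has the conceptual payoff of explaining \emph{why} QR incurs an $O(\norm{\bm{\xi}_N}_F)$ error rather than $O(\norm{\bm{\xi}}_F^2)$: the upper-triangular constraint forces an asymmetric assignment of the symmetric $\bm{S}$, leaving behind the skew residue $\bm{U}(\bm{S}-\bm{E})$. The paper's approach, on the other hand, is already standard in the retraction literature (it adapts the argument of Kaneko--Fiori--Tanaka) and has the practical advantage that the operator-norm bounds on $\bm{R}(t)$ and $\bm{R}^{-1}(t)$ are uniform along the path and easy to state with explicit constants on the ball $\norm{\bm{\xi}}_F\le\tfrac14$; in your approach the higher-order $O(\cdot)$ terms in $\bm{E}_1$ and in $\bm{R}^{-1}=\bm{I}-\bm{E}+\cdots$ need a small fixed-point or bootstrap argument to make the constants fully explicit.
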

We relegate the proof to Appendix \ref{ap:qrisgr}.  

In all of our experiments, we choose the generalized retraction as a polar decomposition.

\subsection{\name: The algorithm}
Now, we are ready to introduce the personalized PCA algorithm, \name. Recall that our algorithm is designed to be federated and requires multiple communication rounds between a client and some central server/entity that orchestrates the collaborative learning process. Suppose at communication round $\tau$, each client has feasible global components $\bm{U}_{\tau}$ and local components $\bm{V}_{(i),\tau}$, i.e., $[\bm{U}_{\tau},\bm{V}_{(i),\tau}]\in St(d,r_1+r_{2,(i)})$. Then client $i$ calculates the gradient of objective $f_i$ defined in \eqref{eqn:defoffi}:
$$
\left\{\begin{aligned}
&\nabla_{\bm{U}}f_i(\bm{U}_{\tau},\bm{V}_{(i),\tau})=\bm{S}_{(i)}\bm{U}_{\tau}\\
&\nabla_{\bm{V}_{(i)}}f_i(\bm{U}_{\tau},\bm{V}_{(i),\tau})=\bm{S}_{(i)}\bm{V}_{(i),\tau}
\end{aligned}\right.
$$

Since the gradient direction generally does not align with the tangent space of $\mathcal{T}_{[\bm{U}_{\tau},\bm{V}_{(i),\tau}]}$, simple gradient ascent will move $[\bm{U}_{\tau},\bm{V}_{(i),\tau}]$ out of $St(d,r_1+r_{2,(i)})$. To ensure the iterates move along the manifold, Stiefel optimization first projects the gradient to the tangent space:
\begin{equation}
\label{eqn:parallelgradient}
\matG_{(i),\tau}=\mathcal{P}_{\mathcal{T}_{\left[\bm{U}_{\tau},\bm{V}_{(i),\tau}\right]}}\left(\bm{S}_{(i)}\left[\bm{U}_{\tau},\bm{V}_{(i),\tau}\right]\right)    
\end{equation}

In literature, $\matG_{(i),\tau}$ is usually referred to as the parallel gradient on the manifold \citep{stiefelgeometry}. We shall note that $\matG_{(i),\tau}$ defined above is a $d$ by $r_1+r_{2,(i)}$ matrix.

Clients then update global and local PCs in the direction of the parallel gradient $\matG_{(i),\tau}$. As there is a small difference between the Stiefel manifold and the tangent space, the updated PCs are still not orthonormalized. Therefore, we use a generalized retraction to retract the updated local components to the Stiefel manifold. We use $\bm{V}_{(i),\tau+\frac{1}{2}}$ to denote the retracted matrix. For the global components, clients first send them to a server. The server then takes the average and uses a generalized retraction to map the average to $St(d,r)$. The updated global PC matrix is denoted as $\bm{U}_{\tau+1}$.

A major challenge then arises: after the server averages the global PCs, $\bm{U}_{\tau+1}$ is not orthogonal to $\bm{V}_{(i),\tau+\frac{1}{2}}$ anymore, i.e., $\bm{U}_{\tau+1}^T\bm{V}_{(i),\tau+\frac{1}{2}}\neq 0$ in general. Thus $\bm{U}_{\tau+1}$ and $\bm{V}_{(i),\tau+\frac{1}{2}}$'s become infeasible, and the algorithm based on St-GD cannot proceed. One can verify that $\bm{U}_{\tau}^T\bm{V}_{(i),\tau+\frac{1}{2}}=O(\eta_{\tau})$, which has the same order as the parallel gradient update. Thus, we cannot resolve the infeasibility issue by decreasing stepsize. This is a fundamental limitation of a simple route that uses distributed St-GD.

Can we resolve the challenge by enforcing the orthogonality between global and local PC estimates? Inspired by Gram-Schmit orthonormalization, we introduce a correction step on the local PCs. We calculate the projection of $\bm{V}_{(i),\tau+\frac{1}{2}}$ onto the column space of $\bm{U}_{\tau+1}$, and subtract the projected matrix from $\bm{V}_{(i),\tau+\frac{1}{2}}$. The resulting (deflated) matrix is orthogonal to $\bm{U}_{\tau+1}$. Then, we use a generalized retraction to map the subtracted matrix to the Stiefel manifold. Remember that one key property of a generalized retraction is that it preserves the column space; the retracted matrix is thus still orthogonal to $\bm{U}_{\tau+1}$. We use $\bm{V}_{(i),\tau+1}$ to denote the retracted matrix. Now $\bm{U}_{\tau+1}$ and  $\bm{V}_{(i),\tau+1}$ are feasible, and the updates can repeat over multiple communication rounds until convergence. The pseudocode is summarized in Algorithm \ref{alg:consensus}.
\begin{algorithm}
\caption{\name by St-GD}
\label{alg:consensus}
\begin{algorithmic}
\STATE Input client covariance matrices $\{\bm{S}_{(i)}\}_{i=1}^N$, stepsize $\eta_\tau$
\STATE Initialize $\bm{U}_1$, and $\bm{V}_{(1),\frac{1}{2}},\cdots,\bm{V}_{(N),\frac{1}{2}}$.
\FOR{Communication rounds $\tau=1,...,R$}
\FOR{Client $i=1,\cdots,N$}
\STATE $\bm{V}_{(i),\tau}=\grof{\bm{V}_{(i),\tau-\frac{1}{2}}}{-\bm{U}_{\tau}\bm{U}_{\tau}^T\bm{V}_{(i),\tau-\frac{1}{2}}}$ $\hfill\color{cyan} \texttt{// Deflate then retract}$
\STATE Choice $1$:
\bindent
\STATE Calculate $\matG_{(i),\tau}=\Ptangent{\left[\bm{U}_{\tau},\bm{V}_{(i),\tau}\right]} {\bm{S}_{(i)}\left[\bm{U}_{\tau},\bm{V}_{(i),\tau}\right]}$ $\hfill\color{cyan} \texttt{// Tangent projection}$
\STATE Update $ \bm{U}_{(i),\tau+1}=\bm{U}_{\tau}+\eta_{\tau}(\matG_{(i),\tau})_{1:d,1:r_1}$ $\hfill\color{cyan} \texttt{// Gradient ascent}$
\STATE Update $ \bm{V}_{(i),\tau+\frac{1}{2}}=\grof{\bm{V}_{(i),\tau}}{ \eta_{\tau}(\matG_{(i),\tau})_{1:d,(r_1+1):(r_1+r_{2,(i)})}}$ $\hfill\color{cyan} \texttt{// Retract}$
\eindent
\STATE Choice $2$:
\bindent
\STATE Update $[\bm{U}_{(i),\tau+1},\bm{V}_{(i),\tau+\frac{1}{2}} ]=\polarof{\left[\bm{U}_{\tau},\bm{V}_{(i),\tau}\right]}{\eta_{\tau}\bm{S}_{(i)}\left[\bm{U}_{\tau},\bm{V}_{(i),\tau}\right]}$ \\$ \hfill\color{cyan} \texttt{// Retract after gradient ascent}$
\eindent
\STATE Send $\bm{U}_{(i),\tau+1}$ to the server. $ \hfill\color{cyan} \texttt{// Share global PCs}$
\ENDFOR
\STATE Server calculates $\bm{U}_{\tau+1}=\grof{\bm{U}_{\tau}}{\frac{1}{N}\sum_{i=1}^N\bm{U}_{(i),\tau+1}-\bm{U}_{\tau}}$ $\hfill\color{cyan} \texttt{// Average then retract}$
\STATE Server broadcasts $\bm{U}_{\tau+1}$ 
\ENDFOR
\STATE Return principal components $\bm{U}_{R}$ and $\bm{V}_{(i),R}$'s.
\end{algorithmic}
\end{algorithm}

The first line in the client loop $\bm{V}_{(i),\tau}=\grof{\bm{V}_{(i),\tau-\frac{1}{2}}}{-\bm{U}_{\tau}\bm{U}_{\tau}^T\bm{V}_{(i),\tau-\frac{1}{2}}}$ represents the correction on the local PC matrix. Regardless of whether $\bm{U}_{\tau}$ and $\bm{V}_{(i),\tau-\frac{1}{2}}$ are orthogonal,  $\bm{U}_{\tau}$ and $\bm{V}_{(i),\tau}$ are always feasible: $[\bm{U}_{\tau}, \bm{V}_{(i),\tau}]\in St(d,r_1+r_{2,(i)})$. Then each client applies standard St-GD (choice 1) or a variant of St-GD (choice 2) to update $\bm{U}_{(i),\tau+1}$ and $\bm{V}_{(i),\tau+\frac{1}{2}}$ simultaneously. The updated global PCs are sent to the server. The server takes the simple average of all received global PCs and retracts the average to $St(d,r_1)$. The obtained $\bm{U}_{\tau+1}$ is then broadcasted back to the clients and becomes the starting point of the next iteration. The algorithm repeats for a certain number of communication rounds.

In Algorithm \ref{alg:consensus}, we introduce two algorithmic choices on the client side. For choice $1$, clients perform standard St-GD: first project the updates to the tangent space, then retract them to the Stiefel manifold. For choice $2$, clients use polar projection to replace the St-GD. This update rule is inspired by the Minorization-Maximization algorithm \citep{mmalgorithm}. Remember that by \eqref{eqn:polarisprojection}, polar projection acts as a projection into the nonlinear Stiefel manifold. Hence, it is close to the composition of the projection onto the tangent space and the retraction from the tangent space onto the nonlinear manifold. We propose two choices to enrich practitioners' toolkits as they have similar performances in most of our case studies. We focus on choice $1$ in our theoretical analysis. However, it is observed in the video segmentation task that choice $2$ allows us to use larger stepsizes, thus converging faster. Hence, we leave it to practitioners' discretion to make specific algorithmic choices. 

\revise{In general, the computation complexity per communication round at one client is $O(d^2)$. To see that, we can analyze the update of Algorithm \ref{alg:consensus}. One iteration only involves matrix multiplication and generalized retractions. The computation complexity of matrix multiplication $\matS_{(i)}\matU_{\tau}$ is $O(d^2r_1)$. The complexity of the tangent projection step is similar. When the rank $r_1$ and $r_{2,(i)}$ is far smaller than data dimension $d$, the computation complexity for generalized retractions is only $O(d)$. Thus, the per-iteration computation complexity is $O(d^2)$. It is worth noting that the complexity can be further reduced to $O(d)$ if the covariance matrix $\matS_{(i)}$ is known to be low rank. More specifically, when $\matS_{(i)}$ has a low-rank Cholesky decomposition $\matS_{(i)}=\matY_{(i)}\matY_{(i)}^T$, where $\matY_{(i)}\in \mathbb{R}^{d\times n_{(i)}}$ is a low-rank matrix $n_{(i)}\ll d$, the computation cost of matrix multiplication $\matS_{(i)}\matU_{\tau}=\matY_{(i)}\matY_{(i)}^T\matU_{\tau}$ is reduced to $O(d n_{(i)}r_1)$. As $n_{(i)}$ and $r_1$ is far smaller than $d$, this becomes $O(d)$. Hence the per-iteration computation complexity is only $O(d)$.}

\section{Does Algorithm \ref{alg:consensus} Recover the Local and Global Truth?}
\label{sec:convergence}
Though the development of Algorithm \ref{alg:consensus} is intuitive, it is important to understand whether it converges and, if so, what kind of solution it can recover. In this section, we will analyze the convergence of Algorithm  \ref{alg:consensus} and show that, in general, Algorithm \ref{alg:consensus} converges into stationary points of the objective. In addition, when the local and global components are initialized properly, Algorithm \ref{alg:consensus} will converge into the global optimal solutions linearly, and the result exactly recovers the true local and global PCs. 

\subsection{Global convergence}
\label{sec:sublinearconvergence}
To analyze the convergence, we make an additional assumption that the largest eigenvalues of the sample covariance matrices $\bm{S}_{(i)}$'s are upper bounded:
\begin{assumption}
\label{ass:snormupperbound}
We assume that the operator norms of $\bm{S}_{(i)}$'s are upper bounded by constants $G_{(i),op}$:
\begin{equation}
\label{eqn:opnormupperbound}
\norm{\bm{S}_{(i)}}_{op}\le G_{(i),op}
\end{equation}
and the Frobenius norms of $\bm{S}_{(i)}$'s are upper bounded by constants $G_{(i),F}$:
\begin{equation}
\label{eqn:fnormupperbound}
\norm{\bm{S}_{(i)}}_{F}\le G_{(i),F}
\end{equation}
We use $G_{max,op}$ to denote $\max_iG_{(i),op}$, and $G_{max,F}$ to denote $\max_iG_{(i),F}$.
\end{assumption}
Assumption \ref{ass:snormupperbound} is a common assumption in optimization literature, as it essentially assumes the objective is Lipschitz continuous. Also, if we assume the data are independently generated and follow a sub-Gaussian distribution, Assumption \ref{ass:snormupperbound} will hold with high probability \citep{wainwrightbook}.

The first order condition (KKT condition) to problem \eqref{eqn:model} is that for the parallel gradients defined in \eqref{eqn:parallelgradient}, the local parts are zero on each client, and the average of the global parts is zero:
\begin{equation}
\label{eqn:kktcondition}
\left\{\begin{aligned}
&\left(\matG_{(i)}\right)_{1:d,(r_1+1):(r_1+r_{2,(i)})}=0,\quad \forall i\in\{1,2,,\cdots,N\}\\
&\frac{1}{N}\sum_{i=1}^N\left(\matG_{(i)}\right)_{1:d,1:(r_1+1)}=0\\
\end{aligned}\right.
\end{equation}
The proof of KKT conditions \eqref{eqn:kktcondition} is in Appendix \ref{ap:proofforkkt}. It is clear from Algorithm \ref{alg:consensus} that when \eqref{eqn:kktcondition} is satisfied, the global and local PC updates will be stationary. Thus, \eqref{eqn:kktcondition} essentially describes the stationary points of \eqref{eqn:problem}.

On non-stationary points, \eqref{eqn:kktcondition} generally does not hold. The below theorem provides an upper bound on the magnitude of the violations to conditions \eqref{eqn:kktcondition}. As the violations decrease to zero when the number of communication approaches infinity, the theorem shows that Algorithm \ref{alg:consensus} will converge into the KKT points. We use $r$ to denote maximum rank $r=\max\{r_1,r_{2,(1)},\cdots,r_{2,(N)}\}$.
\begin{theorem}
\label{thm:sublinearconvergence}
Under Assumption \ref{ass:snormupperbound}, if we choose a constant stepsize $\eta_{\tau}=\eta_1= O(\frac{1}{G_{max,op}\sqrt{r}})$, then Algorithm \ref{alg:consensus} with choice $1$ will converge into stationary points:
$$
\begin{aligned}
&\min_{\tau\in \{1,...R\}}\left[\norm{\sum_{i=1}^N\left(\bm{I}-\bm{P}_{\bm{U}_{\tau}}-\bm{P}_{\bm{V}_{(i), \tau}}\right)\bm{S}_{(i)}\bm{U}_{\tau}}^2+\sum_{i=1}^N\norm{\left(\bm{I}-\bm{P}_{\bm{U}_{\tau}}-\bm{P}_{\bm{V}_{(i), \tau}}\right)\sum_{i=1}^N\bm{S}_{(i)}\bm{V}_{(i), \tau}}^2\right]\\
&= O\left(\frac{1}{R}\right)
\end{aligned}
$$
\end{theorem}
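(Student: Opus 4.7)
I plan to use a classical ascent-lemma plus telescoping argument, adapted to the coupled Stiefel setting of Algorithm~\ref{alg:consensus}. Since each $f_i$ is quadratic with $\|\bm{S}_{(i)}\|_{op}\le G_{max,op}$, the objective is globally Lipschitz-smooth with constant $O(G_{max,op})$ on the bounded feasible set. Combined with Property~2 of a generalized retraction (Definition~\ref{def:generalizedretraction}), this lets me replace each update by a pure tangent-space step plus an $O(\eta_\tau^2)$ higher-order error. Telescoping the resulting per-iteration ascent bound and using the uniform upper bound $f\le \sum_i(r_1+r_{2,(i)}) G_{max,op}$ (valid because $\bm{U}_\tau,\bm{V}_{(i),\tau}$ lie on compact Stiefel manifolds) yields the stated $O(1/R)$ rate on the minimum KKT residual.

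\textbf{Key steps.} First I would verify by direct computation that the parallel gradient $g_{(i),\tau}$ splits into two blocks, with $\bm{U}$-part $(\bm{I}-\Pj{\bm{U}_\tau}-\Pj{\bm{V}_{(i),\tau}})\bm{S}_{(i)}\bm{U}_\tau$ and $\bm{V}$-part $(\bm{I}-\Pj{\bm{U}_\tau}-\Pj{\bm{V}_{(i),\tau}})\bm{S}_{(i)}\bm{V}_{(i),\tau}$, each orthogonal to the columns of both $\bm{U}_\tau$ and $\bm{V}_{(i),\tau}$ (using that $\bm{U}_\tau^T\bm{V}_{(i),\tau}=0$ after the correction step). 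Next I would analyze one outer iteration in three phases. For the client St-GD step (choice~1), a Taylor expansion of $f_i$ about $(\bm{U}_\tau,\bm{V}_{(i),\tau})$ together with Property~2 applied to the $\bm{V}_{(i),\tau+\frac12}$ retraction produces per-client progress $\eta_\tau\|g_{(i),\tau}\|_F^2$ modulo an $O(\eta_\tau^2 G_{max,op}\|g_{(i),\tau}\|_F^2)$ error. For the server step, observing that $\bar g_\tau=\frac{1}{N}\sum_i(g_{(i),\tau})_U$ is itself tangent to $\bm{U}_\tau$ (each summand is), Property~2 gives $\grof{\bm{U}_\tau}{\eta_\tau\bar g_\tau}=\bm{U}_\tau+\eta_\tau\bar g_\tau+O(\eta_\tau^2\|\bar g_\tau\|_F^2)$, so combining with the $U$-part of $\sum_i f_i$ contributes a global ascent of order $\eta_\tau\|\sum_i(g_{(i),\tau})_U\|_F^2/N$. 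For the orthogonality correction, since $\bm{U}_{(i),\tau+1}^T\bm{V}_{(i),\tau+\frac12}=0$ by construction and $\bm{U}_{\tau+1}-\bm{U}_{(i),\tau+1}=O(\eta_\tau)$ by Jensen and boundedness, $\|\bm{U}_{\tau+1}^T\bm{V}_{(i),\tau+\frac12}\|_F=O(\eta_\tau)$, so the correction perturbs $f_i$ by only $O(\eta_\tau^2)$. Adding the three phases and choosing $\eta_\tau=\eta_1\le O(1/(G_{max,op}\sqrt r))$ ensures the $O(\eta_\tau^2)$ remainders are dominated by the $\eta_\tau$-order gain, giving $f_{\tau+1}-f_\tau\ge c\eta_1\bigl[\tfrac{1}{N}\|\sum_i(g_{(i),\tau})_U\|_F^2+\sum_i\|(g_{(i),\tau})_V\|_F^2\bigr]$. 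Telescoping over $\tau=1,\ldots,R$ and dividing by $R$ finishes the proof.

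\textbf{Main obstacle.} The trickiest point is the interplay between the server averaging and the per-client projections built into $g_{(i),\tau}$. Because each $(g_{(i),\tau})_U$ carries its own local projector $(\bm{I}-\Pj{\bm{V}_{(i),\tau}})$, $\bar g_\tau$ is \emph{not} the tangent projection of a single averaged Euclidean gradient, so the ascent in the $U$-part of $\sum_i f_i$ picks up cross-terms $\langle\bm{S}_{(i)}\bm{U}_\tau,\Pj{\bm{V}_{(i),\tau}}\bar g_\tau\rangle$ that are not manifestly nonnegative. Careful bookkeeping is needed to show these either cancel after summation over $i$ or are absorbed into the $O(\eta_\tau^2)$ slack via $\|\bar g_\tau\|_F\le O(G_{max,op}\sqrt r)$. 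A secondary nuisance is ensuring the generalized-retraction constants $M_1,M_2$ from Property~2 of Definition~\ref{def:generalizedretraction}, the maximum rank $r$, and $G_{max,op}$ combine cleanly into the stated stepsize regime $\eta_\tau\le O(1/(G_{max,op}\sqrt r))$ without spurious extra factors from the correction or averaging sub-steps.
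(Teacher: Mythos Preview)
Your overall architecture (ascent lemma via Lipschitz smoothness, then telescoping) matches the paper's, but there is a genuine gap in your treatment of the correction step. You assert that ``the correction perturbs $f_i$ by only $O(\eta_\tau^2)$.'' This is false: writing $\square\bm{U}_\tau=\frac1N\sum_i(\bm{I}-\Pj{\bm{U}_\tau}-\Pj{\bm{V}_{(i),\tau}})\bm{S}_{(i)}\bm{U}_\tau$, one computes (as the paper does in Lemma~\ref{lm:fullupdateofv}) that $\bm{V}_{(i),\tau+1}=\bm{V}_{(i),\tau}+\eta_\tau\square\bm{V}_{(i),\tau}-\eta_\tau\bm{U}_\tau\square\bm{U}_\tau^T\bm{V}_{(i),\tau}+O(\eta_\tau^2)$, so the correction contributes an $O(\eta_\tau)$ displacement $-\eta_\tau\bm{U}_\tau\square\bm{U}_\tau^T\bm{V}_{(i),\tau}$ lying in $col(\bm{U}_\tau)$. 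Pairing with $\nabla_{\bm{V}}f_i=\bm{S}_{(i)}\bm{V}_{(i),\tau}$ gives the first-order change
\[
-\eta_\tau\bigl\langle \bm{S}_{(i)}\bm{V}_{(i),\tau},\,\bm{U}_\tau\square\bm{U}_\tau^T\bm{V}_{(i),\tau}\bigr\rangle
=-\eta_\tau\bigl\langle \Pj{\bm{V}_{(i),\tau}}\bm{S}_{(i)}\bm{U}_\tau,\,\square\bm{U}_\tau\bigr\rangle,
\]
which is $O(\eta_\tau)$ because $\bm{U}_\tau^T\bm{S}_{(i)}\bm{V}_{(i),\tau}$ is generically $O(1)$. (Your premise $\bm{U}_{(i),\tau+1}^T\bm{V}_{(i),\tau+\frac12}=0$ is also not exact---it is $O(\eta_\tau^2)$---but that is a minor inaccuracy.)

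The good news is that this very term exactly cancels the cross-term $\eta_\tau\sum_i\langle \Pj{\bm{V}_{(i),\tau}}\bm{S}_{(i)}\bm{U}_\tau,\,\square\bm{U}_\tau\rangle$ you identify as the ``main obstacle'' from the server-averaged $\bm{U}$-update; so your plan can be repaired, but the cancellation is \emph{between} the $\bm{U}$-ascent and the correction, not ``after summation over $i$'' within the $\bm{U}$-step alone as you conjecture. The paper sidesteps this bookkeeping by working with the Lyapunov function $\mathcal{L}_{(i),1}+\mathcal{L}_{(i),2}$ of \eqref{eqn:defofln1}--\eqref{eqn:defofln2}, which equals $-f_i$ at feasible points but whose $\bm{V}$-gradient there is $-(\bm{I}-\Pj{\bm{U}_\tau})\bm{S}_{(i)}\bm{V}_{(i),\tau}$. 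Because this lies in $col(\bm{U}_\tau)^\perp$, its inner product with the correction direction $\bm{U}_\tau\square\bm{U}_\tau^T\bm{V}_{(i),\tau}$ is identically zero, so the $O(\eta_\tau)$ correction term never enters the sufficient-decrease inequality (Lemma~\ref{lm:suffcientdecrease}). That is the device you are missing.
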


Despite the nonconvex constraints in \eqref{eqn:problem}, Algorithm \ref{alg:consensus} provably converges to stationary points, regardless of initial conditions. The $\frac{1}{R}$ convergence rate is comparable to the rate in literature \citep{mingyiconsensus}. 

Our algorithm handles global and local PCs at the same time and attains stationary points of both components. In the following section, we will show the proof sketch of Theorem \ref{thm:sublinearconvergence}. The complete proof is relegated to Appendix \ref{ap:proofforsublinearconvergence}.

\subsubsection{Proof sketch for Theorem \ref{thm:sublinearconvergence} and key lemmas}
\label{sec:proofsketchsublinearconvergence}
As discussed before, one major difficulty in analyzing Algorithm \ref{alg:consensus} lies in the correction step. The correction step changes local PCs by $O(\eta_{\tau})$, which is comparable to that in the gradient ascent step. Therefore, a na\"{\i}ve treatment to the correction step will generate a large error term that cannot be bounded.

To bypass the issue, we exploit one nice structure in objective \eqref{eqn:defoffi}: $f_i(\bm{U},\bm{V}_{(i)})$ is dependent only on the subspace spanned by the concatenated matrix $[\bm{U},\bm{V}_{(i)}]$. Therefore one can make adjustments on $col(\bm{U})$ and $col(\bm{V}_{(i)})$ without changing the objective value, as long as $col([\bm{U},\bm{V}_{(i)}])$ are the same.

One major technical novelty of our work is to introduce Lyapunov functions that take this key property into consideration. We define the two following Lyapunov functions:
\begin{equation}
\label{eqn:defofln1}
\mathcal{L}_{(i),1}(\bm{U},\bm{V}) = -\frac{1}{2}\tr{\bm{U}^T\left(\bm{I}-\bm{P}_{\bm{V}}\right)\bm{S}_{(i)}\left(\bm{I}-\bm{P}_{\bm{V}}\right)\bm{U}}
\end{equation}
and, 
\begin{equation}
\label{eqn:defofln2}
\mathcal{L}_{(i),2}(\bm{U},\bm{V}) = -\frac{1}{2}\tr{\bm{V}^T\bm{S}_{(i)}\bm{V}}
\end{equation}
It's easy to see that when $\bm{V}^T\bm{U}=0$, we have:
$$
\mathcal{L}_{(i),1}(\bm{U},\bm{V})+\mathcal{L}_{(i),2}(\bm{U},\bm{V})=-\frac{1}{2}\tr{\bm{U}^T\bm{S}_{(i)}\bm{U}}-\frac{1}{2}\tr{\bm{V}^T\bm{S}_{(i)}\bm{V}}=-f_n(\bm{U},\bm{V})
$$
At each communication step $\tau$, global and local components are indeed orthogonal $\bm{U}_{\tau}^T\bm{V}_{(i),\tau}=0$, thus $\mathcal{L}_{(i),1}(\bm{U}_{\tau},\bm{V}_{(i),\tau})+\mathcal{L}_{(i),2}(\bm{U}_{\tau},\bm{V}_{(i),\tau})=-f_n(\bm{U}_{\tau},\bm{V}_{(i),\tau})$. 

$\mathcal{L}_{(i),1}$ explicitly encodes the orthogonality constraint into the objective. Such design enables convenient handling of the correction step: we can prove that the correction step on $\bm{V}$ changes  $\mathcal{L}_{(i),1}+\mathcal{L}_{(i),2}$ only by $O(\eta_{\tau}^2)$. Therefore only the descent step can change $\mathcal{L}_{(i),1}+\mathcal{L}_{(i),2}$ by $O(\eta_{\tau})$. Thus, the change of Lyapunov functions is dominated by the update from the parallel gradient. By calculating the update of $\bm{U}$ and $\{\bm{V}_{(i)}\}$ in each communication round, we can have the following informal version of the sufficient descent lemma:

\begin{lemma}
\label{lm:informalsuffcientdecrease}
(Informal) When we choose a constant stepsize $\eta_{\tau}=\eta= O\left(\frac{1}{G_{max,op}\sqrt{r}}\right)$, and  $\bm{U}_{\tau}$ and $\bm{V}_{(i),\tau}$ satisfy the orthogonality condition $\bm{U}_{\tau}^T\bm{V}_{(i),\tau}=0$, we have:
\begin{equation}
\begin{aligned}
&\left\langle \sum_{i=1}^N\nabla_{\bm{U}}\mathcal{L}_{(i),1}(\bm{U}_{\tau},\bm{V}_{(i),\tau}) ,\bm{U}_{\tau+1}-\bm{U}_{\tau}\right\rangle\\
&+\sum_{i=1}^N\left\langle \nabla_{\bm{V}_{(i)}}\mathcal{L}_{(i),1}(\bm{U}_{\tau},\bm{V}_{(i),\tau})+\nabla_{\bm{V}_{(i)}}\mathcal{L}_{(i),2}(\bm{U}_{\tau},\bm{V}_{(i),\tau}) ,\bm{V}_{(i),\tau+1}-\bm{V}_{(i),\tau}\right\rangle\\
&\le -\eta \left(\frac{1}{N}\norm{\sum_{i=1}^N\left(\bm{I}-\bm{P}_{\bm{U}_{\tau}}-\bm{P}_{\bm{V}_{(i), \tau}}\right)\bm{S}_{(i)}\bm{U}_{\tau}}_F^2+\sum_{i=1}^N\norm{\left(\bm{I}-\bm{P}_{\bm{U}_{\tau}}-\bm{P}_{\bm{V}_{(i), \tau}}\right)\sum_{i=1}^N\bm{S}_{(i)}\bm{V}_{(i), \tau}}_F^2\right)\\
&+O(\eta^2)
\end{aligned}   
\end{equation}
\end{lemma}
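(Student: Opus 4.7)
My plan is to convert Algorithm~\ref{alg:consensus} into one-step Taylor expansions around $(\bm{U}_\tau,\bm{V}_{(i),\tau})$, exploit the invariance of $f_i$ under changes that preserve $\mathrm{col}([\bm{U},\bm{V}_{(i)}])$, and then read off the descent from the inner products with $\nabla\mathcal{L}_{(i),1}$ and $\nabla\mathcal{L}_{(i),2}$ evaluated at points where $\bm{U}_\tau^T\bm{V}_{(i),\tau}=0$. The first step is to write each update as its tangent-projected gradient piece plus a controlled remainder. For the client-side update, Property~2 of the generalized retraction gives
\[
\bm{V}_{(i),\tau+\frac12}-\bm{V}_{(i),\tau}=\eta_\tau\,\bigl(\Ptangent{[\bm{U}_\tau,\bm{V}_{(i),\tau}]}{\bm{S}_{(i)}[\bm{U}_\tau,\bm{V}_{(i),\tau}]}\bigr)_{\text{V-block}}+O(\eta_\tau^2 G_{\max,op}^2 r),
\]
and similarly for the intermediate $\bm{U}_{(i),\tau+1}-\bm{U}_\tau$; averaging and retracting once more yields $\bm{U}_{\tau+1}-\bm{U}_\tau=\tfrac{\eta_\tau}{N}\sum_i(\Ptangent{[\bm{U}_\tau,\bm{V}_{(i),\tau}]}{\bm{S}_{(i)}[\bm{U}_\tau,\bm{V}_{(i),\tau}]})_{\text{U-block}}+O(\eta_\tau^2 G_{\max,op}^2 r)$, using Property~2 again to absorb the retraction curvature. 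The constraint $\eta\le O(1/(G_{\max,op}\sqrt r))$ is what keeps the $O(\eta^2)$ remainders uniformly small and forces the $\sqrt r$ factor in the step-size.

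The second step is the correction move $\bm{V}_{(i),\tau+\frac12}\mapsto \bm{V}_{(i),\tau+1}$. Since $\bm{U}_\tau^T\bm{V}_{(i),\tau}=0$ and both $\bm{V}_{(i),\tau+\frac12}-\bm{V}_{(i),\tau}$ and $\bm{U}_{\tau+1}-\bm{U}_\tau$ are $O(\eta_\tau)$, we have $\bm{U}_{\tau+1}^T\bm{V}_{(i),\tau+\frac12}=O(\eta_\tau)$, so the correction shifts $\bm{V}_{(i)}$ by $O(\eta_\tau)$ as well. This is where the Lyapunov choice pays off: the generalized retraction preserves column spaces, so the correction preserves $\mathrm{col}([\bm{U}_{\tau+1},\bm{V}_{(i),\cdot}])$ exactly. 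Hence $f_i$ is unchanged by the correction, and I will show $\nabla_V(\mathcal{L}_{(i),1}+\mathcal{L}_{(i),2})(\bm{U}_\tau,\bm{V}_{(i),\tau})$ annihilates the correction direction to first order: the correction lies in $\mathrm{col}(\bm{U}_\tau)+O(\eta)$, and when $\bm{P}_V\bm{U}=0$ a direct computation gives $\nabla_V(\mathcal{L}_{(i),1}+\mathcal{L}_{(i),2})(\bm{U},\bm{V})=-\bm{S}_{(i)}\bm{V}+(\text{terms vanishing against }\mathrm{col}(\bm{U}))$. This is exactly the structural reason the correction contributes only $O(\eta^2)$ to the weighted change.

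The third step is pure algebra. Evaluating gradients at the orthogonal pair yields $\nabla_U\mathcal{L}_{(i),1}=-(\bm{I}-\bm{P}_{\bm{V}_{(i),\tau}})\bm{S}_{(i)}\bm{U}_\tau$ and $\nabla_V(\mathcal{L}_{(i),1}+\mathcal{L}_{(i),2})=-(\bm{I}-\bm{P}_{\bm{U}_\tau})\bm{S}_{(i)}\bm{V}_{(i),\tau}$ after the $\bm{P}_V\bm{U}=0$ simplifications. Plugging the leading-order expressions for $\bm{U}_{\tau+1}-\bm{U}_\tau$ and $\bm{V}_{(i),\tau+1}-\bm{V}_{(i),\tau}$ from Step~1 and using the identity $\Ptangent{[\bm{U},\bm{V}]}{\bm{S}[\bm{U},\bm{V}]}=\bigl(\bm{I}-\bm{P}_{\bm{U}}-\bm{P}_{\bm{V}}\bigr)\bm{S}[\bm{U},\bm{V}]+\text{(skew-symmetric part in the span)}$, the skew parts are killed by the inner product with $\nabla\mathcal{L}$ (which lives in the normal-to-span direction), and I obtain
\[
-\eta_\tau\Bigl\|\tfrac{1}{N}\textstyle\sum_i(\bm{I}-\bm{P}_{\bm{U}_\tau}-\bm{P}_{\bm{V}_{(i),\tau}})\bm{S}_{(i)}\bm{U}_\tau\Bigr\|_F^2\cdot N-\tfrac{\eta_\tau}{}\textstyle\sum_i\|(\bm{I}-\bm{P}_{\bm{U}_\tau}-\bm{P}_{\bm{V}_{(i),\tau}})\bm{S}_{(i)}\bm{V}_{(i),\tau}\|_F^2
\]
plus $O(\eta_\tau^2)$ after using Cauchy--Schwarz to collect the cross terms and the averaging over $i$. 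Modulo the constant in front (which the formal statement will absorb), this matches the claim.

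The main obstacle will be the second step, namely certifying that the correction step genuinely contributes only $O(\eta^2)$ to the Lyapunov change. The concern is that the correction moves $\bm{V}_{(i)}$ by $\Theta(\eta)$, and a naive bound on $\langle\nabla_V(\mathcal{L}_{(i),1}+\mathcal{L}_{(i),2}),\bm{V}_{(i),\tau+1}-\bm{V}_{(i),\tau+\frac12}\rangle$ is only $O(\eta)$; the cancellation requires writing the correction direction as $-\bm{U}_{\tau+1}\bm{U}_{\tau+1}^T\bm{V}_{(i),\tau+\frac12}+O(\eta^2)$, bounding $\bm{U}_{\tau+1}^T\bm{V}_{(i),\tau+\frac12}=O(\eta)$ via the orthogonality at step $\tau$ together with Property~2, and then observing that $\nabla_V(\mathcal{L}_{(i),1}+\mathcal{L}_{(i),2})(\bm{U}_\tau,\bm{V}_{(i),\tau})$ restricted to $\mathrm{col}(\bm{U}_\tau)$ vanishes, so the residual pairing is $O(\eta)\cdot O(\eta)=O(\eta^2)$. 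A secondary subtlety is that the gradient is at $(\bm{U}_\tau,\bm{V}_{(i),\tau})$ whereas the correction uses $\bm{U}_{\tau+1}$; replacing one by the other costs another $O(\eta)$ factor that must again pair with the $O(\eta)$ correction magnitude. Once these are pinned down, the rest is Taylor bookkeeping and the inequality follows.
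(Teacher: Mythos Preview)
Your proposal is correct and follows essentially the same route as the paper. The paper first consolidates the full $\bm{V}$-update (tangent step, retraction, correction, second retraction) into a single expansion $\bm{V}_{(i),\tau+1}=\bm{V}_{(i),\tau}+\eta\,\square\bm{V}_{(i),\tau}-\eta\,\bm{U}_\tau\square\bm{U}_\tau^T\bm{V}_{(i),\tau}+O(\eta^2)$ and then plugs into the inner products, whereas you treat descent and correction separately; but the cancellation mechanism you isolate---the correction direction lies in $\mathrm{col}(\bm{U}_\tau)$ to leading order and $\nabla_V(\mathcal{L}_{(i),1}+\mathcal{L}_{(i),2})=-(\bm{I}-\bm{P}_{\bm{U}_\tau})\bm{S}_{(i)}\bm{V}_{(i),\tau}$ is orthogonal to that column space---is exactly the paper's, and the remaining bookkeeping matches (note your parenthetical in Step~2 is slightly misstated but you correct it in Step~3).
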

When $\eta$ is small, the $O(\eta)$ terms will dominate $O(\eta^2)$ terms. Thus, Lemma \ref{lm:informalsuffcientdecrease} essentially shows that in Algorithm \ref{alg:consensus}, the change of Lyapunov functions is negative semidefinite in one communication round. With the sufficient decrease property, standard analysis on first-order optimization yields a  $O\left(\frac{1}{R}\right)$ convergence rate.

Formal proofs of Theorem \ref{thm:sublinearconvergence} and Lemma \ref{lm:informalsuffcientdecrease} can be found in Appendix \ref{ap:proofforsublinearconvergence}.

\subsection{Local convergence}
Theorem \ref{thm:sublinearconvergence} only shows that Algorithm \ref{alg:consensus} converges into stationary points but does not provide further information about the property of the final solution. In problems like feature extraction, we want to know whether the stationary point is a globally optimal solution or whether it corresponds to the true PCs.

To this end, we analyze the convergence of global and local PCs. The convergence depends on a Polyak-Lojasiewicz style condition. Similar to Section \ref{sec:sublinearconvergence}, we will introduce another assumption about the eigenvalue distribution of the sample covariance matrix. Without loss of generality, in this section, we assume $r_1=r_{2,(1)}=\cdots=r_{2,(N)}=r$.
\begin{assumption}
\label{ass:noiseless}
(Covariance matrix eigenvalue lower bound) We further assume that the population covariance $\matSigma_{(i)}$ can be entirely explained by $\matPi_g+\matPi_{(i)}$, i.e., $\matSigma_{(i)}\left(\matPi_g+\matPi_{(i)}\right)=\matSigma_{(i)}$, and that the minimum nonzero eigenvalues of $\matSigma_{(i)}$ is lower bounded by a constant $\mu>0$:
\begin{equation}
\mu\left(\matPi_g+\matPi_{(i)}\right)\preceq\matSigma_{(i)}
\end{equation}
where $\matPi_g$ and $\matPi_{(i)}$ are rank-$r$ projection matrices.
\end{assumption}

Assumption \ref{ass:noiseless} assumes that data covariance can be decomposed as noiseless global and local parts with rank $r$. The noiseless assumption of the population covariance matrices is the standard assumption in the local convergence analysis of many PCA algorithms (e.g., \citep{proofonkpca}). 

The following theorem shows that if Algorithm \ref{alg:consensus} is initialized within the attractive basin of the global optimum, the iterates will converge to the global optimal solution linearly.
\begin{theorem}
\label{thm:linearconvergence}(Informal) Under assumptions \ref{ass:identifiability}, \ref{ass:snormupperbound}, and \ref{ass:noiseless}, if the difference between the population and sample covariance is small, when we initialize close to the global optimum, and choose a constant stepsize $\eta_{\tau} = \eta= O\left(\frac{1}{G_{op,max}\sqrt{r}}\right)$, then Algorithm \ref{alg:consensus} with choice 1 will converge into the global optimum:
$$
f(\hmatU,\{\hmatV_{(i)}\}) - f(\bm{U}_{R},\{\bm{V}_{(i),R}\})  = O\left(\left(1-\eta \frac{\mu\theta}{32}\right)^{R}\right) 
$$
where $\{\hmatU,\{\hmatV_{(i)}\} \}$ is one set of optimal solutions to problem \eqref{eqn:problem}.

Furthermore, we can recover the exact global optimal solutions:
$$
\norm{\Pj{\bm{U}_{R}}-\Pj{\hmatU_g}}_F^2+\frac{1}{N}\sum_{i=1}^N\norm{\Pj{\bm{V}_{(i),R}}-\Pj{\hmatV_{(i)}}}_F^2= O\left(\left(1-\eta \frac{\mu\theta}{32}\right)^{R}\right) 
$$
\end{theorem}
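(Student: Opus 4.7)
The plan is to combine two ingredients: a sufficient decrease bound carried over from the proof of Theorem~\ref{thm:sublinearconvergence}, and a new local Polyak--\L{}ojasiewicz (PL)-style gradient dominance inequality that exploits Assumptions~\ref{ass:identifiability} and \ref{ass:noiseless}. Since the Lyapunov identity $\mathcal{L}_{(i),1}(\bm{U}_\tau,\bm{V}_{(i),\tau}) + \mathcal{L}_{(i),2}(\bm{U}_\tau,\bm{V}_{(i),\tau}) = -f_i(\bm{U}_\tau,\bm{V}_{(i),\tau})$ holds at every feasible iterate, Lemma~\ref{lm:informalsuffcientdecrease} already gives
\[
f(\bm{U}_{\tau+1},\{\bm{V}_{(i),\tau+1}\}) - f(\bm{U}_{\tau},\{\bm{V}_{(i),\tau}\}) \;\ge\; \tfrac{\eta}{2}\,\mathcal{G}_\tau^2 \;-\; O(\eta^2),
\]
where $\mathcal{G}_\tau^2$ denotes the sum of squared parallel-gradient norms appearing in Theorem~\ref{thm:sublinearconvergence}. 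Choosing $\eta$ small enough (the same threshold $O(1/(G_{max,op}\sqrt{r}))$) makes the $O(\eta^2)$ remainder absorbable, giving a clean sufficient-ascent inequality for $f$.

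Next, the main work: establishing the PL-type bound
\[
\mathcal{G}_\tau^2 \;\ge\; \tfrac{\mu^2\,\ell(\theta)}{G_{max,op}}\bigl(f^\star - f(\bm{U}_\tau,\{\bm{V}_{(i),\tau}\})\bigr)
\]
locally, i.e.\ on an attractive basin of the global optimum. The strategy is to relate $f^\star-f$ to the subspace-alignment gap $\sum_i [\,r_1+r_{2,(i)} - \langle \bm{P}_{\bm{U}_\tau} + \bm{P}_{\bm{V}_{(i),\tau}}, \bm{\Pi}_g + \bm{\Pi}_{(i)}\rangle\,]$ using Assumption~\ref{ass:noiseless}, which guarantees that deviations of the estimated joint subspace from the true one cost at least $\mu$ per missing dimension. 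On the other side, the parallel gradient $\mathcal{G}_\tau$ must be lower bounded by this same alignment gap; this is where Assumption~\ref{ass:identifiability} enters through Lemma~\ref{lm:sumspacetoindividualspace}, which turns a joint subspace gap into individual global and local gaps, each of which is exactly the ``driving force'' pushing the iterates toward $\bm{\Pi}_g$ and $\bm{\Pi}_{(i)}$. The factor $\ell(\theta)$ arises because the server-side averaging contracts the global update by a factor governed by the average local projection, whose spectral norm is controlled by $\theta$. Explicitly, I would decompose the parallel gradient into a component in $\mathrm{span}(\bm{\Pi}_g)\setminus\mathrm{span}(\bm{P}_{\bm{U}_\tau})$ and a residual, showing that within the basin the former dominates and scales like $\mu$ times the misalignment.

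Combining the sufficient ascent with the PL inequality gives the standard linear-rate recursion
\[
f^\star - f(\bm{U}_{\tau+1},\{\bm{V}_{(i),\tau+1}\}) \;\le\; \Bigl(1-\eta\,\tfrac{\mu^2\ell(\theta)}{4G_{max,op}}\Bigr)\bigl(f^\star - f(\bm{U}_{\tau},\{\bm{V}_{(i),\tau}\})\bigr),
\]
which, by induction, yields the first displayed bound in the theorem. To get the subspace bound, I would run Assumption~\ref{ass:noiseless} in the reverse direction: the function gap $f^\star - f$ upper bounds $\mu$ times the joint alignment gap, and then the lower bound~\eqref{eqn:individualdistancelowerbound} in Lemma~\ref{lm:sumspacetoindividualspace} converts joint alignment into the individual projection distances $\|\bm{P}_{\bm{U}_\tau}-\bm{\Pi}_g\|^2$ and $\tfrac{1}{N}\sum_i \|\bm{P}_{\bm{V}_{(i),\tau}}-\bm{\Pi}_{(i)}\|^2$, absorbing an additional $\theta/2$ factor into the $\ell(\theta)$ dependence.

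The main obstacle is proving the PL inequality in a way that (i) genuinely couples the global averaging step with the per-client local descent (the server's retraction after averaging is not isometric, and small misalignments from different clients can partially cancel or amplify), and (ii) produces the exact constant $\mu^2\ell(\theta)$ rather than a weaker bound. The secondary technical hurdle is defining the ``attractive basin'' precisely: I expect it to be a Frobenius-norm ball in $\bm{P}_{\bm{U}_\tau}$ and each $\bm{P}_{\bm{V}_{(i),\tau}}$ whose radius depends on $\delta$, $\theta$, and $\mu/G_{max,op}$, chosen small enough that no spurious eigenspace of $\bm{S}_{(i)}$ can replace a top eigenspace during the iterates, and that the $O(\eta^2)$ corrections from generalized retractions never dominate the $O(\eta)$ descent.
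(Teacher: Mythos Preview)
Your high-level architecture matches the paper's: sufficient decrease from Lemma~\ref{lm:informalsuffcientdecrease} plus a local PL/gradient-dominance inequality gives the linear recursion, and Lemma~\ref{lm:sumspacetoindividualspace} is indeed the tool that converts the function gap into the subspace distances at the end. The final two paragraphs of your plan are essentially what the paper does.

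The genuine gap is in how you propose to prove the gradient-dominance bound and where you think $\ell(\theta)$ comes from. You invoke Lemma~\ref{lm:sumspacetoindividualspace} \emph{at that step}, saying the factor $\ell(\theta)$ ``arises because the server-side averaging contracts the global update by a factor governed by the average local projection.'' That is not how the paper obtains it, and Lemma~\ref{lm:sumspacetoindividualspace} alone cannot give a lower bound on the parallel-gradient norm: that lemma relates the \emph{joint} subspace gap $\widetilde{\zeta}_\tau$ to the \emph{sum of individual} gaps $\zeta_\tau$, but neither of these quantities is the gradient norm $\mathcal{G}_\tau^2$. The actual obstacle---which you correctly flag but do not resolve---is that $\square\bm{U}_\tau$ mixes contributions from all clients' local misalignments $\Delta\bm{V}_{(i),\tau}$ through the averaging, and those contributions can cancel against the global misalignment $\Delta\bm{U}_\tau$.

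The paper's route is quite different from your proposed ``decompose into $\mathrm{span}(\bm{\Pi}_g)\setminus\mathrm{span}(\bm{P}_{\bm{U}_\tau})$ and residual.'' It first constructs rotation-aligned optimal points $\bm{U}_\tau^\star=\bm{\Pi}_g\bm{U}_\tau(\bm{U}_\tau^T\bm{\Pi}_g\bm{U}_\tau)^{-1/2}$ (and analogously $\bm{V}_{(i),\tau}^\star$) so that $\Delta\bm{U}_\tau=\bm{U}_\tau-\bm{U}_\tau^\star$ is small, then Taylor-expands $\square\bm{U}_\tau$ and $\square\bm{V}_{(i),\tau}$ to first order in $(\Delta\bm{U}_\tau,\{\Delta\bm{V}_{(i),\tau}\})$. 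After vectorization, the leading-order term is $A_\tau\bm{\nu}_\tau$ where $A_\tau$ is a structured $(N{+}1)\times(N{+}1)$-block matrix (diagonal identity blocks, off-diagonal blocks only in the first block row/column, with $A_{i0}=NA_{0i}^T$), and the identifiability assumption translates into $NBB^T\preceq(1-\theta)\bm{I}$ for the off-diagonal block $B$. The constant $\ell(\theta)$ is then the minimum eigenvalue of the scaled quadratic form $\mathcal{Q}(A_\tau)=\bm{\mathcal{D}}_{1/\sqrt{N}}A_\tau^T\bm{\mathcal{D}}_{N}A_\tau\bm{\mathcal{D}}_{1/\sqrt{N}}$, and bounding that eigenvalue below by $\ell(\theta)$ is a stand-alone linear-algebra lemma (Lemma~\ref{lm:lowerboundofeigenA}) that the paper proves by a Schur-complement-style elimination of the lower block of the eigenvector equation. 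Without this block-arrowhead spectral bound, your PL inequality has no quantitative constant, and the argument stalls precisely where you acknowledged the ``main obstacle.''
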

It is worthwhile to point out that in Theorem \ref{thm:linearconvergence}, the convergence is faster for a larger misalignment parameter $\theta$. This is intuitively understandable since when local eigenspaces are more heterogeneous, it is easier to identify different eigenspaces. On the other hand, if all the local eigenspaces are similar, it is difficult to distinguish local PCs from global PCs; thus, the convergence is slower. \emph{This result is in striking contrast to standard federated learning (e.g., \citet{convergencefedavg,fedprox}), where data heterogeneity leads to slower convergence}. We will verify this finding in Section \ref{sec:experiment}. A formal version of Theorem \ref{thm:linearconvergence} and its proof is relegated to the Appendix \ref{ap:proofforlinearconvergence}.

\revise{With the statistical error bound provided by Theorem \ref{thm:statisticalerror} and the convergence guarantee from Theorem \ref{thm:linearconvergence}, we can derive the following corollary.}
\revise{
\begin{corollary}
Under the same assumptions as Theorem \ref{thm:statisticalerror} and Theorem \ref{thm:linearconvergence}, after $t=\Omega\left(\frac{\sqrt{r}\gmop}{\mu\theta}\log\frac{1}{\varepsilon_{stats}}\right)$ communication rounds, we can obtain estimates of global and local PCs that satisfy,
\begin{align*}
 \norm{\Pj{\bm{U}_{t}}-\matPi_g}_F^2+\frac{1}{N}\sum_{i=1}^N\norm{\Pj{\bm{V}_{(i),t}}-\matPi_{(i)}}_F^2= O\left(\varepsilon_{stats}\right)    
\end{align*}
where $\varepsilon_{stats}$ is the statistical error $\varepsilon_{stats}=\frac{1}{\theta\delta^2} \sigma^4 C^2\frac{d}{N}\sum_{i=1}^N\frac{1}{n_i}$.
\end{corollary}
\begin{proof}
By the triangle inequality, we know
\begin{align*}
&\norm{\Pj{\bm{U}_{t}}-\matPi_g}_F^2+\frac{1}{N}\sum_{i=1}^N\norm{\Pj{\bm{V}_{(i),t}}-\matPi_{(i)}}_F^2\\
&\le 2\norm{\Pj{\bm{U}_{t}}-\Pj{\hmatU}}_F^2+\frac{2}{N}\sum_{i=1}^N\norm{\Pj{\bm{V}_{(i),t}}-\Pj{\hmatVit}}_F^2\\
&+2\norm{\matPi_g-\Pj{\hmatU}}_F^2+\frac{2}{N}\sum_{i=1}^N\norm{\matPi_{(i)}-\Pj{\hmatVit}}_F^2
\end{align*}
The first term is bounded by Theorem \ref{thm:linearconvergence}, and the second term is bounded by Theorem \ref{thm:statisticalerror}
\end{proof}
}
\subsubsection{Proof sketch of Theorem \ref{thm:linearconvergence} and key lemmas}
To prove the exponential convergence in Theorem \ref{thm:linearconvergence}, we need a stronger version of the sufficient decrease inequality than Lemma \ref{lm:informalsuffcientdecrease}. We should show that, in each communication round, the change in the Lyapunov functions is negative definite. This requires a careful analysis of the geometry of objective \eqref{eqn:problem} around the global optimum $\Pj{\hmatU}$ and $\{\Pj{\hmatVit}\}$.  

The key result is the Polyak-Lojasiewicz (PL) inequality.
\begin{lemma}
\label{lm:mainpaperplinequality}
(Polyak-Lojasiewicz inequality)
Under the same conditions as Theorem \ref{thm:linearconvergence}, we have
\begin{align*}
&\frac{1}{N}\norm{\sum_{i=1}^N\left(\bm{I}-\bm{P}_{\bm{U}_{\tau}}-\bm{P}_{\bm{V}_{(i), \tau}}\right)\bm{S}_{(i)}\bm{U}_{\tau}}_F^2+\sum_{i=1}^N\norm{\left(\bm{I}-\bm{P}_{\bm{U}_{\tau}}-\bm{P}_{\bm{V}_{(i), \tau}}\right)\sum_{i=1}^N\bm{S}_{(i)}\bm{V}_{(i), \tau}}_F^2\\
    &\ge \frac{\theta\mu}{32}\left(f(\hmatU,\{\hmatV_{(i)}\}) - f(\bm{U}_{R},\{\bm{V}_{(i),R}\})\right)
\end{align*}
\end{lemma}
 The PL inequality shows that the norm of the parallel gradient is lower bounded, a constant fraction of the optimality gap. It certifies a nice geometric property in objective \eqref{eqn:problem} so that each step of gradient descent can make significant progress. By combining the PL inequality with Lemma \ref{lm:suffcientdecrease}, we can easily prove Theorem \ref{thm:linearconvergence}.

One of our major technical contributions is to establish the PL inequality for the nonconvex problem \eqref{eqn:problem}.  We analyze the local geometry of the problem with the help of one special set of optimal solutions $\{\hmatUt,\{\hmatVit\}\}$. We show that this set of optimal solutions is close to the current iterate $\{\matUt,\{\matVit\}\}$. Also, the difference $\{\matUt-\hmatUt,\{\matVit-\hmatVit\}\}$ is aligned with the parallel gradient. As a result, the parallel gradient can direct the updates to the optimal solutions.

The full proof of Lemma \ref{lm:mainpaperplinequality} and Theorem \ref{thm:linearconvergence} is relegated to Appendix \ref{ap:proofforlinearconvergence}. 

\section{Numerical Experiments}
\label{sec:experiment}

This section tests our model on a set of datasets across different applications. We start in Section \ref{sec:proofofconcepts} with a proof of concept study using a synthetic dataset to verify theoretical findings in Sections \ref{subsec:statisticalefficiency} and \ref{sec:convergence}. 

We also \revise{discuss the effects of overparametrization} and show an interesting application of \name in federated client clustering using local PCs. In Section \ref{sec:expcomparisonwithrpca}, we provide an illustrative example in comparison with \texttt{Robust PCA} to shed light on the end goal of our model. Next, we apply \name to a real-life heterogeneous distributed dataset FEMNIST and CIFAR10 to show \name's advantages in finding better features in Section \ref{sec:expfemnist}. Finally, we demonstrate how \name can separate shared and unique features in video and language data in Section \ref{sec:applicationsbeyondfederation}.

We note that from Theorem \ref{thm:linearconvergence}, a suitable initialization is needed for the best performance of \name. We thus employ the standard one-communication round distributed PCA algorithm proposed in \citet{dispca2} as the initialization of global PCs in Algorithm \ref{alg:consensus}, unless specified otherwise. Local PCs are always randomly initialized. In this section we set $r_{2,(1)}=r_{2,(2)}=\cdots=r_{2,(N)}=r_2$.

\subsection{Proof of concept on synthetic datasets}
\label{sec:proofofconcepts}
We generate data from model \eqref{eqn:model}. The  $\bm{u}_q$'s and $\bm{v}_q$'s are set to be orthogonal components. After obtaining $\bm{u}_q$'s and $\bm{v}_q$'s, we sample the score coefficients $\phi_{(i),q}$'s and $\varphi_{(i),q}$'s from i.i.d. Gaussian distributions. 
Noise $\bm{\epsilon}_{(i)}$ are also sampled from i.i.d. Gaussian distributions.

Under this setting, multiple aspects are tested: in Section \ref{sec:expaboutconvergence}, we revisit the example in Figure \ref{fig:toyexampleintro} and examine the convergence behavior of \name numerically. In Sections \ref{sec:experroronn}, \ref{sec:experrorond}, and \ref{sec:experroronN}, we demonstrate how the statistical errors change with the (i) number of observations $n$, (ii) data dimension $d$, and (iii) number of clients $N$, and compare the results with our theory. In Section \ref{sec:expsharedknowledge}, we show that in \name, clients benefit from knowledge sharing to improve their PC estimates. \revise{Then we investigate the numerical performance of \name when $r_1$ and $r_2$ are overparametrized in Section \ref{exp:overparametrize}.} Finally, in Section \ref{exp:expclientclustering}, we describe a method that exploits the estimated local PCs for client clustering. 

\subsubsection{Convergence of \name}
\label{sec:expaboutconvergence}
We first analyze the convergence of \name. Theorem \ref{thm:linearconvergence} predicts that (i) \name has local linear convergence, and (ii) a larger $\theta$ can expedite convergence. To verify the two theoretical results, we run \name on a group of synthetic data. We set $N=2$, $d=3$ and $n_{(i)}=1000$. Each client has exactly one global $\bm{u}_1$ and one local component $\bm{v}_{(i),1}$. After setting global PC $\bm{u}_1$ and local PCs $\bm{v}_{(1),1}$ and $\bm{v}_{(2),1}$, we generate the data according to the model \eqref{eqn:model} where coefficients $\phi_{(i),q}$ and $\varphi_{(i),q}$ are randomly sampled from Gaussian distributions. By changing the direction of local PCs $\bm{v}_{(1),1}$ and $\bm{v}_{(2),1}$, we can modify $\theta$:
$$
\theta = \sin^2 \left(\frac{1}{2}\arccos (\bm{v}_{(1),1}^T\bm{v}_{(2),1})\right)
$$

Figure \ref{fig:toyexampleintro}, shown in the introduction, is an instance of this analysis where $\theta=0.127$.

To see the $\theta$'s effect on convergence, we generate the data with $\theta$ ranging from $0$ to $0.3$. In this experiment, we initialize global and local PCs to be random Gaussian vectors. We run each experiment with the same stepsize $\eta=0.1$ but from $10$ different random initializations and collect the reconstruction error in each communication round $\tau$. The reconstruction error is defined as the objective in \eqref{eqn:naiveproblemformulation} divided by the number of observations $n_{(i)}$:
\begin{equation}
\label{eqn:reconstructionloss}
\text{Reconstruction error} = \frac{1}{N}\sum_{i-1}^N \frac{1}{n_{(i)}}\norm{\bm{Y}_{(i)}-\left(\Pj{\bm{U}}+\Pj{\bm{V}_{(i)}}\right)\bm{Y}_{(i)}}_F^2
\end{equation}
Results are shown in Figure \ref{fig:logerrortotheta}. From Figure \ref{fig:logerrortotheta}(left), we can see that \name indeed enjoys linear convergence. Furthermore, bluer curves have a larger slope, which indicates that a larger $\theta$ leads to faster convergence. Such a finding is corroborated by Figure \ref{fig:logerrortotheta}(right), which plots the log error at the $100$-th communication round with respect to $\theta$. It is clear that the log error decreases linearly with the increase in $\theta$. These results thus confirm insights from Theorem \ref{thm:linearconvergence}.

\begin{figure}[htpb!]
\centering
\subfigure[Log reconstruction error vs communication round]{
\includegraphics[width=0.45\linewidth]{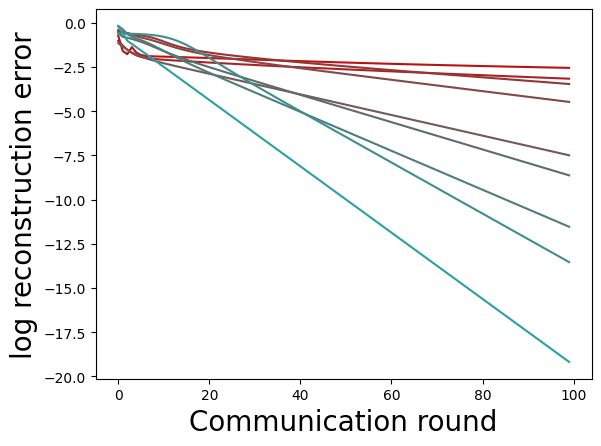}
}
\subfigure[Final log reconstruction error vs $\theta$]{
\includegraphics[width=.45\textwidth]{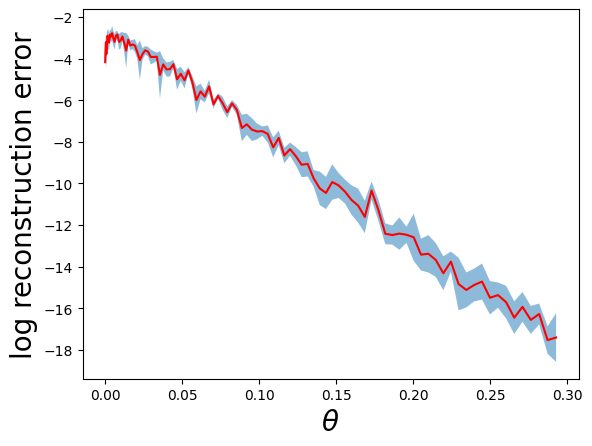}
}
\caption{Left: the learning curve of the reconstruction error. Each curve represents one set of experiments with one $\theta$. The bluer the curve is, the larger $\theta$ is. Right: log reconstruction error after 100 communication rounds for datasets with different misalignment parameters $\theta$. We run each experiment 10 times, each with a different random initialization. The red line represents the mean log error after 100 communication rounds for the ten experiments, and the blue-shaded region shows the confidence interval. }
\label{fig:logerrortotheta}
\end{figure}

\subsubsection{Dependence of Statistical Error on $n$}
\label{sec:experroronn}
Knowing that \name converges rather quickly, we can use the final iterates of Algorithm \ref{alg:consensus} as an estimate of the optimal solution to problem \eqref{eqn:problem}. To show that the estimate can indeed recover the true local and global PCs, we calculate the subspace error between eigenspace estimates and true values defined in \eqref{eqn:deflsubspace},
\begin{equation}
\label{eqn:subspaceerror}
\text{Subspace error} = \norm{\Pj{\bm{U}_{\tau}}-\matPi_g}_F^2+\frac{1}{N}\sum_{i=1}^N\norm{\Pj{\bm{V}_{(i),\tau}}-\matPi_{(i)}}_F^2
\end{equation}
Remember that Theorem \ref{thm:statisticalerror} shows that such error should decrease to $0$ as the number of observations on each client approaches infinity. Additionally, Corollary \ref{cor:statisticalerrorbydn} gives a finite-sample error bound of the subspace error.

Here, we benchmark with a one-shot approach \dispca \citep{oneshotdpca}. However, we provide a simple variant of  \dispca to make it amenable for personalization. 
For standard \dispca, each client first calculates the top $r_1+r_2$ principal components and sends them to the server. The server then concatenates all the received PCs into a  $d\times N(r_1+r_2)$ matrix and calculates the top $r_1$ principal components of the matrix. To enable personalization in  \dispca, we take the following route: we use the obtained top $r_1$ principal components $\bm{U}_{\dispca}$ as estimates of the global principal components. Then, we estimate local PCs with the help of the global ones. Specifically, the global PCs $\bm{U}_{\dispca}$ are sent back to clients. Each client then deflates the sample covariance matrix $\bm{S}_{(i),deflate}=\left(\bm{I}-\Pj{\bm{U}_{\dispca}}\right)\bm{S}_{(i)}\left(\bm{I}-\Pj{\bm{U}_{\dispca}}\right)$, and calculates the top $r_2$ principal components of  $\bm{S}_{(i),deflate}$ as local PCs.


To analyze the statistical consistency, we run \name on datasets with varying numbers of observations $n_{(i)}$ and compare with the benchmark algorithm \dispca. We set $n_{(1)}=n_{(2)}=\cdots=n$. We fix data dimension $d=15$ and generate data from $2$ global PCs and $10$ local PCs. On each client, the variances contributed by local PCs are set to be $100$ times larger than those contributed by global PCs to simulate large heterogeneity. This is achieved by setting the standard deviations of $\phi_{(i),q}$ to be 10 times smaller than $\varphi_{(i),q}$ in data-generating model \eqref{eqn:model}. We use $100$ clients. Among them $50$ clients have $n$ observations, and the rest $50$ clients only have $\frac{1}{10}n$ observations. We run both algorithms and estimate the subspace error \eqref{eqn:subspaceerror} from $5$ different random seeds.

\begin{figure}[htbp!]
\centering
  \includegraphics[width=0.45\linewidth]{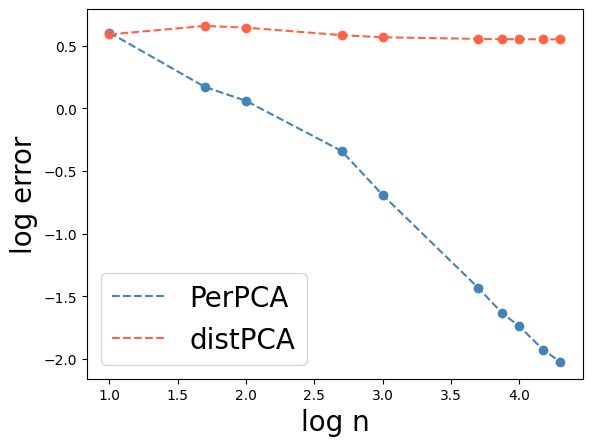}
  \caption{Log subspace error vs local observations $n$. \name is consistent while \dispca is not.}
  \label{fig:statisticalerroronn}
\end{figure}

Results in Figure \ref{fig:statisticalerroronn} show that \name achieves smaller statistical error for almost all $n$, and more importantly, the error decreases with $n$, which indicates that \name gives consistent estimates of global and local PCs. When the error is small, the slope of the curve is approximately $-1$, which matches the theoretical error upper bound $ O\left(\frac{1}{n}\right)$ in Corollary \ref{cor:statisticalerrorbydn}.

In comparison, the statistical error of \dispca does not decrease even when $n$ is very large, implying that the method is not consistent for heterogeneous datasets. This result also sheds light on an important insight. \textbf{Simply learning global components and using them for personalization in a train-then-personalize philosophy is not optimal, as global components from aggregated data may not contain useful information required for personalization}.

\subsubsection{Dependence of Statistical Error on $d$}
\label{sec:experrorond}
We also examine the performance of \name on data with different dimensions $d$. We fix $n=10000$ and generate data with different $d$. Other settings are the same as Section \ref{sec:experroronn}. We calculate the subspace error of estimates given by \name and \dispca. Results are plotted in Figure \ref{fig:statisticalerrorond}. 

\begin{figure}[h!]
\centering
  \includegraphics[width=0.45\linewidth]{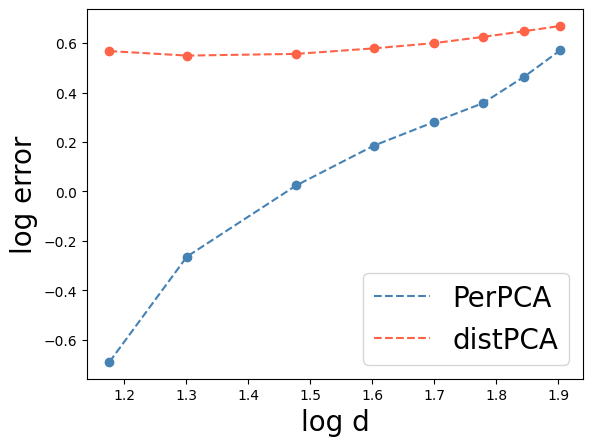}
  \caption{Log error vs data dimension $d$. }
  \label{fig:statisticalerrorond}
\end{figure}

From Figure \ref{fig:statisticalerrorond}, \name still achieves smaller statistical error for all $d$. Also, the error grows almost quadratically with $d$, which again matches the upper bound in Corollary \ref{cor:statisticalerrorbydn}.

\subsubsection{Dependence of Statistical Error on $N$}
\label{sec:experroronN}
Now, we explore whether the number of clients $N$ affects the statistical error. We fix $d=15$, $n=10000$, and change $N$ from $10$ to $1000$. The other settings are also the same as in Section \ref{sec:experroronn}. After obtaining global and local PCs, we calculate the subspace error of both global and local PCs \eqref{eqn:subspaceerror} and the subspace error of only global PCs $\norm{\Pj{\bm{U}}-\matPi_g}_F^2$. Results are plotted in Figure \ref{fig:logerrortoN}.

\begin{figure}[h!]
\centering
\subfigure[Average of subspace error of global and local PC estimates]{
\includegraphics[width=0.4\linewidth]{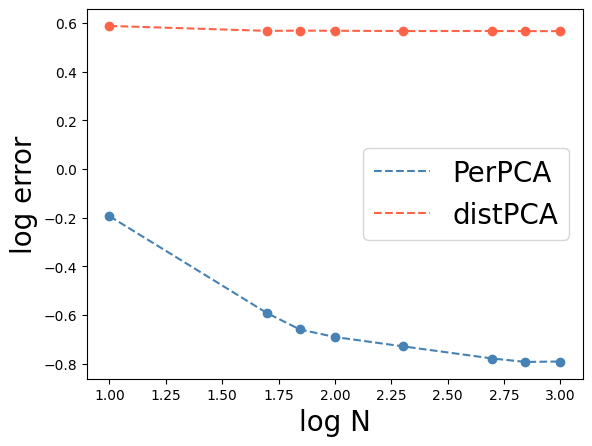}
\label{subfig:averagesubspaceerror}
}
\subfigure[Subspace error of global PC estimates]{
\includegraphics[width=.4\textwidth]{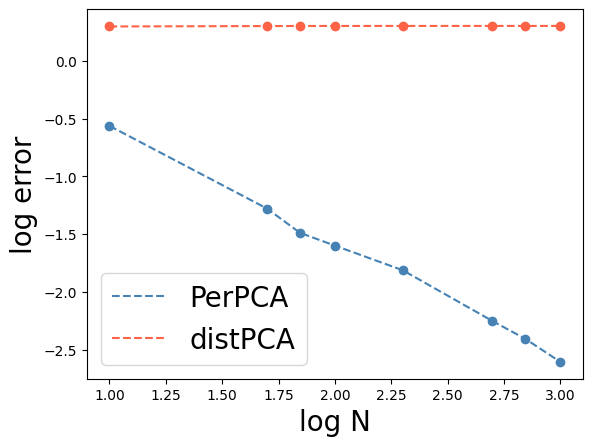}
\label{subfig:globalsubspaceerror}
}
\caption{Left: the average of local and global PCs' subspace error. Right: Global PCs' subspace error. }
\label{fig:logerrortoN}
\end{figure}

Figure \ref{subfig:averagesubspaceerror} shows that when $N$ increases, the average subspace error decreases slowly. The decreasing trend is more conspicuous for the subspace error of global PCs shown in Figure \ref{subfig:globalsubspaceerror}. This is understandable as when more clients participate in \name, more observations are available. Thus, global PCs can be better estimated. 

\subsubsection{Shared knowledge}
\label{sec:expsharedknowledge}
When the PCs on different clients are extremely heterogeneous, it is natural to ask whether clients are sharing knowledge and learning from each other in \name. Corollary \ref{cor:statisticalerrorbydn} indicates that clients can benefit from participating in the collaborative learning process from a theoretical perspective. In this section, we show numerical results on how the learned global components improve client-level predictions.

The dataset on client $i$ is split into a training set $\bm{Y}_{(i),train}$ and testing set $\bm{Y}_{(i),test}$. We use the training set $\bm{Y}_{(i),train}$ to find estimates for global and local PCs and the testing set to calculate the testing error. We focus on the reconstruction error defined in \eqref{eqn:reconstructionloss}. As in Section \ref{sec:experroronn}, we simulate two groups of clients with highly unbalanced dataset sizes. One group of clients has $n$ observations. We call them data-rich clients. The other group of clients have only $\frac{1}{10}n$ observations. We call them data-sparse clients. We set $N=100$ and $n=100$. 

In this experiment, we compare \name with $3$ benchmarks: \texttt{indivPCA}, \texttt{CPCA}, and \dispca. For \texttt{indivPCA}, each client uses their own data to calculate PCs independently without any knowledge sharing. \texttt{CPCA} represents PCA on the pooled data from all clients, i.e., all data are uploaded to a central server, and PCA is learned on the aggregated dataset. For fair comparisons, we allow \texttt{indivPCA} and \texttt{CPCA} to retain $r_1+r_2$ principal components. The results of testing reconstruction error averaged over the groups are shown in Table \ref{tab:borrowpower}. \texttt{Ground Truth} corresponds to the testing loss by the true PCs. 

\begin{table}[h!]
    \centering
    \begin{tabular}{ccccc|c}
\hline
   Client Group &  \texttt{indivPCA} &  \texttt{CPCA} &  \texttt{disPCA} &  \name   &      \texttt{Ground Truth}     \\
\hline
Data sparse & $1.87\pm 0.01$ & $2.07 \pm 0.01$ & $1.91 \pm 0.01$ & $\textbf{1.68} \pm 0.02$  & $1.50\pm 0.01$\\
Data rich & $1.80\pm 0.01$  &  $2.10 \pm 0.01$ & $1.88 \pm 0.01$   &    $\textbf{1.52}\pm 0.01$      &  $1.50\pm 0.01$   \\
\hline
\end{tabular}
    \caption{Testing reconstruction error averaged on each group}
    \label{tab:borrowpower}
\end{table}

From Table \ref{tab:borrowpower}, it is clear that \name achieves the smallest testing error in both the data-sparse and the data-rich group, thus having the best predictive performance. As \name outperforms \texttt{indivPCA}, we can conclude that \name learns useful shared knowledge. The results highlight \name's ability to extract common features from heterogeneous datasets. Also, \texttt{CPCA} exhibits the worst performance. This again highlights the need for personalized learning when data comes from heterogeneous sources.

\subsubsection{Overpametrization}
\label{exp:overparametrize}
In practice, when the true rank $r_1$ and $r_{2,(i)}$'s are unknown, practitioners may choose the rank of $\matU$ and $\matV_{(i)}$ to be larger than the ground truth. This is called an overparametrized regime \citep{overparametrize}. Overparametrization is a common technique for PCA and matrix factorization. Here we investigate the numerical performance of \name in an overparametrized regime. 

We use synthetic data to analyze the convergence behavior of \name. We set $d=30$ and $r_1=1$, $r_{2,(i)}=r_2=1$. Then we randomly generate data $\{\matY_{(i)}\}$ for $N=20$ clients and calculate the corresponding covariance matrix $\{\matS_{(i)}\}$. The data are generated without noise to better understand the convergence. 

We run overparametrized \name on the generated data. More specifically, we choose orthonormal matrices $\matU\in \mathbb{R}^{d\times \hr_1}$ and $\matV_{(i)}\in \mathbb{R}^{d\times \hr_2}$ with rank $\hr_1\ge r_1$ and $\hr_2\ge r_2$ in Algorithm \ref{alg:consensus}. Since, in practice, people may over-parametrize the rank of both global and local PCs differently, we study both cases separately.

\underline{\textit{Case 1: $\hr_2>r_2$}.} We choose the rank of local PCs $\hr_2$ to be higher than the ground truth $r_2$, while keeping $\hr_1=r_1$. Then we run \name starting from random initializations to obtain iterates $\matU_{\tau}$ and $\{\matV_{(i),\tau}\}$ for different $\tau$. We analyze three metrics: 

\begin{itemize}
\item Global error: $\frac{1}{N}\sum_{i=1}^N\frac{1}{n_{(i)}}\norm{\Pj{\matU_{\tau}}\matY_{(i)}-\bm{\Pi}_g\matY_{(i)}}_F^2$
\item Local error: $\frac{1}{N}\sum_{i=1}^N\frac{1}{n_{(i)}}\norm{\Pj{\matV_{(i),\tau}}\matY_{(i)}-\bm{\Pi}_{(i)}\matY_{(i)}}_F^2$
\item Reconstruction Error: $\frac{1}{N}\sum_{i=1}^N\frac{1}{n_{(i)}}\norm{\matY_{(i)}-\left(\bm{\Pi}_{(i)}+\bm{\Pi}_{g}\right)\matY_{(i)}}_F^2$
\end{itemize}


Apparently, the reconstruction error is upper bounded by the sum of the local error and global error. We plot these metrics for different $\hr_2$ in Figure \ref{fig:overparamr2}.

\begin{figure}
    \centering
    \includegraphics[width=0.5\linewidth]{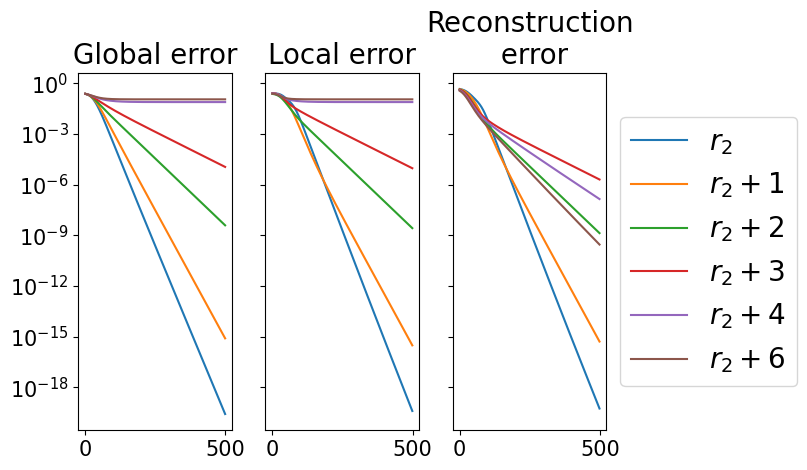}
    \caption{Simulations where we choose $\hr_2\ge r_2$.}
    \label{fig:overparamr2}
\end{figure}

There are a few interesting observations in Figure \ref{fig:overparamr2}. Firstly, for all $\hr_2$, the reconstruction errors decrease linearly. This is understandable as using a larger rank in local features $\hr_2$ can add more representation power to the model, thus helping model fitting. As the covariance matrices are noiseless, the linear decrease of reconstruction error is also consistent with the standard matrix factorization results in \citet{overparametrize}. Secondly, when the local features $\{\matV_{(i)}\}$ are slightly parametrized $r_2\le \hr_2\le r_2+3$, the global error and local error also decrease linearly. Such results show that with slightly overparametrized $\{\matV_{(i)}\}$, one can still recover the global features and the local ones. Thirdly, when $\hr_2$ is very large, $\hr_2\ge r_2+4$, the global error and local error decrease sublinearly. In this highly overparametrized regime, though the reconstruction error decreases to zero linearly, the learned global and local features do not converge to the ground truth equally fast.

When the ground truth $\bm{\Pi}_g$ and $\bm{\Pi}_{(i)}$ are unknown, one cannot evaluate the local and global error. Therefore, we propose the estimated misalignment $\theta_{est}$ value as a statistic indicative of the global-local separation:
$$
\theta_{est}= 1-\lambda_{\max}\left(\frac{1}{N}\sum_{i=1}^N\Pj{\hmatV_{(i)}}\right)
$$
where $\hmatV_{(i)}$ is the recovered local PCs on client $i$. $\theta_{est}$ measures how different the local features are. 

We calculate $\theta_{est}$ for different ranks of $\matV_{(i)}$ and show the results in Table \ref{tab:misalignmentvsrw}.

\begin{table}[H]
    \centering
    
    \begin{tabular}{ccccccc}
    \hline
        $\hr_2$ & $r_2$ & $r_2+1$ & $r_2+2$ & $r_2+3$ & $r_2+4$ & $r_2+6$\\\hline
    $\theta_{est}$ & $0.90$ & $0.69$ & $0.36$ & $0.19$ & $1.8\times 10^{-6}$ & $2.5\times 10^{-9}$\\
    \hline
    \end{tabular}
       \caption{Misalignment value $\theta$ for different ranks of matrix $\matV_{(i)}$}
    \label{tab:misalignmentvsrw}
\end{table}

From Table \ref{tab:misalignmentvsrw}, when $\hr_2$ increases from $r_2+3$ to $r_2+4$, $\theta_{est}$ decreases rapidly from $0.19$ to almost $0$. Such change indicates that the local features are very aligned when $\hr_2=r_2+4$. Thus, local features are not ``distinguishable''. The abrupt changes $\theta_{est}$ echo the results in Figure \ref{fig:overparamr2}: when $\theta_{est}$ is small, local PCs are similar, and the separation between local and global PCs is not clear. 

\underline{\textit{Case 2: $\hr_1>r_1$}.} Similarly, we choose the rank of global PCs $\hr_1$ to be higher than the ground truth $r_1$, while keeping $\hr_2=r_2$. The results are shown in Figure \ref{fig:overparamr1}.

\begin{figure}
    \centering
    \includegraphics[width=0.5\linewidth]{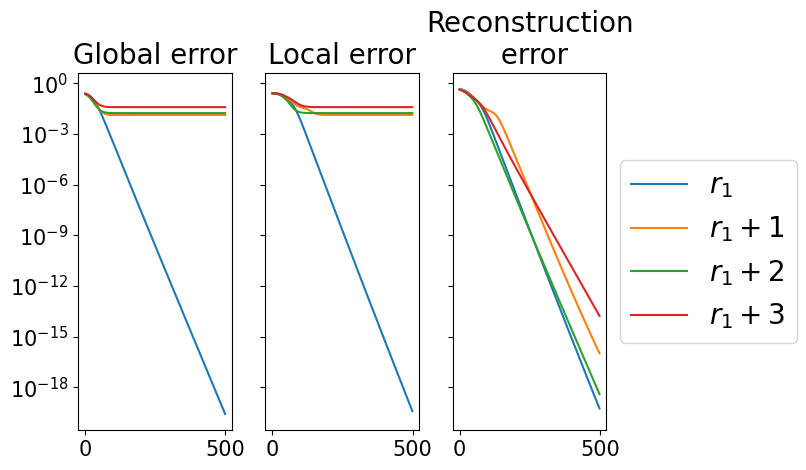}
    \caption{Simulations where we choose $\hr_1\ge r_1$.}
    \label{fig:overparamr1}
\end{figure}

Figure \ref{fig:overparamr1} demonstrates different qualitative behaviors than Figure \ref{fig:overparamr2}. Even when $\hr_1$ is slightly overparametrized, $\hr_1=r_1+1$, the global and local errors do not linearly decrease to $0$. Yet the fitting error for all cases decreases linearly. The comparison implies that when $\matU$ is overparametrized, the combined features $\matU$ and $\{\matV_{(i)}\}$ can still explain well the data variance, but may not exactly characterize the global and local features. 

In light of the insights gained, we recommend that practitioners carefully select small values for $r_1$ and $r_2$ in a way that ensures a small reconstruction error while also maintaining a large value for $\theta_{est}$ if they suspect heterogeneity among the data sources. 




\subsubsection{Clustering based on local principal components}
\label{exp:expclientclustering}
Apart from capturing the variance structure in the data, the learned local and global components can reveal high-level information about the client's interrelatedness. Below we present an interesting application of \name in client clustering. 

An important question in federated and distributed learning is how to cluster clients based on some summary statistics from their data. This is usually done by exploiting some distance metrics over the estimated parameters or gradients \citep{clusterfl} from each client. \name can pose an alternative approach for client clustering based on local PCs. The intuition is that by focusing on local PCs, differences across clients are more explicit compared to the raw data. More specifically, when $r_{2,(1)}=\cdots=r_{2,(N)}=r_2$, one can calculate the subspace distance between client $i$ and $j$ $\rho_{i,j}$ defined as: 
\begin{equation}
\label{eqn:rhodef}
    \rho_{i,j}=\frac{1}{r_2}\norm{\Pj{\hmatV_{(i)}}-\Pj{\hmatV_{(j)}}}_F^2
\end{equation}
If the column space of $\hmatV_{(i)}$ and $\hmatV_{(j)}$ are more similar, $\rho_{i,j}$ will be smaller. 

The $\rho_{i,j}$'s measure the closeness of local subspaces, thus revealing a similarity structure among clients. They form an $N\times N$ matrix $\bm{\rho}$. As such, simple spectral clustering \citep{esl} on $\bm{\rho}$ can be used to analyze the relations among different clients.

As an example, we generate clients from $10$ different client groups. Clients in one group have the same local PCs. Different groups have different local PCs. The data on clients within one group thus have a similar variance structure. We set $r_1 = 2$, $r_2=3$, and $d=15$. We apply \name and calculate the matrix $\bm{\rho}({\tau})$ with each communication round $\tau$. We omit the dependence $\bm{\rho}({\tau})$ on $\tau$ for simplicity. Then, we use multidimensional scaling (MDS) \citep{esl} and spectral clustering on $\bm{\rho}$. Results are shown in Figure \ref{fig:clientcluster}.

Since local PCs are randomly initialized, it is hard to find meaningful structures from initialization in Figure \ref{subfig:clientclusterinitialization}. However, after only one communication round, the true structure emerges in Figure \ref{subfig:clientclusteroneround}. After 30 communication rounds, clients can be effectively clustered based on their learned local PCs.  
\begin{figure}[h]
\centering
\subfigure[Initial local subspace]{
\includegraphics[width=.4\textwidth]{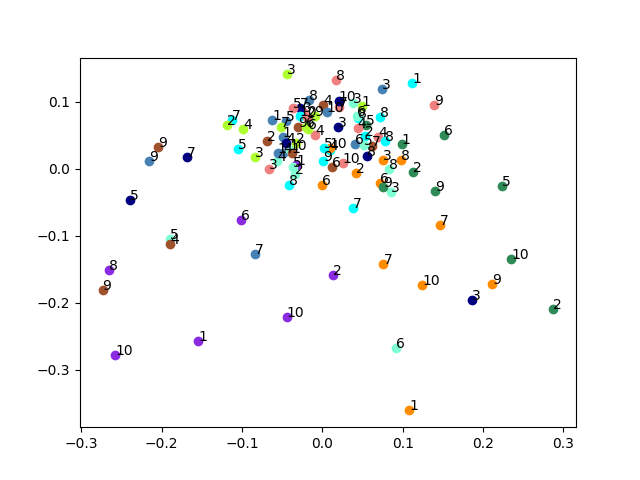}
\label{subfig:clientclusterinitialization}
}
\subfigure[After 1 communication round]{
\includegraphics[width=.4\textwidth]{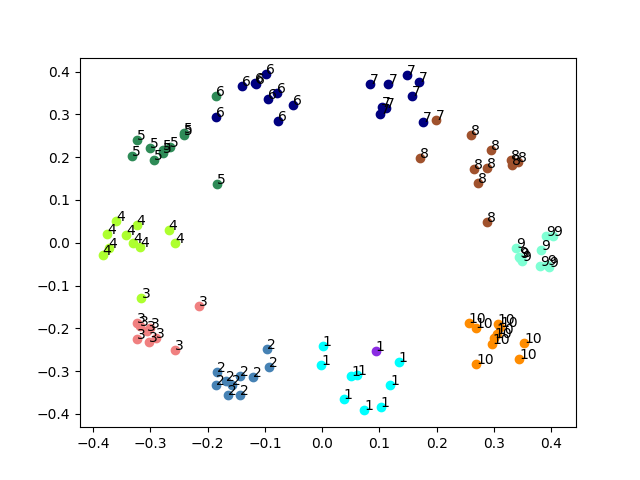}
\label{subfig:clientclusteroneround}
}
\subfigure[After 30 communication rounds]{
\includegraphics[width=.4\textwidth]{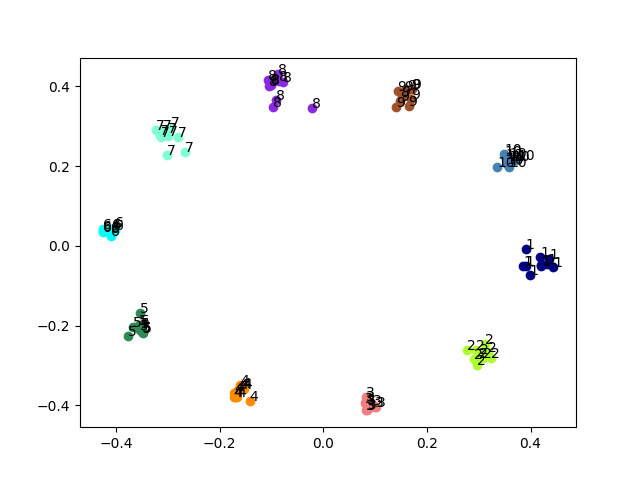}
\label{subfig:clientcluster30rounds}
}
\caption{MDS of the distance matrix $\bm{\rho}$. Color denotes the output of the spectral clustering algorithm. Numbers denote the true cluster labels. }
  \label{fig:clientcluster}
\end{figure}

\subsection{An illustrative example in comparison to \texttt{Robust PCA}}
\label{sec:expcomparisonwithrpca}
The philosophy of finding common and unique features can be applied to other tasks beyond explaining data variance. In this section, we use a simple example to demonstrate how \name can separate shared and unique features from image data. 

We start by comparing \name with \texttt{Robust PCA}. Though \texttt{Robust PCA} is proposed to learn low-rank and sparse parts, it is also potentially useful in finding irregular and common patterns from a dataset. When data come from different sources $\{\bm{Y}_{(i)}\}$ and have equal number of columns $n_{(1)}=\cdots=n_{(N)}=n$, one can stack them into one matrix $\bm{Y}_{\text{stack}}=\left[\vecr{\bm{Y}_{(1)}},\cdots,\vecr{\bm{Y}_{(N)}}\right]$. Then \texttt{Robust PCA} can be applied on the stacked matrix $\bm{Y}_{\text{stack}}\in \mathbb{R}^{nd\times N}$ to distinguish low rank and sparse parts. The common wisdom is to use a low-rank part to represent shared patterns and a sparse part to represent irregular trends \citep{robustpca}. 

The underlying assumption of such an approach is that unique features are somewhat sparse among all datasets. However, there are cases where a sparse matrix cannot model unique features. An example is shown in Table \ref{tab:videocomparisonppt}. We create $4$ images of different icons (triangle, disk, cross, and cloud) on similar background textures using PowerPoint and distinguish the icons from the background. As a greyscale image can naturally be represented by an observation matrix with dimensions of its height and width, we can construct $4$ datasets representing $4$ images. Then we apply \name and \texttt{Robust PCA} to identify the icons.   

\begin{table}[h!]
\begin{center}
\begin{small}
\begin{tabular}{ccccc}
Image & 1 & 2 & 3 & 4\\ 
Original & \begin{minipage}[c]{0.18\textwidth} \includegraphics[width=.99\linewidth]{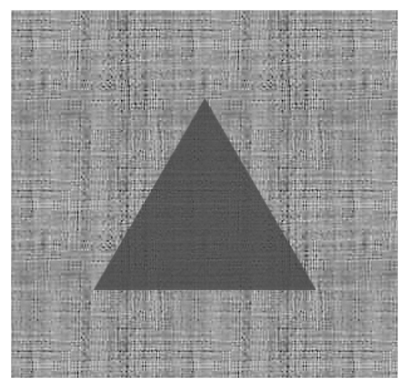} \end{minipage} &
\begin{minipage}[c]{0.18\textwidth} \includegraphics[width=.99\linewidth]{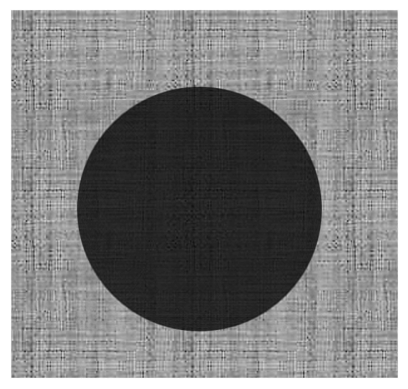} \end{minipage} &\begin{minipage}[c]{0.18\textwidth} \includegraphics[width=.99\linewidth]{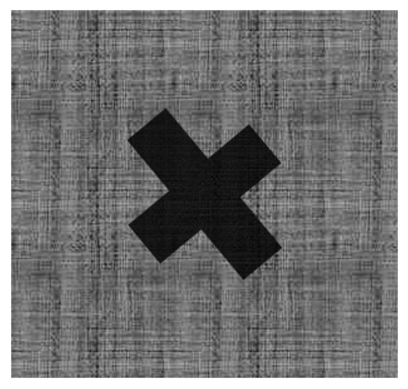} \end{minipage} & \begin{minipage}[c]{0.18\textwidth} \includegraphics[width=.99\linewidth]{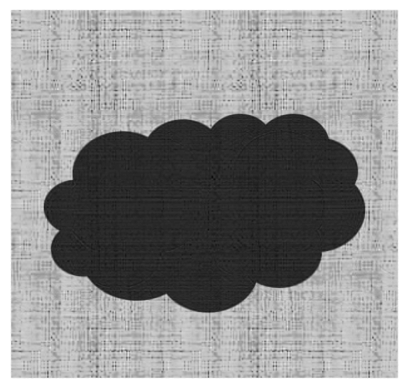} \end{minipage}  \\

\begin{tabular}{@{}c@{}}Sparse \\parts  by\\ RPCA\end{tabular} & \begin{minipage}[c]{0.18\textwidth} \includegraphics[width=.99\linewidth]{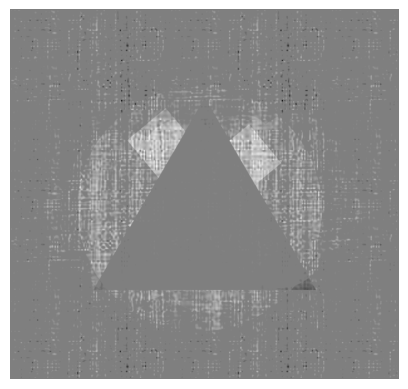} \end{minipage} &
\begin{minipage}[c]{0.18\textwidth} \includegraphics[width=.99\linewidth]{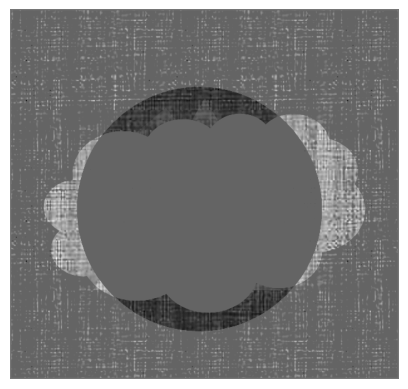} \end{minipage} &
\begin{minipage}[c]{0.18\textwidth} \includegraphics[width=.99\linewidth]{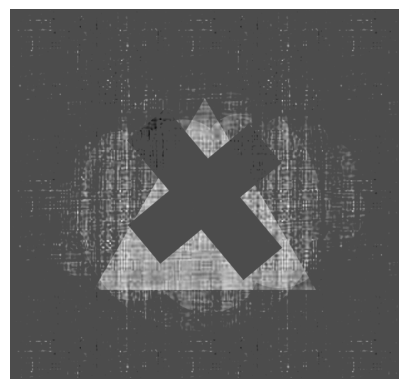} \end{minipage} & \begin{minipage}[c]{0.18\textwidth} \includegraphics[width=.99\linewidth]{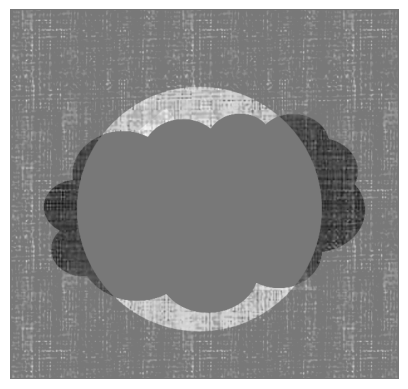} \end{minipage}  \\

\begin{tabular}{@{}c@{}}Projection to\\local PCs \\ by\\ \name\end{tabular} & \begin{minipage}[c]{0.18\textwidth} \includegraphics[width=.99\linewidth]{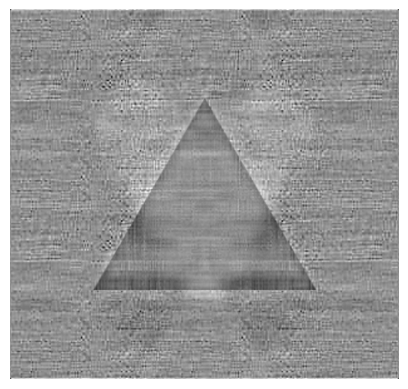} \end{minipage} &
\begin{minipage}[c]{0.18\textwidth} \includegraphics[width=.99\linewidth]{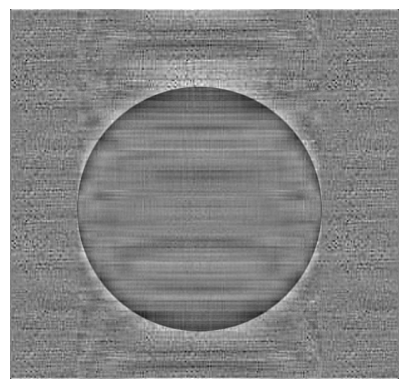} \end{minipage} &\begin{minipage}[c]{0.18\textwidth} \includegraphics[width=.99\linewidth]{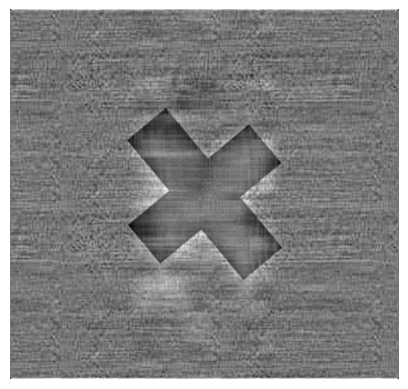} \end{minipage} & \begin{minipage}[c]{0.18\textwidth} \includegraphics[width=.99\linewidth]{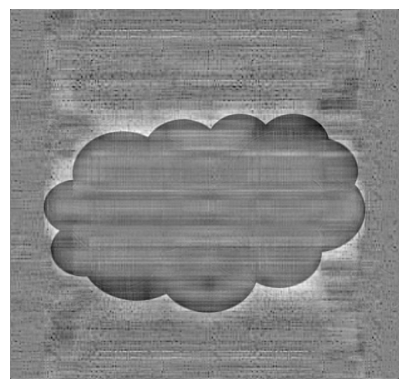} \end{minipage}  \\

\end{tabular}
\end{small}
\end{center}
\caption{A comparison of \name and \texttt{Robust PCA} on images of icons on background textures.}
\label{tab:videocomparisonppt}
\end{table}

From Table \ref{tab:videocomparisonppt}, it is apparent that \texttt{Robust PCA} does not perform well as it cannot recover the icons and always leaves shadows of icons on other images, probably because icons occupy a large space in the image and thus cannot be modeled by sparse noise. \name recovers the icons by projecting the images to the subspace spanned by local PCs. The third row in Table \ref{tab:videocomparisonppt} shows that \name has decent performance as the icons recovered have clear edges and shapes. 

This highlights the need for personalized inference in many applications where PCA is utilized.

\subsection{Real-life federated dataset}
\label{sec:expfemnist}
\revise{We also apply our algorithm on FEMNIST \citep{leaf} and CIFAR10 \citep{cifar10}. }

\revise{FEMNIST is a popular dataset in federated analytics. It consists of greyscale images of handwritten digits and English letters contributed by $3550$ different writers. Each image has $28\times 28 =784$ pixels. Different writers have different writing styles. Thus, the datasets are inherently heterogeneous. Our task is to learn a few PCs that can represent the dataset. On average, each client has 89 images. We represent an image by a vector in $\mathbb{R}^{784}$. For these vectors, we randomly choose 80\% of them to form the training set and take the rest as the test set. }

\revise{CIFAR10 is a multiclass image dataset. It consists of $60000$ images from $10$ classes. To simulate a heterogeneous setting, we separate the training and testing set of CIFAR10 into $20$ parts such that each part contains images from only $2$ classes. Then, we assign the separated parts to $20$ clients. The data partition scheme is consistent with federated learning literature \citep{fedavg}. Then we use similar data preprocessing procedures to vectorize the images on each client and construct the dataset $\{\matY_{(i)}\}$. }

\revise{We use \name, \texttt{indivPCA}, \texttt{CPCA}, and \dispca to fit PCs on training sets. Then, we evaluate the reconstruction error \eqref{eqn:reconstructionloss} on both training and test sets. The experiments are repeated $3$ times to calculate the mean and standard deviations
. The results are shown in Table \ref{tab:femnist}.}

\begin{table}[h!]
    \centering
    \begin{tabular}{ccccc}
\hline
   Reconstruction error &  \texttt{indivPCA} &  \texttt{CPCA} &  \texttt{disPCA} &  \name        \\
\hline
FEMNIST Training & $\textbf{0.49}\pm 0.01$ & $1.72 \pm 0.01$ & $1.43 \pm 0.01$ & $1.44 \pm 0.02$  \\
CIFAR10 Training & $\textbf{105.68}\pm 0.01$ & $114.79 \pm 0.01$ & $113.56 \pm 0.01$ & $113.69 \pm 0.02$  \\
\hline
FEMNIST Testing & $1.97\pm 0.03$  &  $1.73 \pm 0.01$ & $1.73 \pm 0.01$   &    $\textbf{1.70}\pm 0.01$   \\
CIFAR10 Testing & $120.79 \pm 0.02$  &  $115.44\pm 0.02$ & $115.43 \pm 0.01$   &    $\textbf{115.33}\pm 0.02$   \\
\hline
\end{tabular}
    \caption{\revise{The mean and standard deviations of the training and testing reconstruction error on FEMNIST}}
    \label{tab:femnist}
\end{table}

\revise{As the reconstruction error represents the difference between the original and reconstructed image, it represents how well the learned PCs can characterize the features in the image. In Table \ref{tab:femnist}, \texttt{indivPCA} achieves the lowest training error but incurs high testing error, suggesting that learned PCs overfit the training sets. \name has the lowest testing loss both in FEMNIST and CIFAR10, highlighting \name's ability to leverage common knowledge with unique trends to find better features from data.}

\subsection{Other Applications}
\label{sec:applicationsbeyondfederation}
Besides the experiments in the previous sections, \name can excel in various tasks that require separating shared and unique features. In this section, we will use video segmentation and topic extraction as two examples to show the applicability of \name. 

\subsubsection{Video segmentation}

The task of video segmentation is to separate moving parts (foreground) from stationary backgrounds in a video. For a video with $F$ frames, where each frame is an image with width $W$ and height $H$, we can model it as $F$ separated datasets. Each dataset has the data of one image frame or $H$ observations from $\mathbb{R}^W$. Therefore, we can naturally apply  \name to recover local and global PCs from the constructed datasets of all frames. Intuitively, the global PCs should capture shared features across all frames, representing the stationary background. Meanwhile, local PCs capture unique features in each frame corresponding to the moving parts. Hence, after obtaining the global and local PCs, we project the original picture onto the subspace spanned by these components to extract the background and foreground segments.

We use a surveillance video example from \citet{vacavant}. We set $r_1=50$ and $r_{2,(1)}=\cdots=r_{2,(N)}=50$ and apply Algorithm \ref{alg:consensus} with choice 2. Some segmentation results are shown in Table \ref{tab:simulatevideo}. From Table \ref{tab:simulatevideo}, we can see that backgrounds and moving parts are well separated by global and local PCs, validating \name's ability to find common and unique features in image datasets.

\begin{table}[h]
\begin{center}
\begin{small}
\begin{tabular}{cccc}
Sample Frame & 1 & 2 & 3 \\ 
Original & \begin{minipage}[c]{0.25\textwidth} \includegraphics[width=.99\linewidth]{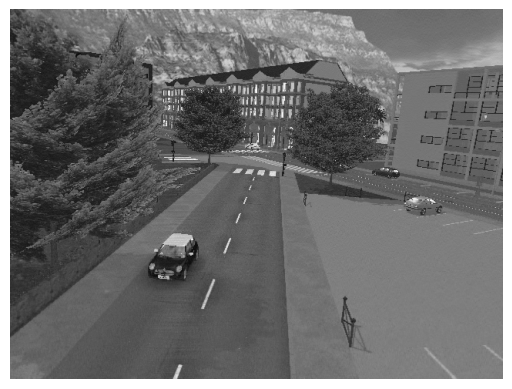} \end{minipage} & \begin{minipage}[c]{0.25\textwidth} \includegraphics[width=.99\linewidth]{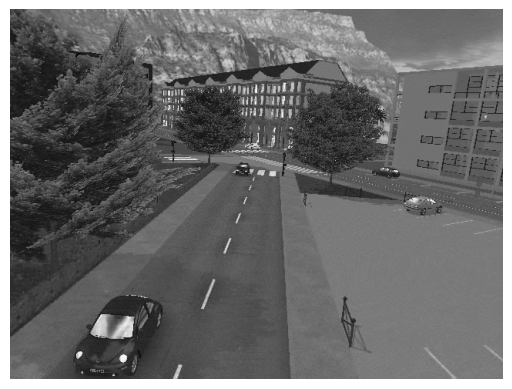} \end{minipage} & \begin{minipage}[c]{0.25\textwidth} \includegraphics[width=.99\linewidth]{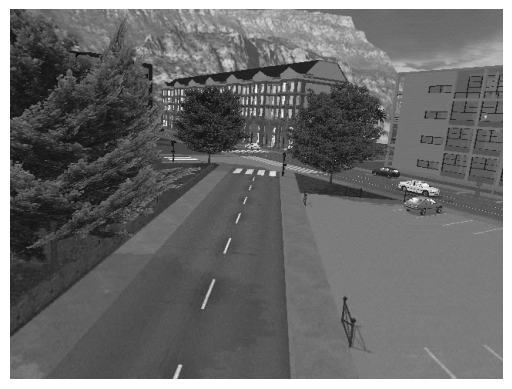} \end{minipage} \\

\begin{tabular}{@{}c@{}}Projection to\\ global\\ PC space\end{tabular} & \begin{minipage}[c]{0.25\textwidth} \includegraphics[width=.99\linewidth]{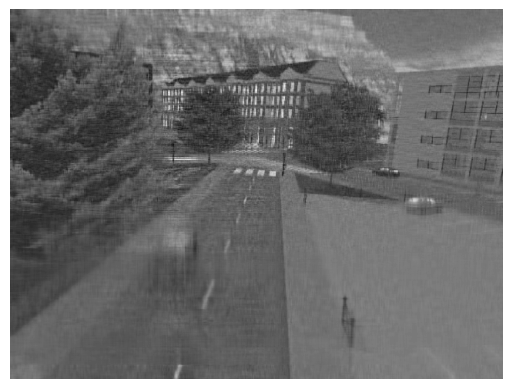} \end{minipage} & \begin{minipage}[c]{0.25\textwidth} \includegraphics[width=.99\linewidth]{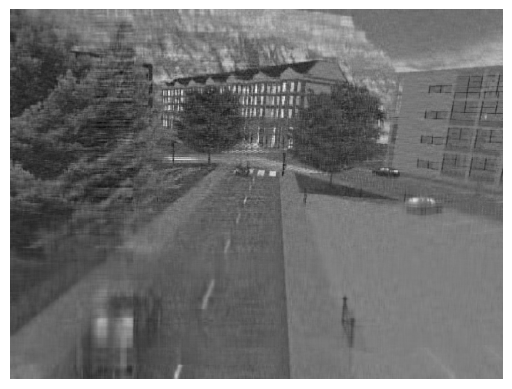} \end{minipage} & \begin{minipage}[c]{0.25\textwidth} \includegraphics[width=.99\linewidth]{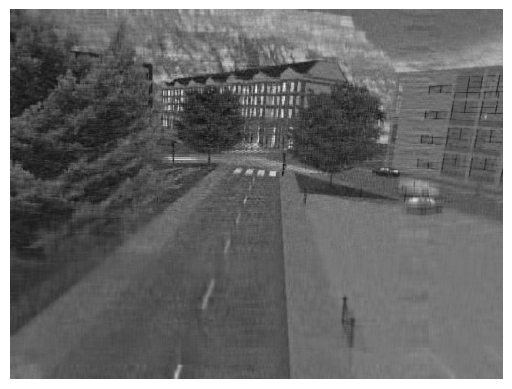} \end{minipage} \\

\begin{tabular}{@{}c@{}}Projection to\\ local\\ PC space\end{tabular} & \begin{minipage}[c]{0.25\textwidth} \includegraphics[width=.99\linewidth]{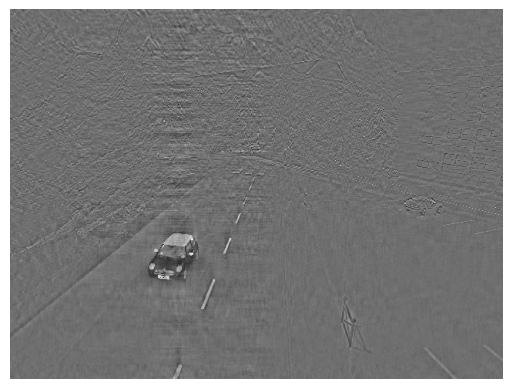} \end{minipage} & \begin{minipage}[c]{0.25\textwidth} \includegraphics[width=.99\linewidth]{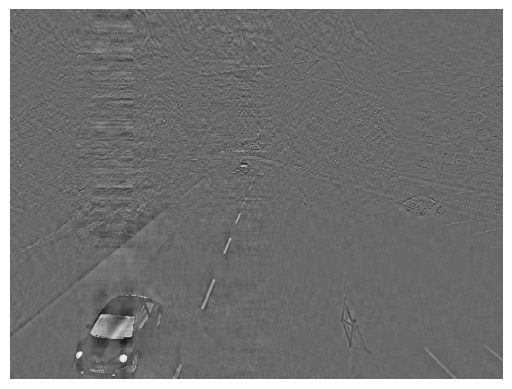} \end{minipage} & \begin{minipage}[c]{0.25\textwidth} \includegraphics[width=.99\linewidth]{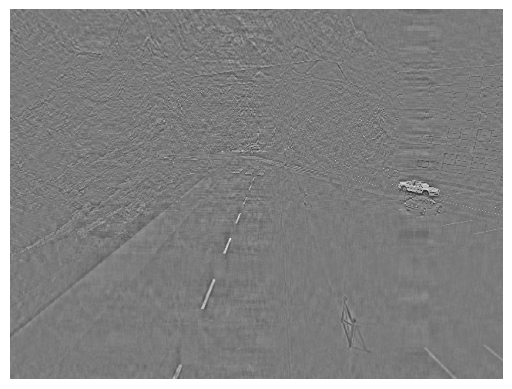} \end{minipage} \\  \\
\end{tabular}
\end{small}
\end{center}
\caption{Video segmentation. We separate moving cars from the background in a video from \citep{vacavant}. }
\label{tab:simulatevideo}
\end{table}

\subsubsection{Topic extraction}

\begin{table}[htbp!]
\centering
\caption{U.S. presidential debate key topics represented by local \& global PCs}
\label{tab:debate}
\begin{tabular}{cc}
\hline
Year & Top local principal components words                                                        \\
\hline
1960 & peace, Castro, Africa, Kennedy, now, world,  ...                                                          \\
1976 & billion, Carter, Governor, Africa, Ford, people, world, ...                                                \\
1980 & coal, oil, money, energy, Social, Security, Reagan, ...                                     \\
1984 & Union, tax, Soviet, arms, leadership, proposal, ...                                      \\
1988 & drug, young, strong, build, future, enforcement, good, ...                                  \\
1992 & Bill, school, children, care, health, taxes, reform, plan, control, ...                      \\
1996 & Clinton, Security, Medicare, budget, tax, Dole, Bob, ...                                    \\
2000 & school, public, plan, children, money, Social, Security, health, tax, ...                   \\
2004 & wrong, plan, cost, free, Saddam, troops, Iraq, war, health, tax, ...                        \\
2008 &  nuclear, oil, troops, Iraq, Afghanistan, Pakistan, health, Iran, energy, ... \\
2012 & million, small, business, China, Medicare, Romney, jobs, tax, ...                           \\
2016 & Russia, Trump, Hillary, companies, taxes, Mosul, Iran, deal, ...                            \\
2020 & Harris, Pence, Trump, down, Joe, Biden, jobs, Donald, health, ...\\
\hline
Common words & Tax, country, States, make, world, money, people, cut, ... \\
\hline
\end{tabular}
\end{table}

\name is also useful in modeling changing topics in language datasets. As a demonstration, we analyze the presidential debate transcriptions from 1960 to 2020 \citep{debatedataset}. The goal is to extract key debating topics for each specific election year.

The dataset contains $9135$ dialogues in $46$ debates from 13 election years, where one dialogue is the speech the speaker makes in the debate before another person speaks. After we remove common English words such as ``you'', ``I'', ``and'', ``at'', ``that'', from the text corpus, there are 5464 different words used in the dataset. 

We model the dataset as a collection of $13$ separate datasets, each of which has all the dialogues in one election year. To construct the observation matrix $\bm{Y}_{(i)}$, we first use one-hot encoding to map an English word into a vector in $\mathbb{R}^{5464}$. Then, we add all vectors corresponding to words that appear in one dialogue. The added vector is one observation in $\mathbb{R}^{5464}$. $\bm{Y}_{(i)}$ is formed by concatenating observations corresponding to dialogues in the election year. 

With the datasets constructed, we run \texttt{PerPCA} for $20$ communication rounds to extract local PCs. We set $r_1=2$ and $r_{2,(1)}=\cdots=r_{2,(N)}=2$. To show the key topics represented by the two local PCs, we find the words corresponding to the dimensions in each local and global PC that have the top $20$ largest absolute values. Table \ref{tab:debate} contains the most informative keywords from the top $20$ keywords obtained.

From Table \ref{tab:debate}, one can find different debating key topics for different years. For some years, the key topics are about public finance and domestic economic reform. For others, the key topics are more about international relations. These topics represent the central issues at a specific time in history. 

\section{Conclusion} \label{sec:con}
This work proposes \name, a systematic approach to decouple shared and unique features from heterogeneous datasets. We show that the problem is well formulated, and consistency can be guaranteed under mild conditions. A fully federated algorithm with a convergence guarantee is designed to efficiently obtain global and local PCs from noisy observations. Extensive simulations highlight \name's ability to separate shared and unique features in various applications.

As the first systematic approach to decouple shared and unique features quantitatively, we envision that \name can find use across various downstream analytics such as interpretability, clustering, classification, change detection, and transfer/federated learning. Within these areas, one can leverage unique knowledge so that differences become more explicit and leverage shared knowledge to transfer useful information from one source to another. Exploration along these directions may be promising. 

In addition, within \name, there are several avenues for expansion and exploration. \revise{On the optimization front, it is promising to design algorithms that can converge faster, or require lower computation resources, including Grassmannian gradient descent, and adaptive stepsize Stiefel gradient descent.} Further, extensions of \name that consider missing data, large noise, sparse factors, or malicious intruders are important directions for future work.

\section{Acknowledgements and Disclosure of Funding}
We thank the anonymous reviewers for their constructive feedback. This research is supported by Raed Al Kontar's National Science Foundation (NSF) CAREER Grant 2144147.


\newpage

\appendix

\section{Proof of Theorem \ref{thm:statisticalerror}}
\label{sec:proofofconistencytheorem}
In this section, we will show the proof of Theorem \ref{thm:statisticalerror}. We first use standard perturbation analysis on the eigenspaces of $\bm{\Sigma}_{(i)}$ \citep{fantope}. By assumption, $\matPi_g$ and $\matPi_{(i)}$ are the projections onto top eigenspaces of $\bm{\Sigma}_{(i)}$, therefore for any orthogonal projection matrix $\bm{P}_{\hat{\bm{U}}}$ and $\bm{P}_{\hat{\bm{V}}_{(i)}}$, we have:
\begin{equation*}
\begin{aligned}
&\innerp{\bm{\Sigma}_{(i)}}{\matPi_g+\matPi_{(i)}-\bm{P}_{\hat{\bm{U}}}-\bm{P}_{\hat{\bm{V}}_{(i)}}}\\
&=\innerp{\left(\matPi_g+\matPi_{(i)}\right)\bm{\Sigma}_{(i)}}{\bm{I}-\bm{P}_{\hat{\bm{U}}}-\bm{P}_{\hat{\bm{V}}_{(i)}}}-\innerp{\left(\bm{I}-\matPi_g-\matPi_{(i)}\right)\bm{\Sigma}_{(i)}}{\bm{P}_{\hat{\bm{U}}}+\bm{P}_{\hat{\bm{V}}_{(i)}}}\\
&\ge \lambda_{r_1+r_{2,(i)}}\left(\left(\matPi_g+\matPi_{(i)}\right)\matSigma_{(i)}\right)\innerp{\matPi_g+\matPi_{(i)}}{\bm{I}-\bm{P}_{\hat{\bm{U}}}-\bm{P}_{\hat{\bm{V}}_{(i)}}}\\
&-\lambda_{1}\left(\left(\matI-\matPi_g-\matPi_{(i)}\right)\matSigma_{(i)}\right)\innerp{\bm{I}-\matPi_g-\matPi_{(i)}}{\bm{P}_{\hat{\bm{U}}}+\bm{P}_{\hat{\bm{V}}_{(i)}}}\\
&\ge \delta\left(r_1+r_{2,(i)} - \innerp{\matPi_g+\matPi_{(i)}}{\bm{P}_{\hat{\bm{U}}}+\bm{P}_{\hat{\bm{V}}_{(i)}}}\right)
\end{aligned}
\end{equation*}
Summing both sides for $i$ from 1 to $N$, we have:
\begin{equation}
\label{proofeqn:differenceprojection}
\sum_{i=1}^N\innerp{\bm{\Sigma}_{(i)}}{\matPi_g+\matPi_{(i)}-\bm{P}_{\hat{\bm{U}}}-\bm{P}_{\hat{\bm{V}}_{(i)}}}\ge \delta\sum_{i=1}^N\left(r_1+r_{2,(i)} - \innerp{\matPi_g+\matPi_{(i)}}{\bm{P}_{\hat{\bm{U}}}+\bm{P}_{\hat{\bm{V}}_{(i)}}}\right)
\end{equation}
Since $\bm{P}_{\hat{\bm{U}}}$ and $\{\bm{P}_{\hat{\bm{V}}_{(i)}}\}$ are the optimal solutions to \eqref{eqn:problem}, and $\matPi_g$ and $\{\matPi_{(i)}\}$ are feasible, we know that:
\begin{equation}
\label{proofeqn:optimalitycondition}
\sum_{i=1}^N\innerp{\bm{S}_{(i)}}{\bm{P}_{\hat{\bm{U}}}+\bm{P}_{\hat{\bm{V}}_{(i)}}}\ge \sum_{i=1}^N\innerp{\bm{S}_{(i)}}{\matPi_g+\matPi_{(i)}}
\end{equation}
Combining \eqref{proofeqn:differenceprojection} and \eqref{proofeqn:optimalitycondition}, we can obtain:
$$
\sum_{i=1}^N\innerp{\bm{S}_{(i)}-\bm{\Sigma}_{(i)}}{\bm{P}_{\hat{\bm{U}}}+\bm{P}_{\hat{\bm{V}}_{(i)}}-\matPi_g-\matPi_{(i)}}\ge \delta \sum_{i=1}^N\left(r_1+r_{2,(i)}-\innerp{\matPi_g+\matPi_{(i)}}{\bm{P}_{\hat{\bm{U}}}+\bm{P}_{\hat{\bm{V}}_{(i)}} }\right)
$$
We can use the Cauchy-Schwartz inequality to further bound the left-hand side as:
$$
\innerp{\bm{S}_{(i)}-\bm{\Sigma}_{(i)}}{\bm{P}_{\hat{\bm{U}}}+\bm{P}_{\hat{\bm{V}}_{(i)}}-\matPi_g-\matPi_{(i)}}\le \norm{\bm{S}_{(i)}-\bm{\Sigma}_{(i)}}_F\norm{\bm{P}_{\hat{\bm{U}}}+\bm{P}_{\hat{\bm{V}}_{(i)}}-\matPi_g-\matPi_{(i)}}_F
$$
Notice that
\begin{equation*}
\begin{aligned}
&\norm{\bm{P}_{\hat{\bm{U}}}+\bm{P}_{\hat{\bm{V}}_{(i)}}-\matPi_g-\matPi_{(i)}}_F\\
&=\sqrt{\norm{\bm{P}_{\hat{\bm{U}}}+\bm{P}_{\hat{\bm{V}}_{(i)}}}_F^2+\norm{\matPi_g+\matPi_{(i)}}_F^2-2\innerp{\bm{P}_{\hat{\bm{U}}}+\bm{P}_{\hat{\bm{V}}_{(i)}}}{\matPi_g+\matPi_{(i)}}}\\
&=\sqrt{2}\sqrt{r_1+r_{2,(i)}-\innerp{\bm{P}_{\hat{\bm{U}}}+\bm{P}_{\hat{\bm{V}}_{(i)}}}{\matPi_g+\matPi_{(i)}}}
\end{aligned}
\end{equation*}
We thus have:
\begin{equation*}
\begin{aligned}
&\sum_{i=1}^N\innerp{\bm{S}_{(i)}-\bm{\Sigma}_{(i)}}{\bm{P}_{\hat{\bm{U}}}+\bm{P}_{\hat{\bm{V}}_{(i)}}-\matPi_g-\matPi_{(i)}}\\
&\le \sqrt{2}\sqrt{\sum_{i=1}^N\norm{\bm{S}_{(i)}-\bm{\Sigma}_{(i)}}_F^2}\sqrt{\sum_{i=1}^N\left[r_1+r_{2,(i)}-\innerp{\bm{P}_{\hat{\bm{U}}}+\bm{P}_{\hat{\bm{V}}_{(i)}}}{\matPi_g+\matPi_{(i)}}\right]}\\
\end{aligned}
\end{equation*}
by another application of Cauchy-Schwartz inequality.

Finally:
\begin{equation}
\label{proofeqn:unionspaceclose}
\frac{1}{N}\sum_{i=1}^N \left[r_1+r_{2,(i)}-\innerp{\bm{P}_{\hat{\bm{U}}}+\bm{P}_{\hat{\bm{V}}_{(i)}}}{\matPi_g+\matPi_{(i)}}\right]\le \frac{2}{N\delta^2} \sum_{i=1}^N\norm{\bm{S}_{(i)}-\bm{\Sigma}_{(i)}}_F^2   
\end{equation}
The relation \eqref{proofeqn:unionspaceclose} slightly extends the standard result from matrix perturbation theory. However, it only shows the summation of $\bm{P}_{\hat{\bm{U}}}$ and $\bm{P}_{\hat{\bm{V}}_{(i)}}$ is close to the summation of $\matPi_g$ and $\matPi_{(i)}$. One cannot infer additional information about the closeness of $\bm{P}_{\hat{\bm{U}}}$ to $\matPi_g$, or $\bm{P}_{\hat{\bm{V}}_{(i)}}$ to $\matPi_{(i)}$. In other words, \eqref{proofeqn:unionspaceclose} alone does not ensure that the recovered global and local PCs correspond to true PCs.

Such a guarantee is too weak in practice when we want to know if the solved $\bm{U}$ and $\bm{V}_{(i)}$'s are close to the ground truth. Fortunately, we can show that this is indeed the case if the problem satisfies the identifiability assumption \ref{ass:identifiability}. An important finding is the following lemma, which indicates that the closeness in direct sum space can lead to closeness in each global and local subspaces.
\begin{lemma}
\label{lm:sumspacetoindividualspace}
Suppose for $i=1,\cdots,N$, $\bm{P}_{\bm{U}}$, $\bm{P}_{\bm{V}_{(i)}}$ and $\bm{P}_{\bm{U}}^\star$, $\bm{P}_{\bm{V}_{(i)}}^\star$ are projection matrices satisfying $\bm{P}_{\bm{U}}\bm{P}_{\bm{V}_{(i)}}=0$ and $\bm{P}_{\bm{U}}^\star\bm{P}_{\bm{V}_{(i)}}^\star=0$ for each $i$. Among them, $\bm{P}_{\bm{U}}$ and $\bm{P}_{\bm{U}}^\star$ have rank $r_1$, $\bm{P}_{\bm{V}_{(i)}}$ and $\bm{P}_{\bm{V}_{(i)}}^\star$ have rank $r_{2,(i)}$. If there exists a positive constant $\theta>0$ such that
$$
\lambda_{max}(\frac{1}{N}\sum_{i=1}^N\bm{P}_{\bm{V}_{(i)}}^\star)\le 1-\theta
$$
We have the following bound:
\begin{equation}
\sum_{i=1}^Nr_1+r_{2,(i)}-\innerp{\bm{P}_{\bm{U}}+\bm{P}_{\bm{V}_{(i)}}}{\bm{P}_{\bm{U}}^\star+\bm{P}_{\bm{V}_{(i)}}^\star}\le N\left(r_1- \innerp{\bm{P}_{\bm{U}}^\star}{\bm{P}_{\bm{U}}}\right)+\sum_{i=1}^N r_{2,(i)}-\innerp{\bm{P}_{\bm{V}_{(i)}}^\star}{\bm{P}_{\bm{V}_{(i)}}}
\end{equation}
Also:
\begin{equation}
\label{eqn:individualdistancelowerbound}
\sum_{i=1}^Nr_1+r_{2,(i)}-\innerp{\bm{P}_{\bm{U}}+\bm{P}_{\bm{V}_{(i)}}}{\bm{P}_{\bm{U}}^\star+\bm{P}_{\bm{V}_{(i)}}^\star}\ge \frac{\theta}{2}\left(N\left(r_1- \innerp{\bm{P}_{\bm{U}}^\star}{\bm{P}_{\bm{U}}}\right)+\sum_{i=1}^N r_{2,(i)}-\innerp{\bm{P}_{\bm{V}_{(i)}}^\star}{\bm{P}_{\bm{V}_{(i)}}}\right)
\end{equation}
\end{lemma}
The proof of Lemma \ref{lm:sumspacetoindividualspace} is at the end of Section \ref{sec:auxlemmas}. By applying inequality \eqref{eqn:individualdistancelowerbound} to \eqref{proofeqn:unionspaceclose}, we can prove the desired error bound in Theorem \ref{thm:statisticalerror}.

\section{KKT condition }
\label{ap:proofforkkt}
We show the KKT conditions \eqref{eqn:kktcondition}. The lagrangian to the objective \eqref{eqn:problem} is:
\begin{equation}
\label{eqn:lagrangian}
\begin{aligned}
\mathscr{L}=&\sum_{i=1}^N\left[ \frac{1}{2}\tr{\bm{U}^T\bm{S}_{(i)}\bm{U}}+\frac{1}{2}\tr{\bm{V}_{(i)}^T\bm{S}_{(i)}\bm{V}_{(i)}}+\innerp{\bm{\Lambda}_{2,(i)}}{\bm{V}_{(i)}^T\bm{V}_{(i)}-\bm{I}}+\innerp{\bm{\Lambda}_{3,(i)}}{\bm{U}^T\bm{V}_{(i)}}\right]\\
&+\innerp{\bm{\Lambda}_1}{\bm{U}^T\bm{U}}
\end{aligned}
\end{equation}
where $\bm{\Lambda}_1\in\mathbb{R}^{r_1\times r_1}$, $\bm{\Lambda}_{2,(i)}\in \mathbb{R}^{r_{2,(i)}\times r_{2,(i)}}$, and $\bm{\Lambda}_{3,(i)}\in\mathbb{R}^{r_1\times r_{2,(i)}}$ are dual variables. The KKT conditions are:
\begin{equation}
\label{eqn:kktderivations}
\left\{\begin{aligned}
&\bm{S}_{(i)}\bm{V}_{(i)}+\bm{V}_{(i)}\left(\bm{\Lambda}_{2,(i)}+\bm{\Lambda}_{2,(i)}^T\right)+\bm{U}\bm{\Lambda}_{3,(i)}=0\\
&\sum_{i=1}^N\left[\bm{S}_{(i)}\bm{U}+\bm{V}_{(i)}\bm{\Lambda}_{3,(i)}^T\right]+\bm{U}\left(\bm{\Lambda}_{1}+\bm{\Lambda}_{1}^T\right)=0\\
& \bm{U}^T\bm{U}=\bm{I}_{r_1},\ \bm{V}_{(i)}^T\bm{V}_{(i)}=\bm{I}_{r_{2,(i)}},\ \bm{U}^T\bm{V}_{i}=0\\
\end{aligned}\right.
\end{equation}
By left multiplying the first equation in \eqref{eqn:kktderivations} with $\bm{I}-\Pj{\bm{U}}-\Pj{\bm{V}_{(i)}}$, we have $\left(\bm{I}-\Pj{\bm{U}}-\Pj{\bm{V}_{(i)}}\right)\bm{S}_{(i)}\bm{V}_{(i)}=0$, which is the first equation in \eqref{eqn:kktcondition}. By left multiplying the first equation in \eqref{eqn:kktderivations} with $\bm{U}^T$, we have $\bm{\Lambda}_{(3,(i))} = -\bm{U}^T\bm{S}_{(i)}\bm{V}_{(i)}$. Plugging this into the second equation in \eqref{eqn:kktderivations}, we have $\sum_{i=1}^N\left[\bm{S}_{(i)}\bm{U}-\Pj{\bm{V}_{(i)}}\bm{S}_{(i)}\bm{U}\right]+\bm{U}\left(\bm{\Lambda}_{1}+\bm{\Lambda}_{1}^T\right)=0$. We then left multiply both sides again by $\bm{I}-\Pj{\bm{U}}$. The second equation in \eqref{eqn:kktcondition} follows accordingly. One can also infer \eqref{eqn:kktderivations} from \eqref{eqn:kktcondition}.

\section{\revise{Proof of Theorem \ref{thm:statisticallowerbound}}}
\label{ap:proofstatisticallowerbound}
\revise{
Inspired by \citet{sparsepcalower}, in this section, we will use a ``spiked population model'' to demonstrate the lower bound. We will first use matrix perturbation analysis to estimate the leading order term for the estimation error of global PCs. Then, we verify our results through numerical experiments.}

\begin{proof}
To prove theorem \ref{thm:statisticallowerbound}, it suffices to find one set of parameters under which the statistical error is indeed $\Omega\left(\frac{1}{\theta}+\frac{1}{\delta^2}\right)$. For simplicity, we consider $N=2$ and $r_1=r_2=1$, i.e., each client is driven by one global feature and one local feature. We define a few needed signal vectors $\vecw_{1,1},\vecw_{1,2},\vecw_{2,1},\vecw_{2,2}\in \mathbb{R}^4$ and a noise vector $\vecw_3\in \mathbb{R}^4$ as 

\begin{equation}
\label{eqn:populatrionmodel}
\begin{aligned}
&\vecw_{1,1}=\left(\cos \gamma \sin\alpha , 
\sin\alpha \sin\gamma, \cos \alpha, 0\right)^T\\
&\vecw_{1,2}=\left(\cos \gamma \cos\alpha , 
\cos\alpha \sin\gamma, -\sin \alpha, 0\right)^T\\
&\vecw_{2,1}=\left(\cos \gamma \sin\alpha , 
-\sin\alpha \sin\gamma, \cos \alpha, 0\right)^T\\
&\vecw_{2,2}=\left(\cos \gamma \cos\alpha , 
-\cos\alpha \sin\gamma, -\sin \alpha, 0\right)^T\\
&\vecw_3 = \left(0,0,0,1\right)^T
\end{aligned}
\end{equation}
Then, we define the population covariance matrix as,
\begin{equation}
\label{eqn:lbpopcov}
\begin{aligned}
&\matSigma_{1}=2\vecw_{1,1}\vecw_{1,1}^T+\vecw_{1,2}\vecw_{1,2}^T+\varrho \vecw_3\vecw_3^T\\
&\matSigma_{2}=2\vecw_{2,1}\vecw_{2,1}^T+\vecw_{2,2}\vecw_{2,2}^T+\varrho \vecw_3\vecw_3^T\\
\end{aligned}
\end{equation}
where $\varrho$ is a constant $\varrho <1$. In \eqref{eqn:lbpopcov}, $2\vecw_{1,1}\vecw_{1,1}^T+\vecw_{1,2}\vecw_{1,2}^T$ denotes the signal part in $\matSigma_1$ and $\varrho \vecw_3\vecw_3^T$ denotes the noise part. Apparently, the model \eqref{eqn:lbpopcov} satisfies assumption 4.1 in the main paper. The eigengap $\delta$ is $\delta=1-\varrho$.

It is easy to check that if we run \name directly on the population covariance matrices $\{\matSigma_{(1)},\matSigma_{(2)}\}$ defined in \eqref{eqn:lbpopcov}, the algorithm would recover the optimal global PC as $$
\vecu = \left(0,0,1,0\right)^T
$$
and the optimal local PCs as
\begin{equation}
\begin{aligned}
&\vecv_1 = \left(\cos\gamma,\sin\gamma,0,0\right)^T\\
&\vecv_2 = \left(\cos\gamma,-\sin\gamma,0,0\right)^T
\end{aligned}
\end{equation}
It is also easy to see that the misalignment parameter $\theta=\sin^2\gamma$ when $0\le\gamma\le \frac{\pi}{4}$. 

We further introduce $\vecvperp_1$ and $\vecvperp_2$ as,
\begin{align*}
 &\vecvperp_1 = \left(-\sin\gamma,\cos\gamma,0,0\right)^T \\
&\vecvperp_2 = \left(\sin\gamma,\cos\gamma,0,0\right)^T \\ 
\end{align*}

Now we consider the sample covariance matrices $\matS_1$ and $\matS_2$. For simplicity, we assume they are only slightly perturbed from the population covariance matrices; $\matS_1=\matSigma_1+\varepsilon\dS_1$ and $\matS_2=\matSigma_2+\varepsilon\dS_2$, where $\varepsilon << 1$ is a small number, and $\dS_1$ and $\dS_2$ are defined as,
\begin{equation}
\begin{aligned}
&\dS_1=\vecv_1\vecw_{3}^T+\vecw_{3}\vecv_{1}^T+\vecv_1\vecu^T+\vecu\vecv_1^T+\vecv_1\vecvperp_2^T+\vecvperp_2\vecv_1^T+\vecw_3\vecu^T+\vecu\vecw_3^T\\
&\dS_2=\vecv_2\vecw_{3}^T+\vecw_{3}\vecv_{2}^T+\vecv_2\vecu^T+\vecu\vecv_2^T+\vecv_2\vecvperp_1^T+\vecvperp_1\vecv_2^T\\
\end{aligned}
\end{equation}
i.e., there are some small perturbations in the sample covariance matrix. $\dS_1$ and $\dS_2$ model the small differences between the sample covariance and population covariance matrices. It is easy to calculate that,
\begin{align*}
\frac{1}{2}\left(\norm{\matS_{(1)}-\matSigma_{(1)}}_F^2+\norm{\matS_{(2)}-\matSigma_{(2)}}_F^2\right)^2=7\varepsilon^2
\end{align*}

We can run \name on the sample covariance matrices $\matS_1$ and $\matS_2$. The optimal global and local optimal PCs are denoted as $\hvecu$ and $(\hvecv_1,\hvecv_2)$. Apparently, $\hvecu$ and $(\hvecv_1,\hvecv_2)$ are a function of $\varepsilon$, and as $\varepsilon$ becomes zero, the sample covariance becomes the population covariance, and $(\hvecu, \hvecv_1,\hvecv_2)$ become $(\vecu, \vecv_1,\vecv_2)$.

To estimate $(\hvecu, \hvecv_1,\hvecv_2)$ when $\varepsilon$ is nonzero, we can use the KKT conditions to analyze how $(\hvecu, \hvecv_1,\hvecv_2)$ change with respect to $\varepsilon$. Remember that the KKT conditions \eqref{eqn:kktderivations} are,
\begin{equation}
\label{eqn:kktwithusv}
\begin{aligned}
\matS_1\hvecv_1 &= \hvecv_1 \lambda_{21} + \hvecv_1 \hvecv_1^T\matS_1\hvecu\\
\matS_2\hvecv_2 &= \hvecv_2 \lambda_{22} + \hvecv_2 \hvecv_2^T\matS_1\hvecu\\
\left(\matS_1+\matS_2\right)\hvecu &= \hvecu \lambda_{1} + \hvecu \hvecu^T\matS_1\hvecv_1+ \hvecu \hvecu^T\matS_2\hvecv_2\\
\end{aligned}
\end{equation} 

Since $\matSigma_{(1)}$ and $\matSigma_{(2)}$ do not have duplicate eigenvalues, from \citet{firstorder}, $(\hvecu,\hvecv_1,\hvecv_2)$ and $(\lambda_{1},\lambda_{21},\lambda_{22})$ are analytic functions of $\varepsilon$ when $\varepsilon$ is small. We can thus write the Taylor series expansion of  $(\hvecu,\hvecv_1,\hvecv_2)$ as,

\begin{equation}
\label{eqn:expansion}
\begin{aligned}
\hvecu(\varepsilon) &= \vecu + \varepsilon \vecu^{(1)}+\varepsilon^2\vecu^{(2)}+\cdots\\
\hvecv_1(\varepsilon) &= \vecv_1 + \varepsilon \vecv_1^{(1)}+\varepsilon^2\vecv_1^{(2)}+\cdots\\
\hvecv_2(\varepsilon) &= \vecv_2 + \varepsilon \vecv_2^{(1)}+\varepsilon^2\vecv_2^{(2)}+\cdots\\
\lambda_1(\varepsilon) &= \lambda_1^{(0)} + \varepsilon \lambda_1^{(1)}+\varepsilon^2\lambda_1^{(2)}+\cdots\\
\lambda_{21}(\varepsilon) &= \lambda_{21}^{(0)} + \varepsilon \lambda_{21}^{(1)}+\varepsilon^2\lambda_{21}^{(2)}+\cdots\\
\lambda_{22}(\varepsilon) &= \lambda_{22}^{(0)} + \varepsilon \lambda_{22}^{(1)}+\varepsilon^2\lambda_{22}^{(2)}+\cdots\\
\end{aligned}
\end{equation}
where $\vecu^{(1)}$ is the first-order coefficient and $\vecu^{(2)}$ is the second-order coefficient for the expansion of $\hvecu(\varepsilon)$. Similar notations are used for other variables. 

Then we can take the expansion \eqref{eqn:expansion} into the KKT conditions \eqref{eqn:kktwithusv} and match the $O(\varepsilon)$ terms on both sides,
\begin{equation}
\label{eqn:1storder}
\begin{aligned}
\dS_1\vecv_1+\matSigma_1 \vecv_1^{(1)} &= \vecv_1 \lambda_{21}^{(1)}+\vecv_1^{(1)} \lambda_{21}^{(0)} \\
&+ \vecv_1^{(1)} \vecv_1^T\matSigma_1\vecu+ \vecv_1 \vecv_1^{(1)}{}^T\matSigma_1\vecu+ \vecv_1 \vecv_1^T\dS_1\vecu+ \vecv_1 \vecv_1^T\matSigma_1\vecu^{(1)}\\
\dS_2\vecv_2+\matSigma_2\vecv_2^{(1)} &= \vecv_2 \lambda_{22}^{(1)}+\vecv_2^{(1)} \lambda_{22}^{(0)} \\
&+ \vecv_2^{(1)} \vecv_2^T\matSigma_2\vecu+ \vecv_2 \vecv_2^{(1)}{}^T\matSigma_2\vecu+ \vecv_2 \vecv_2^T\dS_2\vecu+ \vecv_2 \vecv_2^T\matSigma_2\vecu^{(1)}\\
\left(\dS_1+\dS_2\right)\vecu+\left(\matSigma_1+\matSigma_2\right)\vecu^{(1)} 
&= \vecu \lambda_{1}^{(1)} +\vecu^{(1)} \lambda_{1}^{(0)} \\
&+ \vecu^{(1)} \vecu^T\matSigma_1\vecv_1+ \vecu \vecu^{(1)}{}^T\matSigma_1\vecv_1+ \vecu \vecu^T\dS_1\vecv_1+ \vecu \vecu^T\matSigma_1\vecv_1^{(1)}\\
&+ \vecu^{(1)} \vecu^T\matSigma_2\vecv_2+ \vecu \vecu^{(1)}{}^T\matSigma_2\vecv_2+ \vecu \vecu^T\dS_2\vecv_2+ \vecu \vecu^T\matSigma_2\vecv_2^{(1)}\\
\end{aligned}
\end{equation}

To solve equation \eqref{eqn:1storder}, we can expand $\vecu^{(1)}$, $\vecv_1^{(1)}$, and $\vecv_2^{(1)}$ over a basis,
\begin{equation}
\label{eqn:basisexpansionofu1v1}
\begin{aligned}
\vecu^{(1)} &= \varphi_{00}\vecu+\varphi_{01}\vecv_1 + \varphi_{02}\vecvperp_2+\varphi_{03}\vecw_3\\
\vecv_1^{(1)} &= \varphi_{10}\vecu+\varphi_{11}\vecv_1 + \varphi_{12}\vecvperp_2+\varphi_{13}\vecw_3\\
\vecv_2^{(1)} &= \varphi_{20}\vecu+\varphi_{21}\vecvperp_1 + \varphi_{22}\vecv_2+\varphi_{23}\vecw_3\\
\end{aligned}
\end{equation}

Since $\norm{\hvecu}=\norm{\hvecv_1}=\norm{\hvecv_2}=1$, we know that $\varphi_{00}=\varphi_{11}=\varphi_{22}=0$. Then we can take \eqref{eqn:basisexpansionofu1v1} into \eqref{eqn:1storder}, and solve $\varphi$'s as,
\begin{equation}
\begin{aligned}
&\varphi_{01}=-\frac{1}{4}   \sin (2 \alpha ) \cot (\gamma )\\
&\varphi_{02}=-\frac{1}{2}   \sin (\alpha ) \cos (\alpha )\\
&\varphi_{03}=-\frac{2 \sin (2 \alpha )+\cos (2 \alpha )+2 \varrho -3}{4 \left(\varrho ^2-3 \varrho +2\right)}\\
&\varphi_{10}=\frac{1}{4} \sin (2 \alpha ) \cot (\gamma )\\
&\varphi_{12}=\frac{1}{4} (\cos (2 \alpha )+3)\\
&\varphi_{13}=-\frac{\sin (2 \alpha )-2 \cos (2 \alpha )+4 \varrho -6}{4 \left(\varrho ^2-3 \varrho +2\right)}\\
&\varphi_{20}=\frac{1}{4} \sin (2 \alpha ) \cot (\gamma )\\
&\varphi_{21}=\frac{1}{4} (\cos (2 \alpha )+3)\\
&\varphi_{23}=-\frac{\sin (2 \alpha )-2 \cos (2 \alpha )+4 \varrho -6}{4 \left(\varrho ^2-3 \varrho +2\right)}\\
\end{aligned}
\end{equation}
Now, we obtained the closed-form formula for the first-order perturbation of global and local PCs. It is straightforward to calculate that
\begin{equation}
\begin{aligned}
&\norm{\hvecu\hvecu^T-\vecu\vecu^T}_F^2\\
&\norm{\left(\vecu + \varepsilon \vecu^{(1)}+\varepsilon^2\vecu^{(2)}+\cdots\right)\left(\vecu + \varepsilon \vecu^{(1)}+\varepsilon^2\vecu^{(2)}+\cdots\right)^T-\vecu\vecu^T}_F^2\\
&=\varepsilon^22\norm{\vecu^{(1)}}^2+O\left(\varepsilon^3\right)\\
&= \frac{\varepsilon^2}{64} \left(3 \csc ^2(\gamma )+\frac{\left(4 \varrho +2 \sqrt{3}-5\right)^2}{\left(1-\varrho\right)^2
   \left(2-\varrho\right)^2}\right)+O(\varepsilon^3)
\end{aligned}
\end{equation}

Since we know that $\theta=\sin^2(\gamma)$ and $\delta=1-\varrho$ when $\gamma\le\frac{\pi}{4}$, we have,
\begin{equation}
\label{eqn:predictederror}
 \norm{\hvecu\hvecu^T-\vecu\vecu^T}_F^2=\frac{\varepsilon^2}{64} \left(3 \frac{1}{\theta}+\frac{\left( 2 \sqrt{3}-1-4\delta\right)^2}{\delta^2
   \left(\delta+1\right)^2}\right)+O(\varepsilon^3)   
\end{equation}

When $\varepsilon$ is small, the higher order term $O\left(\varepsilon^3\right)$ can be neglected. Thus the error in \eqref{eqn:predictederror} can be further simplified to $\Omega\left(\varepsilon^2\left(\frac{1}{\theta}+\frac{1}{\delta^2}\right)\right)$ when $\theta$ and $\delta$ are small. This completes our proof.
\end{proof}

\revise{We also verify the predicted error \eqref{eqn:predictederror} via numerical simulations. In the simulations, we run \name on the sample covariance matrices $\matS_1$ and $\matS_2$ to obtain the global PC $\hvecu$. Then we use $\hvecu$ to calculate the subspace error $\norm{\hvecu\hvecu^T-\vecu\vecu^T}$. This is the actual statistical error for the estimates from \name. We compare it with the predicted values in \eqref{eqn:predictederror} under different parameter values of $\theta$ and $\delta$. Results are shown in Figure \ref{fig:global_error_lower_bound_on_delta} and \ref{fig:global_error_lower_bound_on_theta}. }

\begin{figure}[!htb]
   \begin{minipage}{0.48\textwidth}
\centering \includegraphics[width=.98\textwidth]{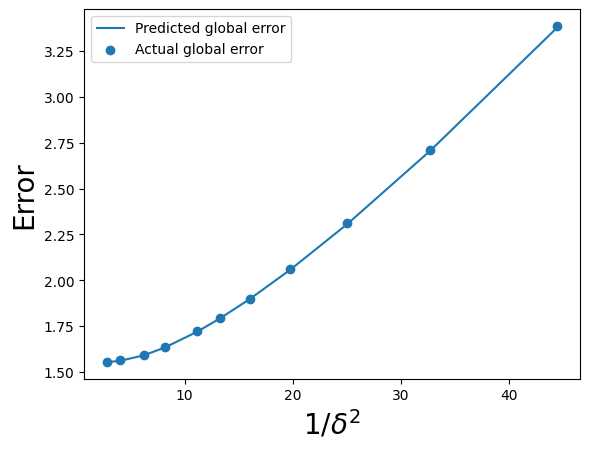}
     \caption{The (rescaled) global PC error  $\norm{\hvecu\hvecu^T-\vecu\vecu^T}_F^2/\varepsilon^2$ under different eigengap $\delta$.}\label{fig:global_error_lower_bound_on_delta}
   \end{minipage}\hfill
   \begin{minipage}{0.48\textwidth}
     \centering
     \includegraphics[width=.98\textwidth]{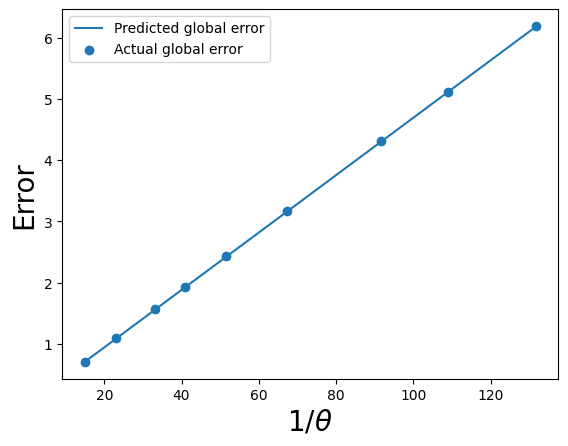}
     \caption{The (rescaled) global PC error  $\norm{\hvecu\hvecu^T-\vecu\vecu^T}_F^2/\varepsilon^2$ under different misalignment parameter $\theta$.}\label{fig:global_error_lower_bound_on_theta}
   \end{minipage}
\end{figure}
\revise{Figure \ref{fig:global_error_lower_bound_on_delta} and \ref{fig:global_error_lower_bound_on_theta} demonstrate good matches between the predicted statistical error and the actual statistical error. The two curves vividly show that when $\frac{1}{\theta}$ and $\frac{1}{\delta^2}$ is large, the global subspace error grows linearly with $\frac{1}{\theta}$ and $\frac{1}{\delta^2}$.}

\section{Proof of Theorem \ref{thm:sublinearconvergence}}
\label{ap:proofforsublinearconvergence}
Now we analyze the global convergence of Algorithm \ref{alg:consensus}. We begin by calculating the derivative of $\mathcal{L}_{(i),1}$ and $\mathcal{L}_{(i),2}$. The derivative of $\mathcal{L}_{(i),1}$ over $\bm{U}$ is:
\begin{equation}
\label{eqn:dln1du}
\nabla_{\bm{U}}\mathcal{L}_{(i),1}(\bm{U},\bm{V})=-\left(\bm{I}-\bm{P}_{\bm{V}}\right)\bm{S}_{(i)}\left(\bm{I}-\bm{P}_{\bm{V}}\right)\bm{U}
\end{equation}
When $\bm{V}^T\bm{U}=0$, this reduces to:
$$
\nabla_{\bm{U}}\mathcal{L}_{(i),1}(\bm{U},\bm{V})=-\left(\bm{I}-\bm{P}_{\bm{V}}\right)\bm{S}_{(i)}\bm{U}
$$

And the derivative of $\mathcal{L}_{(i),1}$ over $\bm{V}$ is:
\begin{equation}
\label{eqn:dln1dv}
\nabla_{\bm{V}}\mathcal{L}_{(i),1}(\bm{U},\bm{V})=\bm{P}_{\bm{U}}\bm{S}_{(i)}\bm{V}+\bm{S}_{(i)}\bm{P}_{\bm{U}}\bm{V}-\bm{P}_{\bm{U}}\bm{P}_{\bm{V}}\bm{S}_{(i)}\bm{V}-\bm{S}_{(i)}\bm{P}_{\bm{V}}\bm{P}_{\bm{U}}\bm{V}
\end{equation}
When $\bm{V}^T\bm{U}=0$, this reduces to:
$$
\nabla_{\bm{V}}\mathcal{L}_{(i),1}(\bm{U},\bm{V})=\bm{P}_{\bm{U}}\bm{S}_{(i)}\bm{V}
$$

Similarly, the derivative of $\mathcal{L}_{(i),2}$ over $\bm{V}$ is:
\begin{equation}
\label{eqn:dln2dv}
\nabla_{\bm{V}}\mathcal{L}_{(i),2}(\bm{U},\bm{V})=-\bm{S}_{(i)}\bm{V}
\end{equation}

The following lemma shows that the function we introduced is Lipschitz continuous. 
\begin{lemma}
When $\norm{\bm{U}}_{op}$ and $\norm{\bm{V}}_{op}$ are upper bounded by 1, the functions $\mathcal{L}_{(i),1}+\mathcal{L}_{(i),2}$ are Lipschitz continuous with constant $L$. More formally, for any $\bm{U}_1,\bm{U}_2,\bm{V}_1,\bm{V}_2\in \mathbb{R}^{d\times r}$, such that $\norm{\bm{U}_1}_{op},\norm{\bm{U}_2}_{op},\norm{\bm{V}_1}_{op},\norm{\bm{V}_2}_{op}\le 1$, we have:
\begin{equation}
\label{eqn:lip}
\begin{aligned}
&\Big\lVert\Big[\nabla_{\bm{U}}\mathcal{L}_{(i),1}(\bm{U}_2,\bm{V}_2)-\nabla_{\bm{U}}\mathcal{L}_{(i),1}(\bm{U}_1,\bm{V}_1),\\
&\nabla_{\bm{V}}\mathcal{L}_{(i),1}(\bm{U}_2,\bm{V}_2)+\nabla_{\bm{V}}\mathcal{L}_{(i),2}(\bm{U}_2,\bm{V}_2)-\nabla_{\bm{V}}\mathcal{L}_{(i),1}(\bm{U}_1,\bm{V}_1)-\nabla_{\bm{V}}\mathcal{L}_{(i),2}(\bm{U}_1,\bm{V}_1)\Big]\Big\rVert_F\\
&\le L \sqrt{\norm{\bm{U}_1-\bm{U}_2}_F^2+\norm{\bm{V}_1-\bm{V}_2}_F^2}
\end{aligned}
\end{equation}
where 
\begin{equation}
\label{eqn:lipconstantdef}
L=9\sqrt{2}G_{(i),op}
\end{equation}
\end{lemma}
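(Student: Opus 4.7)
The plan is to bound each of the three gradient-difference blocks in \eqref{eqn:lip} separately by a linear combination of $\norm{\bm{U}_2-\bm{U}_1}_F$ and $\norm{\bm{V}_2-\bm{V}_1}_F$, then combine them. The tools are elementary: the telescoping identity $XYZ - X'Y'Z' = (X-X')YZ + X'(Y-Y')Z + X'Y'(Z-Z')$ to isolate one changing factor at a time, the triangle inequality, and the submultiplicativity bound $\norm{AB}_F \le \norm{A}_{op}\norm{B}_F$ from Lemma \ref{lm:fnormofabbound}. A key numerical lemma I will establish first is the perturbation bound for the projection map,
\begin{equation*}
\norm{\bm{P}_{\bm{V}_2}-\bm{P}_{\bm{V}_1}}_F \;=\; \norm{\bm{V}_2\bm{V}_2^T-\bm{V}_1\bm{V}_1^T}_F \;\le\; 2\norm{\bm{V}_2-\bm{V}_1}_F
\end{equation*}
which follows from the splitting $\bm{V}_2\bm{V}_2^T-\bm{V}_1\bm{V}_1^T = \bm{V}_2(\bm{V}_2-\bm{V}_1)^T + (\bm{V}_2-\bm{V}_1)\bm{V}_1^T$ and $\norm{\bm{V}_j}_{op}\le 1$; the analogous bound holds with $\bm{U}$.

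For the first block, using \eqref{eqn:dln1du} I would write $\nabla_{\bm{U}}\mathcal{L}_{(i),1}(\bm{U}_2,\bm{V}_2) - \nabla_{\bm{U}}\mathcal{L}_{(i),1}(\bm{U}_1,\bm{V}_1)$ as a telescoping sum in the three arguments $\bm{P}_{\bm{V}}$ (appearing twice) and $\bm{U}$; each of the three resulting pieces is a product of matrices whose $\norm{\cdot}_{op}$ is at most $1$ (for $\bm{I}-\bm{P}_{\bm{V}_j}$ and $\bm{U}_j$) or $G_{(i),op}$ (for $\bm{S}_{(i)}$), so the block is bounded by $G_{(i),op}\bigl(\norm{\bm{U}_2-\bm{U}_1}_F + 4\norm{\bm{V}_2-\bm{V}_1}_F\bigr)$. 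For the second block I use \eqref{eqn:dln1dv} and \eqref{eqn:dln2dv}: the $\mathcal{L}_{(i),2}$ part contributes $G_{(i),op}\norm{\bm{V}_2-\bm{V}_1}_F$, and each of the four three-factor terms in $\nabla_{\bm{V}}\mathcal{L}_{(i),1}$ is handled by the same telescoping trick. Adding everything up gives a bound of the form $G_{(i),op}\bigl(8\norm{\bm{U}_2-\bm{U}_1}_F + 9\norm{\bm{V}_2-\bm{V}_1}_F\bigr)$ for the combined second block.

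To obtain the final constant, set $a = \norm{\bm{U}_2-\bm{U}_1}_F$ and $b = \norm{\bm{V}_2-\bm{V}_1}_F$. The squared left-hand side of \eqref{eqn:lip} is then at most $G_{(i),op}^2\bigl[(a+4b)^2 + (8a+9b)^2\bigr] = G_{(i),op}^2\bigl[65a^2 + 152ab + 97b^2\bigr]$. The bound $L^2 = 162\,G_{(i),op}^2$ is equivalent to $97t^2 - 152t + 97 \ge 0$ (with $t = b/a$), whose discriminant $152^2 - 4\cdot 97^2 < 0$ is negative, so the quadratic is everywhere non-negative, proving \eqref{eqn:lipconstantdef}.

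The main obstacle is just bookkeeping: tracking the exact coefficients from each telescoped piece and the factors of two from the projection-perturbation bound, so that the resulting $2\times 2$ quadratic form is controlled by $(9\sqrt{2})^2 = 162$ on the nose. Everything else is a routine combination of triangle inequalities and operator/Frobenius norm submultiplicativity.
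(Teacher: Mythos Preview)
Your proposal is correct and follows essentially the same approach as the paper: telescope each gradient term, bound products via $\norm{AB}_F\le\norm{A}_{op}\norm{B}_F$, and combine. Two minor remarks: (i) in your final quadratic the constant term should be $65$, not $97$ (i.e., you need $97a^2-152ab+65b^2\ge0$), though the discriminant is still negative so the conclusion stands; (ii) you telescope the full gradient $-(\bm{I}-\bm{P}_{\bm{V}})\bm{S}_{(i)}(\bm{I}-\bm{P}_{\bm{V}})\bm{U}$, whereas the paper's proof uses the simplified form $-(\bm{I}-\bm{P}_{\bm{V}})\bm{S}_{(i)}\bm{U}$ (yielding $a+2b$ rather than your $a+4b$) and then combines via $\|[A,B]\|_F\le\|A\|_F+\|B\|_F$ followed by $9a+7b\le 9\sqrt{2}\sqrt{a^2+b^2}$---your version is arguably cleaner since the lemma does not assume $\bm{V}^T\bm{U}=0$.
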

\begin{proof}
First, we calculate the difference in the gradient of $\bm{U}$:
\begin{equation*}
\begin{aligned}
&\norm{\nabla_{\bm{U}}\mathcal{L}_{(i),1}(\bm{U}_2,\bm{V}_2)-\nabla_{\bm{U}}\mathcal{L}_{(i),1}(\bm{U}_1,\bm{V}_1)}_F\\
&=\norm{\left(\bm{I}-\bm{V}_1\bm{V}_1^T\right)\bm{S}_{(i)}\bm{U}_1-\left(\bm{I}-\bm{V}_2\bm{V}_2^T\right)\bm{S}_{(i)}\bm{U}_2}_F\\
&\le \norm{\left(\bm{I}-\bm{V}_1\bm{V}_1^T\right)\bm{S}_{(i)}\bm{U}_1-\left(\bm{I}-\bm{V}_2\bm{V}_2^T\right)\bm{S}_{(i)}\bm{U}_1}_F+\norm{\left(\bm{I}-\bm{V}_2\bm{V}_2^T\right)\bm{S}_{(i)}\bm{U}_1-\left(\bm{I}-\bm{V}_2\bm{V}_2^T\right)\bm{S}_{(i)}\bm{U}_2}_F\\
&\le \norm{\bm{V}_1\bm{V}_1^T-\bm{V}_2\bm{V}_2^T}_FG_{(i),op}+\norm{\bm{U}_1-\bm{U}_2}_FG_{(i),op}\\
&\le 2\norm{\bm{V}_1-\bm{V}_2}_FG_{(i),op}+\norm{\bm{U}_1-\bm{U}_2}_FG_{(i),op}\\
\end{aligned}
\end{equation*}
where we used the triangle inequality for the Frobenius norm for the first inequality, and Lemma \ref{lm:fnormofabbound} for the second and third inequality. Next, we calculate the difference in the gradient of $\bm{V}$:
\begin{equation*}
\begin{aligned}
&\norm{\nabla_{\bm{V}}\mathcal{L}_{(i),1}(\bm{U}_2,\bm{V}_2)+\nabla_{\bm{V}}\mathcal{L}_{(i),2}(\bm{U}_2,\bm{V}_2)-\nabla_{\bm{V}}\mathcal{L}_{(i),1}(\bm{U}_1,\bm{V}_1)-\nabla_{\bm{V}}\mathcal{L}_{(i),2}(\bm{U}_1,\bm{V}_1)}_F\\
&\le\norm{\left(\bm{P}_{\bm{U}_2}-\bm{I}\right)\bm{S}_{(i)}\bm{V}_2-\left(\bm{P}_{\bm{U}_1}-\bm{I}\right)\bm{S}_{(i)}\bm{V}_1}_F+\norm{\bm{S}_{(i)}\bm{P}_{\bm{U}_2}\bm{V}_2-\bm{S}_{(i)}\bm{P}_{\bm{U}_2}\bm{V}_2}\\
&+\norm{\bm{P}_{\bm{U}_2}\bm{P}_{\bm{V}_2}\bm{S}_{(i)}\bm{V_2}-\bm{P}_{\bm{U}_1}\bm{P}_{\bm{V}_1}\bm{S}_{(i)}\bm{V}_1}_F+\norm{\bm{S}_{(i)}\bm{P}_{\bm{V}_2}\bm{P}_{\bm{U}_2}\bm{V}_2-\bm{S}_{(i)}\bm{P}_{\bm{V}_1}\bm{P}_{\bm{U}_1}\bm{V}_1}_F\\
&\le 7\norm{\bm{V}_1-\bm{V}_2}_FG_{(i),op}+6\norm{\bm{U}_1-\bm{U}_2}_FG_{(i),op}\\
\end{aligned}
\end{equation*}
Summing them up, we know:
\begin{equation}
\begin{aligned}
&\Big\lVert\Big[\nabla_{\bm{U}}\mathcal{L}_{(i),1}(\bm{U}_2,\bm{V}_2)-\nabla_{\bm{U}}\mathcal{L}_{(i),1}(\bm{U}_1,\bm{V}_1),\\
&\nabla_{\bm{V}}\mathcal{L}_{(i),1}(\bm{U}_2,\bm{V}_2)+\nabla_{\bm{V}}\mathcal{L}_{(i),2}(\bm{U}_2,\bm{V}_2)-\nabla_{\bm{V}}\mathcal{L}_{(i),1}(\bm{U}_1,\bm{V}_1)-\nabla_{\bm{V}}\mathcal{L}_{(i),2}(\bm{U}_1,\bm{V}_1)\Big]\Big\rVert_F\\
&\le \norm{\nabla_{\bm{U}}\mathcal{L}_{(i),1}(\bm{U}_2,\bm{V}_2)-\nabla_{\bm{U}}\mathcal{L}_{(i),1}(\bm{U}_1,\bm{V}_1)}_F\\
&+\norm{\nabla_{\bm{V}}\mathcal{L}_{(i),1}(\bm{U}_2,\bm{V}_2)+\nabla_{\bm{V}}\mathcal{L}_{(i),2}(\bm{U}_2,\bm{V}_2)-\nabla_{\bm{V}}\mathcal{L}_{(i),1}(\bm{U}_1,\bm{V}_1)-\nabla_{\bm{V}}\mathcal{L}_{(i),2}(\bm{U}_1,\bm{V}_1)}_F\\
&\le 9\norm{\bm{V}_1-\bm{V}_2}_FG_{(i),op}+7\norm{\bm{U}_1-\bm{U}_2}_FG_{(i),op}\\
&\le 9\sqrt{2}G_{max,op} \sqrt{\norm{\bm{U}_1-\bm{U}_2}_F^2+\norm{\bm{V}_1-\bm{V}_2}_F^2}
\end{aligned}
\end{equation}
We thus complete the proof.
\end{proof}

Now we introduce some notations:
\begin{equation}
\label{eqn:defsquareu}
\square \bm{U}_{\tau} = \frac{1}{N}\sum_{i=1}^N\left(\bm{I}-\bm{P}_{\bm{U}_{\tau}}-\bm{P}_{\bm{V}_{(i),\tau}}\right)\bm{S}_{(i)}\bm{U}_{\tau}
\end{equation}
It is easy to verify $\square \bm{U}_{\tau}\in \mathcal{T}_{\bm{U}_{\tau}}$ when $\bm{U}^T\bm{V}_{(i),\tau}=0$:
$$
\bm{U}_{\tau}^T\square \bm{U}_{\tau}=0
$$

The Frobenius norm of $\square \bm{U}_{\tau}$ is upper bounded by:
\begin{equation}
\label{eqn:squareunormupperbound}
\begin{aligned}
&\norm{\square \bm{U}_{\tau}}_F\\
&=\norm{\frac{1}{N}\sum_{i=1}^N\left(\bm{I}-\bm{P}_{\bm{U}_{\tau}}-\bm{P}_{\bm{V}_{(i),\tau}}\right)\bm{S}_{(i)}\bm{U}_{\tau}}_F\\
&\le \frac{1}{N}\sum_{i=1}^N\norm{\left(\bm{I}-\bm{P}_{\bm{U}_{\tau}}-\bm{P}_{\bm{V}_{(i),\tau}}\right)}_{op}\norm{\bm{S}_{(i)}}_{op}\norm{\bm{U}_{\tau}}_F\\
&\le \frac{1}{N}\sum_{i=1}^NG_{max,op}\sqrt{r}\\
&= G_{max,op}\sqrt{r}
\end{aligned}
\end{equation}
where we applied Lemma \ref{lm:fnormofabbound} for the first inequality.

By the client update rule, we know that:
\begin{equation}
\bm{U}_{(i),\tau+1}=\bm{U}_{\tau}+\eta_{\tau}\left(\bm{I}-\bm{P}_{\bm{U}_{\tau}}-\bm{P}_{\bm{V}_{(i),\tau}}\right)\bm{S}_{(i)}\bm{U}_{\tau}
\end{equation}

Therefore, after the server takes the average of $\bm{U}_{(i),\tau+1}$ and performs a generalized retraction, the following holds:
\begin{equation}
\label{eqn:fullupdateofu}
 \bm{U}_{\tau+1}=\bm{U}_{\tau}+\eta_{\tau}\square \bm{U}_{\tau}+ \eta_{\tau}^2\bm{e}_{1,\tau}
\end{equation}
where $\bm{e}_{1,\tau}$ is an error term defined as:
$$
\bm{e}_{1,\tau}=\frac{1}{\eta_{\tau}^2}\left(\bm{U}_{\tau+1}-\bm{U}_{\tau}-\eta_{\tau}\square \bm{U}_{\tau}\right)
$$

By definition of a generalized retraction, since $\square \bm{U}_{\tau}$ is in the tangent space of $\bm{U}_{\tau}$, we have:
$$
\begin{aligned}
&\norm{\bm{e}_{1,\tau}}_F\le M_1 \norm{\square \bm{U}_{\tau}}_F^2
\end{aligned}
$$
where we applied the condition $\eta_{\tau}\le \frac{M_3}{\sqrt{r}G_{max,op}}$ thus $\norm{\eta_{\tau}\square\bm{U}_{\tau}}_F\le M_3$. Remember that $M_3$ is a numerical constant in the definition of generalized retraction (Definition \ref{def:generalizedretraction}).

Similarly, we define:
\begin{equation}
\label{eqn:defsquarevn}
\square \bm{V}_{(i),\tau} = \left(\bm{I}-\bm{P}_{\bm{U}_{\tau}}-\bm{P}_{\bm{V}_{(i),\tau}}\right)\bm{S}_{(i)}\bm{V}_{(i),\tau}
\end{equation}
Also by Lemma \ref{lm:fnormofabbound}, the Frobenius norm of $\square \bm{V}_{(i),\tau}$ is upper bounded by:
\begin{equation}
\label{eqn:squarevnormupperbound}
\begin{aligned}
&\norm{\square \bm{V}_{(i),\tau}}_F\\
&=\norm{\left(\bm{I}-\bm{P}_{\bm{U}_{\tau}}-\bm{P}_{\bm{V}_{(i),\tau}}\right)\bm{S}_{(i)}\bm{V}_{(i),\tau}}_F\\
&\le \norm{\left(\bm{I}-\bm{P}_{\bm{U}_{\tau}}-\bm{P}_{\bm{V}_{(i),\tau}}\right)}_{op}\norm{\bm{S}_{(i)}}_{op}\norm{\bm{V}_{(i),\tau}}_F\\
&\le G_{(i),op}\sqrt{r}
\end{aligned}
\end{equation}

Now we calculate the update of $\bm{V}_{(i),\tau}$ in one communication round. We summarize the result in the following lemma.
\begin{lemma}
\label{lm:fullupdateofv}
If we choose the stepsize $\eta_{\tau}\le \min\left\{\frac{M_3}{2},\frac{\sqrt{M_3}}{\sqrt{6+12M_1+M_1^2}}\right\}\frac{1}{G_{max,op}\sqrt{r}}$, the update of $\bm{V}_{(i)}$ is given by:
$$
\bm{V}_{(i),\tau+1}=\bm{V}_{(i),\tau}+\eta_{\tau}\square \bm{V}_{(i),\tau}-\eta_{\tau} \bm{U}_{\tau}\square\bm{U}_{\tau}^T \bm{V}_{(i),\tau}+\eta_{\tau}^2\bm{e}_{5,(i),\tau}
$$
where $\bm{e}_{5,(i),\tau}$ is an error term that satisfies:
$$
\norm{\bm{e}_{5,(i),\tau}}\le C_{5,0}\norm{\square \bm{U}_{\tau}}^2+C_{5,1}\norm{\square \bm{V}_{(i),\tau}}^2
$$
where $C_{5,0}$ and $C_{5,1}$ are two constants that only depend on $M_1$ and $M_2$ from the generalized retraction Definition \ref{def:generalizedretraction}.
\end{lemma}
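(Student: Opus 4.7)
The claim tracks $\bm{V}_{(i),\tau}\!\to\!\bm{V}_{(i),\tau+1}$ across the two generalized retractions the local iterate passes through in one communication round: the client-side St-GD step producing $\bm{V}_{(i),\tau+\frac{1}{2}}$, and the orthogonality-restoring correction applied at the start of round $\tau+1$. The plan is to Taylor-expand each $\mathcal{GR}$ to first order via Property~2 of Definition~\ref{def:generalizedretraction}, separate tangent and normal components at each stage, and collect everything of order $O(\eta_{\tau}^2)$ into $\bm{e}_{5,(i),\tau}$.

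First I would show that the local block of the parallel gradient $g_{(i),\tau}$ equals $\square\bm{V}_{(i),\tau}$: the projector $\Ptangent{[\bm{U}_{\tau},\bm{V}_{(i),\tau}]}{\cdot}$ collapses to $\bm{I}-\bm{P}_{\bm{U}_{\tau}}-\bm{P}_{\bm{V}_{(i),\tau}}$ because feasibility $\bm{U}_{\tau}^T\bm{V}_{(i),\tau}=0$ together with the symmetry of $\bm{S}_{(i)}$ makes $[\bm{U}_{\tau},\bm{V}_{(i),\tau}]^T\bm{S}_{(i)}[\bm{U}_{\tau},\bm{V}_{(i),\tau}]$ symmetric, so its normal projection formula reduces to $\bm{W}\bm{W}^T\bm{\xi}$. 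The same feasibility implies $\bm{V}_{(i),\tau}^T\square\bm{V}_{(i),\tau}=0$ (the $\bm{P}_{\bm{U}_{\tau}}$ term vanishes by orthogonality and the $\bm{I}-\bm{P}_{\bm{V}_{(i),\tau}}$ term is killed on the left), so $\eta_{\tau}\square\bm{V}_{(i),\tau}$ is tangent at $\bm{V}_{(i),\tau}$. Under $\eta_{\tau}G_{\max,op}\sqrt{r}\le M_3$, which follows from the stated step-size condition combined with \eqref{eqn:squarevnormupperbound}, Property~2 then yields $\bm{V}_{(i),\tau+\frac{1}{2}}=\bm{V}_{(i),\tau}+\eta_{\tau}\square\bm{V}_{(i),\tau}+\eta_{\tau}^2\bm{e}_{2,(i),\tau}$ with $\norm{\bm{e}_{2,(i),\tau}}_F\le M_1\norm{\square\bm{V}_{(i),\tau}}_F^2$.

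Next I would combine this with \eqref{eqn:fullupdateofu} to evaluate the correction argument $\bm{\zeta}_{(i),\tau}:=-\bm{U}_{\tau+1}\bm{U}_{\tau+1}^T\bm{V}_{(i),\tau+\frac{1}{2}}$. Substituting both expansions and invoking the two identities $\bm{U}_{\tau}^T\bm{V}_{(i),\tau}=0$ and $\bm{U}_{\tau}^T\square\bm{V}_{(i),\tau}=0$ (the latter because $\bm{U}_{\tau}^T(\bm{I}-\bm{P}_{\bm{U}_{\tau}})=0$ and $\bm{U}_{\tau}^T\bm{P}_{\bm{V}_{(i),\tau}}=0$) reduces $\bm{U}_{\tau+1}^T\bm{V}_{(i),\tau+\frac{1}{2}}$ to $\eta_{\tau}\square\bm{U}_{\tau}^T\bm{V}_{(i),\tau}+\eta_{\tau}^2\bm{e}_{3,(i),\tau}$; premultiplying by $\bm{U}_{\tau+1}$ and negating gives $\bm{\zeta}_{(i),\tau}=-\eta_{\tau}\bm{U}_{\tau}\square\bm{U}_{\tau}^T\bm{V}_{(i),\tau}+\eta_{\tau}^2\bm{e}_{4,(i),\tau}$. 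For the second invocation of $\mathcal{GR}$ I would then check that $\bm{V}_{(i),\tau+\frac{1}{2}}^T\bm{\zeta}_{(i),\tau}=O(\eta_{\tau}^2)$: the candidate leading contribution $-\eta_{\tau}\bm{V}_{(i),\tau}^T\bm{U}_{\tau}\square\bm{U}_{\tau}^T\bm{V}_{(i),\tau}$ vanishes because $\bm{V}_{(i),\tau}^T\bm{U}_{\tau}=0$. Hence $\Pnormal{\bm{V}_{(i),\tau+\frac{1}{2}}}{\bm{\zeta}_{(i),\tau}}$ is $O(\eta_{\tau}^2)$, and Property~2 produces only an $O(\eta_{\tau}^2)$ deviation from $\bm{V}_{(i),\tau+\frac{1}{2}}+\bm{\zeta}_{(i),\tau}$. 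Stacking the two expansions then yields the advertised formula.

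The hard part will be the bookkeeping that squeezes the cumulative residual into the form $C_{5,0}\norm{\square\bm{U}_{\tau}}_F^2+C_{5,1}\norm{\square\bm{V}_{(i),\tau}}_F^2$ without any additional factors. Cross terms such as $\eta_{\tau}^2\square\bm{U}_{\tau}^T\square\bm{V}_{(i),\tau}$ need to be split via $2ab\le a^2+b^2$; higher-order cubic contributions like $\eta_{\tau}^3\bm{e}_{1,\tau}^T\bm{V}_{(i),\tau}$ have to be absorbed using $\norm{\bm{e}_{1,\tau}}_F\le M_1\norm{\square\bm{U}_{\tau}}_F^2$ together with the a priori bound $\norm{\square\bm{U}_{\tau}}_F\le G_{\max,op}\sqrt{r}$ from \eqref{eqn:squareunormupperbound}; and the two $\mathcal{GR}$-applicability thresholds $\norm{\bm{\xi}}_F\le M_3$ translate into the two explicit step-size bounds in the hypothesis, with the $\sqrt{6+12M_1+M_1^2}$ constant arising precisely when bounding $\norm{\bm{\zeta}_{(i),\tau}}_F\le M_3$ after the quadratic $O(\eta_{\tau}^2)$ contributions to $\bm{e}_{4,(i),\tau}$ have been accounted for.
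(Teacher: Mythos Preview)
Your proposal is correct and follows essentially the same route as the paper: expand $\bm{V}_{(i),\tau+\frac{1}{2}}$ via Property~2 of the generalized retraction (tangency of $\square\bm{V}_{(i),\tau}$ at $\bm{V}_{(i),\tau}$), expand the correction argument $-\bm{U}_{\tau+1}\bm{U}_{\tau+1}^T\bm{V}_{(i),\tau+\frac{1}{2}}$ using \eqref{eqn:fullupdateofu} together with the orthogonality identities $\bm{U}_{\tau}^T\bm{V}_{(i),\tau}=0$ and $\bm{U}_{\tau}^T\square\bm{V}_{(i),\tau}=0$, verify its normal component at $\bm{V}_{(i),\tau+\frac{1}{2}}$ is $O(\eta_{\tau}^2)$, and apply Property~2 again. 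Your identification of the step-size constant $\sqrt{6+12M_1+M_1^2}$ as the threshold guaranteeing $\norm{\bm{\zeta}_{(i),\tau}}_F\le M_3$ for the second retraction is also exactly how the paper uses it (it is $\sqrt{4C_{4,0}}$ in the paper's notation). The only cosmetic difference is that the paper first expands the projector $\bm{U}_{\tau+1}\bm{U}_{\tau+1}^T$ and then applies it, whereas you compute $\bm{U}_{\tau+1}^T\bm{V}_{(i),\tau+\frac{1}{2}}$ first and premultiply; the paper also exploits Property~1 to get the \emph{exact} identity $\bm{V}_{(i),\tau+\frac{1}{2}}^T\bm{U}_{\tau}=0$ when computing the normal projection, which slightly streamlines the bookkeeping you flag at the end, but your expansion-based argument reaches the same $O(\eta_{\tau}^2)$ conclusion.
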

\begin{proof}
We first calculate the projection:
\begin{equation*}
\begin{aligned}
&\bm{U}_{\tau+1}\bm{U}_{\tau+1}^T \\
&= \left(\bm{U}_{\tau}+\eta_{\tau}\square \bm{U}_{\tau}+ \eta_{\tau}^2\bm{e}_{1,\tau}\right)\left(\bm{U}_{\tau}+\eta_{\tau}\square \bm{U}_{\tau}+ \eta_{\tau}^2\bm{e}_{1,\tau}\right)^T\\
&=\bm{U}_{\tau}\bm{U}_{\tau}^T+\eta_{\tau}\left(\bm{U}_{\tau}\square \bm{U}_{\tau}^T+\square\bm{U}_{\tau}\bm{U}_{\tau}^T \right)+\eta_{\tau}^2 \bm{e}_{2,(i),\tau}
\end{aligned}
\end{equation*}
where $\bm{e}_{2,(i),\tau}$ is defined as:
$$
\bm{e}_{2,(i),\tau}=\square \bm{U}_{\tau}\square \bm{U}_{\tau}^T+ \bm{U}_{\tau}\bm{e}_{1,(i),\tau}^T+ \bm{e}_{1,(i),\tau}\bm{U}_{\tau}^T+\eta_{\tau} \square\bm{U}_{\tau}\bm{e}_{1,(i),\tau}^T+ \eta_{\tau}\bm{e}_{1,(i),\tau}\square\bm{U}_{\tau}^T+\eta_{\tau}^2\bm{e}_{1,(i),\tau}\bm{e}_{1,(i),\tau}^T
$$
Its norm is upper bounded by:
$$
\begin{aligned}
&\norm{\bm{e}_{2,(i),\tau}}_F\\
&\le \norm{\square \bm{U}_{\tau}}^2_F+2\norm{\bm{U}_{\tau}}_{op}\norm{\bm{e}_{1,(i),\tau}}_F+2\eta_{\tau} \norm{\square\bm{U}_{\tau}}_F\norm{\bm{e}_{1,(i),\tau}}_F+\eta_{\tau}^2\norm{\bm{e}_{1,(i),\tau}}_F^2\\
&\le \norm{\square \bm{U}_{\tau}}^2_F+2M_1\norm{\square \bm{U}_{\tau}}^2_F+2\eta_{\tau} M_1\norm{\square \bm{U}_{\tau}}^3_F+\eta_{\tau}^2 M_1^2\norm{\square \bm{U}_{\tau}}_F^4\\
&\le (1+3M_1+\frac{1}{4}M_1^2)\norm{\square \bm{U}_{\tau}}^2_F
\end{aligned}
$$
where the final inequality comes from upper bound \eqref{eqn:squareunormupperbound} and the choice of stepsize $\eta_{\tau}$: $\eta_{\tau}\le\frac{1}{G_{max,op}\sqrt{r}}\frac{1}{\sqrt{6+12M_1+M_1^2}}\le \frac{1}{2G_{max,op}\sqrt{r}}$. 

Similarly, we define $\bm{e}_{3,(i),\tau}$ as:
$$
\bm{e}_{3,(i),\tau}=\frac{1}{\eta_{\tau}^2}\left(\bm{V}_{(i),\tau+\frac{1}{2}}-\bm{V}_{(i),\tau}-\eta_{\tau}\square\bm{V}_{(i),\tau}\right)
$$
By definition of a retraction, the norm of $\bm{e}_{3,(i),\tau}$ is upper bounded by:
$$
\norm{\bm{e}_{3,(i),\tau}}_F\le M_1\norm{\square\bm{V}_{(i),\tau}}_F^2
$$
Then 
$$
\begin{aligned}
&\bm{U}_{\tau+1}\bm{U}_{\tau+1}^T\bm{V}_{(i),\tau+\frac{1}{2}}\\
&=\bm{U}_{\tau}\bm{U}_{\tau}^T\bm{V}_{(i),\tau+\frac{1}{2}}+\eta_{\tau}\left(\bm{U}_{\tau}\square \bm{U}_{\tau}^T+\square\bm{U}_{\tau}\bm{U}_{\tau}^T \right)\bm{V}_{(i),\tau+\frac{1}{2}}+\eta_{\tau}^2 \bm{e}_{2,(i),\tau}\bm{V}_{(i),\tau+\frac{1}{2}}\\
&=\bm{U}_{\tau}\bm{U}_{\tau}^T\left(\bm{V}_{(i),\tau}+\eta_{\tau}\square \bm{V}_{(i),\tau}+\eta_{\tau}^2\bm{e}_{3,(i),\tau}\right)\\
&+\eta_{\tau}\left(\bm{U}_{\tau}\square \bm{U}_{\tau}^T+\square\bm{U}_{\tau}\bm{U}_{\tau}^T \right)\left(\bm{V}_{(i),\tau}+\eta_{\tau}\square \bm{V}_{(i),\tau}+\eta_{\tau}^2\bm{e}_{3,(i),\tau}\right)+\eta_{\tau}^2 \bm{e}_{2,(i),\tau}\bm{V}_{(i),\tau+\frac{1}{2}}\\
&=\eta_{\tau}\bm{U}_{\tau}\square \bm{U}_{\tau}^T\bm{V}_{(i),\tau}+\eta_{\tau}^2\bm{U}_{\tau}\bm{U}_{\tau}^T\bm{e}_{3,(i),\tau}+\eta_{\tau}^2 \bm{e}_{2,(i),\tau}\bm{V}_{(i),\tau+\frac{1}{2}}+\eta_{\tau}^2\bm{U}_{\tau}\square \bm{U}_{\tau}^T\square \bm{V}_{(i),\tau}\\
&+\eta_{\tau}^3\left(\bm{U}_{\tau}\square \bm{U}_{\tau}^T+\square\bm{U}_{\tau}\bm{U}_{\tau}^T \right)\bm{e}_{3,(i),\tau}\\
&=\eta_{\tau}\bm{U}_{\tau}\square \bm{U}_{\tau}^T\bm{V}_{(i),\tau}+\eta_{\tau}^2\bm{e}_{4,(i),\tau}
\end{aligned}
$$
where we use $\bm{e}_{4,(i),\tau}$ to denote:
$$
\bm{e}_{4,(i),\tau}=\bm{U}_{\tau}\bm{U}_{\tau}^T\bm{e}_{3,(i),\tau}+\bm{e}_{2,(i),\tau}\bm{V}_{(i),\tau+\frac{1}{2}}+\bm{U}_{\tau}\square \bm{U}_{\tau}^T\square \bm{V}_{(i),\tau}+\eta_{\tau}\left(\bm{U}_{\tau}\square \bm{U}_{\tau}^T+\square\bm{U}_{\tau}\bm{U}_{\tau}^T \right)\bm{e}_{3,(i),\tau}
$$
Its norm is upper bounded as:
$$
\begin{aligned}
&\norm{\bm{e}_{4,(i),\tau}}_F\\
&\le \norm{\bm{e}_{3,(i),\tau}}_F+\norm{\bm{e}_{2,(i),\tau}}_F+\norm{\square \bm{U}_{\tau}}_F\norm{\square \bm{V}_{(i),\tau}}_F+\eta_{\tau} \left(\norm{\square \bm{U}_{\tau}}_F+\norm{\square \bm{V}_{(i),\tau}}_F\right)\norm{\bm{e}_{3,(i),\tau}}_F\\
&\le C_{4,0}\norm{\square \bm{U}_{\tau}}_F^2+C_{4,1}\norm{\square \bm{V}_{(i),\tau}}_F^2
\end{aligned}
$$
where 
$$
C_{4,0}=\frac{3}{2}+3M_1+\frac{1}{4}M_1^2
$$
and
$$
C_{4,1}=\frac{1}{2}+2M_1
$$
Thus we know when $\eta_{\tau}\le\min\{\frac{M_3}{2},\sqrt{\frac{M_3}{4C_{4,0}}}\}\frac{1}{G_{max,op}\sqrt{r}}$, $\norm{\bm{U}_{\tau+1}\bm{U}_{\tau+1}^T\bm{V}_{(i),\tau+\frac{1}{2}}}_F\le M_3$

Next we calculate the projection  $\mathcal{P}_{\mathcal{N}_{\bm{V}_{(i),\tau+\frac{1}{2}}}}\left(-\bm{U}_{\tau+1}\bm{U}_{\tau+1}^T\bm{V}_{(i),\tau+\frac{1}{2}}\right)$:
$$
\begin{aligned}
&\mathcal{P}_{\mathcal{N}_{\bm{V}_{(i),\tau+\frac{1}{2}}}}\left(-\bm{U}_{\tau+1}\bm{U}_{\tau+1}^T\bm{V}_{(i),\tau+\frac{1}{2}}\right)=-\bm{V}_{(i),\tau+\frac{1}{2}}\left(\bm{V}_{(i),\tau+\frac{1}{2}}^T\bm{U}_{\tau+1}\bm{U}_{\tau+1}^T\bm{V}_{(i),\tau+\frac{1}{2}}\right)\\
&=\eta_{\tau}^2\bm{V}_{(i),\tau+\frac{1}{2}}\bm{V}_{(i),\tau+\frac{1}{2}}^T\bm{e}_{4,(i),\tau}
\end{aligned}
$$
We use $\bm{e}_{5,(i),\tau}$ to denote the difference between $\grof{\bm{V}_{(i),\tau+\frac{1}{2}}}{-\bm{U}_{\tau+1}\bm{U}_{\tau+1}^T\bm{V}_{(i),\tau+\frac{1}{2}}}$ and $\bm{V}_{(i),\tau}+\eta_{\tau}\square \bm{V}_{(i),\tau}-\eta_{\tau}\bm{U}_{\tau}\square \bm{U}_{\tau}^T\bm{V}_{(i),\tau}$, then
its norm is upper bounded by:
\begin{equation*}
\begin{aligned}
&\eta_{\tau}^2\norm{\bm{e}_{5,(i),\tau}}_F\\
&=\norm{\grof{\bm{V}_{(i),\tau+\frac{1}{2}}}{-\bm{U}_{\tau+1}\bm{U}_{\tau+1}^T\bm{V}_{(i),\tau+\frac{1}{2}}}-\bm{V}_{(i),\tau}-\eta_{\tau}\square \bm{V}_{(i),\tau}+\eta_{\tau}\bm{U}_{\tau}\square \bm{U}_{\tau}^T\bm{V}_{(i),\tau}}_F\\
&\le \norm{\grof{\bm{V}_{(i),\tau+\frac{1}{2}}}{-\bm{U}_{\tau+1}\bm{U}_{\tau+1}^T\bm{V}_{(i),\tau+\frac{1}{2}}}-\bm{V}_{(i),\tau+\frac{1}{2}}+\bm{U}_{\tau+1}\bm{U}_{\tau+1}^T\bm{V}_{(i),\tau+\frac{1}{2}}}_F\\
&+\norm{\bm{V}_{(i),\tau+\frac{1}{2}}-\bm{U}_{\tau+1}\bm{U}_{\tau+1}^T\bm{V}_{(i),\tau+\frac{1}{2}}-\bm{V}_{(i),\tau}-\eta_{\tau}\square \bm{V}_{(i),\tau}+\eta_{\tau}\bm{U}_{\tau}\square \bm{U}_{\tau}^T\bm{V}_{(i),\tau}}_F
\end{aligned}
\end{equation*}

By property \ref{cons:approximate} of the generalized retraction in Definition \ref{def:generalizedretraction}, we have:
\begin{equation*}
\begin{aligned}
&\norm{\grof{\bm{V}_{(i),\tau+\frac{1}{2}}}{-\bm{U}_{\tau+1}\bm{U}_{\tau+1}^T\bm{V}_{(i),\tau+\frac{1}{2}}}-\left(\bm{V}_{(i),\tau+\frac{1}{2}}-\bm{U}_{\tau+1}\bm{U}_{\tau+1}^T\bm{V}_{(i),\tau+\frac{1}{2}}\right)}_F\\
&\le M_1\norm{\Ptangent{\bm{V}_{(i),\tau+\frac{1}{2}}}{-\bm{U}_{\tau+1}\bm{U}_{\tau+1}^T\bm{V}_{(i),\tau+\frac{1}{2}}}}_F^2+(M_2+1)\norm{\Pnormal{\bm{V}_{(i),\tau+\frac{1}{2}}}{-\bm{U}_{\tau+1}\bm{U}_{\tau+1}^T\bm{V}_{(i),\tau+\frac{1}{2}}}}_F\\
&\le M_1\norm{-\bm{U}_{\tau+1}\bm{U}_{\tau+1}^T\bm{V}_{(i),\tau+\frac{1}{2}}}_F^2+(M_2+1)\eta_{\tau}^2\norm{\bm{V}_{(i),\tau+\frac{1}{2}}\bm{V}_{(i),\tau+\frac{1}{2}}^T\bm{e}_{4,(i),\tau}}_F\\
&= M_1\norm{\eta_{\tau}\bm{U}_{\tau}\square \bm{U}_{\tau}^T\bm{V}_{(i),\tau}+\eta_{\tau}^2\bm{e}_{4,(i),\tau}}_F^2+(M_2+1)\eta_{\tau}^2\norm{\bm{V}_{(i),\tau+\frac{1}{2}}\bm{V}_{(i),\tau+\frac{1}{2}}^T\bm{e}_{4,(i),\tau}}_F\\
&\le 2M_1\eta_{\tau}^2\left(\norm{\bm{U}_{\tau}\square \bm{U}_{\tau}^T\bm{V}_{(i),\tau}}_F^2+\eta_{\tau}^4\norm{\bm{e}_{4,(i),\tau}}_F^2\right)+(M_2+1)\eta_{\tau}^2\norm{\bm{V}_{(i),\tau+\frac{1}{2}}\bm{V}_{(i),\tau+\frac{1}{2}}^T\bm{e}_{4,(i),\tau}}_F\\
&\le \eta_{\tau}^2\norm{\square \bm{U}_{\tau}}_F^2\left(2M_1(C_{4,0}+C_{4,1})C_{4,0}+2M_1+M_2C_{4,0} \right)\\
&+\eta_{\tau}^2\norm{\square \bm{V}_{(i),\tau}}_F^2\left(2M_1(C_{4,0}+C_{4,1})C_{4,1}+M_2C_{4,1} \right)
\end{aligned}
\end{equation*}
For the second part:
\begin{equation*}
\begin{aligned}
&\norm{\bm{V}_{(i),\tau+\frac{1}{2}}-\bm{U}_{\tau+1}\bm{U}_{\tau+1}^T\bm{V}_{(i),\tau+\frac{1}{2}}-\bm{V}_{(i),\tau}-\eta_{\tau}\square \bm{V}_{(i),\tau}+\eta_{\tau}\bm{U}_{\tau}\square \bm{U}_{\tau}^T\bm{V}_{(i),\tau}}_F\\
&=\eta_{\tau}^2\norm{e_{3,(n),\tau}-e_{4,(n),\tau}}_F\\
&\le \eta_{\tau}^2\norm{e_{3,(n),\tau}}_F+\eta_{\tau}^2\norm{e_{4,(n),\tau}}_F
\end{aligned}
\end{equation*}
Therefore, the norm of $\bm{e}_{5,(i),\tau}$ is upper bounded as:
\begin{equation*}
\begin{aligned}
&\norm{\bm{e}_{5,(i),\tau}}_F\\
&\le \norm{\square \bm{U}_{\tau}}_F^2\left(2M_1(C_{4,0}+C_{4,1})C_{4,0}+2M_1+M_2C_{4,0}+C_{4,0}\right)\\
&+\norm{\square \bm{V}_{(i),\tau}}_F^2\left(2M_1(C_{4,0}+C_{4,1})C_{4,1}+M_2C_{4,1}+M_1+C_{4,1} \right)
\end{aligned}
\end{equation*}

This completes our proof, with 
$$
C_{5,0}=\frac{1}{8} \left(12 \left(M_2+1\right)+M_1 \left(24 M_2+M_1 \left(M_1 \left(M_1
   \left(M_1+32\right)+254\right)+2 \left(M_2+109\right)\right)+88\right)\right)
$$
and
$$
C_{5,1}=M_1^4+\frac{81 M_1^3}{4}+13 M_1^2+\left(2 M_2+5\right) M_1+\frac{1}{2} \left(M_2+1\right)
$$
\end{proof}
The following lemma shows the sufficient decrease property:
\begin{lemma}
\label{lm:suffcientdecrease}
(Formal version of Lemma \ref{lm:informalsuffcientdecrease}) When we choose the stepsize $\eta_{\tau}\le\frac{1}{G_{max,op}\sqrt{r}}\min\left\{\frac{M_3}{2},\frac{\sqrt{M_3}}{\sqrt{6+12M_1+M_1^2}}\right\}$, and  $\bm{U}_{\tau}$ and $\bm{V}_{(i),\tau}$ satisfy the orthogonality condition $\bm{U}_{\tau}^T\bm{V}_{(i),\tau}=0$, we have:
\begin{equation}
\begin{aligned}
&\left\langle \sum_{i=1}^N\nabla_{\bm{U}}\mathcal{L}_{(i),1}(\bm{U}_{\tau},\bm{V}_{(i),\tau}) ,\bm{U}_{\tau+1}-\bm{U}_{\tau}\right\rangle\\
&+\sum_{i=1}^N\left\langle \nabla_{\bm{V}_{(i)}}\mathcal{L}_{(i),1}(\bm{U}_{\tau},\bm{V}_{(i),\tau})+\nabla_{\bm{V}_{(i)}}\mathcal{L}_{(i),2}(\bm{U}_{\tau},\bm{V}_{(i),\tau}) ,\bm{V}_{(i),\tau+1}-\bm{V}_{(i),\tau}\right\rangle\\
&\le -\eta_{\tau} N\norm{\square \bm{U}_{\tau}}_F^2 - \eta_{\tau}\sum_{i=1}^N\norm{\square \bm{V}_{(i),\tau}}_F^2+\eta_{\tau}^2\left(C_{6,0}N\norm{\square \bm{U}_{\tau}}_F^2+C_{6,1}\sum_{i=1}^N\norm{\square \bm{V}_{(i),\tau}}_F^2\right) 
\end{aligned}   
\end{equation}
where $C_{6,0}$ and $C_{6,1}$ are constants dependent only on $M_1$, $M_2$, $r$, and $G_{max,op}$:
$$
C_{6,0}=G_{max,op}\sqrt{r}(M_1+C_{5,0})
$$
and 
$$
C_{6,1}=G_{max,op}\sqrt{r}C_{5,1}
$$
\end{lemma}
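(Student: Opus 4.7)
The plan is to substitute the one-step update formulas (derived earlier and in Lemma \ref{lm:fullupdateofv}) into each inner product and expand in powers of $\eta_\tau$. Using $\bm{U}_\tau^T\bm{V}_{(i),\tau}=0$, the gradients simplify to $\nabla_{\bm{U}}\mathcal{L}_{(i),1}=-(\bm{I}-\bm{P}_{\bm{V}_{(i),\tau}})\bm{S}_{(i)}\bm{U}_\tau$ and $\nabla_{\bm{V}}\mathcal{L}_{(i),1}+\nabla_{\bm{V}}\mathcal{L}_{(i),2}=-(\bm{I}-\bm{P}_{\bm{U}_\tau})\bm{S}_{(i)}\bm{V}_{(i),\tau}$. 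Substituting $\bm{U}_{\tau+1}-\bm{U}_\tau=\eta_\tau\square\bm{U}_\tau+\eta_\tau^2\bm{e}_{1,\tau}$ and $\bm{V}_{(i),\tau+1}-\bm{V}_{(i),\tau}=\eta_\tau\square\bm{V}_{(i),\tau}-\eta_\tau\bm{U}_\tau\square\bm{U}_\tau^T\bm{V}_{(i),\tau}+\eta_\tau^2\bm{e}_{5,(i),\tau}$ breaks the left-hand side into a leading $O(\eta_\tau)$ part and error terms of order $\eta_\tau^2$.

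For the leading $O(\eta_\tau)$ global piece, write $(\bm{I}-\bm{P}_{\bm{V}_{(i),\tau}})\bm{S}_{(i)}\bm{U}_\tau=(\bm{I}-\bm{P}_{\bm{U}_\tau}-\bm{P}_{\bm{V}_{(i),\tau}})\bm{S}_{(i)}\bm{U}_\tau+\bm{P}_{\bm{U}_\tau}\bm{S}_{(i)}\bm{U}_\tau$. Since $\square\bm{U}_\tau$ lies in the tangent space at $\bm{U}_\tau$, we have $\bm{U}_\tau^T\square\bm{U}_\tau=0$, so $\langle\bm{P}_{\bm{U}_\tau}\bm{S}_{(i)}\bm{U}_\tau,\square\bm{U}_\tau\rangle=\tr(\bm{U}_\tau^T\bm{S}_{(i)}\bm{U}_\tau\bm{U}_\tau^T\square\bm{U}_\tau)=0$. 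Summing over $i$ and using the definition $\square\bm{U}_\tau=\frac{1}{N}\sum_i(\bm{I}-\bm{P}_{\bm{U}_\tau}-\bm{P}_{\bm{V}_{(i),\tau}})\bm{S}_{(i)}\bm{U}_\tau$ yields the clean identity $\sum_i\langle\nabla_{\bm{U}}\mathcal{L}_{(i),1},\square\bm{U}_\tau\rangle=-N\norm{\square\bm{U}_\tau}_F^2$. The analogous argument for $\bm{V}_{(i),\tau}$—splitting $(\bm{I}-\bm{P}_{\bm{U}_\tau})\bm{S}_{(i)}\bm{V}_{(i),\tau}=\square\bm{V}_{(i),\tau}+\bm{P}_{\bm{V}_{(i),\tau}}\bm{S}_{(i)}\bm{V}_{(i),\tau}$ and using $\bm{V}_{(i),\tau}^T\square\bm{V}_{(i),\tau}=0$—gives $-\norm{\square\bm{V}_{(i),\tau}}_F^2$ per client. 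Crucially, the cross correction term $-\eta_\tau\bm{U}_\tau\square\bm{U}_\tau^T\bm{V}_{(i),\tau}$ paired against $-(\bm{I}-\bm{P}_{\bm{U}_\tau})\bm{S}_{(i)}\bm{V}_{(i),\tau}$ contributes exactly zero, because $(\bm{I}-\bm{P}_{\bm{U}_\tau})\bm{U}_\tau=0$; this cancellation is what lets the $\eta_\tau$-order correction step be absorbed rather than destroy the sufficient decrease.

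For the $O(\eta_\tau^2)$ residuals, apply Cauchy--Schwarz together with the elementary bounds $\norm{\nabla_{\bm{U}}\mathcal{L}_{(i),1}}_F,\norm{\nabla_{\bm{V}}\mathcal{L}_{(i),1}+\nabla_{\bm{V}}\mathcal{L}_{(i),2}}_F\le G_{max,op}\sqrt{r}$ (using Lemma \ref{lm:fnormofabbound} and the fact that projection operators have unit operator norm). Combined with $\norm{\bm{e}_{1,\tau}}_F\le M_1\norm{\square\bm{U}_\tau}_F^2$ and the bound $\norm{\bm{e}_{5,(i),\tau}}_F\le C_{5,0}\norm{\square\bm{U}_\tau}_F^2+C_{5,1}\norm{\square\bm{V}_{(i),\tau}}_F^2$ from Lemma \ref{lm:fullupdateofv}, the global residual is at most $N\eta_\tau^2 G_{max,op}\sqrt{r}\,M_1\norm{\square\bm{U}_\tau}_F^2$ and the local residuals sum to $\eta_\tau^2 G_{max,op}\sqrt{r}\sum_i(C_{5,0}\norm{\square\bm{U}_\tau}_F^2+C_{5,1}\norm{\square\bm{V}_{(i),\tau}}_F^2)$. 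Collecting coefficients of $N\norm{\square\bm{U}_\tau}_F^2$ and $\sum_i\norm{\square\bm{V}_{(i),\tau}}_F^2$ gives precisely $C_{6,0}=G_{max,op}\sqrt{r}(M_1+C_{5,0})$ and $C_{6,1}=G_{max,op}\sqrt{r}\,C_{5,1}$.

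The main obstacle is bookkeeping rather than any one hard inequality: we must verify that the leading-order cancellations really happen (i.e.\ that both $\bm{U}_\tau^T\square\bm{U}_\tau=0$ and the orthogonality identity that kills the $-\eta_\tau\bm{U}_\tau\square\bm{U}_\tau^T\bm{V}_{(i),\tau}$ cross term hold as exact equalities rather than approximations), because any residual from these at order $\eta_\tau$ would spoil the sufficient decrease. Once those two cancellations are pinned down, the rest is a straightforward aggregation of the higher-order bounds already assembled in Lemma \ref{lm:fullupdateofv} and \eqref{eqn:squareunormupperbound}--\eqref{eqn:squarevnormupperbound}, and the stepsize hypothesis is only used to invoke that lemma (it guarantees all the retraction arguments stay within $\norm{\cdot}_F\le M_3$).
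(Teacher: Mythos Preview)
Your proposal is correct and follows essentially the same route as the paper's proof: you substitute the same update expansions, use the same splitting $(\bm{I}-\bm{P}_{\bm{V}_{(i),\tau}})\bm{S}_{(i)}\bm{U}_\tau=\square$-part plus $\bm{P}_{\bm{U}_\tau}$-part (and the analogue for $\bm{V}$), exploit the same exact cancellations $\bm{U}_\tau^T\square\bm{U}_\tau=0$, $\bm{V}_{(i),\tau}^T\square\bm{V}_{(i),\tau}=0$, and $(\bm{I}-\bm{P}_{\bm{U}_\tau})\bm{U}_\tau=0$ for the correction cross term, and then bound the $\eta_\tau^2$ residuals with Cauchy--Schwarz together with $\|\nabla\mathcal{L}\|_F\le G_{max,op}\sqrt{r}$ and the $\bm{e}_{1,\tau},\bm{e}_{5,(i),\tau}$ estimates from Lemma~\ref{lm:fullupdateofv}. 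The constants you assemble match the paper's $C_{6,0}$ and $C_{6,1}$ exactly.
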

\begin{proof}
We firstly calculate the sufficient decrease of $\bm{U}$:
\begin{equation*}
\begin{aligned}
&\left\langle \sum_{i=1}^N\nabla_{\bm{U}}\mathcal{L}_{(i),1}(\bm{U}_{\tau},\bm{V}_{(i),\tau}) ,\bm{U}_{\tau+1}-\bm{U}_{\tau}\right\rangle\\
&=\left\langle -\sum_{i=1}^N \left(\bm{I}-\bm{P}_{\bm{V}_{(i),\tau}}\right)\bm{S}_{(i)}\bm{U}_{\tau} ,\bm{U}_{\tau+1}-\bm{U}_{\tau}\right\rangle\\
&=-\left\langle \sum_{i=1}^N \left(\bm{I}-\bm{P}_{\bm{V}_{(i),\tau}}\right)\bm{S}_{(i)}\bm{U}_{\tau} ,\frac{\eta_{\tau}}{N}\sum_{i=1}^N\left(\bm{I}-\bm{P}_{\bm{V}_{(i),\tau}}-\bm{P}_{\bm{U}_{\tau}}\right)\bm{S}_{(i)}\bm{U}_{\tau} + \eta_{\tau}^2 \bm{e}_{1,\tau}\right\rangle\\
&=-\left\langle \sum_{i=1}^N \left(\bm{I}-\bm{P}_{\bm{V}_{(i),\tau}}\right)\bm{S}_{(i)}\bm{U}_{\tau} ,\frac{\eta_{\tau}}{N}\sum_{i=1}^N\left(\bm{I}-\bm{P}_{\bm{V}_{(i),\tau}}-\bm{P}_{\bm{U}_{\tau}}\right)\bm{S}_{(i)}\bm{U}_{\tau}\right\rangle \\
&-\left\langle \sum_{i=1}^N \left(\bm{I}-\bm{P}_{\bm{V}_{(i),\tau}}\right)\bm{S}_{(i)}\bm{U}_{\tau} ,\eta_{\tau}^2 \bm{e}_{1,\tau}\right\rangle\\
&=-\left\langle \sum_{i=1}^N \left(\bm{I}-\bm{P}_{\bm{V}_{(i),\tau}}-\bm{P}_{\bm{U}_{\tau}}\right)\bm{S}_{(i)}\bm{U}_{\tau} ,\frac{\eta_{\tau}}{N}\sum_{i=1}^N\left(\bm{I}-\bm{P}_{\bm{V}_{(i),\tau}}-\bm{P}_{\bm{U}_{\tau}}\right)\bm{S}_{(i)}\bm{U}_{\tau}\right\rangle \\
&-\left\langle \sum_{i=1}^N \left(\bm{I}-\bm{P}_{\bm{V}_{(i),\tau}}\right)\bm{S}_{(i)}\bm{U}_{\tau} ,\eta_{\tau}^2 \bm{e}_{1,\tau}\right\rangle\\
&\le -\eta_{\tau} N \norm{\square \bm{U}_{\tau}}_F^2+\eta_{\tau}^2 \norm{\bm{e}_{1,\tau}}_F\sum_{i=1}^N \norm{\left(\bm{I}-\bm{P}_{\bm{V}_{(i),\tau}}\right)\bm{S}_{(i)}\bm{U}_{\tau}}_F\\
&\le -\eta_{\tau} N \norm{\square \bm{U}_{\tau}}_F^2+M_1\eta_{\tau}^2 \norm{\square \bm{U}_{\tau}}_F^2\sum_{i=1}^NG_{(i),op}\sqrt{r}
\end{aligned}
\end{equation*}
Next, we calculate:
\begin{equation*}
\begin{aligned}
&\left\langle \nabla_{\bm{V}_{(i)}}\mathcal{L}_{(i),1}(\bm{U}_{\tau},\bm{V}_{(i),\tau})+\nabla_{\bm{V}_{(i)}}\mathcal{L}_{(i),2}(\bm{U}_{\tau},\bm{V}_{(i),\tau}) ,\bm{V}_{(i),\tau+1}-\bm{V}_{(i),\tau}\right\rangle\\
&=\left\langle-\left(\bm{I}-\bm{P}_{\bm{U}_{\tau}}\right)\bm{S}_{(i)}\bm{V}_{(i),\tau},\eta_{\tau} \square\bm{V}_{(i),\tau}+\eta_{\tau}\bm{U}_{\tau}\square\bm{U}_{\tau}^T\bm{V}_{(i),\tau}+\eta_{\tau}^2\bm{e}_{5,(i),\tau} \right\rangle\\
&=\left\langle-\left(\bm{I}-\bm{P}_{\bm{U}_{\tau}}\right)\bm{S}_{(i)}\bm{V}_{(i),\tau},\eta_{\tau} \square\bm{V}_{(i),\tau}\right\rangle+\left\langle-\left(\bm{I}-\bm{P}_{\bm{U}_{\tau}}\right)\bm{S}_{(i)}\bm{V}_{(i),\tau},\eta_{\tau}\bm{U}_{\tau}\square\bm{U}_{\tau}^T\bm{V}_{(i),\tau}\right\rangle\\
&+\left\langle-\left(\bm{I}-\bm{P}_{\bm{U}_{\tau}}\right)\bm{S}_{(i)}\bm{V}_{(i),\tau},\eta_{\tau}^2\bm{e}_{5,(i),\tau} \right\rangle\\
&=\left\langle-\left(\bm{I}-\bm{P}_{\bm{U}_{\tau}}-\bm{P}_{\bm{V}_{(i),\tau}}\right)\bm{S}_{(i)}\bm{V}_{(i),\tau},\eta_{\tau} \square\bm{V}_{(i),\tau}\right\rangle+\left\langle-\left(\bm{I}-\bm{P}_{\bm{U}_{\tau}}\right)\bm{S}_{(i)}\bm{V}_{(i),\tau},\eta_{\tau}^2\bm{e}_{5,(i),\tau} \right\rangle\\
&\le -\eta_{\tau} \norm{\square \bm{V}_{(i),\tau}}_F^2+\eta_{\tau}^2\norm{\left(\bm{I}-\bm{P}_{\bm{U}_{\tau}}\right)\bm{S}_{(i)}\bm{V}_{(i),\tau}}_F\norm{\bm{e}_{5,(i),\tau}}_F\\
&\le -\eta_{\tau} \norm{\square \bm{V}_{(i),\tau}}_F^2+\eta_{\tau}^2G_{(i),op}\sqrt{r}\left(C_{5,0}\norm{\square \bm{U}_{\tau}}_F^2+C_{5,1}\norm{\square \bm{V}_{(i),\tau}}^2\right)
\end{aligned}
\end{equation*}

Adding them, we have:
\begin{equation*}
\begin{aligned}
&\left\langle \sum_{i=1}^N\nabla_{\bm{U}}\mathcal{L}_{(i),1}(\bm{U}_{\tau},\bm{V}_{(i),\tau}) ,\bm{U}_{\tau+1}-\bm{U}_{\tau}\right\rangle\\
&+\left\langle \nabla_{\bm{V}_{(i)}}\mathcal{L}_{(i),1}(\bm{U}_{\tau},\bm{V}_{(i),\tau})+\nabla_{\bm{V}_{(i)}}\mathcal{L}_{(i),2}(\bm{U}_{\tau},\bm{V}_{(i),\tau}) ,\bm{V}_{(i),\tau+1}-\bm{V}_{(i),\tau}\right\rangle\\
&\le -\eta_{\tau} N \norm{\square \bm{U}_{\tau}}_F^2-\sum_{i=1}^N \eta_{\tau} \norm{\square \bm{V}_{(i),\tau}}_F^2 +\eta_{\tau}^2\left(NC_{6,0}\norm{\square \bm{U}_{\tau}}_F^2+C_{6,1}\sum_{i=1}^N\norm{\square \bm{V}_{(i),\tau}}_F^2\right) 
\end{aligned}
\end{equation*}
where the constants are:
$$
C_{6,0}=G_{max,op}\sqrt{r}(M_1+C_{5,0})
$$
and 
$$
C_{6,1}=G_{max,op}\sqrt{r}C_{5,1}
$$
\end{proof}

Finally, we come to the proof of Theorem \ref{thm:sublinearconvergence}. 
\begin{proof}
We choose constant a stepsize $\eta_{\tau}=\eta_1$ small enough:
\begin{equation}
\label{eqn:etacdefinition}
\begin{aligned}
\eta_1\le\eta_c=&\min\Big\{\frac{1}{2C_{6,0}+L\left(\left(\left(1+\frac{M_2}{2}\right)^2+2\left(1+\frac{C_{5,0}}{2}\right)^2\right)\right)},\frac{1}{2C_{6,1}+2L\left(1+\frac{C_{5,1}}{2}\right)^2},\\
&\frac{1}{G_{max,op}\sqrt{r}}\frac{M_3}{2},\frac{1}{G_{max,op}\sqrt{r}}\frac{\sqrt{M_3}}{\sqrt{6+12M_1+M_1^2}}\Big\}
\end{aligned}
\end{equation}
Obviously, $\eta_{1}$ satisfies the requirement in Lemma \ref{lm:fullupdateofv} and \ref{lm:suffcientdecrease}.

By the property of Lipschitz continuity, we have:
$$
\begin{aligned}
&\mathcal{L}_{(i)}\left(\bm{U}_{\tau+1},\bm{V}_{(i),\tau+1}\right)\le \mathcal{L}_{(i)}\left(\bm{U}_{\tau},\bm{V}_{(i),\tau}\right)\\
&+\left\langle \nabla_{\bm{U}}\mathcal{L}_{(i)}\left(\bm{U}_{\tau},\bm{V}_{(i),\tau}\right), \bm{U}_{\tau+1}-\bm{U}_{\tau}\right\rangle+\left\langle \nabla_{\bm{U}}\mathcal{L}_{(i)}\left(\bm{U}_{\tau},\bm{V}_{(i),\tau}\right), \bm{V}_{(i),\tau+1}-\bm{V}_{(i),\tau}\right\rangle\\
&+\frac{L}{2}\left(\norm{\bm{V}_{(i),\tau+1}-\bm{V}_{(i),\tau}}_F^2+\norm{\bm{U}_{\tau+1}-\bm{U}_{\tau}}_F^2\right)
\end{aligned}
$$
where $L$ is defined in \eqref{eqn:lipconstantdef}. Since $\bm{U}_{\tau+1}^T\bm{V}_{(i),\tau+1}=0$ and $\bm{U}_{\tau}^T\bm{V}_{(i),\tau}=0$, we know that 
$$
\mathcal{L}_{(i)}\left(\bm{U}_{\tau+1},\bm{V}_{(i),\tau+1}\right)=-f_i\left(\bm{U}_{\tau+1},\bm{V}_{(i),\tau+1}\right)
$$
and that 
$$
\mathcal{L}_{(i)}\left(\bm{U}_{\tau},\bm{V}_{(i),\tau}\right)=-f_i\left(\bm{U}_{\tau},\bm{V}_{(i),\tau}\right)
$$
Then, summing up both sides for $n$ from 1 to $N$, we have:
$$
\begin{aligned}
&-f\left(\bm{U}_{\tau+1},\{\bm{V}_{(i),\tau+1}\}\right)\le -f\left(\bm{U}_{\tau},\{\bm{V}_{(i),\tau}\}\right)\\
&+\left\langle \sum_{i=1}^N\nabla_{\bm{U}}\mathcal{L}_{(i)}\left(\bm{U}_{\tau},\bm{V}_{(i),\tau}\right), \bm{U}_{\tau+1}-\bm{U}_{\tau}\right\rangle+\sum_{i=1}^N\left\langle \nabla_{\bm{U}}\mathcal{L}_{(i)}\left(\bm{U}_{\tau},\bm{V}_{(i),\tau}\right), \bm{V}_{(i),\tau+1}-\bm{V}_{(i),\tau}\right\rangle\\
&+\sum_{i=1}^N\frac{L}{2}\left(\norm{\bm{V}_{(i),\tau+1}-\bm{V}_{(i),\tau}}_F^2+\norm{\bm{U}_{\tau+1}-\bm{U}_{\tau}}_F^2\right)
\end{aligned}
$$

From equation \eqref{eqn:fullupdateofu}, we know
$$
\begin{aligned}
&\norm{\bm{U}_{\tau+1}-\bm{U}_{\tau}}_F\\
&=\norm{\eta_{\tau}\square \bm{U}_{\tau}+\eta_{\tau}^2\bm{e}_{1,\tau}}_F\\
&\le \eta_{\tau}\norm{\square \bm{U}_{\tau}}_F+\norm{\eta_{\tau}^2\bm{e}_{1,\tau}}_F\\
&\le \eta_{\tau}\norm{\square \bm{U}_{\tau}}_F+\eta_{\tau}^2M_1\norm{\square \bm{U}_{\tau}}_F^2\\
&\le \eta_{\tau}(1+\frac{M_1}{2})\norm{\square \bm{U}_{\tau}}_F
\end{aligned}
$$
Similarly, from Lemma \ref{lm:fullupdateofv}, we have:
$$
\begin{aligned}
&\norm{\bm{V}_{(i),\tau+1}-\bm{V}_{(i),\tau}}_F\\
&=\norm{\eta_{\tau}\square \bm{V}_{(i),\tau}+\eta_{\tau}\bm{U}_{\tau}\square\bm{U}_{\tau}^T\bm{V}_{(i),\tau}+\eta_{\tau}^2\bm{e}_{5,(i),\tau}}_F\\
&\le \eta_{\tau}\norm{\square \bm{V}_{(i),\tau}}_F+\eta_{\tau}\norm{\bm{U}_{\tau}\square\bm{U}_{\tau}^T\bm{V}_{(i),\tau}}_F+\eta_{\tau}^2\norm{\bm{e}_{5,(i),\tau}}_F\\
&\le \eta_{\tau}\norm{\square \bm{V}_{(i),\tau}}_F+\eta_{\tau}\norm{\square \bm{U}_{\tau}}_F+\eta_{\tau}^2\norm{\bm{e}_{5,(i),\tau}}_F\\
&\le \eta_{\tau}\norm{\square \bm{U}_{\tau}}_F\left(1+C_{5,0}\eta_{\tau}\norm{\square \bm{U}_{\tau}}_F\right)+\eta_{\tau}\norm{\square \bm{V}_{(i),\tau}}_F\left(1+C_{5,1}\eta_{\tau}\norm{\square \bm{V}_{(i),\tau}}_F\right)\\
&\le \eta_{\tau}\norm{\square \bm{V}_{(i),\tau}}_F\left(1+\frac{1}{2}C_{5,1}\right)+\eta_{\tau}\norm{\square \bm{U}_{\tau}}_F\left(1+\frac{1}{2}C_{5,0}\right)
\end{aligned}
$$

Combining the two inequalities and Lemma \ref{lm:suffcientdecrease}, we have:
\begin{equation}
\label{eqn:decreaseinequality}
\begin{aligned}
&-f\left(\bm{U}_{\tau+1},\{\bm{V}_{(i),\tau+1}\}\right)\\
&\le -f\left(\bm{U}_{\tau},\{\bm{V}_{(i),\tau}\}\right)-\eta_{\tau}\left(N\norm{\square\bm{U}_{\tau}}_F^2+\sum_{i=1}^N\norm{\square \bm{V}_{(i),\tau}}_F^2\right)\\
&+\eta_{\tau}^2NC_{6,0}\norm{\square\bm{U}_{\tau}}_F^2+\eta_{\tau}^2\sum_{i=1}^NC_{6,1}\norm{\square \bm{V}_{(i),\tau}}_F^2\\
&+\eta_{\tau}^2\frac{L}{2}\sum_{i=1}^N\left(\left(\left(1+\frac{M_2}{2}\right)^2+2\left(1+\frac{C_{5,0}}{2}\right)^2\right)\norm{\square \bm{U}_{\tau}}_F^2+2\left(1+\frac{C_{5,1}}{2}\right)^2\norm{\square \bm{V}_{(i),\tau}}_F^2\right)\\
&\le -f\left(\bm{U}_{\tau},\{\bm{V}_{(i),\tau}\}\right)-\frac{\eta_{\tau}}{2}\left(N\norm{\square\bm{U}_{\tau}}_F^2+\sum_{i=1}^N\norm{\square \bm{V}_{(i),\tau}}_F^2\right)\\
\end{aligned}
\end{equation}
Summing up both sides for $\tau$ from $1$ to $R$ and rearranging terms, we have:
$$
\frac{\eta_{1}}{2}\sum_{\tau=1}^R\left(N\norm{\square\bm{U}_{\tau}}_F^2+\sum_{i=1}^N\norm{\square \bm{V}_{(i),\tau}}_F^2\right)\le -f(\bm{U}_{1},\{\bm{V}_{(i),1}\})+f(\bm{U}_{R+1},\{\bm{V}_{(i),R+1}\})
$$
As a result,
$$
\min_{\tau\in\{1,\cdots,N\}}\sum_{\tau=1}^R\left(N\norm{\square\bm{U}_{\tau}}_F^2+\sum_{i=1}^N\norm{\square \bm{V}_{(i),\tau}}_F^2\right)\le \frac{2\left(f(\bm{U}_{R+1},\{\bm{V}_{(i),R+1}\})-f(\bm{U}_{1},\{\bm{V}_{(i),1}\})\right)}{R\eta_{1}}
$$
This completes the proof of Theorem \ref{thm:sublinearconvergence}. Notice that $C_{6,0}$, $C_{6,1}$, and $L$ are of the order $G_{max,op}\sqrt{r}$, thus the requirement on $\eta_c$ in equation \eqref{eqn:etacdefinition} becomes:
$$
\eta_c=C_{\eta}\frac{1}{G_{max,op}\sqrt{r}}
$$
where $C_{\eta}$ is a constant that only depends on $M_1$, $M_2$, and $M_3$ from the generalized retraction Definition \ref{def:generalizedretraction}.
\end{proof}

\section{\revise{Proof for local linear convergence}}
\label{ap:proofforlinearconvergence}
In this section, we will show the full proof of Theorem \ref{thm:linearconvergence}. A formal theorem is stated below. 
\begin{theorem}
\label{thm:formallinearconvergence} (Formal version of theorem \ref{thm:linearconvergence}) Under assumptions \ref{ass:identifiability}, \ref{ass:snormupperbound}, and \ref{ass:noiseless}, if the difference between the population and sample covariance is small $\sqrt{\sum_{i=1}^N\norm{\matS_{(i)}-\matSigma_{(i)}}_F^2}\le \min\{\frac{\sqrt{2}-1}{4}\mu\theta^{3/2},\frac{\mu^2\theta^2}{128^2\times 2\gmop}\}$ and $\norm{\matS_{(i)}-\matSigma_{(i)}}\le \gmop$, when we initialize close to the global optimum $\phi_0\le \phi_{\tau}\le\frac{\mu^3\theta^3}{411041792\gmop^2}$, and choose a constant stepsize $\eta_t = \eta= O\left(\frac{1}{G_{op,max}\sqrt{r}}\right)$, then Algorithm \ref{alg:consensus} with choice 1 will converge into the global optimum:
$$
f(\hmatU,\{\hmatV_{(i)}\}) - f(\bm{U}_{R},\{\bm{V}_{(i),R}\})  = O\left(\left(1-\eta \frac{\mu\theta}{32}\right)^{R}\right) 
$$
where $\{\hmatU,\{\hmatV_{(i)}\}\}$ is a set of optimal solutions to problem \eqref{eqn:problem}.

Furthermore, we can recover the exact global optimal solutions:
$$
\norm{\Pj{\bm{U}_{R}}-\Pj{\hmatU_g}}_F^2+\frac{1}{N}\sum_{i=1}^N\norm{\Pj{\bm{V}_{(i),R}}-\Pj{\hmatV_{(i)}}}_F^2= O\left(\left(1-\eta \frac{\mu\theta}{32}\right)^{R}\right) 
$$
\end{theorem}

We will start by introducing needed notations, then proceed to establish some lemmas that characterize the local geometry of the optimization objective, then prove Theorem \ref{thm:linearconvergence} at the end. 

At communication round $\tau$, remember that we use $\matU_{\tau}$ and $\matV_{(i),\tau}$ to denote the updated variables. We use $\left(\hmatU, \{\hmatV_{(i)}\}\right)$ to denote one set of optimal solutions to \eqref{eqn:problem}. For simplicity, we use $\hmatPi_g$ to denote the projection  $\hmatPi_g=\hmatU\hmatU^T$ and $\hmatPi_{(i)}$ to denote the projection  $\hmatPi_{(i)}=\hmatV_{(i)}\hmatV_{(i)}^T$.

Since each covariance matrix $\matS_{(i)}$ is symmetric positive semidefinite, we can find matrix $\matF_{(i)}\in\mathbb{R}^{d\times d}$ such that $\matF_{(i)}\matF_{(i)}^T=\matS_{(i)}$ by Cholesky factorization. Furthermore, we can define $\matF_{(i),g}$, $\matF_{(i),l}$, and $\matR_{(i)}$ as,
\begin{align}
\label{eqn:deffglr}
\matF_{(i),g}&=\hmatPi_g \matF_{(i)}\notag \\
\matF_{(i),l}&=\hmatPi_{(i)} \matF_{(i)}\notag\\
\hmatF_{(i),l}&=\matF_{(i),g}+\matF_{(i),l}\notag\\
\matR_{(i)}&=\matF_{(i)} - \hmatPi_g \matF_{(i)} -\hmatPi_{(i)} \matF_{(i)}\notag\\
\end{align}
Apparently, $\matF_{(i),g}^T\matR_{(i)}=\matF_{(i),l}^T\matR_{(i)}=0$. 

Next we will introduce a set of optimal solutions $\left( \hmatU_{\tau},\{\hmatV_{(i),\tau}\}\right)$ that is close to the current updates $(\matU_{\tau},\{\matV_{(i),\tau}\}$. The variables $\hmatU_{\tau}$ and $\hmatV_{(i),\tau}$'s are defined as
\begin{equation}
\label{eqn:defofutaustar}
\hmatU_{\tau} = \hmatPi_{g}\bm{U}_{\tau}\left((\bm{U}_{\tau})^T\hmatPi_{g}\bm{U}_{\tau}\right)^{-1/2}
\end{equation}
and 
\begin{equation}
\label{eqn:defofvntaustar}
\hmatV_{(i),\tau} = \hmatPi_{(i)}\bm{V}_{(i),\tau}\left(\bm{V}_{(i),\tau}^T\hmatPi_{(i)}\bm{V}_{(i),\tau}\right)^{-1/2}
\end{equation}
for each $n=1,...N$. 

It's easy to verify that 
$$
\hmatU_{\tau}(\hmatU_{\tau})^T=\hmatPi_g
$$
and that
$$
\hmatV_{(i),\tau}(\hmatV_{(i),\tau})^T=\hmatPi_{(i)}
$$
Notice that $\{\hmatU_{\tau},\{\hmatV_{(i),\tau}\}\}$
is one set of global optimal solutions that is dependent on the iteration index $\tau$. The $\hmatU_{\tau}$ and $\hmatV_{(i),\tau}$'s are dependent on the communication round $\tau$. We use $\Delta \bm{U}_{\tau}$ to denote the difference between $\bm{U}_{\tau}$ and $\hmatU_{\tau}$:
\begin{equation}
\label{eqn:defofdeltau}
\Delta \bm{U}_{\tau} = \bm{U}_{\tau}-\hmatU_{\tau}
\end{equation}
and similarly:
\begin{equation}
\label{eqn:defofdeltavi}
\Delta \bm{V}_{(i),\tau} = \bm{V}_{(i),\tau}-\hmatV_{(i),\tau}
\end{equation}

Since  $\hmatU_{\tau}$ and $\hmatV_{(i),\tau}$'s are optimal, we can simplify the KKT conditions in \eqref{eqn:kktcondition} as
\begin{align}
\label{eqn:kktwithf}
&\matR_{(i)}\matF_{(i)}^T\hmatV_{(i),\tau}=0,\quad\forall i \in [N]\notag \\
&\sum_{i=1}^N\matR_{(i)}\matF_{(i)}^T\hmatU_{\tau}=0 
\end{align}
We can replace $\matF_{(i)}$ by $\hmatF_{(i)}$ in \eqref{eqn:kktwithf} since $\matF_{(i)}^T\hmatVit=\hmatF_{(i)}^T\hmatVit$ and $\matF_{(i)}^T\hmatUt=\hmatF_{(i)}^T\hmatUt$.

We will first show some properties of the introduced variables.
\begin{lemma}
\label{lm:hatvariables}
Under the same conditions as Theorem \ref{thm:linearconvergence}, there exists constants $\htheta=\frac{\theta}{\sqrt{2}}$, $\hmu=\frac{\mu}{\sqrt{2}}$, such that the following holds,
\begin{enumerate}
    \item \label{claim:fnormupperbound} $\norm{\hmatF_{(i)}}\le \sqrt{2\gmop}$.
    \item \label{claim:fsvlowerbound} The smallest nonzero eigenvalue of $\hmatF_{(i)}\hmatF_{(i)}^T$ is lower bounded by $\hmu$.
    \item \label{claim:thetahat} $\norm{\sum_{i=1}^N\frac{1}{N}\hmatPi_{(i)}}\le 1-\htheta$.
    \item \label{claim:rinormupperbound} $\norm{\matR_{(i)}}\le \frac{\hmu\htheta}{64\sqrt{2\gmop}}=\frac{\mu\theta}{128\sqrt{2\gmop}}$
\end{enumerate}
\end{lemma}
We will use the $\htheta$ and $\hmu$ notations in the remaining parts of the section.
\begin{proof}
We will prove the claims one by one. 

From \eqref{proofeqn:unionspaceclose}, we know,
\begin{align*}
\frac{1}{N}\sum_{i=1}^N \norm{\matPi_g+\matPi_{(i)}-\hmatPi_g-\hmatPi_{(i)}}_F^2\le \frac{4}{N\delta^2} \sum_{i=1}^N\norm{\bm{S}_{(i)}-\bm{\Sigma}_{(i)}}_F^2   
\end{align*}
where we replace $\delta$ by $\mu$ since $\left(\matI-\matPi_g-\matPi_{(i)}\right)\matSigma_{(i)}=0$.

Therefore, the difference between $\hmatF_{(i)}\hmatF_{(i)}^T$ and $\matSigma_{(i)}$ is upper bounded by,
\begin{align*}
&\norm{\hmatF_{(i)}\hmatF_{(i)}^T-\matSigma_{(i)}}  \\
&=\norm{\left(\hmatPi_g+\hmatPi_{(i)}\right)\matS_{(i)}\left(\hmatPi_g+\hmatPi_{(i)}\right)-\matSigma_{(i)}} \\
&\le\norm{\left(\hmatPi_g+\hmatPi_{(i)}\right)\matSigma_{(i)}\left(\hmatPi_g+\hmatPi_{(i)}\right)-\matSigma_{(i)}}+\norm{\left(\hmatPi_g+\hmatPi_{(i)}\right)\left(\matSigma_{(i)}-\matS_{(i)}\right)\left(\hmatPi_g+\hmatPi_{(i)}\right)} \\
&\le 2\norm{\hmatPi_g+\hmatPi_{(i)}-\matPi_g-\matPi_{(i)}}_F\norm{\matSigma_{(i)}}+\norm{\matSigma_{(i)}-\matS_{(i)}}\\
&\le \frac{8\gmop}{\mu}\sqrt{\sum_{j=1}^N\norm{\matSigma_{(j)}-\matS_{(j)}}_F^2}+\norm{\matSigma_{(i)}-\matS_{(i)}}\\
&\le \frac{\mu^2\theta^2}{128^2\times 2 \gmop}
\end{align*}

From Weyl's theorem, we know,
\begin{align*}
&\norm{\hmatF_{(i)}}^2=\norm{\hmatF_{(i)}\hmatF_{(i)}^T}\\
&\le \norm{\hmatF_{(i)}\hmatF_{(i)}^T-\matSigma_{(i)}}+\norm{\matSigma_{(i)}}\\
&\le 2\gmop
\end{align*}
This proves Claim \ref{claim:fnormupperbound}.

Also by Weyl's theorem, we know,
\begin{align*}
&\lambda_{2r}\left(\hmatF_{(i)}\hmatF_{(i)}^T\right)\\
&\ge \lambda_{2r}\left(\hmatF_{(i)}\hmatF_{(i)}^T-\matSigma_{(i)}\right)-\norm{\hmatF_{(i)}\hmatF_{(i)}^T-\matSigma_{(i)}}\\
&\ge \mu - \frac{\mu^2\theta^2}{32768 \gmop} \\
&\ge \mu\frac{1}{\sqrt{2}}
\end{align*}
This proves Claim \ref{claim:fsvlowerbound}.

Next, we consider the results from Theorem \ref{thm:statisticalerror}, 
\begin{align*}
\norm{\bm{P}_{\hat{\bm{U}}}-\matPi_g}_F^2+\frac{1}{N}\sum_{i=1}^N\norm{\bm{P}_{\hat{\bm{V}}_{(i)}}-\matPi_{(i)}}_F^2\le \frac{8}{\theta\mu^2} \frac{1}{N}\sum_{i=1}^N\norm{ \bm{\Sigma}_{(i)}-\bm{S}_{(i)}}_{F}^2
\end{align*}
 Therefore, an upper bound for $\norm{\bm{P}_{\hat{\bm{V}}_{(i)}}-\matPi_{(i)}}_F$ is
 \begin{align*}
 &\norm{\bm{P}_{\hat{\bm{V}}_{(i)}}-\matPi_{(i)}}_F   \\
 &\le 2\sqrt{2}\frac{1}{\sqrt{\theta}\mu}\sqrt{\sum_{i=1}^N\norm{ \bm{\Sigma}_{(i)}-\bm{S}_{(i)}}_{F}^2}\\
 &\le \theta\frac{2-\sqrt{2}}{2}
 \end{align*}
where we applied the condition that $\sqrt{\sum_{i=1}^N\norm{ \bm{\Sigma}_{(i)}-\bm{S}_{(i)}}_{F}^2}\le \mu\theta^{1.5}\frac{\sqrt{2}-1}{4}$ in the last inequality.

As a result, we have,
\begin{align*}
&\norm{\frac{1}{N}\sum_{i=1}^N\hmatPi_{(i)}}\\
&\le\norm{\frac{1}{N}\sum_{i=1}^N\matPi_{(i)}}+\frac{1}{N}\sum_{i=1}^N\norm{\matPi_{(i)}-\hmatPi_{(i)}} \\
&\le 1-\theta +\theta\left(1-\frac{1}{\sqrt{2}}\right)\\
&=\frac{\theta}{\sqrt{2}}
\end{align*}
This proves Claim \ref{claim:thetahat}.

Then we analyze the norm of $\matR_{(i)}\matR_{(i)}^T$,
\begin{align*}
&\norm{\matR_{(i)}}^2=\norm{\matR_{(i)}^T\matR_{(i)}}\\
&=\norm{\left(\matI-\hmatPi_g-\hmatPi_{(i)}\right)\matS_{(i)}\left(\matI-\hmatPi_g-\hmatPi_{(i)}\right)}\\
&\le \norm{\left(\matI-\hmatPi_g-\hmatPi_{(i)}\right)\matSigma_{(i)}\left(\matI-\hmatPi_g-\hmatPi_{(i)}\right)}\\
&+\norm{\left(\matI-\hmatPi_g-\hmatPi_{(i)}\right)\left(\matS_{(i)}-\matSigma_{(i)}\right)\left(\matI-\hmatPi_g-\hmatPi_{(i)}\right)}\\
&\le \norm{\matPi_g+\matPi_{(i)}-\hmatPi_g-\hmatPi_{(i)}}_F\norm{\matSigma_{(i)}}+\norm{\matS_{(i)}-\matSigma_{(i)}}\\
&\le \sqrt{\sum_{i=1}^N\norm{ \bm{\Sigma}_{(i)}-\bm{S}_{(i)}}_{F}^2}\left(1+4\frac{\gmop}{\delta}\right)\frac{\mu\theta}{128\sqrt{2\gmop}}
\end{align*}
where the last inequality comes from the fact that $\sum_{i=1}^N\norm{ \bm{\Sigma}_{(i)}-\bm{S}_{(i)}}_{F}^2\le \frac{1}{\left(1+\frac{8\gmop}{\mu}\right)^2}\left(\frac{\mu^2\theta^2}{32768\gmop}\right)^2$.

\end{proof}

As discussed in Section \ref{sec:proofsketchsublinearconvergence}, different $\bm{U}$ and $\bm{V}_{(i)}$'s may have the same objective value, as long as they span the same column space. We introduce a variable $\zeta$ to denote the subspace distance between the estimate and ground truth:
\begin{equation}
\label{eqn:defzetai}
\zeta_{(i),\tau} = r-\innerp{\bm{P}_{\bm{V}_{(i),\tau}}}{\matPi_{(i)}}
\end{equation}
for each $i=1,...N$, and, 
\begin{equation}
\label{eqn:defzeta0}
\zeta_{(0),\tau} = r-\innerp{\bm{P}_{\bm{U}_{\tau}}}{\bm{\Pi}_{g}}
\end{equation}
We use $\zeta_{\tau}$ to denote:
\begin{equation}
\label{eqn:defzeta}
\zeta_{\tau} = \zeta_{(0),\tau}+\frac{1}{N}\sum_{i=1}^N\zeta_{(i),\tau}
\end{equation}
The $\zeta_{(0),\tau}$ and $\zeta_{(i),\tau}$'s defined represent how far away the iterates are from the ground truth, measured by subspace distance.

We can also define,
\begin{equation}
\label{eqn:deftildezetai}
\widetilde{\zeta}_{(i),\tau} = 2r-\innerp{\Pj{\bm{U}_{\tau}}+\bm{P}_{\bm{V}_{(i),\tau}}}{\hmatPi_g+\hmatPi_{(i)}}
\end{equation}
for each $i=1,...N$. We use $\wz_{\tau}$ to denote:
\begin{equation}
\label{eqn:tildedefzeta}
\wz_{\tau} = \sum_{i=1}^N\wz_{(i),\tau}
\end{equation}
From Lemma \ref{lm:sumspacetoindividualspace}, since $\hmatPi_{(i)}$'s are $\htheta$-misaligned, there exists a relation between $\zeta_{\tau}$ and $\widetilde{\zeta}_{\tau}$:
\begin{equation}
\label{eqn:relationbetweenzetaandtildezeta}
\frac{\htheta}{2} N\zeta_{\tau}\le\widetilde{\zeta}_{\tau}\le N\zeta_{\tau}
\end{equation}

For simplicity, we also define the optimality gap $\phi_{\tau}$ as,
\begin{align}
\label{eqn:defphitau}
\phi_{\tau}=\frac{1}{2}\sum_{i=1}^N\left(\tr{\hmatPi_g\matS_{(i)}}+\tr{\hmatPi_{(i)}\matS_{(i)}}-\tr{\Pj{\matU_{\tau}}\matS_{(i)}}-\tr{\Pj{\matV_{(i),\tau}}\matS_{(i)}}\right)
\end{align}

We then use the optimality gap $\phi_{\tau}$ to upper bound the norm of $\dUt$ and $\dVit$.

\begin{lemma}
\label{lm:gaptodelta}
Under the same conditions as Theorem \ref{thm:linearconvergence}, we have,
\begin{align}
\phi_{\tau}\ge \frac{\htheta \hmu}{16}\left(N\norm{\dUt}_F^2 + \sum_{i=1}^N\norm{\dVit}_F^2\right)
\end{align}
\end{lemma}

\begin{proof}
By definition of the optimality gap $\phi_{\tau}$, we have,
\begin{align*}
&2\phi_{\tau}=\sum_{i=1}^N\left(\tr{\hmatPi_g\matS_{(i)}}+\tr{\hmatPi_{(i)}\matS_{(i)}}-\tr{\Pj{\matU_{\tau}}\matS_{(i)}}-\tr{\Pj{\matV_{(i),\tau}}\matS_{(i)}}\right)\\
&=\sum_{i=1}^N\left(\norm{\matF_{(i)}-\left(\Pj{\matU_{\tau}}+\Pj{\matV_{(i),\tau}}\right)\matF_{(i)}}_F^2-\norm{\matF_{(i)}-\left(\hmatPi_g+\hmatPi_{(i)}\right)\matF_{(i)}}_F^2\right)\\
&=\sum_{i=1}^N\left(\norm{\left(\matI-\Pj{\matU_{\tau}}-\Pj{\matV_{(i),\tau}}\right)\hmatF_{(i)}+\left(\matI-\Pj{\matU_{\tau}}-\Pj{\matV_{(i),\tau}}\right)\matR_{(i)}}_F^2-\norm{\matR_{(i)}}_F^2\right)\\
&=\sum_{i=1}^N\norm{\left(\matI-\Pj{\matU_{\tau}}-\Pj{\matV_{(i),\tau}}\right)\hmatF_{(i)}}_F^2+\underbrace{2\innerp{\left(\matI-\Pj{\matU_{\tau}}-\Pj{\matV_{(i),\tau}}\right)\hmatF_{(i)}}{\left(\matI-\Pj{\matU_{\tau}}-\Pj{\matV_{(i),\tau}}\right)\matR_{(i)}}}_{ \texttt{Term I} }\\
&+\underbrace{\norm{\left(\matI-\Pj{\matU_{\tau}}-\Pj{\matV_{(i),\tau}}\right)\matR_{(i)}}_F^2-\norm{\matR_{(i)}}_F^2}_{\tm{II}}\\
\end{align*}

The above can be further simplified. For \tm{I}, we have,
\begin{align*}
&\sum_{i=1}^N2\innerp{\left(\matI-\Pj{\matU_{\tau}}-\Pj{\matV_{(i),\tau}}\right)\hmatF_{(i)}}{\left(\matI-\Pj{\matU_{\tau}}-\Pj{\matV_{(i),\tau}}\right)\matR_{(i)}}\\
&=\sum_{i=1}^N2\tr{\matR_{(i)}^T\left(\matI-\Pj{\matU_{\tau}}-\Pj{\matV_{(i),\tau}}\right)\hmatF_{(i)}}\\
&=\sum_{i=1}^N2\tr{\matR_{(i)}^T\left(\hmatPi_g+\hmatPi_{(i)}-\Pj{\matU_{\tau}}-\Pj{\matV_{(i),\tau}}\right)\hmatF_{(i)}}\\
&=\sum_{i=1}^N2\tr{\matR_{(i)}^T\left(-\dUt\dUt^T-\dVit\dVit^T\right)\hmatF_{(i)}}\\
\end{align*}
where we have applied the KKT conditions \eqref{eqn:kktwithf} that $\hmatVit^T\hmatF_{(i)}\matR_{(i)}^T=0$ and $\sum_{i=1}^N\hmatUt^T\hmatF_{(i)}\matR_{(i)}^T=0$ in the third equality.

For term \tm{II}, we can also derive
\begin{align*}
&\norm{\left(\matI-\Pj{\matU_{\tau}}-\Pj{\matV_{(i),\tau}}\right)\matR_{(i)}}_F^2-\norm{\matR_{(i)}}_F^2\\
&=\tr{\matR_{(i)}^T\left(\matI-\Pj{\matU_{\tau}}-\Pj{\matV_{(i),\tau}}\right)\matR_{(i)}}-\tr{\matR_{(i)}^T\matR_{(i)}}\\
&=\tr{\matR_{(i)}^T\left(\hmatPi_g+\hmatPi_{(i)}-\Pj{\matU_{\tau}}-\Pj{\matV_{(i),\tau}}\right)\matR_{(i)}}\\
&=-\tr{\matR_{(i)}^T\left(\dUt\dUt^T+\dVit\dVit^T\right)\matR_{(i)}}
\end{align*}
where we used the condition $\hmatUt^T\matR_{(i)}=\hmatVit^T\matR_{(i)}=0$ in the last equality.

Combining these, we have,
\begin{align}
\label{eqn:optimalitygapbound}
&2\phi_{\tau}\notag\\
&=\sum_{i=1}^N\Big(\norm{\left(\matI-\Pj{\matU_{\tau}}-\Pj{\matV_{(i),\tau}}\right)\hmatF_{(i)}}_F^2-\tr{\matR_{(i)}^T\left(\dUt\dUt^T+\dVit\dVit^T\right)\matR_{(i)}}\notag \\
&-2\tr{\matR_{(i)}^T\left(\dUt\dUt^T+\dVit\dVit^T\right)\hmatF_{(i)}}\Big)
\end{align}

From Lemma \ref{lm:hatvariables}, we know that $\hmatF_{(i)}\hmatF_{(i)}^T\succeq \hmu \left(\hmatPi_g+\hmatPi_{(i)}\right)$, thus 
\begin{align*}
&\sum_{i=1}^N\tr{\left(\matI-\Pj{\matU_{\tau}}-\Pj{\matV_{(i),\tau}}\right)\hmatF_{(i)}\hmatF_{(i)}^T}\ge \sum_{i=1}^N\tr{\left(\matI-\Pj{\matU_{\tau}}-\Pj{\matV_{(i),\tau}}\right)\left(\hmatPi_g+\hmatPi_{(i)}\right)}\hmu \\
&\ge \frac{\hmu\htheta}{2}\sum_{i=1}^N\left(r-\tr{\hmatPi_g\Pj{\matUt}}+r-\tr{\hmatPi_{(i)}\Pj{\matVit}}\right) \\
&\ge \frac{\hmu\htheta}{4}\sum_{i=1}^N \left(\norm{\dUt}_F^2+\norm{\dVit}_F^2\right)
\end{align*}
By Cauchy-Schwartz inequality, we have,
\begin{align*}
&\tr{\matR_{(i)}^T\left(\dUt\dUt^T+\dVit\dVit^T\right)\matR_{(i)}}\\
&=\tr{\matR_{(i)}^T\dUt\dUt^T\matR_{(i)}}+\tr{\matR_{(i)}^T\dVit\dVit^T\matR_{(i)}}\\
&\le \norm{\dUt}_F^2\norm{\matR_{(i)}}^2+\norm{\dVit}_F^2\norm{\matR_{(i)}}^2
\end{align*}
and
\begin{align*}
&2\tr{\matR_{(i)}^T\left(\dUt\dUt^T+\dVit\dVit^T\right)\hmatF_{(i)}}\Big)\\
&\le 2\norm{\dUt}_F^2\norm{\matR_{(i)}}\norm{\hmatF_{(i)}}+2\norm{\dVit}_F^2\norm{\matR_{(i)}}\norm{\hmatF_{(i)}}
\end{align*}

Since $\norm{\matR_{(i)}}\le \frac{\hmu\htheta}{64\sqrt{2\gmop}}$ and $\norm{\hmatF_{(i)}}\le \sqrt{2\gmop}$, we have
\begin{align*}
&\norm{\matR_{(i)}}^2+2\norm{\matR_{(i)}}\norm{\hmatF_{(i)}}\\
&\le 3\norm{\matR_{(i)}}\norm{\hmatF_{(i)}}\\
&\le \frac{\hmu\htheta}{8}
\end{align*}
Thus we have,
\begin{align*}
2\phi_{\tau}\ge\frac{\hmu\htheta}{8}\sum_{i=1}^N \left(\norm{\dUt}_F^2+\norm{\dVit}_F^2\right)
\end{align*}
This completes our proof.
\end{proof}

Next we will provide a lemma that characterizes the landscape of the objective.

\begin{lemma}
\label{lm:gradientalign}
Under the same conditions as Theorem \ref{thm:linearconvergence}, we have,
\begin{align*}
&-\innerp{\sum_{i=1}^N\left(\matI-\Pj{\matU_{\tau}}-\Pj{\matV_{(i),\tau}}\right)\matS_{(i)}\matU_{\tau}}{\dU_{\tau}}-\sum_{i=1}^N\innerp{\left(\matI-\Pj{\matU_{\tau}}-\Pj{\matV_{(i),\tau}}\right)\matS_{(i)}\matV_{(i),\tau}}{\dV_{(i),\tau}}\\
&\ge \phi_{\tau} 
\end{align*}
\end{lemma}
\begin{proof}

We first consider the inner product term,
\begin{align*}
&\innerp{\left(\matI-\Pj{\matU_{\tau}}-\Pj{\matV_{(i),\tau}}\right)\matS_{(i)}\matV_{(i),\tau}}{\dV_{(i),\tau}}\\
&=\innerp{\left(\matI-\Pj{\matU_{\tau}}-\Pj{\matV_{(i),\tau}}\right)\matF_{(i)}}{\dV_{(i),\tau}\matV_{(i),\tau}^T\matF_{(i)}}\\
&=\underbrace{\innerp{\left(\matI-\Pj{\matU_{\tau}}-\Pj{\matV_{(i),\tau}}\right)\hmatF_{(i)}}{\dV_{(i),\tau}\matV_{(i),\tau}^T\hmatF_{(i)}}}_{\tm{III}}+\underbrace{\innerp{\left(\matI-\Pj{\matU_{\tau}}-\Pj{\matV_{(i),\tau}}\right)\matR_{(i)}}{\dV_{(i),\tau}\matV_{(i),\tau}^T\matR_{(i)}}}_{\tm{IV}}\\
&+\underbrace{\innerp{\left(\matI-\Pj{\matU_{\tau}}-\Pj{\matV_{(i),\tau}}\right)\hmatF_{(i)}}{\dV_{(i),\tau}\matV_{(i),\tau}^T\matR_{(i)}}}_{\tm{V}}+\underbrace{\innerp{\left(\matI-\Pj{\matU_{\tau}}-\Pj{\matV_{(i),\tau}}\right)\matR_{(i)}}{\dV_{(i),\tau}\matV_{(i),\tau}^T\hmatF_{(i)}}}_{\tm{VI}}\\
\end{align*}

We will analyze each term separately. For \tm{III}, we know that $\dVit\matVit^T=\matVit\matVit^T-\hmatVit\hmatVit-\hmatVit\dVit^T$. Therefore,
\begin{align*}
&\innerp{\left(\matI-\Pj{\matU_{\tau}}-\Pj{\matV_{(i),\tau}}\right)\hmatF_{(i)}}{\dV_{(i),\tau}\matV_{(i),\tau}^T\hmatF_{(i)}}\\
&=\innerp{\left(\matI-\Pj{\matU_{\tau}}-\Pj{\matV_{(i),\tau}}\right)\hmatF_{(i)}}{\left(\Pj{\matVit}-\hmatPi_{(i)}-\hmatVit\dVit^T\right)\hmatF_{(i)}}\\
&=\innerp{\left(\matI-\Pj{\matU_{\tau}}-\Pj{\matV_{(i),\tau}}\right)\hmatF_{(i)}}{\left(\Pj{\matVit}-\hmatPi_{(i)}\right)\hmatF_{(i)}}\\
&-\innerp{\left(\matI-\Pj{\matU_{\tau}}-\Pj{\matV_{(i),\tau}}\right)\hmatF_{(i)}}{\left(\hmatVit\dVit^T\right)\hmatF_{(i)}}\\
&=\innerp{\left(\matI-\Pj{\matU_{\tau}}-\Pj{\matV_{(i),\tau}}\right)\hmatF_{(i)}}{\left(\Pj{\matVit}-\hmatPi_{(i)}\right)\hmatF_{(i)}}+\bm{\epsilon}_{1,(i),\tau}\\
\end{align*}
where $\bm{\epsilon}_{1,(i),\tau}$ is defined as $\bm{\epsilon}_{1,(i),\tau}=-\innerp{\left(\matI-\Pj{\matU_{\tau}}-\Pj{\matV_{(i),\tau}}\right)\hmatF_{(i)}}{\left(\hmatVit\dVit^T\right)\hmatF_{(i)}}$. Its norm is upper bounded by
\begin{align*}
&\abs{\bm{\epsilon}_{1,(i),\tau}}\\
&=\abs{\tr{\hmatF_{(i)}^T\left(\matI-\Pj{\matU_{\tau}}-\Pj{\matV_{(i),\tau}}\right)\hmatVit\dVit^T\hmatF_{(i)}}}\\
&=\abs{\tr{\hmatF_{(i)}^T\left(\matI-\Pj{\matU_{\tau}}-\Pj{\matV_{(i),\tau}}\right)\dVit\dVit^T\hmatF_{(i)}}}\\
&=\abs{\tr{\hmatF_{(i)}^T\left(\hmatPi_{g}+\hmatPi_{(i)}-\Pj{\matU_{\tau}}-\Pj{\matV_{(i),\tau}}\right)\dVit\dVit^T\hmatF_{(i)}}}\\
&\le \norm{\hmatF_{(i)}^T\left(\hmatPi_{g}+\hmatPi_{(i)}-\Pj{\matU_{\tau}}-\Pj{\matV_{(i),\tau}}\right)}_F\norm{\dVit\dVit^T\hmatF_{(i)}}_F\\
&\le \norm{\hmatPi_{g}+\hmatPi_{(i)}-\Pj{\matU_{\tau}}-\Pj{\matV_{(i),\tau}}}_F\norm{\dVit}_F^2\norm{\hmatF_{(i)}}^2\\
&\le 4\sqrt{2}\sqrt{\wzit}\zit \gmop
\end{align*}

For \tm{IV}, we have,
\begin{align*}
&\innerp{\left(\matI-\Pj{\matU_{\tau}}-\Pj{\matV_{(i),\tau}}\right)\matR_{(i)}}{\dV_{(i),\tau}\matV_{(i),\tau}^T\matR_{(i)}}\\
&=\innerp{\left(\matI-\Pj{\matU_{\tau}}-\Pj{\matV_{(i),\tau}}\right)\matR_{(i)}}{\dV_{(i),\tau}\dVit^T\matR_{(i)}}\\
&=\innerp{\matR_{(i)}}{\dV_{(i),\tau}\dVit^T\matR_{(i)}}+\bm{\epsilon}_{2,(i),\tau}
\end{align*}
where $\bm{\epsilon}_{2,(i),\tau}$ is defined as
\begin{align*}
    \bm{\epsilon}_{2,(i),\tau}=-\innerp{\left(\Pj{\matU_{\tau}}+\Pj{\matV_{(i),\tau}}\right)\matR_{(i)}}{\dV_{(i),\tau}\dVit^T\matR_{(i)}}
\end{align*}
and its norm is upper bounded by,
\begin{align*}
&\abs{\bm{\epsilon}_{2,(i),\tau}}_F\\
&=\abs{\tr{\matR_{(i)}^T\left(\Pj{\matU_{\tau}}+\Pj{\matV_{(i),\tau}}\right) \dV_{(i),\tau}\dVit^T\matR_{(i)}}}\\
&=\abs{\tr{\matR_{(i)}^T\left(\Pj{\matU_{\tau}}+\Pj{\matV_{(i),\tau}}-\hmatPi_g-\hmatPi_{(i)}\right) \left(\Pj{\matU_{\tau}}+\Pj{\matV_{(i),\tau}}\right) \dV_{(i),\tau}\dVit^T\matR_{(i)}}}\\
&\le \norm{\matR_{(i)}^T\left(\Pj{\matU_{\tau}}+\Pj{\matV_{(i),\tau}}-\hmatPi_g-\hmatPi_{(i)}\right)}_F\norm{\left(\Pj{\matU_{\tau}}+\Pj{\matV_{(i),\tau}}\right) \dV_{(i),\tau}\dVit^T\matR_{(i)}}_F\\
&\le \norm{\Pj{\matU_{\tau}}+\Pj{\matV_{(i),\tau}}-\hmatPi_g-\hmatPi_{(i)}}_F\norm{\dVit}_F^2\norm{\matR_{(i)}}^2\\
&\le 2\sqrt{2}\sqrt{\wzit}\zit \gmop
\end{align*}

For the \tm{V}, 
\begin{align*}
&\innerp{\left(\matI-\Pj{\matU_{\tau}}-\Pj{\matV_{(i),\tau}}\right)\hmatF_{(i)}}{\dV_{(i),\tau}\matV_{(i),\tau}^T\matR_{(i)}}\\
&=\innerp{\left(\matI-\Pj{\matU_{\tau}}-\Pj{\matV_{(i),\tau}}\right)\hmatF_{(i)}}{\dV_{(i),\tau}\dVit^T\matR_{(i)}}\\
&=\bm{\epsilon}_{3,(i),\tau}
\end{align*}
where the norm of $\bm{\epsilon}_{3,(i),\tau}$ is upper bounded by,
\begin{align*}
&\abs{\bm{\epsilon}_{3,(i),\tau}}\\
&=\abs{\tr{\hmatF_{(i)}^T\left(\matI-\Pj{\matU_{\tau}}-\Pj{\matV_{(i),\tau}}\right)\dV_{(i),\tau}\dVit^T\matR_{(i)}}}\\
&=\abs{\tr{\hmatF_{(i)}^T\left(\hmatPi_g+\hmatPi_{(i)}-\Pj{\matU_{\tau}}-\Pj{\matV_{(i),\tau}}\right)\dV_{(i),\tau}\dVit^T\matR_{(i)}}}\\
&\le \norm{\hmatF_{(i)}^T\left(\hmatPi_g+\hmatPi_{(i)}-\Pj{\matU_{\tau}}-\Pj{\matV_{(i),\tau}}\right)}_F\norm{\dV_{(i),\tau}\dVit^T\matR_{(i)}}_F\\
&\le \norm{\Pj{\matU_{\tau}}+\Pj{\matV_{(i),\tau}}-\hmatPi_g-\hmatPi_{(i)}}_F\norm{\dVit}_F^2\norm{\matR_{(i)}}\norm{\hmatF_{(i)}}\\
&\le 4\sqrt{2}\sqrt{\wzit}\zit \gmop
\end{align*}

For \tm{VI}, 
\begin{align*}
&\innerp{\left(\matI-\Pj{\matU_{\tau}}-\Pj{\matV_{(i),\tau}}\right)\matR_{(i)}}{\dV_{(i),\tau}\matV_{(i),\tau}^T\hmatF_{(i)}}\\
&=\innerp{\left(\matI-\Pj{\matU_{\tau}}-\Pj{\matV_{(i),\tau}}\right)\matR_{(i)}}{\dV_{(i),\tau}\dVit^T\hmatF_{(i)}}\\
&=\innerp{\matR_{(i)}}{\dV_{(i),\tau}\dVit^T\hmatF_{(i)}}+\bm{\epsilon}_{4,(i),\tau}
\end{align*}
where $\bm{\epsilon}_{4,(i),\tau}$ is defined as
\begin{align*}
\bm{\epsilon}_{4,(i),\tau}= -\innerp{\left(\Pj{\matU_{\tau}}+\Pj{\matV_{(i),\tau}}\right)\matR_{(i)}}{\dV_{(i),\tau}\dVit^T\hmatF_{(i)}}  
\end{align*}
Its norm is upper bounded by,
\begin{align*}
&\abs{\bm{\epsilon}_{4,(i),\tau}}\\
&=\abs{\tr{\matR_{(i)}^T\left(\Pj{\matU_{\tau}}+\Pj{\matV_{(i),\tau}}\right)\dV_{(i),\tau}\dVit^T\hmatF_{(i)}}}\\
&=\abs{\tr{\matR_{(i)}^T\left(-\matPi_g-\matPi_{(i)}+\Pj{\matU_{\tau}}+\Pj{\matV_{(i),\tau}}\right)\dV_{(i),\tau}\dVit^T\hmatF_{(i)}}}\\
&\le \norm{\dVit}_F^2\norm{\matR_{(i)}}\norm{\hmatF_{(i)}}\norm{-\matPi_g-\matPi_{(i)}+\Pj{\matU_{\tau}}+\Pj{\matV_{(i),\tau}}}_F\\
&\le 4\sqrt{2} \sqrt{\wzit}\zit\gmop
\end{align*}

Combining these terms, we have,
\begin{align}
\label{eqn:grddv}
&\innerp{\left(\matI-\Pj{\matU_{\tau}}-\Pj{\matV_{(i),\tau}}\right)\matS_{(i)}\matV_{(i),\tau}}{\dV_{(i),\tau}}\notag\\
&=\innerp{\left(\matI-\Pj{\matU_{\tau}}-\Pj{\matV_{(i),\tau}}\right)\hmatF_{(i)}}{\left(\Pj{\matVit}-\hmatPi_{(i)}\right)\hmatF_{(i)}}+\innerp{\matR_{(i)}}{\dVit\dVit^T\matR_{(i)}}\notag\\
&+\innerp{\matR_{(i)}}{\dVit\dVit^T\hmatF_{(i)}}\notag\\
&+\bm{\epsilon}_{1,(i),\tau}+\bm{\epsilon}_{2,(i),\tau}+\bm{\epsilon}_{3,(i),\tau}+\bm{\epsilon}_{4,(i),\tau}
\end{align}

Similarly, we can calculate the inner product term,
\begin{align*}
&\sum_{i=1}^N\innerp{\left(\matI-\Pj{\matU_{\tau}}-\Pj{\matV_{(i),\tau}}\right)\matS_{(i)}\matU_{\tau}}{\dU_{\tau}}\\
&=\sum_{i=1}^N\innerp{\left(\matI-\Pj{\matU_{\tau}}-\Pj{\matV_{(i),\tau}}\right)\matF_{(i)}}{\dU_{\tau}\matUt^T\matF_{(i)}}\\
&=\sum_{i=1}^N\underbrace{\innerp{\left(\matI-\Pj{\matU_{\tau}}-\Pj{\matV_{(i),\tau}}\right)\hmatF_{(i)}}{\dU_{\tau}\matUt^T\hmatF_{(i)}}}_{\tm{VII}}+\underbrace{\innerp{\left(\matI-\Pj{\matU_{\tau}}-\Pj{\matV_{(i),\tau}}\right)\matR_{(i)}}{\dU_{\tau}\matUt^T\matR_{(i)}}}_{\tm{VIII}}\\
&+\underbrace{\innerp{\left(\matI-\Pj{\matU_{\tau}}-\Pj{\matV_{(i),\tau}}\right)\hmatF_{(i)}}{\dU_{\tau}\matUt^T\matR_{(i)}}}_{\tm{IX}}+\underbrace{\innerp{\left(\matI-\Pj{\matU_{\tau}}-\Pj{\matV_{(i),\tau}}\right)\matR_{(i)}}{\dU_{\tau}\matUt^T\hmatF_{(i)}}}_{\tm{X}}
\end{align*}

For the \tm{VII}, we can simplify it as,
\begin{align*}
&\innerp{\left(\matI-\Pj{\matU_{\tau}}-\Pj{\matV_{(i),\tau}}\right)\hmatF_{(i)}}{\dU_{\tau}\matUt^T\hmatF_{(i)}}\\
&=\innerp{\left(\matI-\Pj{\matU_{\tau}}-\Pj{\matV_{(i),\tau}}\right)\hmatF_{(i)}}{\left(\Pj{\matUt}-\hmatPi_g\right)\hmatF_{(i)}}+\bm{\epsilon}_{5,(i),\tau}
\end{align*}
where $\bm{\epsilon}_{5,(i),\tau}$ is defined as,
\begin{align*}
&\bm{\epsilon}_{5,(i),\tau}=-\innerp{\left(\matI-\Pj{\matU_{\tau}}-\Pj{\matV_{(i),\tau}}\right)\hmatF_{(i)}}{\hmatUt\dUt^T\hmatF_{(i)}}\\
&=\innerp{\left(\matI-\Pj{\matU_{\tau}}-\Pj{\matV_{(i),\tau}}\right)\hmatF_{(i)}}{\dUt\dUt^T\hmatF_{(i)}}\\
\end{align*}
Its norm is upper bounded by
\begin{align*}
&\abs{\bm{\epsilon}_{5,(i),\tau}}_F\\
&=\abs{\tr{\matF_{(i)}^T\left(\matI-\Pj{\matU_{\tau}}-\Pj{\matV_{(i),\tau}}\right)\dUt\dUt^T\hmatF_{(i)} }}\\
&=\abs{\tr{\hmatF_{(i)}^T\left(\hmatPi_g+\hmatPi_{(i)}-\Pj{\matU_{\tau}}-\Pj{\matV_{(i),\tau}}\right)\dUt\dUt^T\hmatF_{(i)} }}\\
&\le \norm{\hmatF_{(i)}^T\left(\hmatPi_g+\hmatPi_{(i)}-\Pj{\matU_{\tau}}-\Pj{\matV_{(i),\tau}}\right)}_F\norm{\dUt\dUt^T\hmatF_{(i)}}_F\\
&\le \norm{\hmatPi_g+\hmatPi_{(i)}-\Pj{\matU_{\tau}}-\Pj{\matV_{(i),\tau}}}_F\norm{\dUt}_F^2\norm{\hmatF_{(i)}}^2\\
&\le 4\sqrt{2}\sqrt{\wzit}\zzt \gmop
\end{align*}

For \tm{VIII}, also we have,
\begin{align*}
&\innerp{\left(\matI-\Pj{\matU_{\tau}}-\Pj{\matV_{(i),\tau}}\right)\matR_{(i)}}{\dU_{\tau}\matUt^T\matR_{(i)}}\\
&\innerp{\left(\matI-\Pj{\matU_{\tau}}-\Pj{\matV_{(i),\tau}}\right)\matR_{(i)}}{\dU_{\tau}\dUt^T\matR_{(i)}}\\
&=\innerp{\matR_{(i)}}{\dU_{\tau}\dUt^T\matR_{(i)}}+\bm{\epsilon}_{6,(i),\tau}
\end{align*}
where $\bm{\epsilon}_{6,(i),\tau}$ is defined as,
\begin{align*}
\bm{\epsilon}_{6,(i),\tau}=-\innerp{\left(\Pj{\matU_{\tau}}+\Pj{\matV_{(i),\tau}}\right)\matR_{(i)}}{\dUt\dUt^T\matR_{(i)}}
\end{align*}
and its norm is upper bounded by,
\begin{align*}
&\abs{\bm{\epsilon}_{6,(i),\tau}}_F\\
&=\abs{\tr{\matR_{(i)}^T\left(\Pj{\matU_{\tau}}+\Pj{\matV_{(i),\tau}}\right)\dUt\dUt^T\matR_{(i)}}}\\
&=\abs{\tr{\matR_{(i)}^T\left(\Pj{\matU_{\tau}}+\Pj{\matV_{(i),\tau}}-\hmatPi_g-\hmatPi_{(i)}\right)\left(\Pj{\matU_{\tau}}+\Pj{\matV_{(i),\tau}}\right)\dUt\dUt^T\matR_{(i)}}}\\
&\le \norm{\matR_{(i)}^T\left(\Pj{\matU_{\tau}}+\Pj{\matV_{(i),\tau}}-\hmatPi_g-\hmatPi_{(i)}\right)}_F\norm{\left(\Pj{\matU_{\tau}}+\Pj{\matV_{(i),\tau}}\right)\dUt\dUt^T\matR_{(i)}}_F\\
&\le \norm{\Pj{\matU_{\tau}}+\Pj{\matV_{(i),\tau}}-\hmatPi_g-\hmatPi_{(i)}}_F\norm{\dUt}_F^2\norm{\matR_{(i)}}^2\\
&\le 2\sqrt{2}\sqrt{\wzit}\zzt \gmop
\end{align*}

For \tm{IX}, we have,
\begin{align*}
&\innerp{\left(\matI-\Pj{\matU_{\tau}}-\Pj{\matV_{(i),\tau}}\right)\hmatF_{(i)}}{\dU_{\tau}\matUt^T\matR_{(i)}}\\
&=\innerp{\left(\matI-\Pj{\matU_{\tau}}-\Pj{\matV_{(i),\tau}}\right)\hmatF_{(i)}}{\dU_{\tau}\dUt^T\matR_{(i)}}\\
&=\bm{\epsilon}_{7,(i),\tau}
\end{align*}
where the norm of $\bm{\epsilon}_{7,(i),\tau}$ is upper bounded by,
\begin{align*}
&\abs{\bm{\epsilon}_{7,(i),\tau}}\\
&=\abs{\tr{\hmatF_{(i)}^T\left(\matI-\Pj{\matU_{\tau}}-\Pj{\matV_{(i),\tau}}\right)\dU_{\tau}\dUt^T\matR_{(i)}}}\\
&=\abs{\tr{\hmatF_{(i)}^T\left(\hmatPi_g+\hmatPi_{(i)}-\Pj{\matU_{\tau}}-\Pj{\matV_{(i),\tau}}\right)\dU_{\tau}\dUt^T\matR_{(i)}}}\\
&\le \norm{\hmatF_{(i)}^T\left(\hmatPi_g+\hmatPi_{(i)}-\Pj{\matU_{\tau}}-\Pj{\matV_{(i),\tau}}\right)}_F\norm{\dU_{\tau}\dUt^T\matR_{(i)}}_F\\
&\le \norm{\hmatPi_g+\hmatPi_{(i)}-\Pj{\matU_{\tau}}-\Pj{\matV_{(i),\tau}}}_F\norm{\dUt}_F^2\norm{\matR_{(i)}}\norm{\hmatF_{(i)}}\\
&\le 4\sqrt{2}\sqrt{\wzit}\zzt \gmop
\end{align*}

Finally, for \tm{X}, we have,
\begin{align*}
&\sum_{i=1}^N\innerp{\left(\matI-\Pj{\matU_{\tau}}-\Pj{\matV_{(i),\tau}}\right)\matR_{(i)}}{\dU_{\tau}\matUt^T\hmatF_{(i)}}\\
&=\sum_{i=1}^N\innerp{\left(\matI-\Pj{\matU_{\tau}}-\Pj{\matV_{(i),\tau}}\right)\matR_{(i)}}{\dU_{\tau}\hmatUt^T\hmatF_{(i)}}+\innerp{\left(\matI-\Pj{\matU_{\tau}}-\Pj{\matV_{(i),\tau}}\right)\matR_{(i)}}{\dU_{\tau}\dUt^T\hmatF_{(i)}}\\
&=\sum_{i=1}^N\innerp{\left(\matI-\Pj{\matU_{\tau}}-\Pj{\matV_{(i),\tau}}\right)\matR_{(i)}}{\dU_{\tau}\dUt^T\hmatF_{(i)}}-\innerp{\Pj{\matVit}\matR_{(i)}}{\dUt\hmatUt\hmatF_{(i)}}\\
&=\sum_{i=1}^N \innerp{\matR_{(i)}}{\dU_{\tau}\dUt^T\hmatF_{(i)}}+\bm{\epsilon}_{8,(i),\tau}
\end{align*}
where $\bm{\epsilon}_{8,(i),\tau}$ is defined as,
\begin{align*}
\bm{\epsilon}_{8,(i),\tau}=\innerp{\left(-\Pj{\matU_{\tau}}-\Pj{\matV_{(i),\tau}}\right)\matR_{(i)}}{\dU_{\tau}\dUt^T\hmatF_{(i)}}-\innerp{\Pj{\matVit}\matR_{(i)}}{\dUt\hmatUt\hmatF_{(i)}}
\end{align*}
Its norm is upper bounded by
\begin{align*}
&\abs{\bm{\epsilon}_{8,(i),\tau}}\\
&\le \abs{\tr{\matR_{(i)}^T\left(-\Pj{\matU_{\tau}}-\Pj{\matV_{(i),\tau}}\right)\dU_{\tau}\dUt^T\hmatF_{(i)}}}+\abs{\tr{\matR_{(i)}^T\Pj{\matVit}\dUt\hmatUt\hmatF_{(i)} }}\\
&\le \abs{\tr{\matR_{(i)}^T\left(-\Pj{\matU_{\tau}}-\Pj{\matV_{(i),\tau}}\right)\dU_{\tau}\dUt^T\hmatF_{(i)}}}\\
&+\abs{\tr{\matR_{(i)}^T\left(\Pj{\matVit}-\hmatPi_{(i)}\right)\Pj{\matVit}\dUt\hmatUt\hmatF_{(i)} }}\\
&\le 4\sqrt{2}\sqrt{\wzit}\zzt\gmop+\norm{\Pj{\matVit}-\hmatPi_{(i)}}_F\norm{\dUt}_F\norm{\matR_{(i)}}\norm{\hmatF_{(i)}}
\end{align*}

Combining them, we have,
\begin{align}
\label{eqn:grddu}
&\sum_{i=1}^N\innerp{\left(\matI-\Pj{\matU_{\tau}}-\Pj{\matV_{(i),\tau}}\right)\matS_{(i)}\matU_{\tau}}{\dU_{\tau}}\notag\\
&=\sum_{i=1}^N\innerp{\left(\matI-\Pj{\matU_{\tau}}-\Pj{\matV_{(i),\tau}}\right)\hmatF_{(i)}}{\left(\Pj{\matUt}-\hmatPi_g\right)\hmatF_{(i)}}+\innerp{\matR_{(i)}}{\dUt\dUt^T\matR_{(i)}}\notag\\
&+\innerp{\matR_{(i)}}{\dUt\dUt^T\hmatF_{(i)}}\notag\\
&+\bm{\epsilon}_{5,(i),\tau}+\bm{\epsilon}_{6,(i),\tau}+\bm{\epsilon}_{7,(i),\tau}+\bm{\epsilon}_{8,(i),\tau}
\end{align}

Comparing \eqref{eqn:optimalitygapbound}, \eqref{eqn:grddv}, and \eqref{eqn:grddu}, we know that,
\begin{align*}
&-\innerp{\sum_{i=1}^N\left(\matI-\Pj{\matU_{\tau}}-\Pj{\matV_{(i),\tau}}\right)\matS_{(i)}\matU_{\tau}}{\dU_{\tau}}-\sum_{i=1}^N\innerp{\left(\matI-\Pj{\matU_{\tau}}-\Pj{\matV_{(i),\tau}}\right)\matS_{(i)}\matV_{(i),\tau}}{\dV_{(i),\tau}}\\
&=2\phi_{\tau}-\sum_{i=1}^N\tr{\matR^T_{(i)}\left(\dUt\dUt^T+\dVit\dVit^T\right)\matF_{(i)}}-\sum_{i=1}^N\sum_{\alpha=1}^8\bm{\epsilon}_{\alpha,(i),\tau}
\end{align*}
From the estimated upper bounds of $\abs{\bm{\epsilon}_{1,(i),\tau}}$ to $\abs{\bm{\epsilon}_{8,(i),\tau}}$, we know that,
\begin{align*}
&\abs{\sum_{i=1}^N\tr{\matR^T_{(i)}\left(\dUt\dUt^T+\dVit\dVit^T\right)\matF_{(i)}}}+\abs{\sum_{i=1}^N\sum_{\alpha=1}^8\bm{\epsilon}_{\alpha,(i),\tau}}_F\\
&\le 2\sum_{i=1}^N \left(\norm{\dUt}_F^2+\norm{\dVit}_F^2\right)\norm{\matR_{(i)}}\norm{\hmatF_{(i)}}+14\sqrt{2}\gmop\sqrt{\wzit}\left(\zit+\zzt\right)\\
\end{align*}

From Lemma \ref{lm:hatvariables}, we know that $\norm{\matR_{(i)}}\norm{\hmatF_{(i)}}\le \frac{\hmu\htheta}{64}$, we can thus upper bound the first term as,
\begin{align*}
&2\sum_{i=1}^N \left(\norm{\dUt}_F^2+\norm{\dVit}_F^2\right)\norm{\matR_{(i)}}\norm{\hmatF_{(i)}}\\
&\le \sum_{i=1}^N \left(\norm{\dUt}_F^2+\norm{\dVit}_F^2\right)\frac{\hmu\htheta}{32}\\
&\le \phi_{\tau}/2
\end{align*}
where the last inequality comes from Lemma \ref{lm:gaptodelta}.

Also, the second term can be bounded as,
\begin{align*}
&14\sqrt{2}\gmop\sum_{i=1}^N\sqrt{\wzit}\left(\zit+\zzt\right)\\
&\le 14\sqrt{2}\gmop\sum_{i=1}^N\sqrt{\wzit}\sqrt{\zit+\zzt}\sqrt{\sum_{j=1}^N\zeta_{(j),\tau}+\zeta_{(0),\tau}}\\
&\le 14\sqrt{2}\gmop\sqrt{\sum_{i=1}^N\wzit}\sqrt{\sum_{i=1}^N\zit+\zzt}\sqrt{\sum_{j=1}^N\zeta_{(j),\tau}+\zeta_{(0),\tau}}\\
&\le 14\sqrt{2}\gmop\left(\sum_{i=1}^N\zit+\zzt\right)^{1.5}\\
&\le 14\sqrt{2}\gmop\left(2\sum_{i=1}^N\norm{\dUt}_F^2+\norm{\dVit}_F^2\right)^{1.5}\\
&\le 56\gmop \left(\frac{\phi_{\tau}}{\frac{\hmu\htheta}{16}}\right)^{3/2}\le \phi_{\tau}/2
\end{align*}
where the second inequality comes from Cauchy-Schwrtz inequality, the third inequality comes from Lemma \ref{lm:sumspacetoindividualspace}, the fourth inequality comes from Lemma \ref{lm:udifffnormupper}, the fifth inequality comes from Lemma \ref{lm:gaptodelta}, and the last inequality comes from the fact that $\phi_{\tau}\le\frac{\hmu^3\htheta^3}{51380224\gmop^2}$.

This completes the proof.

\end{proof}
Combining Lemma \ref{lm:gaptodelta} with Lemma \ref{lm:gradientalign}, we can prove the following PL-inequality.
\begin{lemma}
\label{lm:plinequality}
(Lemma \ref{lm:mainpaperplinequality} in the main paper)
Under the same conditions as Theorem \ref{thm:linearconvergence}, we have
\begin{align*}
    N\norm{\square \matUt}_F^2+\sum_{i=1}^N\norm{\square \matVit}_F^2\ge \frac{\htheta\hmu}{16}\phi_{\tau}
\end{align*}
\end{lemma}
\begin{proof}
From Cauchy-Schwartz inequality, we know that,
\begin{align*}
&-N\innerp{\square\matUt}{\dUt}-\sum_{i=1}^N\innerp{\square \matVit}{\dVit}\\
&\le N\norm{\square\matUt}_F\norm{\dUt}+\sum_{i=1}^N\norm{\square \matVit}_F\norm{\dVit}_F\\
&\le \sqrt{N\norm{\square\matUt}_F^2+\sum_{i=1}^N\norm{\square \matVit}_F^2}\sqrt{N\norm{\dUt}_F^2+\sum_{i=1}^N\norm{\dVit}_F^2}\\
&\le \sqrt{N\norm{\square\matUt}_F^2+\sum_{i=1}^N\norm{\square \matVit}_F^2}\sqrt{\frac{\phi_{\tau}}{\frac{\hmu\htheta}{16}}}
\end{align*}
where the last inequality comes from Lemma \ref{lm:gaptodelta}. 

From Lemma \ref{lm:gradientalign}, we know,
\begin{align*}
&-N\innerp{\square\matUt}{\dUt}-\sum_{i=1}^N\innerp{\square \matVit}{\dVit}\\
&\ge \phi_{\tau}
\end{align*}

Combining them, we have,
\begin{align*}
N\norm{\square\matUt}_F^2+\sum_{i=1}^N\norm{\square \matVit}_F^2\ge \frac{\htheta\hmu}{16}\phi_{\tau}
\end{align*}
\end{proof}

Finally, we come to the proof of Theorem \ref{thm:linearconvergence}:
\begin{proof}
Combining Lemma \ref{lm:plinequality} with equation \eqref{eqn:decreaseinequality}, we know:
$$
\begin{aligned}
-f\left(\bm{U}_{\tau+1},\{\bm{V}_{(i),\tau+1}\}\right)\le -f\left(\bm{U}_{\tau},\{\bm{V}_{(i),\tau}\}\right)-\frac{\eta}{2}\frac{\hmu\htheta}{16}
\end{aligned}
$$
We add $f^{\star}$ on both sides. Since $\phi_{\tau}=f^{\star}--f\left(\bm{U}_{\tau},\{\bm{V}_{(i),\tau}\}\right)$, we have:
$$
\begin{aligned}
&\phi_{\tau+1}\\
&\le \phi_{\tau}-\frac{\eta}{2}\frac{\hmu\htheta}{16}\phi_{\tau}\\
&=\left(1-\frac{\eta\theta\mu}{64}\right)\phi_{\tau}
\end{aligned}
$$
Thus $\phi_{\tau}$ decreases linearly with $\tau$. From Lemma \ref{lm:gaptodelta} and Lemma \ref{lm:udifffnormupper}, we can show $\norm{\Pj{\matUt}-\hmatPi_g}_F$ and $\norm{\Pj{\matVit}-\hmatPi_{(i)}}_F$ decrease linearly to zero as well.

This completes the proof of Theorem \ref{thm:linearconvergence}.
\end{proof}

\section{Some examples of generalized retraction}
\label{lm:exampleofgr}
In this section, we discuss two popular normalization schemes: polar projection and QR decomposition. We prove that both fit the Definition \ref{def:generalizedretraction} of a generalized retraction. The analysis in this section is inspired by \citet{qrisretraction}. However, \citet{qrisretraction} only considers conventional retraction operations, while we consider generalized retractions.

\subsection{Polar projection}
\label{ap:polarisgr}
Polar projection is defined as:
$$
\polarof{\bm{U}}{\bm{\xi}}=\left(\bm{U}+\bm{\xi}\right)\left(\bm{I}+\bm{U}^T\bm{\xi}+\bm{\xi}^T\bm{U}+\bm{\xi}^T\bm{\xi}\right)^{-\frac{1}{2}}
$$
Then obviously, 
$$
col(\grof{\bm{U}}{\bm{\xi}})=col\left(\bm{U}+\bm{\xi}\right)
$$
To verify the second property, we can calculate the difference between $\grof{\bm{U}}{\bm{\xi}}$ and $\bm{U}+\mathcal{P}_{\mathcal{T}_{\bm{U}}}(\bm{\xi})$. 

Notice that
$$
\begin{aligned}
&\left(\bm{I}+\bm{U}^T\bm{\xi}+\bm{\xi}^T\bm{U}+\bm{\xi}^T\bm{\xi}\right)^{-\frac{1}{2}}\\
&=\bm{I}-\frac{1}{2}\bm{U}^T\bm{\xi}-\frac{1}{2}\bm{\xi}^T\bm{U}-\frac{1}{2}\bm{\xi}^T\bm{\xi}+\sum_{n=2}^{\infty}\left(\bm{U}^T\bm{\xi}+\bm{\xi}^T\bm{U}+\bm{\xi}^T\bm{\xi}\right)^n\frac{(2n-1)!!(-1)^n}{2^nn!}
\end{aligned}
$$
We have
\begin{equation}
\begin{aligned}
&\grof{\bm{U}}{\bm{\xi}}-(\bm{U}+\mathcal{P}_{\mathcal{T}_{\bm{U}}}(\bm{\xi}))\\
&= \left(\bm{U}+\bm{\xi}\right)\left(\bm{I}-\frac{1}{2}\bm{U}^T\bm{\xi}-\frac{1}{2}\bm{\xi}^T\bm{U}-\frac{1}{2}\bm{\xi}^T\bm{\xi}+\sum_{n=2}^{\infty}\left(\bm{U}^T\bm{\xi}+\bm{\xi}^T\bm{U}+\bm{\xi}^T\bm{\xi}\right)^n\frac{(2n-1)!!(-1)^n}{2^nn!}\right)\\
&-\left(\bm{U}-\bm{\xi}+\frac{1}{2}\bm{U}^T\bm{\xi}
+\frac{1}{2}\bm{\xi}^T\bm{U}\right)\\
&=\left(-\frac{1}{2}\bm{\xi}^T\bm{U}^T\bm{\xi}-\frac{1}{2}\bm{\xi}^T\bm{\xi}^T\bm{U}-\frac{1}{2}\bm{\xi}^T\bm{\xi}^T\bm{\xi}+\left(\bm{U}+\bm{\xi}\right)\sum_{n=2}^{\infty}\left(\bm{U}^T\bm{\xi}+\bm{\xi}^T\bm{U}+\bm{\xi}^T\bm{\xi}\right)^n\frac{(2n-1)!!(-1)^n}{2^nn!}\right)
\end{aligned}
\end{equation}

By the property of Frobinius norm:
$$
\begin{aligned}
&\norm{\bm{U}^T\bm{\xi}+\bm{\xi}^T\bm{U}+\bm{\xi}^T\bm{\xi}}_F\\
&\le 2\norm{\bm{\xi}}_F\norm{\bm{U}^T}_{op}+\norm{\bm{\xi}}_F^2\\
&= 2\norm{\bm{\xi}}_F+\norm{\bm{\xi}}_F^2\\
&\le 3\norm{\bm{\xi}}_F
\end{aligned}
$$

Therefore, 
\begin{equation*}
\begin{aligned}
\label{eqn:polarprojectionproperty}
&\norm{\grof{\bm{U}}{\bm{\xi}}-(\bm{U}+\mathcal{P}_{\mathcal{T}_{\bm{U}}}(\bm{\xi}))}_F\\
&\le \frac{1}{2}\norm{\bm{\xi}^T\bm{U}^T\bm{\xi}}_F+\frac{1}{2}\norm{\bm{\xi}^T\bm{\xi}^T\bm{U}}_F+\frac{1}{2}\norm{\bm{\xi}^T\bm{\xi}^T\bm{\xi}}_F\\
&+\norm{\bm{U}+\bm{\xi}}_F\sum_{n=2}^{\infty}\norm{\bm{U}^T\bm{\xi}+\bm{\xi}^T\bm{U}+\bm{\xi}^T\bm{\xi}}_F^n\frac{(2n-1)!!}{2^nn!}\\
&\le \norm{\bm{\xi}}_F^2+\frac{1}{2}\norm{\bm{\xi}}_F^3+\left(1+\norm{\bm{\xi}}_F\right)\sum_{n=2}^{\infty}\left(3\norm{\bm{\xi}}_F\right)^n\frac{(2n-1)!!}{2^nn!}\\
&= \norm{\bm{\xi}}_F^2+\frac{1}{2}\norm{\bm{\xi}}_F^3+\left(1+\norm{\bm{\xi}}_F\right)\frac{3\left(3\norm{\bm{\xi}}_F\right)^2+\left(3\norm{\bm{\xi}}_F\right)^3}{2}\\
&\le M_{polar} \norm{\bm{\xi}}_F^2
\end{aligned}
\end{equation*}
where $M_{polar}=\frac{253}{8}$. We applied the following summation in the derivation:
$$
\begin{aligned}
&\sum_{n=2}^{\infty}x^n\frac{(2n-1)!!}{2^nn!}\\
&=(1-x)^{-1/2}-(1+\frac{x}{2})\\
&=\frac{3x^2+x^3}{\sqrt{1-x}+(1-x)(1+\frac{x}{2})}\\
\end{aligned}
$$
and the fact that $x\le \frac{1}{2}$ in the third inequality.

Since 
$$
\begin{aligned}
&\norm{\bm{\xi}}_F^2\\
&=\norm{\mathcal{P}_{\mathcal{T}_{\bm{U}}}(\bm{\xi})+\mathcal{P}_{\mathcal{N}_{\bm{U}}}(\bm{\xi})}_F^2\\
&\le 2\norm{\mathcal{P}_{\mathcal{T}_{\bm{U}}}(\bm{\xi})}_F^2+2\norm{\mathcal{P}_{\mathcal{N}_{\bm{U}}}(\bm{\xi})}_F^2\\
&\le 2\norm{\mathcal{P}_{\mathcal{T}_{\bm{U}}}(\bm{\xi})}_F^2+\norm{\mathcal{P}_{\mathcal{N}_{\bm{U}}}(\bm{\xi})}_F\\
\end{aligned}
$$
We prove that polar projection is a generalized retraction with $M_1=\frac{253}{4}$ and $M_2=\frac{253}{8}$.

Polar projection can be implemented via singular value decomposition of $\bm{U}+\bm{\xi}$, whose computational complexity is $O(dr^2+r^3)$ \citep{mmalgorithm}.
\subsection{QR decomposition}
\label{ap:qrisgr}
QR decomposition is an extension of Gram-Schmidt orthonormalization. For a matrix $\bm{U}+\bm{\xi}\in \mathbb{R}^{d\times r}$, the method finds a orthogonal matrix $\bm{Q}\in\mathbb{R}^{d\times r} $ and an upper triangular matrix $\bm{R}\in\mathbb{R}^{r\times r}$, such that $\bm{Q}\bm{R}=\bm{U}+\bm{\xi}$. Then $\qrof{\bm{U}}{\bm{\xi}}=\bm{Q}$. 

In this section, we will prove that QR decomposition is a generalized retraction for $\norm{\bm{\xi}}\le \frac{1}{4}$. Our proof in this section extends that in \citet{qrisretraction}.

Notice that $col(\bm{U}+\bm{\xi})=col(\bm{Q})$, thus the first property of generalized retraction in Definition \ref{def:generalizedretraction} is satisfied. We will prove the second in the case $M_3=\frac{1}{4}$

Similar to \citet{qrisretraction}, we define $\bm{U}(t)=\bm{U}+t\bm{\xi}$, for $t\in[0,1]$, and use $\bm{Q}(t)\bm{R}(t)$ to denote the QR decomposition of $\bm{U}(t)$. Then:
$$
\begin{aligned}
&\norm{\qrof{\bm{U}}{\bm{\xi}}-\left(\bm{U}+\bm{\xi}\right)}_F\\
&=\norm{\bm{Q}(1)-\bm{Q}(1)\bm{R}(1)}_F=\norm{\bm{Q}(1)\left(\bm{I}-\bm{R}(1)\right)}_F\\
&\le \norm{\bm{R}(1)-\bm{R}(0)}_F\\
&=\norm{\int_{0}^1\bm{R}^{'}(t)dt}_F\\
&\le \int_{0}^1\norm{\bm{R}^{'}(t)}_Fdt\\
\end{aligned}
$$
Since $\bm{Q}(t)\bm{R}(t)$ is the QR decomposition of $\bm{U}(t)$, we have:
\begin{equation}
\label{eqn:rtr}
\bm{R}^T(t)\bm{R}(t)=\bm{U}^T(t)\bm{U}(t)=\bm{U}^T\bm{U}+t\bm{\xi}^T\bm{U}+t\bm{U}^T\bm{\xi}+t^2\bm{\xi}^T\bm{\xi}    
\end{equation}

Taking the derivative with respect to $t$ on both sides, we have:
$$
\begin{aligned}
&\left(\bm{R}^{'}\right)^T(t)\bm{R}(t)+\bm{R}^T(t)\bm{R}^{'}(t)\\
&=\bm{\xi}^T\bm{U}+\bm{U}^T\bm{\xi}+2t\bm{\xi}^T\bm{\xi}\\
\end{aligned}
$$
We can left multiply both sides by $\left(\bm{R}^{-1}\right)^T(t)$, and right multiply both sides by $\bm{R}^{-1}(t)$, to obtain:
$$
\begin{aligned}
&\left(\bm{R}^{-1}\right)^T(t)\left(\bm{R}^{'}\right)^T(t)+\bm{R}^{'}(t)\bm{R}^{-1}(t)=\left(\bm{R}^{-1}\right)^T(t)\left(\bm{\xi}^T\bm{U}+\bm{U}^T\bm{\xi}+2t\bm{\xi}^T\bm{\xi}\right)\bm{R}^{-1}(t)\\
\end{aligned}
$$
Since on the left hand side, $\bm{R}^{'}(t)\bm{R}^{-1}(t)$ is an upper triangular matrix, its transpose $\left(\bm{R}^{-1}\right)^T(t)\left(\bm{R}^{'}\right)^T(t)$ is a lower triangular matrix, we have:
$$
\bm{R}^{'}(t)\bm{R}^{-1}(t)=\text{up}\left[\left(\bm{R}^{-1}\right)^T(t)\left(\bm{\xi}^T\bm{U}+\bm{U}^T\bm{\xi}+2t\bm{\xi}^T\bm{\xi}\right)\bm{R}^{-1}(t)\right]
$$
where for $\bm{C}\in \mathbb{R}^{d\times d}$, $\text{up}\left[\cdot\right]$ is defined as:
$$
\text{up}\left[\bm{C}\right]_{ij}=\left\{\begin{aligned}
& C_{ij} \text{,  if  } j>i\\
&\frac{1}{2} C_{ii} \text{,  if  } j=i\\
& 0\text{,  if  } j<i\\
\end{aligned}\right.
$$

Therefore,
$$
\bm{R}^{'}(t)=\text{up}\left[\left(\bm{R}^{-1}\right)^T(t)\left(\bm{\xi}^T\bm{U}+\bm{U}^T\bm{\xi}+2t\bm{\xi}^T\bm{\xi}\right)\bm{R}^{-1}(t)\right]\bm{R}(t)
$$
and accordingly:
$$
\begin{aligned}
&\norm{\bm{R}^{'}(t)}_F=\norm{\text{up}\left[\left(\bm{R}^{-1}\right)^T(t)\left(\bm{\xi}^T\bm{U}+\bm{U}^T\bm{\xi}+2t\bm{\xi}^T\bm{\xi}\right)\bm{R}^{-1}(t)\right]\bm{R}(t)}_F\\
&\le \norm{\text{up}\left[\left(\bm{R}^{-1}\right)^T(t)\left(\bm{\xi}^T\bm{U}+\bm{U}^T\bm{\xi}+2t\bm{\xi}^T\bm{\xi}\right)\bm{R}^{-1}(t)\right]}_F\norm{\bm{R}(t)}_{op}\\
&\le \norm{\left(\bm{R}^{-1}\right)^T(t)\left(\bm{\xi}^T\bm{U}+\bm{U}^T\bm{\xi}+2t\bm{\xi}^T\bm{\xi}\right)\bm{R}^{-1}(t)}_F\norm{\bm{R}(t)}_{op}\\
\end{aligned}
$$
where we used Lemma \ref{lm:fnormofabbound} for the first inequality. 

From \eqref{eqn:rtr}, we know that:
$$
\begin{aligned}
&\norm{\bm{R}(t)}_{op}^2=\norm{\bm{R}(t)^T\bm{R}(t)}_{op}\\
&=\norm{\bm{I}+t\left(\bm{\xi}^T\bm{U}+\bm{U}^T\bm{\xi}\right)+t^2\bm{\xi}^T\bm{\xi}}_{op}\\
&\ge 1-t\norm{\bm{\xi}^T\bm{U}+\bm{U}^T\bm{\xi}}_{op}-t^2\norm{\bm{\xi}^T\bm{\xi}}_{op}\\
&\ge 1- 2\norm{\bm{\xi}}_F - \norm{\bm{\xi}^T\bm{\xi}}_F\\
&\ge \frac{7}{16} 
\end{aligned}
$$
where the first inequality comes from the triangle inequality, the second comes from the fact that $\norm{\cdot}_{F}\ge\norm{\cdot}_{op}$, and the third comes from the requirement $\norm{\bm{\xi}}_F\le \frac{1}{4}$.

Similarly, we can derive:
$$
\begin{aligned}
&\norm{\bm{R}(t)}_{op}^2=\norm{\bm{R}(t)^T\bm{R}(t)}_{op}\le 1 + 2\norm{\bm{\xi}}_F + \norm{\bm{\xi}^T\bm{\xi}}_F \le \frac{25}{16} 
\end{aligned}
$$
As a result,
$$
\begin{aligned}
&\norm{\bm{R}^{'}(t)}_F\le \norm{\left(\bm{R}^{-1}\right)^T(t)\left(\bm{\xi}^T\bm{U}+\bm{U}^T\bm{\xi}+2t\bm{\xi}^T\bm{\xi}\right)\bm{R}^{-1}(t)}_F\norm{\bm{R}(t)}_{op}\\
&\le \left(\norm{\left(\bm{R}^{-1}\right)^T(t)\left(\bm{\xi}^T\bm{U}+\bm{U}^T\bm{\xi}\right)\bm{R}^{-1}(t)}_F+\norm{\left(\bm{R}^{-1}\right)^T(t)\left(2t\bm{\xi}^T\bm{\xi}\right)\bm{R}^{-1}(t)}_F\right)\frac{5}{4}\\
&\le \frac{5}{4}\left(\norm{\bm{\xi}^T\bm{U}+\bm{U}^T\bm{\xi}}_F\norm{\left(\bm{R}^{-1}\right)^T(t)\bm{R}^{-1}(t)}_{op}+2t\norm{\bm{\xi}^T\bm{\xi}}_F\norm{\left(\bm{R}^{-1}\right)^T(t)\bm{R}^{-1}(t)}_{op}\right)\\
&\le \frac{20}{7}\left(\norm{\bm{\xi}^T\bm{U}+\bm{U}^T\bm{\xi}}_F+2t\norm{\bm{\xi}^T\bm{\xi}}_F\right)\\
\end{aligned}
$$

Hence,
$$
\begin{aligned}
&\norm{\qrof{\bm{U}}{\bm{\xi}}-\left(\bm{U}+\bm{\xi}\right)}_F\\
&\le\frac{20}{7}\left(\norm{\bm{\xi}^T\bm{U}+\bm{U}^T\bm{\xi}}_F+\norm{\bm{\xi}^T\bm{\xi}}_F\right)
\end{aligned}
$$

Since $\bm{U}$ is an orthogonal matrix, $\norm{\Pnormal{\bm{U}}{\bm{\xi}}}_F=\frac{1}{2}\norm{\bm{U}\left(\bm{\xi}^T\bm{U}+\bm{U}^T\bm{\xi}\right)}_F=\frac{1}{2}\norm{\bm{\xi}^T\bm{U}+\bm{U}^T\bm{\xi}}_F$. By Cauchy-Schwartz inequality, $\norm{\bm{\xi}^T\bm{\xi}}_F=\norm{\left(\Pnormal{\bm{U}}{\bm{\xi}}+\Ptangent{\bm{U}}{\bm{\xi}}\right)^T\left(\Pnormal{\bm{U}}{\bm{\xi}}+\Ptangent{\bm{U}}{\bm{\xi}}\right)}_F\le 2\norm{\Pnormal{\bm{U}}{\bm{\xi}}}_F^2+2\norm{\Ptangent{\bm{U}}{\bm{\xi}}}_F^2$. 

Thus we have:
$$
\begin{aligned}
&\norm{\qrof{\bm{U}}{\bm{\xi}}-\left(\bm{U}+\bm{\xi}\right)}_F\\
&\le\frac{20}{7}\left(2\norm{\Pnormal{\bm{U}}{\bm{\xi}}}_F+2\norm{\Pnormal{\bm{U}}{\bm{\xi}}}^2_F+2\norm{\Ptangent{\bm{U}}{\bm{\xi}}}_F^2\right)\\
&\le\frac{80}{7}\norm{\Pnormal{\bm{U}}{\bm{\xi}}}_F+\frac{40}{7}\norm{\Ptangent{\bm{U}}{\bm{\xi}}}_F^2
\end{aligned}
$$
Hence the second property of definition holds with $M_1=\frac{40}{7}$ and $M_2=\frac{80}{7}$. 

QR decomposition can be implemented by Gram-Schmidt or Householder algorithm with computation complexity of $O(dr^2)$
\section{Auxiliary lemmas}
\label{sec:auxlemmas}
In this section, we show some auxiliary lemmas needed for the proof in earlier Sections. Most lemmas are derived from basic facts in linear algebra.

We begin with some general inequalities related to matrix trace norms.
\begin{lemma}
\label{lm:traceofabisnonnegative}
For two matrices $\matA,\matB\in\mathbb{R}^{d\times d}$, if both $\matA,\matB$ are symmetric positive definite, then:
$$
\tr{\matA\matB}\ge 0
$$
\end{lemma}
A simple corollary is that if $\matA_1,\matA_2,\matB\in \mathbb{R}^{d\times d}$ are symmetric and $\matB$ is positive semi-definite, and $\matA_1 \succeq \matA_2$, then 
$$
\tr{\matA_1\matB}\ge \tr{\matA_2\matB}
$$
\begin{proof}
Since both $\matA$ and $\matB$ are positive symmetric, there exists $\matX,\matY\in \mathbb{R}^{d\times d}$, such that $A=X^TX$ and $\matB=\matY^T\matY$, therefore:
$$
\begin{aligned}
&\tr{\matA\matB}\\
&=\tr{\matX^T\matX\matY^T\matY}\\
&=\tr{(\matY\matX^T)^T\matY\matX^T}\\
&\ge 0
\end{aligned}
$$
\end{proof}
The following lemma presents an upper bound of the Frobenius norm of the product of two matrices.
\begin{lemma}
\label{lm:fnormofabbound}
For two matrices $\matA\in\mathbb{R}^{m\times n}$, and $\matB\in\mathbb{R}^{n\times k}$, we have:
$$
\norm{\matA\matB}_F\le \norm{\matA}_{op}\norm{\matB}_F
$$
and:
$$
\norm{\matA\matB}_F\le \norm{\matA}_F\norm{\matB}_{op}
$$
\end{lemma}
The proof of the lemma can be found in \citet{ruoyufactorization}. 

The following lemma introduces a simple upper bound on the Frobenius norm of $\bm{I}_r-\bm{U}^T\bm{P}\bm{U}$. 
\begin{lemma}
\label{lm:difffnormupperbound}
For any rank-r orthonormal matrix $\bm{U}\in \mathbb{R}^{d\times r}$, and rank-r projection matrix $\bm{P}\in \mathbb{R}^{d\times d}$, we have:
\begin{equation}
\label{eqn:imutpufnormupperbound}
\norm{\bm{I}_r-\bm{U}^T\bm{P}\bm{U}}_F\le r - \tr{\bm{U}^T\bm{P}\bm{U}}
\end{equation}
\end{lemma}
\begin{proof}
It is easy to see that $\bm{I}_r-\bm{U}^T\bm{P}\bm{U}$ is positive semidefinite. Also, for a positive semidefinite matrix, its Frobenius norm is upper bounded by its trace. Inequality \eqref{eqn:imutpufnormupperbound} follows accordingly.
\end{proof}
We can proceed to the following lemma that upper bounds the trace of the  $k$-th power of $\bm{I}_r-\bm{U}^T\bm{P}\bm{U}$.
\begin{lemma}
\label{lm:powerdifftraceupperbound}
For any rank-r orthonormal matrix $\bm{U}\in \mathbb{R}^{d\times r}$, and rank-r projection matrix $\bm{P}\in \mathbb{R}^{d\times r}$, $k=1,2,...$, $\left(\bm{I}_r-\bm{U}^T\bm{P}\bm{U}\right)^k$ is positive semi-definite and:
\begin{equation}
0\le Tr\left(\left(\bm{I}-\bm{U}^T\bm{P}\bm{U}\right)^k\right)\le \left(r - \tr{\bm{U}^T\bm{P}\bm{U}}\right)^k
\end{equation}
\end{lemma}
\begin{proof}
Since $\bm{I}_r-\bm{U}^T\bm{P}\bm{U}$ is symmetric positive semidefinite, $\left(\bm{I}_r-\bm{U}^T\bm{P}\bm{U}\right)^k$ is also symmetric positive semidefinite. Assume eigenvalues of $\bm{I}_r-\bm{U}^T\bm{P}\bm{U}$ are $\lambda_1, \lambda_2,\cdots, \lambda_d$, with $\lambda_1\ge \lambda_2\ge\cdots\ge \lambda_d$, we know that $Tr\left(\left(\bm{I}_r-\bm{U}^T\bm{P}\bm{U}\right)^k\right)=\sum_{i=1}^d\lambda_i^k\le \lambda_1^{k-1}\sum_{i=1}^d\lambda_i$.

By Lemma \ref{lm:difffnormupperbound}, we know that 
$$
\lambda_1^{k-1}\le \norm{\bm{I}_r-\bm{U}^T\bm{P}\bm{U}}_F^{k-1}\le \left(r-\tr{\bm{U}^T\bm{P}\bm{U}}\right)^{k-1}
$$
This completes our proof.
\end{proof}
Based on the above results, we can discuss some properties of the projection of a matrix onto a subspace. Suppose we know the column space of $\bm{U}\in\mathbb{R}^{d\times r}$ is close to that of $\bm{P}\in \mathbb{R}^{d\times d}$, can we find a matrix $\bm{U}^{*}$ close to $\bm{U}$ with column vectors in $col(\bm{P})$? The following two lemmas give affirmative answers. 
\begin{lemma}
\label{lm:udifffnormupper}
For any rank-r orthonormal matrix $\bm{U}\in \mathbb{R}^{d\times r}$, and rank-r projection matrix $\bm{P}\in \mathbb{R}^{d\times d}$, we define:
$$
\bm{U}^{\star} = \bm{P}\bm{U}\left(\bm{U}^T\bm{P}\bm{U}\right)^{-1/2}
$$
If $r-\tr{\bm{U}^T\bm{P}\bm{U}}\le 1$, we have:
\begin{equation}
\label{eqn:generaldunormlowerbound}
\norm{\bm{U}-\bm{U}^\star}_F^2\ge r-\tr{\bm{U}^T\bm{P}\bm{U} }
\end{equation}
and,
\begin{equation}
\label{eqn:generaldunormupperbound}
\norm{\bm{U}-\bm{U}^\star}_F^2\le 2\left(r-\tr{\bm{U}^T\bm{P}\bm{U} }\right)
\end{equation}
\end{lemma}
\begin{proof}
To prove the lower bound \eqref{eqn:generaldunormlowerbound} and upper bound \eqref{eqn:generaldunormupperbound}, we can write $\norm{\bm{U}-\bm{U}^\star}_F^2$ as,
$$
\begin{aligned}
&\norm{\bm{U}-\bm{U}^\star}_F^2 = \left\langle\bm{U},\bm{U}\right\rangle+\left\langle\bm{U}^\star,\bm{U}^\star\right\rangle-2\left\langle\bm{U},\bm{U}^\star\right\rangle\\
&=2r - 2\left\langle\bm{U},\bm{U}^\star\right\rangle
\end{aligned}
$$

We first find an upper bound for $\left\langle\bm{U},\bm{U}^\star\right\rangle$.

Notice that:
$$
\begin{aligned}
&\left\langle\bm{U},\bm{U}^\star\right\rangle\\
&=\tr{\bm{U}^T\bm{U}^\star}\\
&=\tr{\bm{U}^T\bm{P}\bm{U}\left(\bm{U}^T\bm{P}\bm{U}\right)^{-1/2}}\\
&=\tr{\left(\bm{U}^T\bm{P}\bm{U}\right)^{1/2}}\\
&=\tr{\left(\bm{I}_r-\left(\bm{I}_r-\bm{U}^T\bm{P}\bm{U}\right)\right)^{1/2}}\\
&=\tr{\bm{I}_r-\frac{1}{2}\left(\bm{I}_r-\bm{U}^T\bm{P}\bm{U}\right)-\sum_{n=2}^{\infty}\frac{(2n-3)!!}{2^nn!}\left(\bm{I}_r-\bm{U}^T\bm{P}\bm{U}\right)^n}\\
&= r-\frac{1}{2}\tr{\bm{I}_r-\bm{U}^T\bm{P}\bm{U}}-\sum_{n=2}^{\infty}\frac{(2n-3)!!}{2^nn!}\tr{\left(\bm{I}_r-\bm{U}^T\bm{P}\bm{U}\right)^n}\\
&\le r-\frac{1}{2}\tr{\bm{I}_r-\bm{U}^T\bm{P}\bm{U}}
\end{aligned}
$$
We used the series $(1-x)^{\frac{1}{2}}=1-\frac{1}{2}x-\sum_{n=2}^{\infty}\frac{(2n-3)!!}{2^nn!}x^n$, and the result $\tr{\left(\bm{I}_r-\bm{U}^T\bm{P}\bm{U}\right)^n}\ge 0$ from Lemma \ref{lm:powerdifftraceupperbound}.

As a result:
$$
\norm{\bm{U}-\bm{U}^\star}_F^2\ge \tr{\bm{I}_r- \bm{U}^T\bm{P}\bm{U}}=r-\tr{ \bm{U}^T\bm{P}\bm{U}}
$$

Similarly, from Lemma \ref{lm:powerdifftraceupperbound}, $\tr{\left(\bm{I}_r-\bm{U}^T\bm{P}\bm{U}\right)^n}\le \tr{\bm{I}_r-\bm{U}^T\bm{P}\bm{U}}^n$, thus:
$$
\begin{aligned}
&\left\langle\bm{U},\bm{U}^\star\right\rangle\\
&= r-\frac{1}{2}\tr{\bm{I}_r-\bm{U}^T\bm{P}\bm{U}}-\sum_{n=2}^{\infty}\frac{(2n-3)!!}{2^nn!}\tr{\left(\bm{I}_r-\bm{U}^T\bm{P}\bm{U}\right)^n}\\
&\ge r-\frac{1}{2}\tr{\bm{I}_r-\bm{U}^T\bm{P}\bm{U}}-\sum_{n=2}^{\infty}\frac{(2n-3)!!}{2^nn!}\tr{\bm{I}_r-\bm{U}^T\bm{P}\bm{U}}^n\\
&=r+\left(1-\tr{\bm{I}_r-\bm{U}^T\bm{P}\bm{U}}\right)^{\frac{1}{2}}-1\\
&\ge -\tr{\bm{I}_r-\bm{U}^T\bm{P}\bm{U}}
\end{aligned}
$$
where we used the relation $\sqrt{1-x}-1\ge -x, \forall x\in [0,1]$, in the last inequality.

Thus
$$
\norm{\bm{U}-\bm{U}^\star}_F^2\le 2\tr{\bm{I}_r- \bm{U}^T\bm{P}\bm{U}}=2\left(r-\tr{ \bm{U}^T\bm{P}\bm{U}}\right)
$$

This completes our proof.
\end{proof}

The following lemma shows that we can identify global PCs from local PCs.
\begin{lemma}
Suppose for $i=1,\cdots,N$, $\bm{P}_{\bm{U}}$, $\bm{P}_{\bm{V}_{(i)}}$ and $\bm{P}_{\bm{U}}^\star$, $\bm{P}_{\bm{V}_{(i)}}^\star$ are projection matrices satisfying $\bm{P}_{\bm{U}}\bm{P}_{\bm{V}_{(i)}}=0$ and $\bm{P}_{\bm{U}}^\star\bm{P}_{\bm{V}_{(i)}}^\star=0$ for each $i$. Among them, $\bm{P}_{\bm{U}}$ and $\bm{P}_{\bm{U}}^\star$ have rank $r_1$, $\bm{P}_{\bm{V}_{(i)}}$ and $\bm{P}_{\bm{V}_{(i)}}^\star$ have rank $r_{2,(i)}$. If there exists a positive constant $\theta>0$ such that
$$
\lambda_{max}(\frac{1}{N}\sum_{i=1}^N\bm{P}_{\bm{V}_{(i)}}^\star)\le 1-\theta
$$
we have the following bound:
\begin{equation}
\label{eqn:upperboundontrace}
\begin{aligned}
\sum_{i=1}^N&r_1+r_{2,(i)}-Tr\left(\left(\bm{P}_{\bm{U}}+\bm{P}_{\bm{V}_{(i)}}\right)\left(\bm{P}_{\bm{U}}^\star+\bm{P}_{\bm{V}_{(i)}}^\star\right)\right)\\
&\le N\left(r_1- Tr(\bm{P}_{\bm{U}}^\star\bm{P}_{\bm{U}})\right)+\sum_{i=1}^N r_{2,(i)}-Tr(\bm{P}_{\bm{V}_{(i)}}^\star\bm{P}_{\bm{V}_{(i)}})
\end{aligned}
\end{equation}
And also:
\begin{equation}
\label{eqn:lowerboundontrace}
\begin{aligned}
&\sum_{i=1}^Nr_1+r_{2,(i)}-Tr\left(\left(\bm{P}_{\bm{U}}+\bm{P}_{\bm{V}_{(i)}}\right)\left(\bm{P}_{\bm{U}}^\star+\bm{P}_{\bm{V}_{(i)}}^\star\right)\right)\\
&\ge \frac{\theta}{2}\left(N\left(r_1- Tr(\bm{P}_{\bm{U}}^\star\bm{P}_{\bm{U}})\right)+\sum_{i=1}^N r_{2,(i)}-Tr(\bm{P}_{\bm{V}_{(i)}}^\star\bm{P}_{\bm{V}_{(i)}})\right)
\end{aligned}
\end{equation}
\end{lemma}
Notice that we can replace $+$ by $\oplus$ on the left hand side of \eqref{eqn:upperboundontrace} and \eqref{eqn:lowerboundontrace}
\begin{proof}
We first calculate the upper bound.

Since $\bm{P}_{\bm{U}}\bm{P}_{\bm{V}_{(i)}}^\star\bm{P}_{\bm{U}}$ is positive semidefinite, we know that:
$$
\tr{\bm{P}_{\bm{U}}\bm{P}_{\bm{V}_{(i)}}^\star\bm{P}_{\bm{U}}}\ge 0
$$
Thus
$$
\tr{\bm{P}_{\bm{U}}\bm{P}_{\bm{V}_{(i)}}^\star\bm{P}_{\bm{U}}}=\tr{\bm{P}_{\bm{U}}\bm{P}_{\bm{V}_{(i)}}^\star}\ge 0
$$
Similarly, we have:
$$
\tr{\bm{P}_{\bm{U}}^\star\bm{P}_{\bm{V}_{(i)}}}\ge 0
$$
Combining them, we have:
\begin{equation*}
\begin{aligned}
&\tr{\left(\bm{P}_{\bm{U}}+\bm{P}_{\bm{V}_{(i)}}\right)\left(\bm{P}_{\bm{U}}^\star+\bm{P}_{\bm{V}_{(i)}}^\star\right)}\\
&=\tr{\bm{P}_{\bm{U}}\bm{P}_{\bm{U}}^\star}+\tr{\bm{P}_{\bm{U}}\bm{P}_{\bm{V}_{(i)}}^\star}+\tr{\bm{P}_{\bm{V}_{(i)}}\bm{P}_{\bm{U}}^\star}+\tr{\bm{P}_{\bm{V}_{(i)}}\bm{P}_{\bm{V}_{(i)}}^\star}\\
&\ge \tr{\bm{P}_{\bm{U}}\bm{P}_{\bm{U}}^\star}+\tr{\bm{P}_{\bm{V}_{(i)}}\bm{P}_{\bm{V}_{(i)}}^\star}\\
\end{aligned}
\end{equation*}
This proves inequality \eqref{eqn:upperboundontrace}.

Next, we calculate the lower bound.
\begin{equation*}
\begin{aligned}
&\sum_{i=1}^N\tr{\left(\bm{P}_{\bm{U}}+\bm{P}_{\bm{V}_{(i)}}\right)\left(\bm{P}_{\bm{U}}^\star+\bm{P}_{\bm{V}_{(i)}}^\star\right)}\\
&=\sum_{i=1}^N\tr{\bm{P}_{\bm{U}}\bm{P}_{\bm{U}}^\star}+\tr{\bm{P}_{\bm{U}}\bm{P}_{\bm{V}_{(i)}}^\star)}+\tr{\bm{P}_{\bm{V}_{(i)}}\bm{P}_{\bm{U}}^\star}+\tr{\bm{P}_{\bm{V}_{(i)}}\bm{P}_{\bm{V}_{(i)}}^\star}\\
&=\sum_{i=1}^N\tr{\bm{P}_{\bm{U}}\bm{P}_{\bm{U}}^\star}+\tr{\bm{P}_{\bm{U}}\left(\bm{I}-\bm{P}_{\bm{U}}^\star\right)\bm{P}_{\bm{V}_{(i)}}^\star\left(\bm{I}-\bm{P}_{\bm{U}}^\star\right)}+\tr{\bm{P}_{\bm{V}_{(i)}}\bm{P}_{\bm{U}}^\star}+\tr{\bm{P}_{\bm{V}_{(i)}}\bm{P}_{\bm{V}_{(i)}}^\star}\\
&=\sum_{i=1}^N\tr{\bm{P}_{\bm{U}}\bm{P}_{\bm{U}}^\star}+\tr{\left(\bm{I}-\bm{P}_{\bm{U}}^\star\right)\bm{P}_{\bm{U}}\left(\bm{I}-\bm{P}_{\bm{U}}^\star\right)\bm{P}_{\bm{V}_{(i)}}^\star}+\tr{\bm{P}_{\bm{V}_{(i)}}\bm{P}_{\bm{U}}^\star}+\tr{\bm{P}_{\bm{V}_{(i)}}\bm{P}_{\bm{V}_{(i)}}^\star}\\
\end{aligned}
\end{equation*}
Since $\left(\bm{I}-\bm{P}_{\bm{U}}^\star\right)\bm{P}_{\bm{U}}\left(\bm{I}-\bm{P}_{\bm{U}}^\star\right)$ and $\bm{P}_{\bm{V}_{(i)}}^\star$ are both symmetric positive semidefinite, we have:
\begin{equation*}
\begin{aligned}
&\tr{\left(\bm{I}-\bm{P}_{\bm{U}}^\star\right)\bm{P}_{\bm{U}}\left(\bm{I}-\bm{P}_{\bm{U}}^\star\right)\frac{1}{N}\sum_{i=1}^N\bm{P}_{\bm{V}_{(i)}}^\star}\\
&\le \tr{\left(\bm{I}-\bm{P}_{\bm{U}}^\star\right)\bm{P}_{\bm{U}}\left(\bm{I}-\bm{P}_{\bm{U}}^\star\right)}\lambda_{max}\left(\frac{1}{N}\sum_{i=1}^N\bm{P}_{\bm{V}_{(i)}}^\star\right)\\
&\le \tr{\bm{P}_{\bm{U}}-\bm{P}_{\bm{U}}\bm{P}_{\bm{U}}^\star}\left(1-\theta\right)\\
&=\left(r_1-\tr{\bm{P}_{\bm{U}}\bm{P}_{\bm{U}}^\star}\right)\left(1-\theta\right)\\
\end{aligned}
\end{equation*}
For notation simplicity, we define $z_0 = r_1-\tr{\bm{P}_{\bm{U}}\bm{P}_{\bm{U}}^\star}$ and $z_i = r_{2,(i)}-\tr{\bm{P}_{\bm{V}_{(i)}}\bm{P}_{\bm{V}_{(i)}}^\star}$.

From the orthogonality, we have:
\begin{equation*}
\begin{aligned}
&\tr{\bm{P}_{\bm{V}_{(i)}}\bm{P}_{\bm{U}}^\star}\\
&=\tr{\bm{P}_{\bm{V}_{(i)}}\left(\bm{I}-\bm{P}_{\bm{V}_{(i)}}^\star\right)\bm{P}_{\bm{U}}^\star\left(\bm{I}-\bm{P}_{\bm{V}_{(i)}}^\star\right)}\\
&=\tr{\left(\bm{I}-\bm{P}_{\bm{V}_{(i)}}^\star\right)\bm{P}_{\bm{V}_{(i)}}\left(\bm{I}-\bm{P}_{\bm{V}_{(i)}}^\star\right)\bm{P}_{\bm{U}}^\star}\\
&\le \tr{\left(\bm{I}-\bm{P}_{\bm{V}_{(i)}}^\star\right)\bm{P}_{\bm{V}_{(i)}}\left(\bm{I}-\bm{P}_{\bm{V}_{(i)}}^\star\right)}\lambda_{max}\left(\bm{P}_{\bm{U}}^\star\right)\\
&\le \tr{\left(\bm{I}-\bm{P}_{\bm{V}_{(i)}}^\star\right)\bm{P}_{\bm{V}_{(i)}}\left(\bm{I}-\bm{P}_{\bm{V}_{(i)}}^\star\right)}\\
&= \tr{\bm{P}_{\bm{V}_{(i)}}-\bm{P}_{\bm{V}_{(i)}}\bm{P}_{\bm{V}_{(i)}}^\star}\\
&=z_i
\end{aligned}
\end{equation*}

Also, from the orthogonality, we have:
\begin{equation*}
\begin{aligned}
&\tr{\bm{P}_{\bm{V}_{(i)}}\bm{P}_{\bm{U}}^\star}\\
&=\tr{\left(\bm{I}-\bm{P}_{\bm{U}}\right)\bm{P}_{\bm{V}_{(i)}}\left(\bm{I}-\bm{P}_{\bm{U}}\right)\bm{P}_{\bm{U}}^\star}\\
&=\tr{\bm{P}_{\bm{V}_{(i)}}\left(\bm{I}-\bm{P}_{\bm{U}}\right)\bm{P}_{\bm{U}}^\star\left(\bm{I}-\bm{P}_{\bm{U}}\right)}\\
&\le \tr{\left(\bm{I}-\bm{P}_{\bm{U}}\right)\bm{P}_{\bm{U}}^\star\left(\bm{I}-\bm{P}_{\bm{U}}\right)}\lambda_{max}\left(\bm{P}_{\bm{V}_{(i)}}\right)\\
&\le \tr{\left(\bm{I}-\bm{P}_{\bm{U}}\right)\bm{P}_{\bm{U}}^\star\left(\bm{I}-\bm{P}_{\bm{U}}\right)}\\
&= \tr{\bm{P}_{\bm{U}}^\star-\bm{P}_{\bm{U}}\bm{P}_{\bm{U}}^\star}\\
&=z_0
\end{aligned}
\end{equation*}
Combining the two:
$$
\tr{\bm{P}_{\bm{V}_{(i)}}\bm{P}_{\bm{U}}^\star}\le \min\{z_0,z_i\}
$$
As a result:
\begin{equation*}
\begin{aligned}
&\sum_{i=1}^Nr_1+r_{2,(i)}\\
&-\left[\tr{\bm{P}_{\bm{U}}\bm{P}_{\bm{U}}^\star}+\tr{\left(\bm{I}-\bm{P}_{\bm{U}}^\star\right)\bm{P}_{\bm{U}}\left(\bm{I}-\bm{P}_{\bm{U}}^\star\right)\bm{P}_{\bm{V}_{(i)}}^\star}+\tr{\bm{P}_{\bm{V}_{(i)}}\bm{P}_{\bm{U}}^\star}+\tr{\bm{P}_{\bm{V}_{(i)}}\bm{P}_{\bm{V}_{(i)}}^\star}\right]\\
&\ge \sum_{i=1}^N z_0-\left(1-\theta\right)z_0+z_i-\min\{z_0,z_i\}
\end{aligned}
\end{equation*}
Since for any number $\nu\in\left(0,1\right)$, we know:
$$
z_i-\min\{z_0,z_i\}\ge \nu\left(z_i-z_0\right)
$$
We can set $\nu=\frac{\theta}{2}$, then
\begin{equation*}
\begin{aligned}
&\sum_{i=1}^N z_0-\left(1-\theta\right)z_0+z_i-\min\{z_0,z_i\}\\
&\ge \sum_{i=1}^N \theta z_0+\frac{\theta}{2}\left(z_i-z_0\right)\\
&=\frac{\theta}{2}\sum_{i=1}^N z_0+z_i
\end{aligned}
\end{equation*}
This proves inequality \eqref{eqn:lowerboundontrace}.
\end{proof}

\newpage

\bibliography{ref}
\end{document}